\newtheorem{theorem}{Theorem}
\crefname{theorem}{theorem}{Theorems}
\Crefname{Theorem}{Theorem}{Theorems}
\newtheorem*{lemma_nonumber*}{Lemma}
\newaliascnt{lemma}{theorem}
\newtheorem{lemma}[lemma]{Lemma}
\crefname{lemma}{lemma}{lemmas}
\Crefname{Lemma}{Lemma}{Lemmas}
\newaliascnt{corollary}{theorem}
\newtheorem{corollary}[corollary]{Corollary}
\crefname{corollary}{corollary}{corollaries}
\Crefname{Corollary}{Corollary}{Corollaries}
\newaliascnt{proposition}{theorem}
\newtheorem{proposition}[proposition]{Proposition}
\crefname{proposition}{proposition}{propositions}
\Crefname{Proposition}{Proposition}{Propositions}
\newaliascnt{definition}{theorem}
\crefname{definition}{definition}{definitions}
\Crefname{Definition}{Definition}{Definitions}
\newaliascnt{remark}{theorem}
\crefname{remark}{remark}{remarks}
\Crefname{Remark}{Remark}{Remarks}
\crefname{example}{example}{examples}
\Crefname{Example}{Example}{Examples}
\crefname{technique}{technique}{techniques}
\Crefname{Technique}{Technique}{Techniques}
\crefname{figure}{figure}{figures}
\Crefname{Figure}{Figure}{Figures}
\newtheorem{assumptionF}{\textbf{F}\hspace{-3pt}}
\Crefname{assumptionB}{\textbf{B}\hspace{-3pt}}{\textbf{B}\hspace{-3pt}}
\crefname{assumptionB}{\textbf{B}}{\textbf{B}}
\Crefname{assumptionC}{\textbf{C}\hspace{-3pt}}{\textbf{C}\hspace{-3pt}}
\crefname{assumptionC}{\textbf{C}}{\textbf{C}}
\Crefname{assumptionH}{\textbf{H}\hspace{-3pt}}{\textbf{H}\hspace{-3pt}}
\crefname{assumptionH}{\textbf{H}}{\textbf{H}}
\Crefname{assumptionT}{\textbf{T}\hspace{-3pt}}{\textbf{T}\hspace{-3pt}}
\crefname{assumptionT}{\textbf{T}}{\textbf{T}}
\Crefname{assumptionT}{\textbf{T}\hspace{-3pt}}{\textbf{T}\hspace{-3pt}}
\crefname{assumptionT}{\textbf{T}}{\textbf{T}}
\Crefname{assumptionL}{\textbf{L}\hspace{-3pt}}{\textbf{L}\hspace{-3pt}}
\crefname{assumptionL}{\textbf{L}}{\textbf{L}}
\Crefname{assumptionQ}{\textbf{Q}\hspace{-3pt}}{\textbf{Q}\hspace{-3pt}}
\crefname{assumptionQ}{\textbf{Q}}{\textbf{Q}}
\Crefname{assumptionAR}{\textbf{AR}\hspace{-3pt}}{\textbf{AR}\hspace{-3pt}}
\crefname{assumptionAR}{\textbf{AR}}{\textbf{AR}}
 \newcommand{\x} {{\bf x}}
\def\wG{\bar{\rmG}}
\newcommand \ox {{\bar x}}
\newcommand \om {{\omega}}
\newcommand \bs {{s}}  
\newcommand \twx {\bar{x}}
\newcommand \tox {{\bar x}} 
\newcommand\lb{{\langle}}
\newcommand\rb{{\rangle}}
\newcommand \oG {{\bar \rmG}}
\newcommand\Ld{{\bf L^2}}
\newcommand\error{{E}}  
\newcommand\txwav{x}
\definecolor{commentcolor}{rgb}{0., 0.5, 0.}
\def\pref{p_{\infty}}
\def\rmD{\mathrm{D}}
\def\contspace{\mathcal{C}}
\def\Pens{\mathscr{P}}
\newcommand{\tta}{\mathtt{A}}
\newcommand{\Capprox}{\tta}
\newcommandx\ctun[1][1=T]{\Capprox_{#1,1}}
\newcommandx{\expec}[2]{{\mathbb E}\left[#1 \middle \vert #2  \right]} 
\newcommand{\rme}{\mathrm{e}}
\newcommand{\Mtt}{M}
\newcommand{\Ktt}{K}
\def\x{{ \boldsymbol x}}
\newcommandx{\norm}[2][1=]{\ifthenelse{\equal{#1}{}}{\left\Vert #2 \right\Vert}{\left\Vert #2 \right\Vert^{#1}}}
\newcommandx{\normLigne}[2][1=]{\ifthenelse{\equal{#1}{}}{\Vert #2 \Vert}{\Vert #2\Vert^{#1}}}
\def\bfc{\mathbf{c}}
\def\bfZ{\mathbf{Z}}
\def\bfZ{\mathbf{Z}}
\def\bfM{\mathbf{M}}
\def\msa{\mathsf{A}}
\newcommand{\mcb}[1]{\mathcal{B}(#1)}
\def\Qbb{\mathbb{Q}}
\def\Rbb{\mathbb{R}}
\def\Pbb{\mathbb{P}}
\def\rset{\mathbb{R}}
\def\nset{\mathbb{N}}
\def\rmP{\mathrm{P}}
\def\rmd{\mathrm{d}}
\def\rme{\mathrm{e}}
\def\rmc{\mathrm{C}}
\def\rmH{\mathrm{H}}
\def\rmV{\mathrm{V}}
\newcommand{\R}{\mathbb R}
\newcommand{\Z}{\mathbb Z}
\def\trace{\operatorname{Tr}}
\newcommandx{\functionspace}[2][1=+]{\mathbb{F}_{#1}(#2)}
\newcommand{\argmin}{\operatorname*{arg\,min}}
\newcommandx{\VarDeux}[3][3=]{\operatorname{Var}^{#3}_{#1}\left\{#2 \right\}}
\newcommand{\LeftEqNo}{\let\veqno\@@leqno}
\newcommand{\N}{\ensuremath{\mathbb{N}}}
\newcommand{\PE}{\mathbb{E}}
\newcommand{\abs}[1]{\left\vert #1 \right\vert}
\newcommand{\absLigne}[1]{\vert #1 \vert}
\newcommand{\tvnormLigne}[1]{\| #1 \|_{\mathrm{TV}}}
\newcommandx{\Vnorm}[2][1=V]{\| #2 \|_{#1}}
\newcommandx{\VnormEq}[2][1=V]{\left\| #2 \right\|_{#1}}
\newcommand{\defEnsLigne}[1]{\lbrace #1 \rbrace }
\newcommand{\probaLigne}[1]{\mathbb{P}( #1 )}
\newcommandx\probaMarkovTilde[2][2=]
\newcommand{\expeLigne}[1]{\PE [ #1 ]}
\newcommand{\expeLignesqrt}[1]{\PE^{1/2} [ #1 ]}
\newcommand{\expeLignesqrtt}[1]{\PE^{1/4} [ #1 ]}
\def\eqsp{\;}
\newcommand{\coint}[1]{\left[#1\right)}
\newcommand{\ocint}[1]{\left(#1\right]}
\newcommand{\ooint}[1]{\left(#1\right)}
\newcommand{\ccint}[1]{\left[#1\right]}
\newcommandx{\weight}[2][2=n]{\omega_{#1,#2}^N}
\newcommand{\cball}[2]{\bar{\operatorname{B}}(#1,#2)}
\newcommandx\sequence[3][2=,3=]
\newcommandx\sequenceD[3][2=,3=]
\newcommandx{\sequencen}[2][2=n\in\N]{\ensuremath{\{ #1_n, \eqsp #2 \}}}
\newcommandx\sequenceDouble[4][3=,4=]
\newcommandx{\sequencenDouble}[3][3=n\in\N]{\ensuremath{\{ (#1_{n},#2_{n}), \eqsp #3 \}}}
\newcommand{\opnorm}[1]{{\left\vert\kern-0.25ex\left\vert\kern-0.25ex\left\vert #1
    \right\vert\kern-0.25ex\right\vert\kern-0.25ex\right\vert}}
\def\Id{\operatorname{Id}}
\newcommandx{\CPE}[3][1=]{{\mathbb E}_{#1}\left[#2 \middle \vert #3  \right]} 
\newcommandx{\CPELigne}[3][1=]{{\mathbb E}_{#1}[#2  \vert #3  ]} 
\newcommandx{\CPEsq}[3][1=]{{\mathbb{E}^{1/2}}_{#1}\left[#2 \middle \vert #3  \right]} 
\newcommandx{\CPVar}[3][1=]{\mathrm{Var}^{#3}_{#1}\left\{ #2 \right\}}
\newcommand{\CPP}[3][]
{\ifthenelse{\equal{#1}{}}{{\mathbb P}\left(\left. #2 \, \right| #3 \right)}{{\mathbb P}_{#1}\left(\left. #2 \, \right | #3 \right)}}
\newcommand\Ent[2]{\mathrm{Ent}_{#1}\left(#2\right)}
\newcommandx{\osc}[2][1=]{\mathrm{osc}_{#1}(#2)}
\def\Id{\operatorname{Id}}
\def\V{V}
\def\bx{\bar{\boldsymbol x}}
\def\tx{\tilde{x}}
\def\rmD{\mathrm{D}}
\newcommand\coupling[2]{\Gamma(\mu,\nu)}
\def\vareps{\varepsilon}
\newcommandx{\KL}[2]{\operatorname{KL}\left( #1 \middle\Vert #2 \right)}
\newcommandx{\KLsqrt}[2]{\operatorname{KL}^{1/2}\left( #1 \middle\Vert #2 \right)}
\newcommandx{\KLLignesqrt}[2]{\operatorname{KL}( #1 \Vert #2 )^{1/2}}
\newcommandx{\Jef}[2]{\operatorname{J}\left( #1 , #2 \right)}
\newcommandx{\JefLigne}[2]{\operatorname{J}( #1 , #2 )}
\newcommandx{\KLLigne}[2]{\operatorname{KL}( #1 \Vert #2 )}
\def\gaStep
\def\QKer{Q}
\def\Tnplusun{\mathcal{T}_{k+1}}
\def\distance{\mathbf{d}}
\newcommandx{\wasserstein}[3][1=\distance,3=]{\mathbf{W}_{#1}^{#3}\left(#2\right)}
\newcommandx{\wassersteinLigne}[3][1=\distance,3=]{\mathbf{W}_{#1}^{#3}(#2)}
\newcommandx{\wassersteinD}[1][1=\distance]{\mathbf{W}_{#1}}
\newcommandx{\wassersteinDLigne}[1][1=\distance]{\mathbf{W}_{#1}}
\def\Rcoupling{\mathrm{R}}
\def\sigmaD{\sigma^2}
\newcommandx{\phibfs}[1][1=]{\pmb{\varphi}_{\sigmaD_{#1}}}
\newcommandx\sequenceg[3][2=,3=]
\def\Rker{\Rcoupling}
\def\Qker{\mathrm{Q}}
\def\rmL{\mathrm{L}}
\def\rmG{G}
\newcommandx{\distV}[1][1=\bfc]{\mathbf{W}_{#1}}
\newcommandx{\distVdeux}[1][1=W_2]{\mathbf{d}_{#1}}
\def\mtt{\mathtt{m}}
\title{Wavelet Score-Based Generative Modeling}
\author{%
  Florentin Guth \\
    Computer Science Department, \\
  ENS, CNRS, PSL University
  \And
  Simon Coste
  \\
    Computer Science Department, \\
  ENS, CNRS, PSL University
  \AND
  Valentin De Bortoli \\
    Computer Science Department, \\
  ENS, CNRS, PSL University  
  \And
  St\'ephane Mallat \\
 Collège de France, Paris, France \\
Flatiron Institute, New York, USA }
\begin{document}

\maketitle
\begin{abstract}
Score-based generative models (SGMs) synthesize new data samples from Gaussian white noise by running a time-reversed Stochastic Differential Equation (SDE) whose drift coefficient depends on some probabilistic score. The discretization of such SDEs typically requires a large number of time steps and hence a high computational cost. This is because of ill-conditioning properties of the score that we analyze mathematically. We show that SGMs can be considerably accelerated, by factorizing the data distribution into a product of conditional probabilities of wavelet coefficients across scales. The resulting Wavelet Score-based Generative Model (WSGM) synthesizes wavelet coefficients with the same number of time steps at all scales, and its time complexity therefore grows linearly with the image size. This is proved mathematically over Gaussian distributions, and shown numerically over physical processes at phase transition and natural image datasets.
\end{abstract}

\section{Introduction}
\label{sec:intro}

Score-based Generative Models (SGMs) have obtained remarkable results to learn
and sample probability distributions of image and audio signals
 \citep{song2019generative,nichol2021improved,nichol2021beatgans}.
They proceed as follows: the data distribution is mapped to a Gaussian white
distribution by evolving along a Stochastic Differential Equation (SDE), which
progressively adds noise to the data.  The generation is implemented using the
time-reversed SDE, which transforms a Gaussian white noise into a data sample.
At each time step, it pushes samples along the gradient of the log probability,
also called \emph{score function}.  This score is estimated by leveraging tools
from score-matching and deep neural networks
\citep{hyvarinen2005estimation,vincent2011connection}.  At sampling time, the
computational complexity is therefore proportional to the number of time steps,
i.e., the number of forward network evaluations.  Early SGMs in
\cite{song2019generative,song2020score,ho2020denoising} used thousands of time
steps, and hence had a limited applicability.

Diffusion models map a Gaussian white distribution into a highly
complex data distribution. We thus expect that this process will require a large number of time
steps.  It then comes as a surprise that recent approaches have
drastically reduced the the number of time steps. This is achieved by optimizing the
discretization schedule or by modifying the original SGM formulation
\citep{kadkhodaie2020solving,jolicoeur2021gotta,liu2022pseudo,zhang2022exponential,san2021noise,nachmani2021non,song2020denoising,kong2021fast,ho2020denoising,luhman2021knowledge,salimans2022progressive,xiao2021tackling}.
High-quality score-based generative models have also been improved by cascading
multiscale image generations \citep{saharia2021image, ho2022cascaded,
  nichol2021beatgans} or with subspace decompositions
\citep{jing2022subspace}. We show that with an important modification, these approaches provably improve the performance of SGMs. 

A key idea is that typical high-dimensional probability distributions coming
from physics or natural images have complex multiscale properties. They can be
simplified by factorizing them as a product of conditional probabilities of
normalized wavelet coefficients across scales. These conditional probabilities
are more similar to Gaussian white noise than the original image distribution,
and can thus be sampled more efficiently.  On the physics side, this observation
is rooted in the renormalization group decomposition in statistical physics
\citep{wilson1971renormalization}, and has been used to estimate physical
energies from data \cite{WCRG}. In image processing, it relies on statistical
observations of wavelet coefficient properties \citep{wainwright1999scale}.  A Wavelet Score-based Generative
Model (WSGM) generates normalized wavelet coefficients from coarse to fine
scales, as illustrated in \Cref{fig1}.  The conditional probability of
each set of wavelet coefficients, given coarse scale coefficients, is sampled
with its own (conditional) SGM. The main result is that a normalization of
wavelet coefficients allows fixing the same  discretization schedule at all
scales. Remarkably, and as opposed to existing algorithms, it implies that the
total number of sampling iterations per image pixel does not depend on the image
size.

After reviewing score-based generation models,
\Cref{sec:time-sampling-score} studies the mathematical properties of its time
discretization, with a focus on Gaussian models and multiscale processes.
Images and many physical processes are typically non-Gaussian, but do have a
singular covariance with long- and short-range correlations. In
\Cref{sec:wavel-diff-score}, we explain how to factorize these processes
into probability distributions which capture interactions across scales by
introducing orthogonal wavelet transforms.  We shall prove that it allows considering SGMs
with the same time schedule at all scales, independently of the image
size. In \Cref{sec:time-reduct-wavel}, we present numerical results on Gaussian
distributions, the $\varphi^4$ physical model at phase transition, and the
CelebA-HQ image dataset \citep{celeba-hq}.  The main contributions of the paper are as follows:
\begin{itemize}
\item A Wavelet Score-based Generative Model (WSGM) which generates conditional
  probabilities of normalized wavelet coefficients, with the same discretization
  schedule at all scales. The number of time steps per image pixel does not need
  to depend upon the image size to reach a fixed error level.
\item Theorems controlling errors of time discretizations of SGMs, proving
  accelerations obtained by scale separation with wavelets. These results are
  empirically verified by showing that WSGM provides an acceleration for the
  synthesis of physical processes at phase transition and natural image datasets.
\end{itemize}

\begin{figure}[h]
  \centering
  \includegraphics[width=\linewidth]{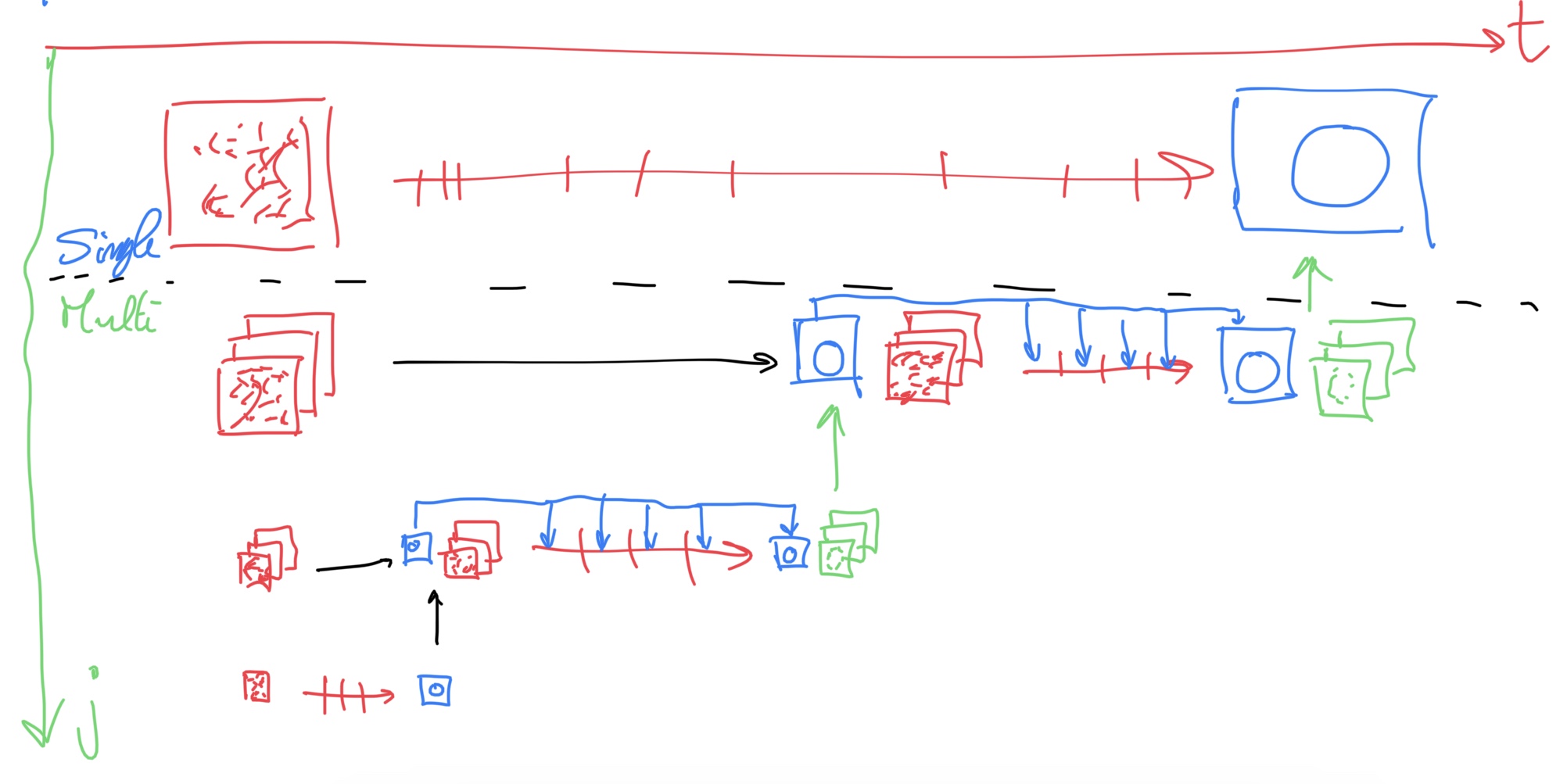}
    \caption{
    An SGM generates images by discretizing a reverse diffusion, which progressively transforms white Gaussian noise into a natural image. A WSGM generates increasingly higher-resolution images by discretizing reverse diffusions on wavelet coefficients at each scale. It begins by generating a first low-resolution image. Renormalized wavelet coefficients are then generated conditionally to this low-resolution image. A fast inverse wavelet transform reconstructs a higher-resolution image from these wavelet coefficients. This process is repeated at each scale. The number of steps is the same at each scale, and can be orders of magnitude smaller than for SGM.
  }
        \label{fig1}
\end{figure}

\section{Sampling and Discretization of Score-Based Generative Models}
\label{sec:time-sampling-score}
\subsection{Score-Based Generative Models}

\paragraph{Diffusions and time reversal} 
A Score-based Generative Model (SGM)
\citep{song2019generative,song2020score,ho2020denoising} progressively maps the
distribution of data $x$ into the normal distribution, with a forward SDE which
iteratively adds Gaussian white noise.
It is associated with a {\it noising
process} $(x_t)_t$, with $x_0$ distributed according to the data
distribution $p$, and satisfying:

\begin{equation}
  \rmd x_t = -x_t \rmd t + \sqrt{2} \rmd w_t, \label{eq:noising}
\end{equation}
where $(w_t)_t$ is a Brownian motion.
The solution is an Ornstein-Uhlenbeck
process which admits the following representation for any $t \geq 0$:
\begin{equation}
  \label{eq:OU_sol}
  x_t = \rme^{-t}\, x_0 + (1- \rme^{-2t})^{1/2} z , \qquad z \sim {\cal N}(0,\Id) . 
\end{equation}
The process $(x_t)_t$ is therefore an interpolation between a data sample $x_0$ and
Gaussian white noise. The \emph{generative process} inverts
\eqref{eq:noising}. Under mild assumptions on $p$
\citep{cattiaux2021time,haussmann1986time}, for any $T \geq 0$, the reverse time
process $x_{T-t}$ satisfies:
\begin{equation} \label{eq:generative_modeling} 
\rmd x_{T-t} = \{ x_{T-t} + 2
  \nabla \log p_{T-t}(x_{T-t})\}\, \rmd t + \sqrt{2} \,\rmd w_t,
\end{equation}
where $p_t$ is the probability density of $x_t$, and $\nabla \log p_t$ is called
the {\it Stein score}.  Since $x_T$ is close to a white Gaussian
variable, one can approximately sample from $x_T$ by sampling from the normal distribution.
We can generate $x_0$ from $x_T$ by solving this
time-reversed SDE, if we can estimate an accurate approximation of the score
$\nabla \log p_t$ at each time $t$, and if we can discretize the SDE without
introducing large errors.

Efficient approximations of the Stein scores
are the workhorse of SGM. \cite{hyvarinen2005estimation} shows that
the score $\nabla \log p_t$ can be approximated with parametric functions
${\bs}_\theta$ which minimize the following score matching loss:
    \begin{equation}
      \label{eq:ism_loss}
       \textstyle{
      { s}_t =  \argmin_{\theta} {\mathbb E}_{p_t} [
      \frac 1 2 \normLigne{ {\bs}_\theta(x_t)}^2 +
        \mathrm{div}({\bs}_\theta)(x_t)}]  .
    \end{equation}
For image generation, $\bs_{\theta}$ is calculated by a neural
network parameterized by $\theta$. In statistical physics problems where the
energy can be linearly expanded with coupling parameters, we obtain linear
models $\bs_\theta(x) = \theta^\top \nabla U(x)$.
This is the case for Gaussian processes where $U(x) = x x^\top$; it also
applies to non-Gaussian processes, using non-quadratic terms in $U(x)$. 
    
\paragraph{Time discretization of generation} 
An approximation of the generative process \eqref{eq:generative_modeling} is
computed by approximating $\nabla \log p_t$ by $\bs_t$ and
discretizing time. It amounts to approximating the time-reversed SDE by a Markov
chain which is initialised by $\tx_T \sim {\cal N}(0, \Id)$, and computed over
times $t_k$ which decrease from $t_N = T$ to $t_0 = 0$, at intervals
$\delta_k = t_{k} - t_{k-1}$:
\begin{equation}
  \label{eq:backward_disc}
  \tx_{t_{k-1}} = \tx_{t_{k}} + \delta_{k} \{ \tx_{t_{k}} + 2  {\bs}_{ t_{k}}(\tx_{t_{k}})\} + \sqrt{2 \delta_{k}} z_{k} , \qquad z_k  \overset{\textup{i.i.d.}}{\sim} {\cal N}(0,\Id) . 
\end{equation}
Ignoring the error due to the score model, the minimum number of time steps is
limited by the Lipschitz regularity of the score $\nabla \log p_t$, see
\cite[Theorem 1]{debortoli2021diffusion}. The overall complexity of this
generation is $O(N)$ evaluations of the score ${\bs}_{ t}(x)$.

\subsection{Discretization of SGM and Score Regularity}
\label{sec:sampl-accel-accur}
We study how the regularity of the score $\nabla \log p$ affects the
discretization of \eqref{eq:backward_disc}. Assuming that the score is known,
i.e., that $\bs_t = \nabla \log p_t$, we prove that for Gaussian
processes, the number of time steps to reach a fixed error $\vareps$ depends on
the condition number of its covariance.  This result is generalized to
non-Gaussian processes by relating this error to by the regularity
of $\nabla \log p_t$.

\paragraph{Gaussian distributions}
Suppose that the data distribution is a Gaussian
$p =\mathcal{N}(0, \Sigma)$ with covariance matrix $\Sigma$, in dimension $d$. 
Let $p_t$ be the distribution of $x_t$.
Using \eqref{eq:OU_sol}, we have:
\begin{equation}
  \nabla \log p_t(x) = - (\Id + (\Sigma - \Id)\rme^{-2t})^{-1}x  .
\end{equation}
Let $\tilde p_t$ be the distribution of $\tx_t$ obtained by
the time discretization \eqref{eq:backward_disc}. 
The approximation error between the distribution $\tilde p_0$
obtained with the time-reversed SDE and the data distribution $p$ stems from 
\begin{enumerate*}[label=(\roman*)]
\item the mismatch between the distributions of $x_T$ and  $\tx_T$, and
\item the time discretization.
\end{enumerate*}
The following theorem relates these two errors to the covariance $\Sigma$ of $x$
in the particular case of a uniform time sampling at intervals
$\delta_k = \delta$.  We normalize the signal energy by imposing that ${\rm Tr}(\Sigma) = d$,
and we write
$\kappa$ the condition number of $\Sigma$, which is the ratio between its
largest and smallest eigenvalues.

\begin{theorem}
  \label{prop:approx_gaussian_kl}
  We have $\KL{p}{\tilde{p}_0} \leq \error_{T} + \error_{\delta} + \error_{T, \delta}$, with :
     \begin{align}
  \label{firsteq0}
    &&\error_T =  f(\rme^{-4T}\,\absLigne{\mathrm{Tr}((\Sigma - \Id)\Sigma))}) , \\ 
     \label{firsteq0}
    &&\error_{\delta} = f(\delta \absLigne{\mathrm{Tr}(\Sigma^{-1} - \Sigma (\Sigma - \Id)^{-1} \log(\Sigma)/2 + (\Id - \Sigma^{-1})/3))})  ,
  \end{align}
  where $f(t) = t-\log(1+t)$ and $\error_{T, \delta}$ is a higher-order term with $E_{T, \delta} = o(\delta + \rme^{-4T})$ when $\delta \to 0$ and $T \to +\infty$. Furthermore, for any $\vareps > 0$,
  there exists $T, \delta \geq 0$ such that:
  \begin{equation}
    \label{eq:N_conditioning}
    (1/d) (\error_T + \error_\delta) \leq \vareps~~\mbox{and}~~
    T/\delta \leq C \vareps^{-2} \kappa^3  .
  \end{equation}
  with $C \geq 0$ a universal constant.
  \end{theorem}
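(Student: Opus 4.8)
The plan is to exploit that every distribution in sight is centered Gaussian and simultaneously diagonalizable, reducing the $d$-dimensional problem to $d$ decoupled scalar ones. First I would diagonalize $\Sigma$ in an orthonormal eigenbasis with eigenvalues $\lambda_1,\dots,\lambda_d$. Writing $\Sigma_t = \Id+(\Sigma-\Id)\rme^{-2t}$, the exact score $\nabla\log p_t(x)=-\Sigma_t^{-1}x$ acts diagonally in this basis, the affine update \eqref{eq:backward_disc} preserves it, and the initialization $\tx_T\sim\mathcal N(0,\Id)$ is isotropic; hence $\tilde p_0=\mathcal N(0,\tilde\Sigma_0)$ with $\tilde\Sigma_0$ diagonal in the same basis. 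The Gaussian KL formula then gives $\KL{p}{\tilde p_0}=\tfrac12\sum_i f(\lambda_i/\tilde v_{0,i}-1)$, where $\tilde v_{0,i}$ is the terminal variance of the scalar chain in mode $i$ and $f(t)=t-\log(1+t)$ arises exactly as in the statement.

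Second, for a generic mode with eigenvalue $\lambda$ and forward variance $\sigma_t^2=1+(\lambda-1)\rme^{-2t}$, I would compute $\tilde v_0$ from the scalar variance recursion $\tilde v_{k-1}=a_k^2\tilde v_k+2\delta$ with $a_k=1+\delta-2\delta/\sigma_{t_k}^2$ and $\tilde v_N=1$, and compare it to the continuous reverse variance, which returns $\lambda$ exactly when initialized at $\sigma_T^2$. This isolates two contributions: (i) the initialization mismatch (starting at variance $1$ rather than $\sigma_T^2=1+(\lambda-1)\rme^{-2T}$), propagated to time $0$ by the linearized sensitivity factor $\rme^{-2T}\lambda^2/\sigma_T^4$, which contributes a term of order $\rme^{-4T}\lambda(\lambda-1)$ to $\lambda/\tilde v_0-1$; and (ii) the accumulation of the local (Euler--Maruyama) discretization error over the $N=T/\delta$ steps, which contributes a term of order $\delta$.

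Third, I would Taylor-expand each $f(\lambda_i/\tilde v_{0,i}-1)$, collect the $\rme^{-4T}$-order contributions into $\error_T$ and the $\delta$-order contributions into $\error_\delta$, and place all products of the two sources and higher powers into the remainder $\error_{T,\delta}$, which is $o(\delta+\rme^{-4T})$ by construction. Summing the mode-wise $\rme^{-4T}$ terms yields $\sum_i\rme^{-4T}\lambda_i(\lambda_i-1)=\rme^{-4T}\mathrm{Tr}((\Sigma-\Id)\Sigma)$; assembling these into the single $f$ of a trace relies on convexity of $f$ together with $f(0)=0$, and any slack from this step is again of order $o(\delta+\rme^{-4T})$ and absorbed into $\error_{T,\delta}$. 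The $\delta$-order term instead requires converting the accumulated local errors into an integral over $[0,T]$ by an Euler--Maclaurin argument; the trapezoidal/midpoint corrections are precisely what produce the coefficients $-\tfrac12$ and $+\tfrac13$ in $\mathrm{Tr}(\Sigma^{-1}-\Sigma(\Sigma-\Id)^{-1}\log(\Sigma)/2+(\Id-\Sigma^{-1})/3)$. I expect this bookkeeping --- the first-order expansion of the variance recursion in $\delta$ and the uniform control of the remainder across all modes --- to be the main technical obstacle.

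Finally, for \eqref{eq:N_conditioning} I would only need to make the two explicit terms small, so the argument is clean. The normalization $\mathrm{Tr}(\Sigma)=d$ forces the eigenvalues into $[\kappa^{-1},\kappa]$ (their mean is $1$, so $\lambda_{\min}\le1\le\lambda_{\max}$ while $\lambda_{\max}/\lambda_{\min}=\kappa$), whence $\mathrm{Tr}((\Sigma-\Id)\Sigma)=\sum_i\lambda_i^2-d\le d\kappa^2$ and the trace in $\error_\delta$ is at most $c\,d\kappa$ for a universal $c$ (its dominant terms being $\sum_i\lambda_i^{-1}$ and $\sum_i(\lambda_i-1)/(3\lambda_i)$). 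Using $f(x)\le x$, the choice $T=\tfrac14\log(2\kappa^2/\vareps)$ gives $\error_T/d\le\vareps/2$, and $\delta=\vareps/(2c\kappa)$ gives $\error_\delta/d\le\vareps/2$. Then $T/\delta=\tfrac{c\kappa}{2\vareps}\log(2\kappa^2/\vareps)\le\tfrac{c\kappa}{2\vareps}\cdot\tfrac{2\kappa^2}{\vareps}=c\,\vareps^{-2}\kappa^3$ by $\log y\le y$, which is \eqref{eq:N_conditioning} with $C=c$; the stated powers are a deliberately loose over-estimate of the sharper $\tilde O(\kappa\vareps^{-1})$ that this balancing actually achieves.
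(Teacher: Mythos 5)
Your proposal is correct and follows essentially the same route as the paper's proof: diagonalize in the eigenbasis of $\Sigma$, unroll the resulting scalar variance recursions for the discretized reverse chain, extract the $\rme^{-4T}$ initialization-mismatch term and the $O(\delta)$ discretization term via an Euler--Maclaurin expansion, convert to KL with the explicit Gaussian formula (which is exactly where $f(t)=t-\log(1+t)$ appears), and absorb all cross and higher-order terms into $\error_{T,\delta}$. Your final balancing step, choosing $T=\tfrac14\log(2\kappa^2/\vareps)$ and $\delta=\vareps/(2c\kappa)$ after bounding the eigenvalues in $[\kappa^{-1},\kappa]$ from $\mathrm{Tr}(\Sigma)=d$, correctly yields $T/\delta\leq C\vareps^{-2}\kappa^3$ and in fact makes explicit a step the paper's appendix leaves implicit.
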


This theorem specifies the dependence of the Kullback-Leibler error on the
covariance matrix. It computes an upper bound on the number of time steps
$N = T / \delta$ as a function of the condition number $\kappa$ of $\Sigma$.
As expected, it indicates that the number of time steps increases with the
condition number of the covariance. This theorem is proved in a more general case in \Cref{sec:proof-section21}, which includes the case where $p$ has a non-zero mean. An exact expansion of the Kullback-Leibler divergence is also given.

For stationary processes of images, the covariance eigenvalues
are given by the power spectrum, which typically decays like $|\om|^{-1}$ at a
frequency $\om$. It results that $\kappa$ is proportional to a power of the
image size. Many physical phenomena produce such stationary images with a
power spectrum having a power law decay. In these typical cases, the number of time steps
must increase with the image size. This is indeed what is observed in
numerical SGM experiments, as seen in \Cref{sec:wavel-diff-score}.

\paragraph{General processes} \Cref{prop:approx_gaussian_kl} can be
extended to non-Gaussian processes.  The
number of time steps then depends on the regularity of the score $\nabla \log p_t$.

\begin{theorem}
  \label{thm:control_diffusion}
  Assume that $\nabla \log p_t (x)$ is $\mathscr{C}^2$ in both $t$ and $x$, and that:
  \begin{equation}\label{params:thm2}
    \textstyle{
  \sup_{x,t} \normLigne{\nabla^2 \log p_t(x)} \leq
  \Ktt}~~\mbox{and}~~
  \textstyle{\normLigne{\partial_t \nabla \log p_t(x)} \leq \Mtt\, \rme^{-\alpha t}\,
  \normLigne{x}}.
  \end{equation}
  for some $\Ktt, \Mtt , \alpha >0$. Then $\tvnormLigne{p - \tilde p_0} \leq \error_T + \error_\delta + \error_{T, \delta}$, where:
  \begin{align}
  \label{firsteq}
    \error_T &= \sqrt{2} \rme^{-T} \KLLignesqrt{p}{{\mathcal{N}(0, \Id)}}  ,\\
     \label{secondeq}
    \error_\delta &= 6\,\sqrt{\delta}\, [1+{\mathbb E}_p (\normLigne{x}^4 )^{1/4}]\, [1 +  \Ktt + \Mtt (1 +1/(2 \alpha)^{1/2}) ]  ,
  \end{align}
  and $\error_{\delta, T}$ is a higher order term with $E_{T, \delta} = o(\sqrt{\delta} + \rme^{-T})$ when $\delta \to 0$ and $T \to +\infty$.
\end{theorem}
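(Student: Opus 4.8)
The plan is to split the total error into an \emph{initialization} error, coming from starting the reverse chain at $\mathcal N(0,\Id)$ rather than at the true law $p_T$ of $x_T$, and a \emph{discretization} error, coming from replacing the continuous reverse SDE \eqref{eq:generative_modeling} by the scheme \eqref{eq:backward_disc}. Writing $\bar p_0$ for the law produced by the discrete chain \eqref{eq:backward_disc} when it is \emph{exactly} initialized at $p_T$, the triangle inequality for total variation gives $\tvnormLigne{p-\tilde p_0}\le \tvnormLigne{p-\bar p_0}+\tvnormLigne{\bar p_0-\tilde p_0}$. The second term compares the same Markov kernel applied to $p_T$ and to $\mathcal N(0,\Id)$; since Markov kernels are contractions in total variation (the data-processing inequality), it is bounded by $\tvnormLigne{p_T-\mathcal N(0,\Id)}$, which will yield $\error_T$.

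For $\error_T$ I would invoke the exponential ergodicity of the forward Ornstein--Uhlenbeck semigroup defined by \eqref{eq:noising}, whose invariant law is exactly $\mathcal N(0,\Id)$. The log-Sobolev inequality for the standard Gaussian gives geometric contraction of the relative entropy, $\KL{p_T}{\mathcal N(0,\Id)}\lesssim \rme^{-2T}\KL{p}{\mathcal N(0,\Id)}$, and Pinsker's inequality converts this into the total-variation bound $\tvnormLigne{p_T-\mathcal N(0,\Id)}\le \error_T$ with $\error_T$ as in \eqref{firsteq}.

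The heart of the argument is the discretization term $\tvnormLigne{p-\bar p_0}$, which I would control with Girsanov's theorem on path space, again followed by data-processing and Pinsker. Interpolating \eqref{eq:backward_disc} into a continuous-time diffusion with piecewise-constant drift, the Radon--Nikodym derivative between its path law and that of \eqref{eq:generative_modeling} (both started at $p_T$, with common diffusion coefficient $\sqrt2$) is governed by the integrated squared drift mismatch, so that $\tvnormLigne{p-\bar p_0}^2\lesssim \mathbb E\int_0^T \normLigne{b_t(X_t)-b_{t_k}(X_{t_k})}^2\,\rmd t$, where $b_t$ is the reverse drift and $t_k$ is the grid point preceding $t$. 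I would split this mismatch into a \emph{spatial-freezing} part $\nabla\log p_{T-t}(X_t)-\nabla\log p_{T-t}(X_{t_k})$, bounded by $\Ktt\,\normLigne{X_t-X_{t_k}}$ via the Hessian assumption, and a \emph{time-freezing} part, bounded by integrating $\Mtt\,\rme^{-\alpha(T-t)}\normLigne{X_{t_k}}$ via the assumption on $\partial_t\nabla\log p_t$. The $L^2$ modulus of continuity $\mathbb E\normLigne{X_t-X_{t_k}}^2=O(\delta)$ over a single step produces the factor $\sqrt\delta$ for the spatial part, while the exponential factor $\rme^{-\alpha(T-t)}$ makes the time-freezing contribution summable with $L^2$-mass $1/(2\alpha)$ over the half-line, producing the term $\Mtt(1+1/(2\alpha)^{1/2})$ in \eqref{secondeq} and, crucially, removing the dependence on $T$.

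The remaining ingredient is uniform moment control: propagating the fourth moment of $p$ through the representation \eqref{eq:OU_sol} shows $\sup_t \mathbb E\normLigne{x_t}^4$ is finite and comparable to $\mathbb E_p(\normLigne{x}^4)$, which is exactly what lets Cauchy--Schwarz close the estimates above and accounts for the factor $[1+\mathbb E_p(\normLigne{x}^4)^{1/4}]$; the genuinely $O(\delta)$ and $O(\rme^{-2T})$ cross-terms are then collected into $\error_{T,\delta}$. The main obstacle I anticipate is precisely this drift-mismatch estimate: rigorously justifying the Girsanov change of measure (integrability and Novikov-type conditions for a possibly super-linear score), and carrying out the one-step $L^2$ bounds on $\normLigne{X_t-X_{t_k}}$ jointly with the moment propagation so that the constants assemble into the clean form of \eqref{secondeq} rather than accumulating spurious factors of $T$.
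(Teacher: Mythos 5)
Your overall architecture matches the paper's: initialization error split off by a triangle inequality plus data-processing, controlled via the Gaussian log-Sobolev inequality and Pinsker (this yields \eqref{firsteq} exactly as in the paper), and discretization error controlled by a path-space Girsanov/KL argument in which the drift mismatch is split into a spatial-freezing part (bounded through $\Ktt$) and a time-freezing part (bounded through $\Mtt\,\rme^{-\alpha t}\normLigne{x}$). Where you differ is in which kernel absorbs the data-processing step: you compare the two initializations through the \emph{discrete} chain, so your Girsanov comparison is between the exact reverse SDE and the interpolated Euler scheme, both started at the true $p_T$. The paper does the opposite: it pushes the initialization error through the exact reverse SDE, so its Girsanov comparison involves the reverse dynamics started at $\mathcal{N}(0,\Id)$; expectations must then be reweighted by $(\rmd p_\infty/\rmd p_T)(x_T)$, which forces a Cauchy--Schwarz step and a separate $\chi^2$-type bound on $\int p_\infty^2/p_T$ (\Cref{item:bound_p_infty}) --- this reweighting is also the only reason fourth moments appear in \eqref{secondeq}. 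Your ordering avoids that lemma entirely and would need only second moments, which is a genuine simplification.

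However, your discretization estimate has a genuine gap: it does not deliver the $T$-uniformity of $\error_\delta$, which is the substantive content of the theorem (it is precisely what improves on the $C\rme^{DT}\sqrt{\delta}$ bound of prior work that the paper highlights). The factor $\rme^{-\alpha(T-t)}$ from \eqref{params:thm2} controls only the \emph{time}-freezing part of the drift mismatch. The \emph{spatial}-freezing part is $(1+2\Ktt)\normLigne{X_t - X_{t_k}}$, and under the true reverse path law (equivalently, the forward OU path law) the one-step modulus satisfies $\expeLigne{\normLigne{X_t - X_{t_k}}^2} \asymp d\,(t-t_k)$ \emph{uniformly along the trajectory}: near stationarity the OU increments do not shrink. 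So each of the $N = T/\delta$ intervals contributes order $\Ktt^2 d\,\delta^2$ to the integrated squared mismatch, the total is order $\Ktt^2 d\,\delta T$, and after the square root you get $\error_\delta \propto \Ktt\sqrt{dT\delta}$ --- the dimension factor is consistent with $[1+\mathbb{E}_p(\normLigne{x}^4)^{1/4}]$, but the extra $\sqrt{T}$ is not in \eqref{secondeq}. Your claim that a single-step bound "produces the factor $\sqrt{\delta}$" overlooks this accumulation over $N$ steps. The paper removes the $T$ factor by a further ingredient for which your proposal has no substitute: a bound on forward increments, $\expeLigne{\normLigne{x_t - x_s}^4} \lesssim \expeLigne{\normLigne{x_0}^4}\rme^{-4s}(t-s)^4 + \rme^{-2s}(t-s)^2$ \eqref{eq:inter_ou}, whose \emph{exponential decay in forward time} $s$ puts a factor $\rme^{-t_k}$ (or $\rme^{-2\alpha t_k}$) in front of every term, so that the sum over intervals is geometric, $\sum_k \rme^{-t_k}\gamma_{k+1} = O(1)$, rather than linear in $N$. (That step is itself delicate: it rests on the representation $x_t = \rme^{-t}x_0 + w_{1-\rme^{-2t}}$ with a single Brownian motion $w$, which is valid for each fixed marginal but is used there for the joint law of $(x_{t_k}, x_t)$.) To prove the theorem as stated you must either establish such a decay of increment moments or supply another mechanism for the geometric summation; as written, your argument proves only the weaker $\sqrt{T\delta}$ version.
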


The proof of \Cref{thm:control_diffusion} is postponed to
\Cref{sec:proof-section21} and we show that the result can be strengthened by
providing a quantitative upper bound on $\tvnormLigne{p - \tilde p_0}$.
\Cref{thm:control_diffusion} improves on \cite[Theorem
1]{debortoli2021diffusion} by proving explicit bounds exhibiting the
dependencies on the regularity constants $\Ktt$ and $\Mtt$ of the score and by
eliminating an exponential growth term in $T$ in the upper bound.
\Cref{thm:control_diffusion} is much more general but not as tight as
\Cref{prop:approx_gaussian_kl}.  

The first error term \eqref{firsteq} is due to
the fact that $T$ is chosen to be finite. 
The second error term \eqref{secondeq}
controls the error depending upon the discretization time step $\delta$.  Since
$p_t$ is obtained from $p$ through a high-dimensional convolution with a
Gaussian convolution of variance proportional to $t$, the regularity of
$\nabla \log p_t (x)$ typically increases with $t$ so
$\normLigne{\nabla^2 \log p_t(x)}$ and
$\normLigne{\partial_t \nabla \log p_t(x)}$ rather decrease when $t$
increase. This qualitatively explains why a \emph{quadratic} discretization
schedule with non-uniform time steps $\delta_k \propto k$ are usually chosen in numerical
implementations of SGMs \citep{nichol2021improved,song2020improved}.  For
simplicity, we focus on the uniform discretization schedule, but our result could
be adapted to non-uniform time steps with no major difficulties.

If $p$ is Gaussian, then the Hessian $\nabla^2 \log p$ is the negative inverse of the covariance matrix. We verify in \Cref{sec:proof-section21} that in this case, the assumptions of  \Cref{thm:control_diffusion} are satisfied. Furthermore, the constants $\Ktt$ and $\Mtt$, and hence the number of discretization steps, are controlled using the condition number of $\Sigma$. We thus conjecture that  non-Gaussian processes with an ill-conditioned covariance matrix will require many discretization steps to have a small error. This will be verified numerically. As we now explain, such processes are ubiquitous in physics and natural image datasets.

\paragraph{Multiscale processes}
 Most images have variations on a wide range of scales.  They require to use
 many time steps to sample using an SGM, because their score is not well-conditioned.
 This is also true for a wide range of phenomena encountered in physics,
 biology, or economics \citep{kolmogorov_1962,mandelbrot1983fractal}.  We define a \emph{multiscale
 process} as a stationary process whose power spectrum has a power law
 decay. The stationarity implies that its covariance is diagonalized in a
 Fourier basis.  Its eigenvalues, also called power spectrum, have a power law
 decay defined by:
\begin{equation}\label{eq:power_law_spectrum}
P(\om) \sim (\xi^\eta + |\om|^\eta)^{-1} ,
\end{equation}
where $\eta > 0$ and $2 \pi/\xi$ is the maximum correlation length. 
Physical processes near phase transitions have such a power-law decay, but it is also the case of many disordered systems such as fluid and gas turbulence. 
Natural images also typically define stationary processes. Their power spectrum satisfy this property with $\eta = 1$
and $2\pi / \xi \approx L$ which is the image width.
To efficiently synthesize images and more general multiscale signals, we must eliminate
the ill-conditioning properties of the score. This is done by applying a wavelet transform.

\section{Wavelet Score-Based Generative Model}
\label{sec:wavel-diff-score}

The numerical complexity of the SGM algorithm depends on the number of time steps,
which itself depends upon the regularity of the score.  
We show that an important acceleration is obtained by
factorizing the data distribution into normalized wavelet conditional
probability distributions, which are closer to normal Gaussian
distributions, and so whose score is better-conditioned.

\subsection{Wavelet Whitening and Cascaded SGMs}
\label{sec:whit-with-wavel}

\paragraph{Normalized orthogonal wavelet coefficients}
Let $x$ be the input signal of width $L$ and dimension $d = L^n$, with $n=2$ for
images.  We write $x_j$ its low-frequency
approximation subsampled at intervals $2^j$, of size $(2^{-j} L)^n$, with
$x_0 = x$.  At each scale $2^{j-1} \geq 1$, a fast wavelet orthogonal transform
decomposes $x_{j-1}$ into $(\ox_j , x_j)$ where $\ox_j$ are the wavelet
coefficient which carries the higher frequency information over $2^{n}-1$
signals of size $(2^{-j} L)^n$ \cite{mallat1999wavelet}.  They are calculated
with convolutional and subsampling operators $\rmG$ and $\oG$ specified in
\Cref{sec:wavelet-transforms}:
\begin{equation}
  \label{eq:low_res}
  \textstyle{
    x_j = \gamma_j^{-1}\, \rmG\, x_{j-1} ~~\mbox{and}~~
    \ox_j = \gamma_j^{-1}\, \oG\, x_{j-1} ~.
    }
\end{equation}
The normalization factor $\gamma_j$ guarantees that 
${\mathbb E}[\|\ox_j \|^2] = (2^n-1) (2^{-j} L)^n$.  We consider wavelet
orthonormal filters where $(\rmG,\wG)$ is a unitary operator, i.e.:
\[
\oG\rmG^\top = \rmG\oG^\top = 0~~\mbox{and}~~\rmG^\top\rmG + \oG^\top\oG = \Id .
\]
It results that $x_{j-1}$ is recovered from $(\ox_j,x_j)$ with:
\begin{equation}
  \label{eq:reconstruction_UNO}
x_{j-1} = \gamma_j \, \rmG^\top x_j + \gamma_j \, \oG^\top \ox_j .
\end{equation}
The wavelet transform is computed over $J \approx \log_2 L$ scales by iterating
$J$ times on \eqref{eq:low_res}. The last $x_J$ has a size
$(2^{-J} L)^n \sim 1$.  The choice of wavelet
filters $\rmG$ and $\oG$ specifies the properties of the wavelet transform and
the number of vanishing moments of the wavelet, as explained in 
\Cref{sec:wavelet-transforms}.

\paragraph{Renormalized probability distribution}
A conditional wavelet renormalization 
factorizes the distribution $p(x)$ of signals $x$ into conditional probabilities over wavelet coefficients:
\begin{equation}
  \label{product}
  \textstyle{
    p(x) =  \alpha \prod_{j=1}^{J} \bar p_j (\ox_j | x_j)\, p_{J} (x_{J})~.
    }
\end{equation}
where $\alpha$ (the Jacobian) depends upon all $\gamma_j$.

Although $p(x)$ is typically highly non-Gaussian, the factorization
\eqref{product} involves distributions that are closer to Gaussians.  The
largest scale distribution $p_{J}$ is usually close to a Gaussian when the image
has independent structures, because $x_J$ is an averaging of $x$ over large
domains of size $2^J$.  Remarkably, for large classes of signals and images,
each conditional probabilities
$\bar p_j (\ox_j | x_j)$ also
exhibit Gaussian properties.  In images, the wavelet coefficients $\ox_j$ are
usually sparse and thus have highly non-Gaussian distribution; however, it has
been observed \citep{wainwright1999scale} that their conditional distributions
$\bar p_j (\ox_j | x_j)$ become much more Gaussian, due to dependencies of
wavelet coefficients across scales.  Similarly, in statistical physics,
factorization of probabilities of high frequencies 
conditioned by lower frequencies have been introduced in
\cite{wilson1983renormalization}. More recently, normalized wavelet
factorizations \eqref{product} have been introduced in physics to implement
renormalization group calculations, and model probability distributions with
maximum likelihood estimators near phase transitions \citep{WCRG}.

\paragraph{Wavelet Score-based Generative Model} 
Instead of computing a Score-based Generative Model (SGM) of the distribution
$p(x)$, a Wavelet Score-based Generative Model (WSGM) applies an SGM at the
coarsest scale $p_J (x_J)$ and then on each conditional distribution
$\bar p_j (\ox_j | x_j)$ for $j \leq J$.  It is thus a cascaded SGM, similarly
to \cite{ho2022cascaded,saharia2021image}, but calculated on
$\bar p_j (\ox_j | x_j)$ instead of $p_j (x_{j-1}|x_j)$.  The normalization of
wavelet coefficients $\ox_j$ effectively produces a whitening which can
considerably accelerate the algorithm by reducing the number of time steps. This
is not possible on $x_{j-1}$ because its covariance is ill-conditioned. It will
be proved for Gaussian processes.

A forward noising process is computed on each $\ox_j$ for $j \leq J$ and $x_J$:
\begin{equation}\label{eq:noisingscales}
  \rmd \ox_{j,t} = -\ox_{j,t} \,\rmd t + \sqrt{2} \rmd \bar{w}_{j,t}~~\mbox{and}~~
  \rmd x_{J,t} = -x_{J,t} \,\rmd t + \sqrt{2} \rmd w_{J,t},
\end{equation}
where the $\bar{w}_{j,t}, w_{J,t}$ are Brownian motions.  Since $\bar{x}_j$ is
nearly white and has Gaussian properties, this diffusion converges much more
quickly than if applied directly on $x$.  Using \eqref{eq:ism_loss}, we compute a
score function ${\bs}_{J,t}(x_{J,t})$ which approximates the score
$\nabla \log p_{J,t}(x_{J,t})$. For each $j \leq J$ we also compute the
conditional score $\bar {\bs}_{j,t} (\ox_{j,t} | x_{j})$ which
approximates $\nabla \log \bar p_{j,t}(\ox_{j,t} | x_{j})$.

The inverse generative process is computed from coarse to fine scales as follows. 
At the largest scale $2^J$, we sample
the low-dimensional $x_J$ by time discretizing the inverse SDE. Similarly to 
\eqref{eq:backward_disc}, the generative process is given by:
\begin{equation}
  \label{eq:backward_disc2}
  \txwav_{J,t_{k+1}} = \txwav_{J,t_{k}} + \delta_{k} \{ \txwav_{J,t_{k}} + 2  {\bs}_{ J,t_{k}}(\txwav_{J,t_{k}})\} + \sqrt{2 \delta_{k}} z_{J,k} , \qquad z_{J,k}  \overset{\textup{i.i.d.}}{\sim} {\cal N}(0,\Id) . 
\end{equation}
For $j$ going from $J$ to $1$, we then generate the wavelet
coefficients $\ox_j$ conditionally to the previously calculated $\txwav_{j}$, by
keeping the same time discretization schedule at all scales:
  \begin{equation}
  \label{eq:backward_disc2}
  \tox_{j,t_{k+1}} = \tox_{j,t_{k}} + \delta_{k} \{ \tox_{j,t_{k}} + 2  \bar {\bs}_{ j,t_{k}}(\tox_{j,t_{k}}|\txwav_{j})\} + \sqrt{2 \delta_{k}}\, z_{j,k} , \qquad z_{j,k}  \overset{\textup{i.i.d.}}{\sim} {\cal N}(0,\Id) . 
\end{equation}
The inverse wavelet transform then approximately computes
a sample of $x_{j-1}$ from $(\tox_{j,0}, \txwav_{j})$:
\begin{equation}
  \label{eq:reconstruction}
\tx_{j-1} = \gamma_j \, \rmG^\top \txwav_{j} + \gamma_j \, \oG^\top \tox_{j,0} .
\end{equation}
The generative process is illustrated in \Cref{fig1} and its pseudocode is given in \Cref{sec:wsgm-pseudocode}, see
\Cref{alg:cascaded_wavelet}.  The appendix also verifies that if $x$ is of size $d$ then its numerical complexity is $O(N d)$ where $N$ is the number of time steps, which is the same at each scale. For multiscale processes, we shall see that the number of time steps $N$ does not depend upon $d$ to reach a fixed error measured with a KL divergence.

\paragraph{Related work}
The use of wavelets in multiscale architectures is often motivated by the spectral bias phenomenon \cite{rahaman2019spectral}: \cite{huang2017wavelet} uses a wavelet CNN for super-resolution, \cite{gal2021swagan} incorporates spectral information by training GANs in the wavelet domain. Closer in spirit to our work, \cite{yu2020wavelet} introduces Wavelet Flow, a normalizing flow with a cascade of layers generating wavelet coefficients conditionally on lower-scales, then aggregating them with an inverse wavelet transform. This method yields training time acceleration and high-resolution (1024x1024) generation. 

WSGM is closely related to other cascading diffusion algorithms, such as the ones
introduced in \cite{ho2022cascaded,saharia2021image,nichol2021beatgans}. The
main difference lies in that earlier works on cascaded SGMs do not model the
\emph{wavelet coefficients} $\{\bar{x}_j\}_{j=1}^J$ but the \emph{low-frequency}
coefficients $\{x_j\}_{j=1}^J$. As a result, cascaded models do not exploit the
whitening properties of the wavelet transform.   We
also point out the recent work of \cite{jing2022subspace} which, while not using
the cascading framework, drop subspaces from the noising process at different
times. This allows using only one SDE to sample approximately from the data
distribution. However, the reconstruction is still computed with respect to
$\{x_j\}_{j=1}^J$ instead of the wavelet coefficients.

Finally, we highlight that our work could be combined with other acceleration
techniques such as the ones of
\cite{jolicoeur2021gotta,liu2022pseudo,zhang2022exponential,
  san2021noise,nachmani2021non,song2020denoising,ho2020denoising,kong2021fast,luhman2021knowledge,salimans2022progressive,xiao2021tackling}
in order to improve the empirical results of WSGM.

\subsection{Discretization and Accuracy for Gaussian Processes}
\label{sec:time-sampl-accur}
We now illustrate \Cref{prop:approx_gaussian_kl} and the effectiveness of WSGM on Gaussian multiscale processes.  We use the
whitening properties of the wavelet transform to show that the time
complexity required in order to reach a given error is linear in the image dimension.

The following result proves that the normalization of wavelet coefficients
performs a preconditioning of the covariance, whose eigenvalues then remain of the
order of $1$. This is a consequence of a theorem proved by \cite{Meyer:92c} on
the representation of classes of singular operators in wavelet bases, see
\Cref{sec:wavelet-transforms}. As a result, the number of iterations $T/\delta$ required to reach an error $\varepsilon$ is independent of the dimension.

\begin{theorem}
  \label{lemma:bound_spectrum}
  Let $x$ be a Gaussian stationary process of power spectrum
  $P(\om) = c\,(\xi^\eta + |\om|^\eta)^{-1}$ with $\eta > 0$ and $\xi > 0$.
  If the wavelet has a compact support, $q  \geq \eta$ vanishing moments and is ${\mathscr{C}}^q$, then 
  $\KL{p}{\tilde{p}_0} = \error_T + \error_{\delta} + \error_{\delta, T}$ with
  $\error_{\delta, T}$ a higher order term such that $E_{T, \delta} = o(\delta + \rme^{-4T})$ when $\delta \to 0$ and $T \to +\infty$. 
  Furthermore, for any $\vareps >0$, there exists $C > 0$ such that for any $\delta$, $T$:
  \begin{equation}
    \label{eq:N_conditioning_WSGM}
    (1/d) (\error_T + \error_\delta) \leq  \vareps~~\mbox{and}~~
    T/\delta \leq C \vareps^{-2}   .
  \end{equation}
\end{theorem}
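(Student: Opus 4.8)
The plan is to reduce the claim to a scale-by-scale application of \Cref{prop:approx_gaussian_kl}. Since $x$ is a stationary Gaussian process and each $x_j$ and $\ox_j$ is a linear (orthogonal wavelet) function of $x$, the joint law of all wavelet coefficients is Gaussian, so every conditional distribution $\bar p_j(\ox_j \mid x_j)$ in the factorization \eqref{product} is Gaussian with a covariance $\Sigma_j$ that does not depend on the conditioning value $x_j$ (only its mean is an affine function of $x_j$). Applying the chain rule for the Kullback-Leibler divergence to \eqref{product} decomposes $\KL{p}{\tilde p_0}$ into a coarse-scale term $\KL{p_J}{\tilde p_{J,0}}$ plus a sum over $j$ of expected conditional divergences $\mathbb{E}_{x_j}[\KL{\bar p_j(\cdot \mid x_j)}{\tilde{\bar p}_{j}(\cdot \mid x_j)}]$. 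Because the discretization schedule is identical at every scale, each term is controlled by \Cref{prop:approx_gaussian_kl} applied to the Gaussian $\mathcal{N}(\cdot, \Sigma_j)$ in dimension $(2^n-1)(2^{-j}L)^n$ — using the non-zero-mean extension announced after that theorem — giving an error of the form $\error_T + \error_\delta + \error_{\delta,T}$ with $T/\delta \leq C \vareps^{-2} \kappa_j^3$, where $\kappa_j$ is the condition number of $\Sigma_j$. As the per-scale dimensions sum to $d$, controlling each normalized error by $\vareps$ yields $(1/d)(\error_T+\error_\delta)\leq \vareps$, and the common step count is the largest $C\vareps^{-2}\kappa_j^3$. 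It therefore suffices to prove that $\kappa_j$ is bounded by a constant independent of $j$ and of $d$.

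To bound $\kappa_j$ I would invoke the wavelet representation of singular operators of \cite{Meyer:92c}, discussed in \Cref{sec:wavelet-transforms}. The covariance of $x$ is the convolution operator whose Fourier symbol is the power spectrum $P(\om) = c(\xi^\eta + |\om|^\eta)^{-1}$ of \eqref{eq:power_law_spectrum}, a classical Calderón-Zygmund-type operator of order $-\eta$. Meyer's theorem states that, provided the wavelet has compact support, $q \geq \eta$ vanishing moments, and is ${\mathscr C}^q$, the matrix of this operator in the wavelet basis is almost diagonal: its entries between wavelets at scales $j,j'$ and positions $k,k'$ decay geometrically in $|j-j'|$ and faster than any polynomial in the rescaled spatial separation. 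The diagonal block is the covariance $\Sigma_{\ox_j \ox_j}$ of the unnormalized coefficients $\oG x_{j-1}$; over the frequency octave carried by scale $j$ the symbol $P(\om)$ varies only by the bounded factor $2^\eta$, so after the normalization $\gamma_j^{-2}$ — chosen precisely so that ${\mathbb E}[\|\ox_j\|^2]$ equals the dimension — Meyer's estimates place all eigenvalues of the normalized diagonal block in a fixed interval $[\ttamin, \btta]$ with $0 < \ttamin \leq \btta$ independent of $j$ and $d$.

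The remaining step, and the main obstacle, is to pass from the diagonal block to the conditional covariance $\Sigma_j$, which is the Schur complement $\Sigma_{\ox_j \ox_j} - \Sigma_{\ox_j x_j}\Sigma_{x_j x_j}^{-1}\Sigma_{x_j \ox_j}$. The subtracted term is positive semidefinite, so the upper eigenvalue bound is immediate ($\Sigma_j \preceq \Sigma_{\ox_j\ox_j}$); the difficulty lies in the lower bound, since trace normalization alone does not prevent small eigenvalues from appearing after conditioning. Here I would again use Meyer's off-diagonal decay: $\ox_j$ and $x_j$ occupy essentially disjoint frequency bands, so the cross-covariance $\Sigma_{\ox_j x_j}$ is small in operator norm relative to the diagonal blocks, whence the subtracted correction is a small perturbation and $\Sigma_j \succeq \ttamin' \,\Id$ uniformly. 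Making this quantitative — extracting a uniform operator-norm bound on $\Sigma_{\ox_j x_j}\Sigma_{x_j x_j}^{-1}\Sigma_{x_j \ox_j}$ across all scales and dimensions from the almost-diagonal estimates, and in particular securing the uniform lower eigenvalue bound — is the technical heart of the argument. Once $\kappa_j = \kappa(\Sigma_j) \leq \btta/\ttamin' = O(1)$ is established, substituting into the per-scale bound from \Cref{prop:approx_gaussian_kl} and summing the $J \approx \log_2 L$ contributions yields $T/\delta \leq C \vareps^{-2}$ with $C$ independent of $d$, as claimed.
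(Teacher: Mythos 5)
Your global strategy coincides with the paper's: chain rule for the KL divergence across the wavelet factorization, the non-zero-mean Gaussian discretization bound (\Cref{sec:conv-results-discr-2}) applied scale by scale with a common schedule, summation of the per-scale errors (whose dimensions sum to $d$), and an appeal to \cite{Meyer:92c} for dimension-independent conditioning. The gap is precisely the step you flag as the ``technical heart'': the uniform lower eigenvalue bound on the conditional covariance $\Sigma_j = \Sigma_{\ox_j \ox_j} - \Sigma_{\ox_j x_j}\Sigma_{x_j x_j}^{-1}\Sigma_{x_j \ox_j}$. The route you sketch --- almost-diagonality of the covariance in the wavelet basis plus smallness of the cross block --- does not close. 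Off-diagonal decay of matrix entries bounds an operator from above but never from below, so it cannot by itself prevent the Schur correction from collapsing $\lambda_{\min}$; and the premise itself is doubtful, since the wavelet and scaling filters have overlapping frequency supports, so for a power-law spectrum the normalized cross-covariance between $\ox_j$ and $x_j$ is $O(1)$ rather than small. What is needed is not that the correction is a small perturbation of the diagonal block, but that it cannot exhaust the smallest eigenvalue --- and that is a statement about the joint law of all coefficients, not about the two blocks separately.

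The paper closes this step with a different reading of Meyer's theorem. \Cref{sec:norm-covar} (proved in \Cref{sec:wavelet-transforms}) uses the equivalence between the Sobolev form $\lb \Sigma_\infty f , f \rb$ and the weighted wavelet norm $\sum_{j,u,k} 2^{-j\eta} |\lb f, \psi^k_{j,u}\rb|^2$ to obtain \emph{two-sided} operator bounds on the joint covariance of the full vector of normalized coefficients: $C_1 \Id \leq \bar\Sigma \leq C_2 \Id$ with $C_1, C_2$ independent of $L$. This makes the conditional bound automatic by linear algebra: conditioning on $x_j$ is the same as conditioning on the coarser coefficients $(\ox_{j+1},\dots,\ox_J,x_J)$ (the reconstruction map is an invertible linear map), so $\Sigma_j$ is the Schur complement inside a principal submatrix $M$ of $\bar\Sigma$; since $\Sigma_j^{-1}$ is a principal block of $M^{-1}$ and $C_2^{-1}\Id \leq M^{-1} \leq C_1^{-1}\Id$, one gets $C_1 \Id \leq \Sigma_j \leq C_2 \Id$, hence $\kappa_j \leq C_2/C_1$ uniformly in $j$ and $d$. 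Without this observation (or an equivalent quantitative substitute), your per-scale application of \Cref{prop:approx_gaussian_kl} has no uniform $\kappa_j$ to plug in, and the argument remains incomplete.
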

To prove this result, we show that the conditioning number of the covariance matrix of the renormalized wavelet coefficients does not depend on the dimension using \citep{Meyer:92c}. We conclude upon combining this result, the cascading property of the Kullback-Leibler divergence and an extension of Theorem 1 to the setting with non-zero mean. The detailed proof is postponed to \Cref{sec:proof33}.

\paragraph{Numerical results}
We illustrate \Cref{lemma:bound_spectrum} on a Gaussian
field $x$, with power spectrum $P$ as \eqref{eq:power_law_spectrum}. In \Cref{fig:gaussian}, we display the sup-norm between $P$ and
the power spectrum $\hat{P}$ of the samples obtained using either vanilla SGM or WSGM
with uniform stepsize $\delta_k = \delta$. In the case of vanilla SGM, the number $N(\varepsilon)$ of time steps needed to reach a small error $\Vert P - \hat{P} \Vert=\varepsilon$ increases with the size of the image $L$ \Cref{fig:gaussian}, right). Equation \eqref{eq:N_conditioning} suggests that $N(\varepsilon)$ scales like a power of the conditioning number $\kappa$ of $\Sigma$, which, for multiscale gaussian processes, is $\kappa \sim L^\eta$. In the WSGM case, we sample conditional distributions $\bar{p}_j$ of wavelets $\bar{x}_j$  given low-frequencies $x_j$. At scale $j$, the conditioning numbers $\bar{\kappa}_j$ of the conditional covariance become dimension-independent (\Cref{sec:wavelet-transforms}), removing the dependency of $N(\varepsilon)$ on the image size $L$ as suggested by \eqref{eq:N_conditioning_WSGM}.

\begin{figure}[h!]
    \centering
    \begin{tabular}{cc}
    \includegraphics[height=0.17\textheight]{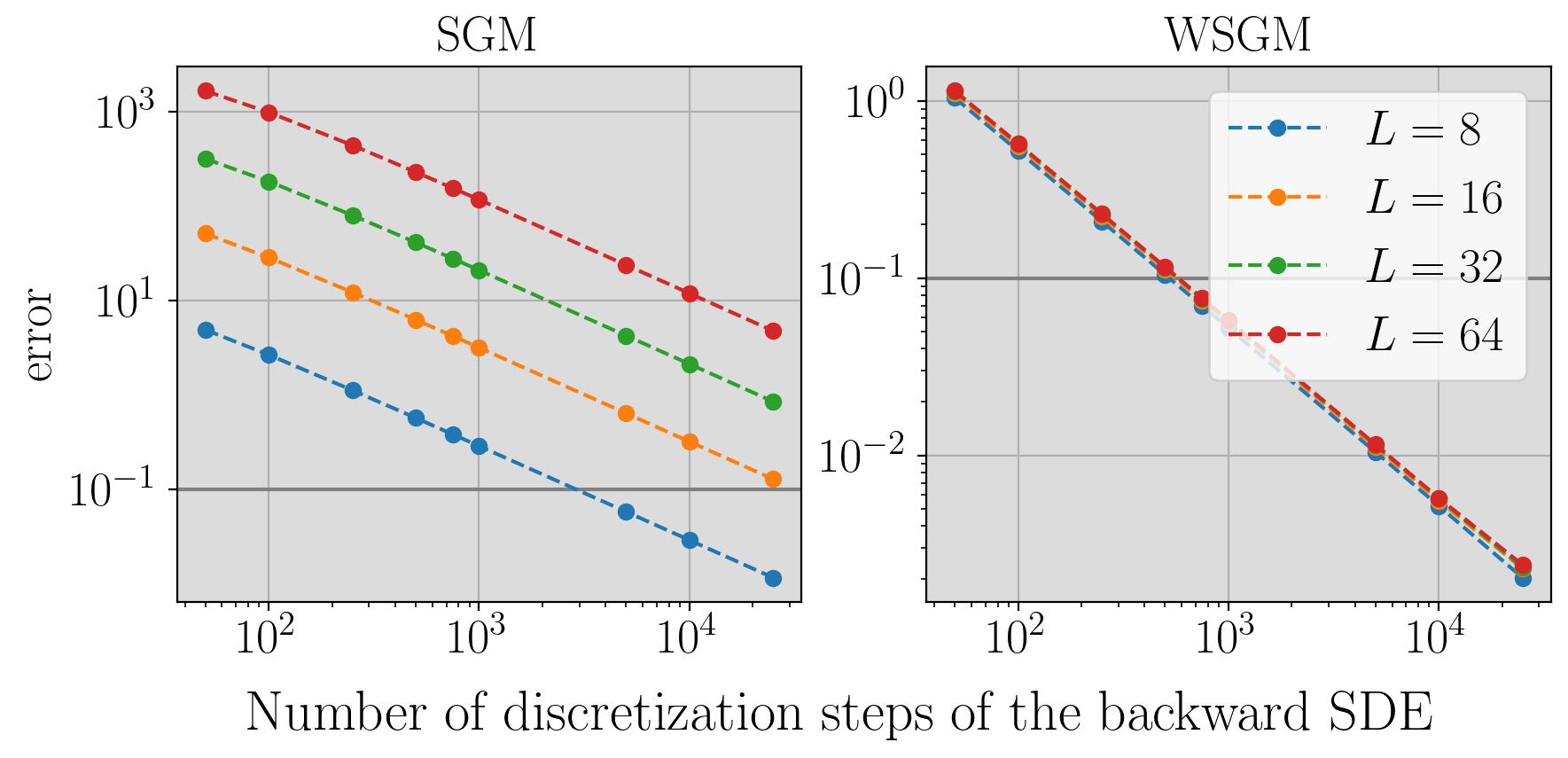}&\includegraphics[height=0.17\textheight]{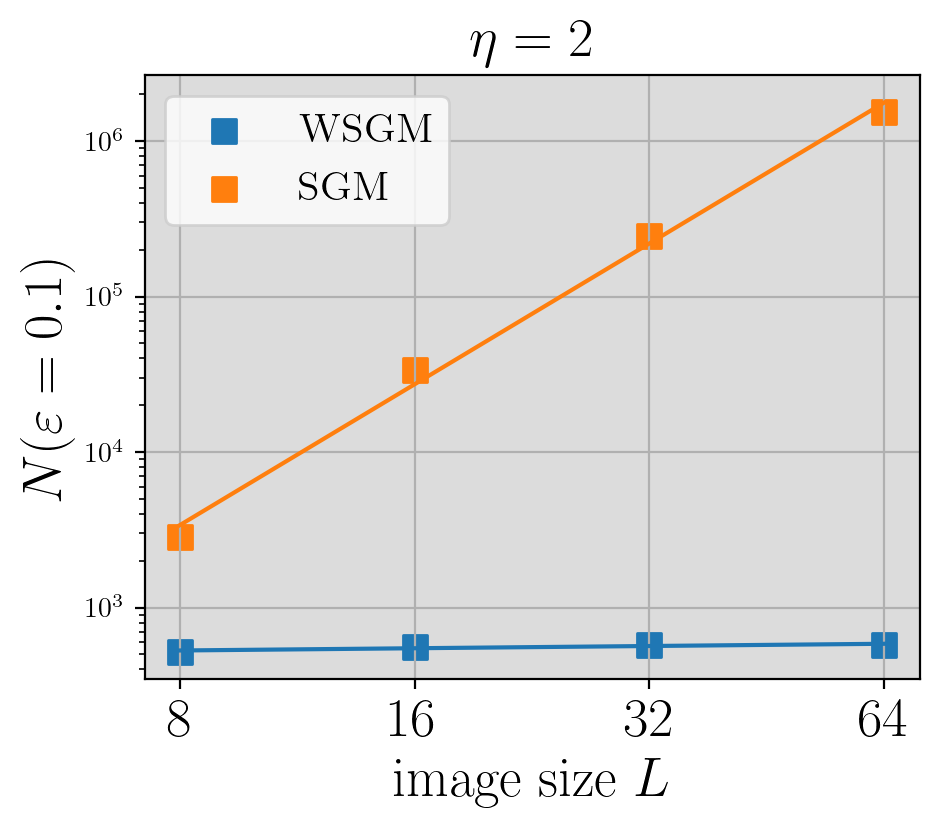}
    \end{tabular}
    \caption{\textbf{Left and middle:} evolution of the error on the estimated
      covariance matrix using either SGM or WSGM w.r.t.\ the number of stepsizes
      used in the model ($T=10$ is fixed). \textbf{Right:} number $N(\varepsilon)$ of discretization steps
      required to reach a given error $\varepsilon = 0.1$ using either SGM or WSGM.}
    \label{fig:gaussian}
\end{figure}

\section{Acceleration with WSGM: Numerical Results}
\label{sec:time-reduct-wavel}



For multiscale Gaussian processes, we proved that with WSGMs, the number of time steps $N(\varepsilon)$ to reach a fixed error $\varepsilon$ does not depend on the signal size, as opposed to SGMs. This section
shows that this result applies to non-Gaussian multiscale processes. We consider
a physical process near a phase transition and images from the CelebA-HQ database \citep{celeba-hq}.

\subsection{Physical Processes with Scalar Potentials}\label{sec:results_phi4}

Gaussian stationary processes are maximum entropy processes conditioned by second order moments defined by a circulant matrix. More complex physical processes are modeled by imposing a constraint on their marginal distribution, with a so-called scalar potential. 
The marginal distribution of $x$ is the probability distribution of $x(u)$, which does not depend upon $u$ if $x$ is stationary. Maximum entropy processes conditioned by second order moments and marginal distributions have a probability density which is a Gibbs distribution
$p (x) = Z^{-1}\, \rme^{-E (x)}$ with:
\begin{equation} \label{eq:gibbs}
  \textstyle{
    E(x) = \frac{1}{2}x^\top C x + \sum_u V(x(u)) ~~,
    }
\end{equation}
where $C$ is a circulant matrix and $V\colon\R\to\R$ is a scalar potential.
\Cref{sec:deta-varph-model} explains how to parametrize $V$ as a linear combination of a family of fixed elementary functions. The $\varphi^4$ model is a particular example where $C = \Delta$ is a Laplacian and $V$ is a fourth order polynomial, adjusted in order to imposes that $x(u) = \pm 1$ with high probability. For so-called critical values of these parameters, the resulting process becomes multiscale with long range interactions and a power law spectrum, see
\Cref{fig:phi4synthesis}-(c).

We train SGMs and WSGMs on critical $\varphi^4$ processes of different sizes; for the score model ${\bs}_\theta$, we use a simple linear parametrization detailed in \Cref{app:score_phi4}. To evaluate the quality of the generated samples, it is sufficient to verify that
these samples have the same second order moment and marginals as $\varphi^4$. We define the error metric as the sum of the $\mathrm{L}^2$ error on the power spectrum and the total-variation distance between marginal distributions. \Cref{fig:phi4synthesis}-(a) shows the decay of this error as a function of the number of time steps used in an SGM and
WSGM with a uniform discretization. With vanilla SGM, the loss has a strong dependency in $L$, but becomes almost independent of $L$ for WSGM. 
This empirically verifies the claim that an ill-conditioned covariance matrix leads to slow sampling of SGM, and that WSGM is unaffected by this issue by working with the conditional distributions of normalized wavelet coefficients.

\begin{figure}\centering
    \includegraphics[height=0.14\textheight]{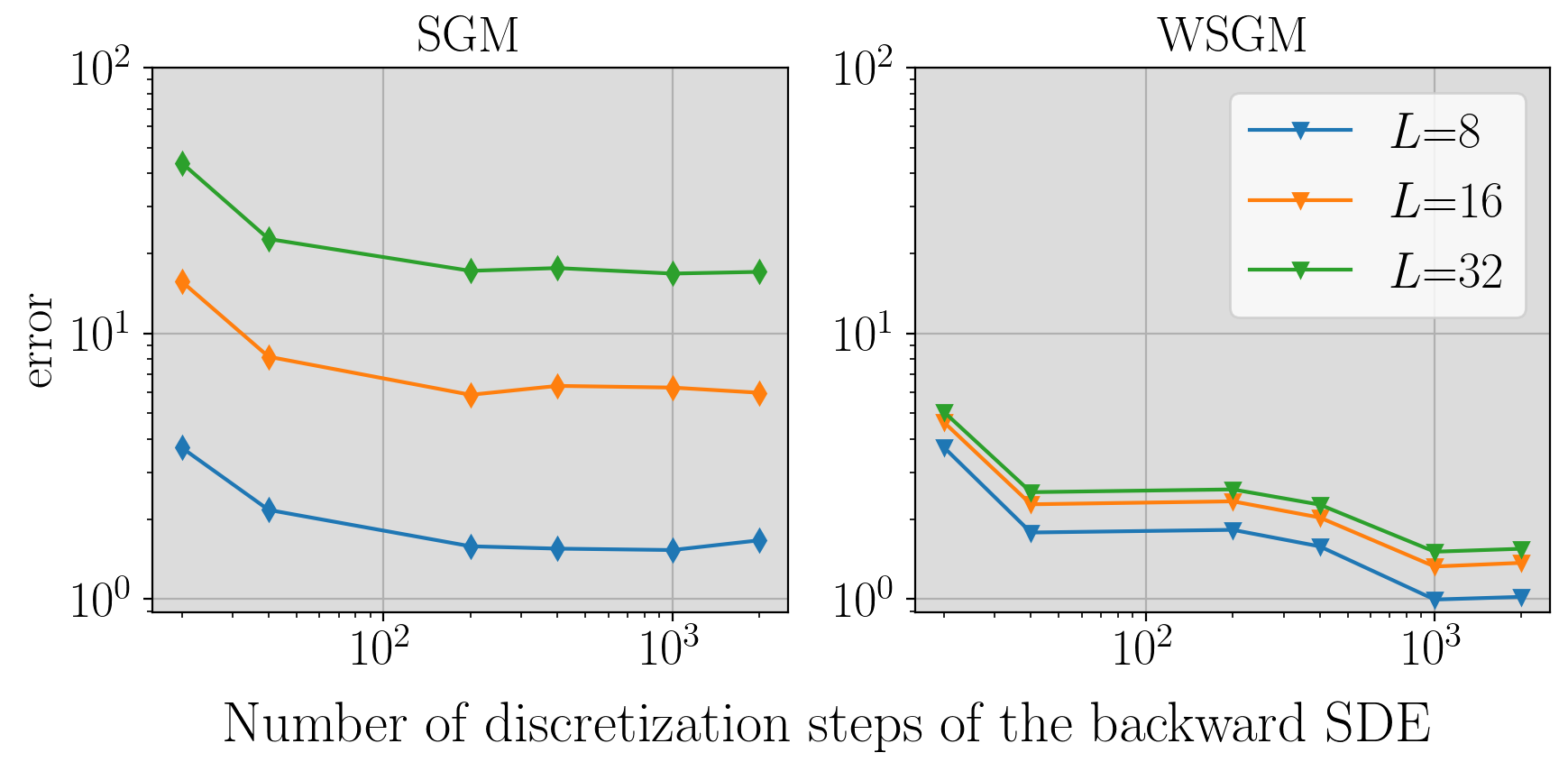} \hfill \includegraphics[height=0.14\textheight]{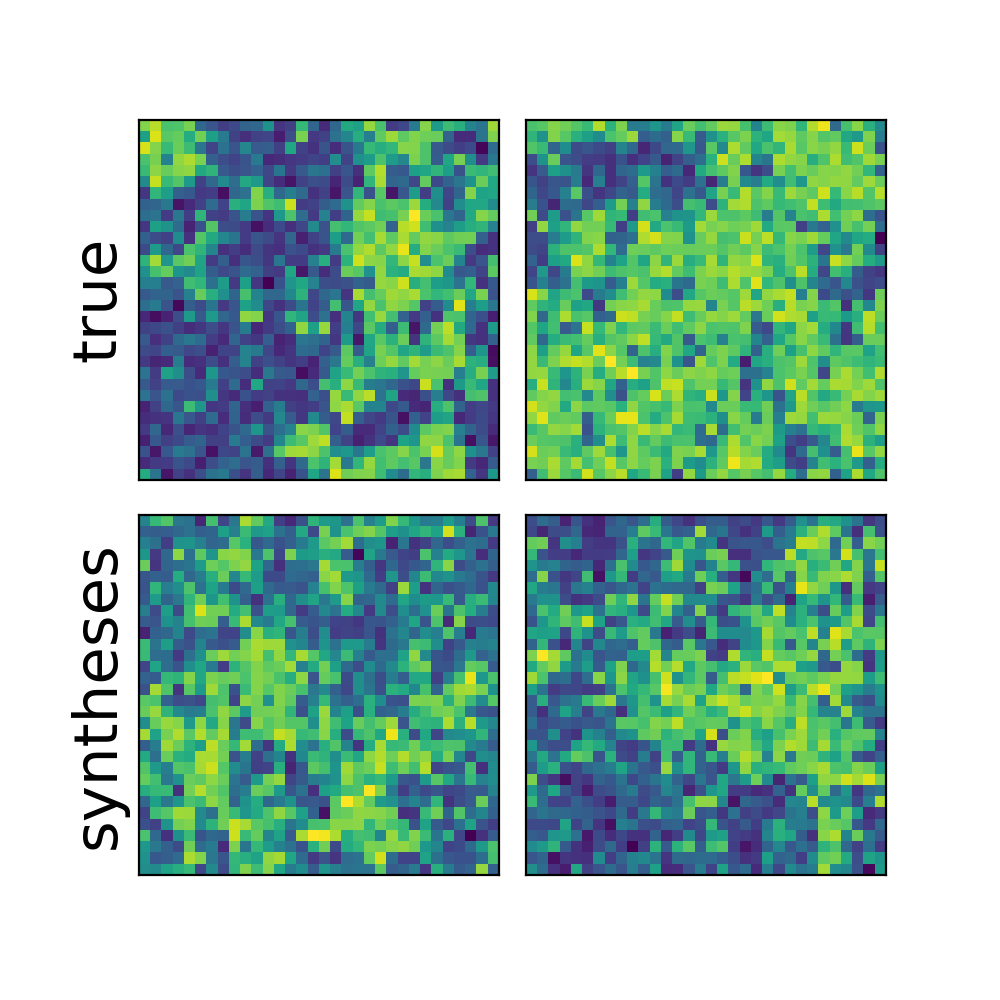} \hfill \includegraphics[height=0.14\textheight]{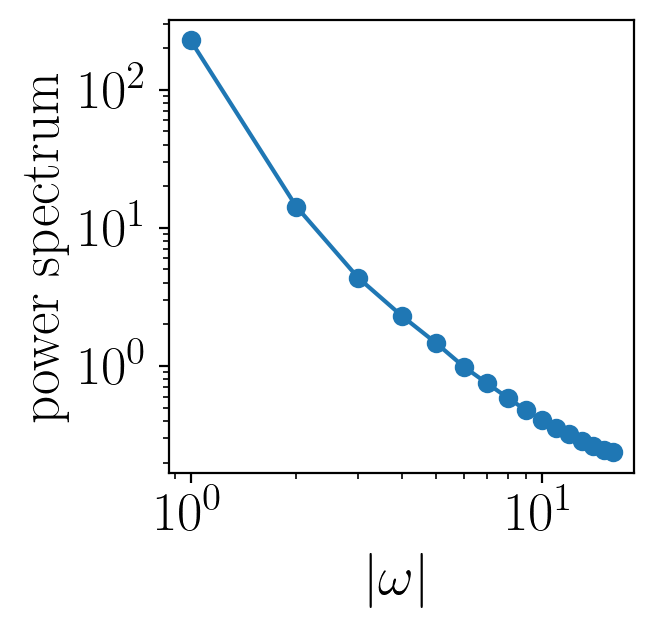}
    \caption{\textbf{Left:} error between ground-truth $\varphi^4$ datasets in various dimensions $L$, and the synthetized datasets with SGM and WSGM, for various number of discretization steps. \textbf{Middle:} realizations of $\varphi^4$ (top) and WSGM samples (bottom). \textbf{Right:} power spectrum of $\varphi^4$ for $L=32$. }
    \label{fig:phi4synthesis}
\end{figure}

\subsection{Scale-Wise Time Reduction in Natural Images }

Images are highly non-Gaussian multiscale processes whose power spectrum has a power law decay. We now show that WSGM also provides an acceleration over SGM in this case, by being independent of the image size. 

We focus on the CelebA-HQ dataset \citep{liu2015faceattributes} at a $128\times 128$ resolution. Though non-stationary, its power spectrum still has a power law decay, as shown in \Cref{fig:celeba}. We compare SGM \cite{ho2020denoising} samples at the $128 \times 128$ resolution with WSGM samples which start from the $32 \times 32$ resolution. Though smaller, the $32 \times 32$ resolution still suffers from a power law decay of its spectrum over several orders of magnitude. The reason why we limit this coarsest resolution is because border effects become dominant at lower image sizes. To simplify the handling of border conditions, we use Haar wavelets. 

Following \cite{nichol2021improved}, the global scores $\bs_\theta(x)$ are parametrized by a neural network with a UNet architecture. It has 3 residual blocks at each scale, and includes multi-head attention layers at lower scales. The conditional scores $\bs_\theta(\bar x_j | x_j)$ are parametrized in the same way, and the conditioning on the low frequencies $x_j$ is done with a simple input concatenation along channels  \citep{nichol2021improved,saharia2021image}. The details of the architecture are in \Cref{sec:exper-deta-addit}. We use a uniform discretization of the backward SDE to stay in the setting of \Cref{thm:control_diffusion}. 

The generation results are given in \Cref{fig:celeba}. With the same computational budget of $16$ discretizations steps at the largest scale (iterations at smaller scales having a negligible cost due to the exponential decrease in image size), WSGM achieves a much better perceptual generation quality. Notably, SGM generates noisy images due to discretization errors. This is confirmed quantitatively with the Fréchet Inception Distance (FID) \citep{fid}. The FID of the WSGM generations decreases with the number of steps, until it plateaus. This plateau is reached with at least 2 orders of magnitude less steps for WSGM than SGM. This number of steps is also independent of the image size for WSGM, thus confirming the intuition given in the Gaussian case by \Cref{prop:approx_gaussian_kl}-\ref{lemma:bound_spectrum}. Our results confirm that vanilla SGM on a wide range of multiscale processes, including natural images, suffers from ill-conditioning, in the sense that the number of discretization steps grows with the image size. WSGM, on the contrary, leads to uniform discretization schemes whose number of steps at each scale does not depend on the image size.

We also stress that there exists many techniques \citep{kadkhodaie2020solving,jolicoeur2021gotta,liu2022pseudo,zhang2022exponential,san2021noise,nachmani2021non,song2020denoising,kong2021fast,ho2020denoising,luhman2021knowledge,salimans2022progressive,xiao2021tackling} to accelerate the sampling of vanilla SGMs, with sometimes better FID-time complexity tradeoff curves. Notably, the FID plateaus at a relatively high value of 20 because the coarsest resolution $32 \times 32$ is still ill-conditioned and requires thousands of steps with a non-uniform discretization schedule to achieve FIDs less than 10 with vanilla SGM \cite{nichol2021improved}. Such improvements (including proper handling of border conditions) are beyond of the scope of this paper. The contribution of WSGM is rather to show the reason behind this sampling inefficiency and mathematically prove in the Gaussian setting that wavelet decompositions of the probability distribution allow solving this problem. We believe that extending this theoretical result to a wider class of non-Gaussian multiscale processes and combining WSGM with other sampling accelerations are interesting avenues for future work.

\begin{figure}
    \centering
    \includegraphics[height=0.15\textheight]{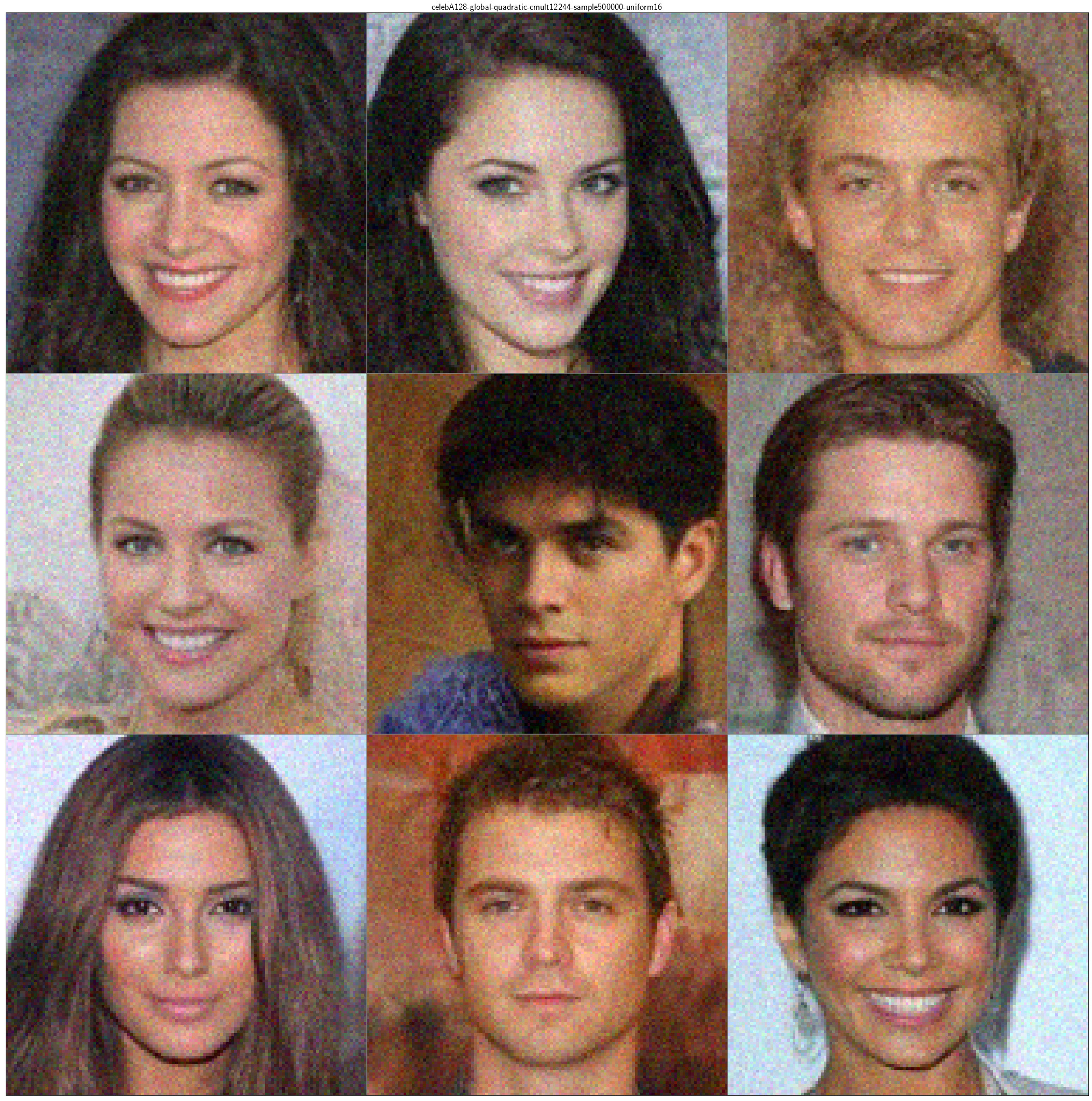} 
    \hfill
    \includegraphics[height=0.15\textheight]{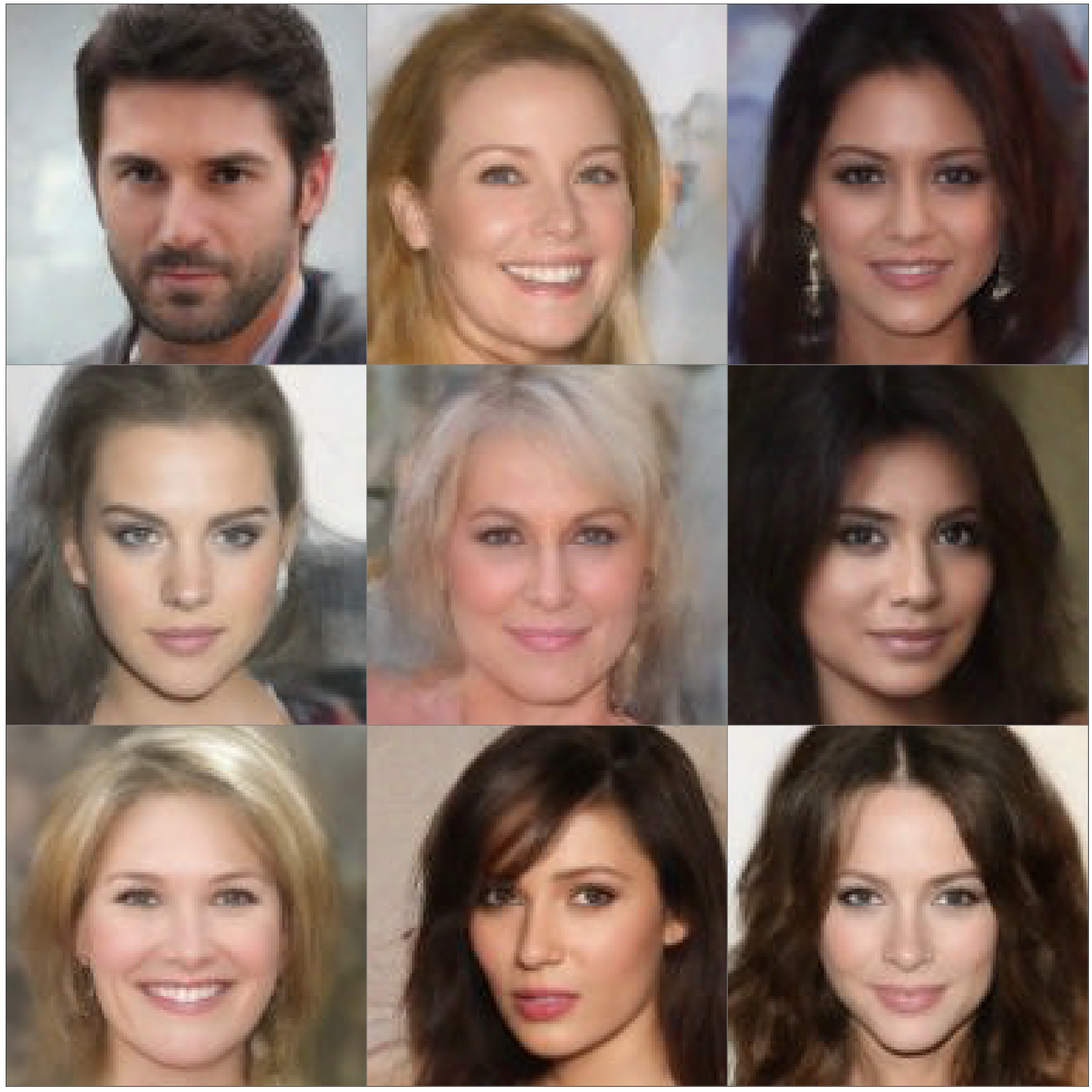}
    \hfill
    \includegraphics[height=0.12\textheight]{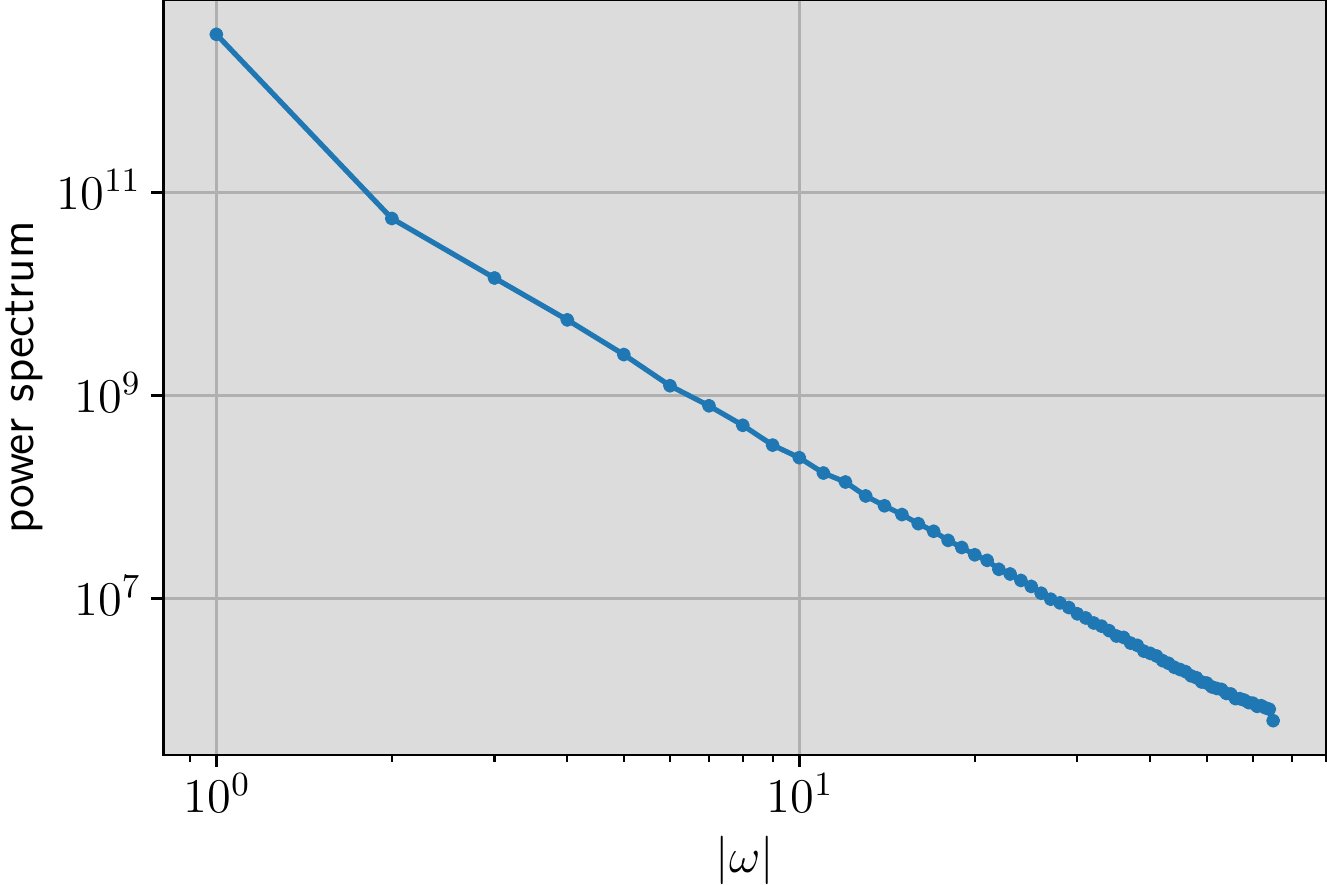}
    \\[0.5em]
    \includegraphics[height=0.12\textheight]{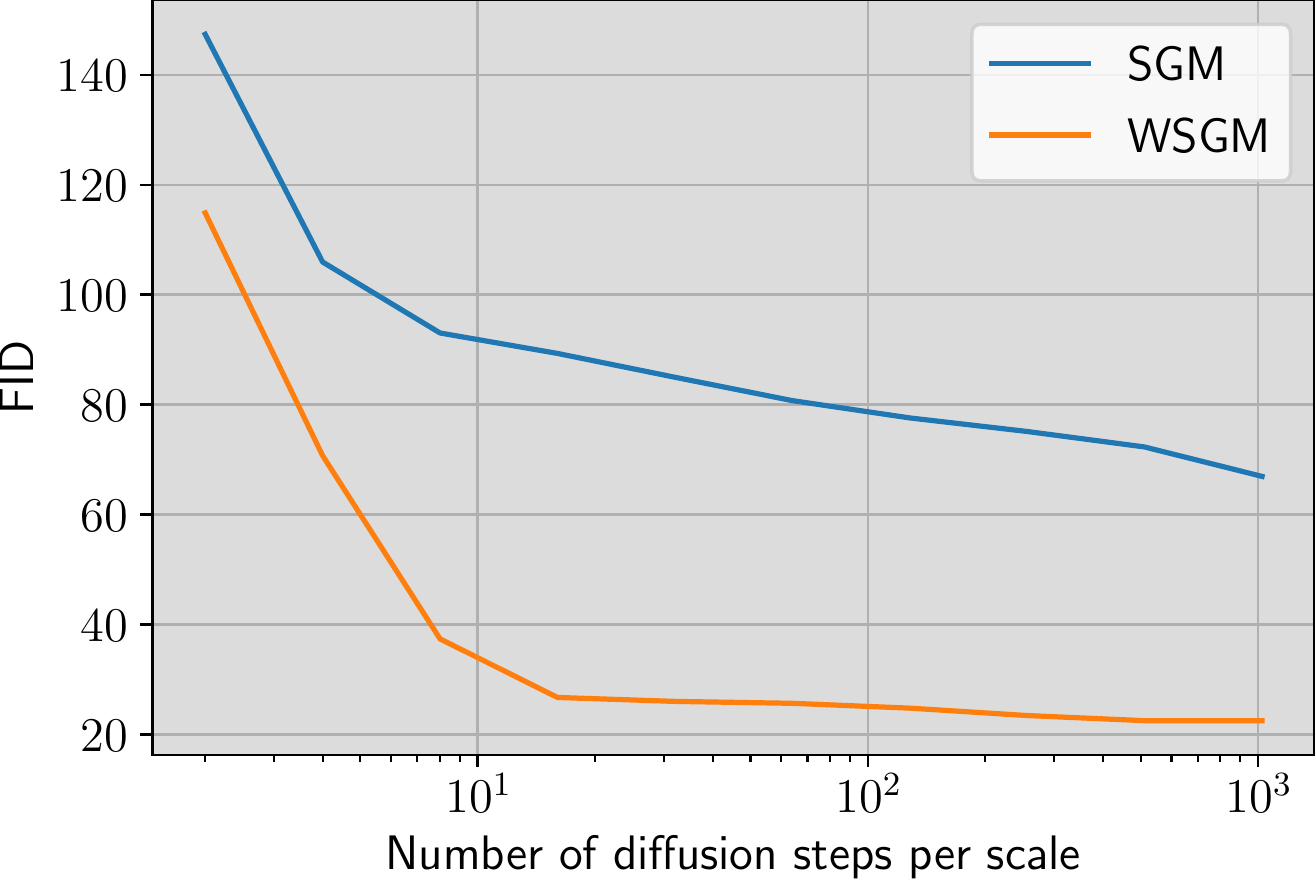}
    \hfill
    \includegraphics[height=0.12\textheight]{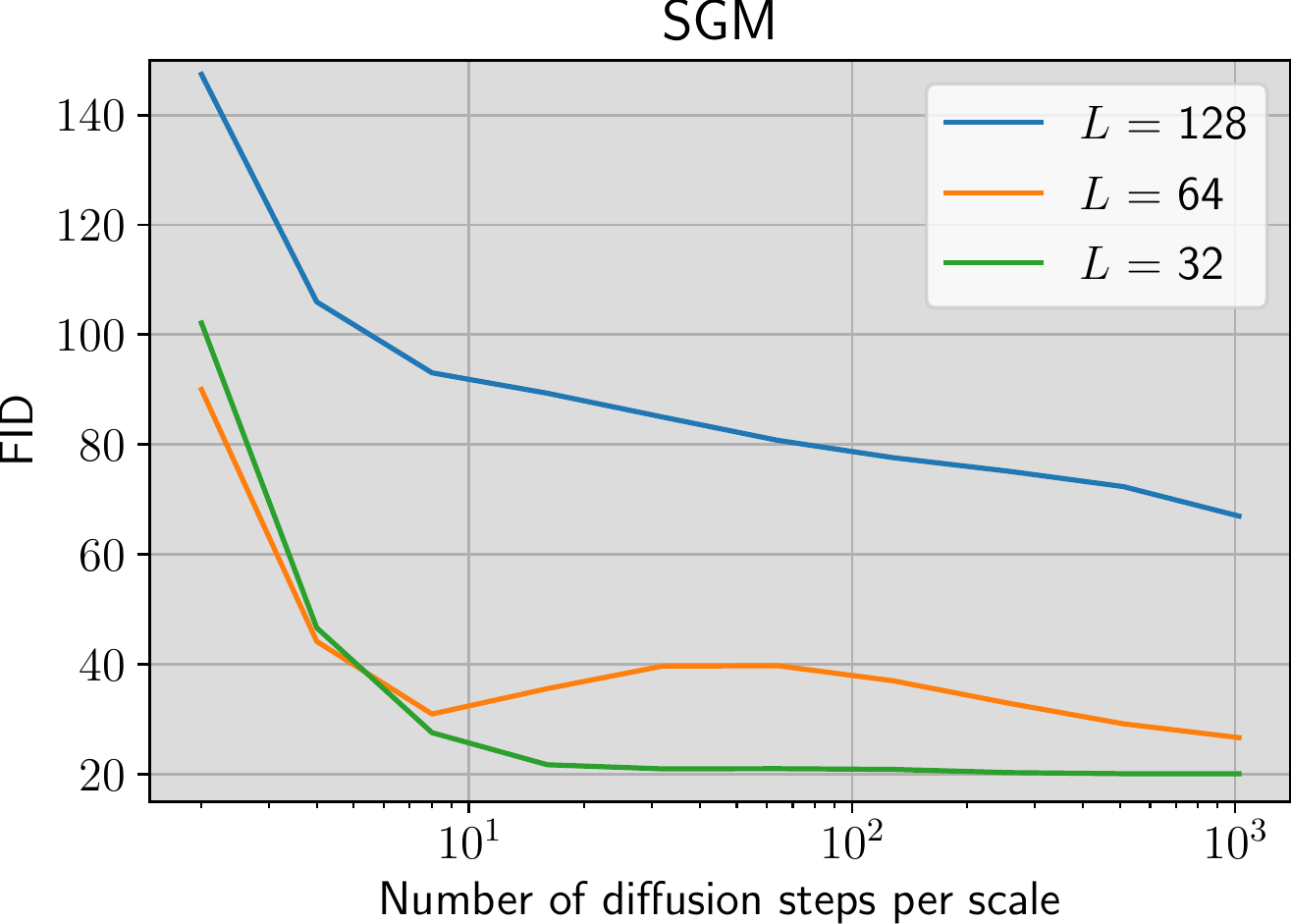}
    \hfill
    \includegraphics[height=0.12\textheight]{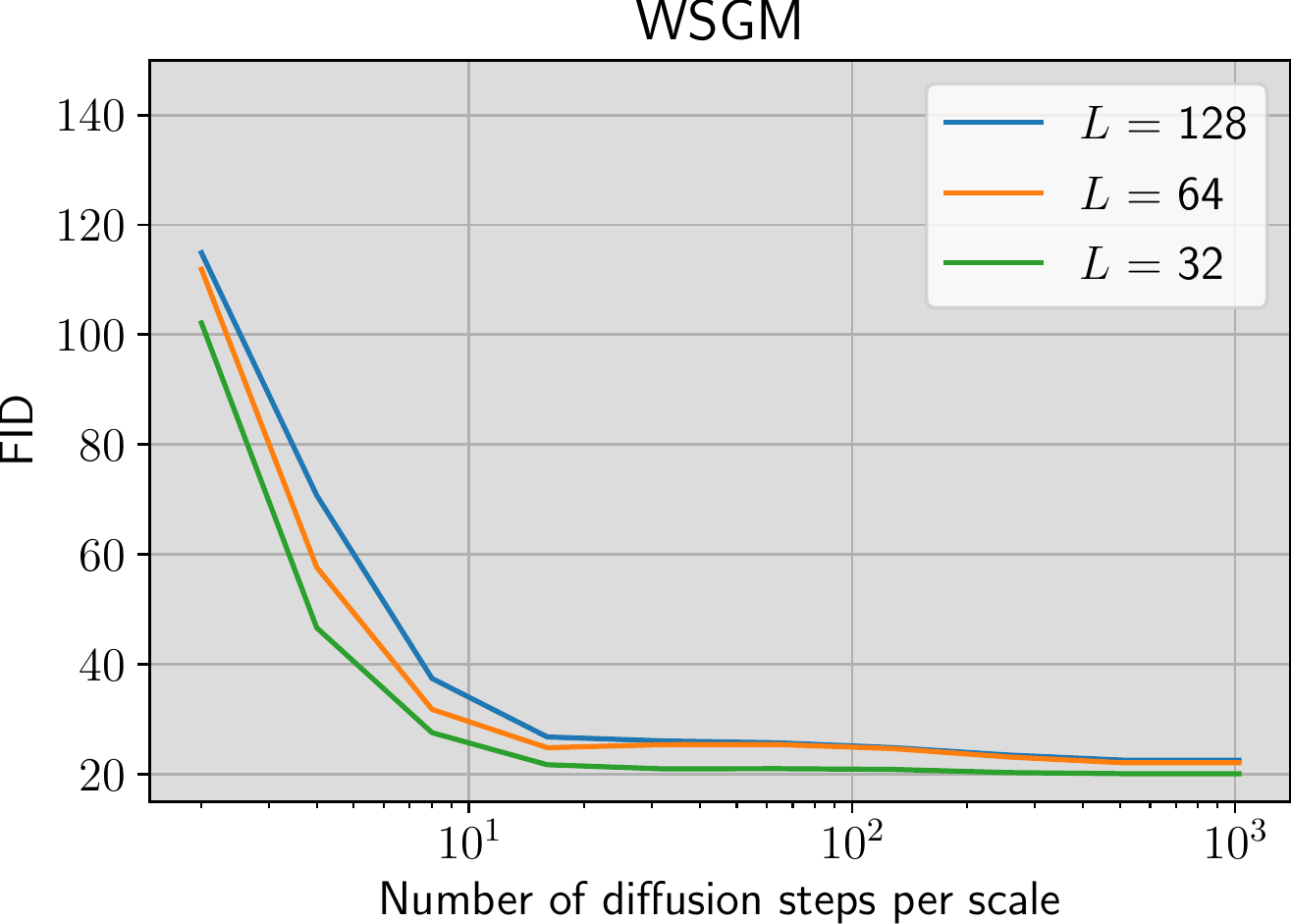}
    \caption{\textbf{Top.} (a): Generations from SGM with 16 discretization steps. (b): Generations from WSGM with 16 discretization steps at each scale. (c): Power spectrum of CelebA-HQ. \textbf{Bottom.} (a): Evolution of the FID w.r.t.\ the number of diffusion steps for SGM and WSGM with $L = 128$. (b): Evolution of the FID w.r.t.\ the number of diffusion steps for SGM at several image sizes $L$. (c) Evolution of the FID w.r.t.\ the number of diffusion steps for WSGM at several image sizes $L$.}
    \label{fig:celeba}
\end{figure}

\section{Discussion}

This paper introduces a Wavelet Score-based Generative Model (WSGM)
which applies an SGM to normalized wavelet coefficients conditioned by lower frequencies.
We prove that the number of
stepsizes in SGMs is controlled by the regularity of the score of the target
distribution. For multiscale processes such as images, it requires a considerable number
of time steps to achieve a good accuracy, which increases quickly with the image size.
We show that a WSGM eliminates ill-conditioning issues by normalizing 
wavelet coefficients. As a result, the number of stepsizes in WSGM does not increase
with the image dimension.
We illustrated our results on Gaussian
distributions, physical processes and image datasets.

One of the main limitations of WSGM is that it is limited to multiscale processes for which the conditional wavelet probabilities are close to white Gaussian distributions. A promising direction for future work is to combine WSGM with other acceleration techniques such as adaptive time discretizations to handle such cases.
In another direction, one could strengthen the theoretical
study of SGM and extend our results beyond the Gaussian setting, in
order to fully describe SGM on physical processes which can be seen as
perturbations of Gaussian distributions.

\subsubsection*{Acknowledgments}
This work was supported by a grant from the PRAIRIE 3IA Institute of the French ANR-19-P3IA-0001 program. We would like to thank the Scientific Computing Core at the Flatiron Institute for the use of their computing resources.

\bibliographystyle{apalike}
\bibliography{bibliography.bbl}

\begin{thebibliography}{}

\bibitem[Bakry et~al., 2014]{bakry:gentil:ledoux:2014}
Bakry, D., Gentil, I., and Ledoux, M. (2014).
\newblock {\em Analysis and Geometry of {M}arkov Diffusion Operators}, volume
  348.
\newblock Springer.

\bibitem[Cattiaux et~al., 2021]{cattiaux2021time}
Cattiaux, P., Conforti, G., Gentil, I., and L{\'e}onard, C. (2021).
\newblock Time reversal of diffusion processes under a finite entropy
  condition.
\newblock {\em arXiv preprint arXiv:2104.07708}.

\bibitem[De~Bortoli et~al., 2021]{debortoli2021diffusion}
De~Bortoli, V., Thornton, J., Heng, J., and Doucet, A. (2021).
\newblock Diffusion schr{\"o}dinger bridge with applications to score-based
  generative modeling.
\newblock {\em Advances in Neural Information Processing Systems}, 34.

\bibitem[Dhariwal and Nichol, 2021]{nichol2021beatgans}
Dhariwal, P. and Nichol, A. (2021).
\newblock Diffusion models beat {GAN} on image synthesis.
\newblock {\em arXiv preprint arXiv:2105.05233}.

\bibitem[Gal et~al., 2021]{gal2021swagan}
Gal, R., Hochberg, D.~C., Bermano, A., and Cohen-Or, D. (2021).
\newblock Swagan: A style-based wavelet-driven generative model.
\newblock {\em ACM Transactions on Graphics (TOG)}, 40(4):1--11.

\bibitem[Gross, 1975]{gross1975logarithmic}
Gross, L. (1975).
\newblock Logarithmic sobolev inequalities.
\newblock {\em American Journal of Mathematics}, 97(4):1061--1083.

\bibitem[Haussmann and Pardoux, 1986]{haussmann1986time}
Haussmann, U.~G. and Pardoux, E. (1986).
\newblock Time reversal of diffusions.
\newblock {\em The Annals of Probability}, 14(4):1188--1205.

\bibitem[Heusel et~al., 2017]{fid}
Heusel, M., Ramsauer, H., Unterthiner, T., Nessler, B., Klambauer, G., and
  Hochreiter, S. (2017).
\newblock Gans trained by a two time-scale update rule converge to a nash
  equilibrium.
\newblock {\em CoRR}, abs/1706.08500.

\bibitem[Ho et~al., 2020]{ho2020denoising}
Ho, J., Jain, A., and Abbeel, P. (2020).
\newblock Denoising diffusion probabilistic models.
\newblock {\em Advances in Neural Information Processing Systems}.

\bibitem[Ho et~al., 2022]{ho2022cascaded}
Ho, J., Saharia, C., Chan, W., Fleet, D.~J., Norouzi, M., and Salimans, T.
  (2022).
\newblock Cascaded diffusion models for high fidelity image generation.
\newblock {\em Journal of Machine Learning Research}, 23(47):1--33.

\bibitem[Huang et~al., 2017]{huang2017wavelet}
Huang, H., He, R., Sun, Z., and Tan, T. (2017).
\newblock Wavelet-srnet: A wavelet-based cnn for multi-scale face super
  resolution.
\newblock In {\em Proceedings of the IEEE International Conference on Computer
  Vision}, pages 1689--1697.

\bibitem[Hyv{\"a}rinen and Dayan, 2005]{hyvarinen2005estimation}
Hyv{\"a}rinen, A. and Dayan, P. (2005).
\newblock Estimation of non-normalized statistical models by score matching.
\newblock {\em Journal of Machine Learning Research}, 6(4).

\bibitem[Ikeda and Watanabe, 1989]{ikeda1989sto}
Ikeda, N. and Watanabe, S. (1989).
\newblock {\em Stochastic Differential Equations and Diffusion Processes},
  volume~24 of {\em North-Holland Mathematical Library}.
\newblock North-Holland Publishing Co., Amsterdam; Kodansha, Ltd., Tokyo,
  second edition.

\bibitem[Jing et~al., 2022]{jing2022subspace}
Jing, B., Corso, G., Berlinghieri, R., and Jaakkola, T. (2022).
\newblock Subspace diffusion generative models.

\bibitem[Jolicoeur-Martineau et~al., 2021]{jolicoeur2021gotta}
Jolicoeur-Martineau, A., Li, K., Pich{\'e}-Taillefer, R., Kachman, T., and
  Mitliagkas, I. (2021).
\newblock Gotta go fast when generating data with score-based models.
\newblock {\em arXiv preprint arXiv:2105.14080}.

\bibitem[Kadkhodaie and Simoncelli, 2020]{kadkhodaie2020solving}
Kadkhodaie, Z. and Simoncelli, E.~P. (2020).
\newblock Solving linear inverse problems using the prior implicit in a
  denoiser.
\newblock {\em arXiv preprint arXiv:2007.13640}.

\bibitem[Karras et~al., 2018]{celeba-hq}
Karras, T., Aila, T., Laine, S., and Lehtinen, J. (2018).
\newblock Progressive growing of gans for improved quality, stability, and
  variation. arxiv 2017.
\newblock {\em arXiv preprint arXiv:1710.10196}, pages 1--26.

\bibitem[Kaupu{\v{z}}s et~al., 2016]{kaupuvzs2016corrections}
Kaupu{\v{z}}s, J., Melnik, R., and Rim{\v{s}}{\=a}ns, J. (2016).
\newblock Corrections to finite-size scaling in the $\varphi$ 4 model on square
  lattices.
\newblock {\em International Journal of Modern Physics C}, 27(09):1650108.

\bibitem[Kolmogorov, 1962]{kolmogorov_1962}
Kolmogorov, A.~N. (1962).
\newblock A refinement of previous hypotheses concerning the local structure of
  turbulence in a viscous incompressible fluid at high reynolds number.
\newblock {\em Journal of Fluid Mechanics}, 13(1):82–85.

\bibitem[Kong and Ping, 2021]{kong2021fast}
Kong, Z. and Ping, W. (2021).
\newblock On fast sampling of diffusion probabilistic models.
\newblock {\em arXiv preprint arXiv:2106.00132}.

\bibitem[L{\'e}onard, 2014]{leonard2014some}
L{\'e}onard, C. (2014).
\newblock Some properties of path measures.
\newblock In {\em S{\'e}minaire de Probabilit{\'e}s XLVI}, pages 207--230.
  Springer.

\bibitem[Liu et~al., 2022]{liu2022pseudo}
Liu, L., Ren, Y., Lin, Z., and Zhao, Z. (2022).
\newblock Pseudo numerical methods for diffusion models on manifolds.

\bibitem[Liu et~al., 2015]{liu2015faceattributes}
Liu, Z., Luo, P., Wang, X., and Tang, X. (2015).
\newblock Deep learning face attributes in the wild.
\newblock In {\em International Conference on Computer Vision}.

\bibitem[Luhman and Luhman, 2021]{luhman2021knowledge}
Luhman, E. and Luhman, T. (2021).
\newblock Knowledge distillation in iterative generative models for improved
  sampling speed.
\newblock {\em arXiv preprint arXiv:2101.02388}.

\bibitem[Mallat, 1999]{mallat1999wavelet}
Mallat, S. (1999).
\newblock {\em A wavelet tour of signal processing}.
\newblock Elsevier.

\bibitem[Mallat, 1989]{Mallat:89}
Mallat, S. (July 1989).
\newblock A theory for multiresolution signal decomposition: the wavelet
  representation.
\newblock 11(7):674--693.

\bibitem[Mandelbrot, 1983]{mandelbrot1983fractal}
Mandelbrot, B. (1983).
\newblock The fractal geometry of nature/revised and enlarged edition.
\newblock {\em New York}.

\bibitem[Marchand et~al., 2022]{WCRG}
Marchand, T., Ozawa, M., Biroli, G., and Mallat, S. (2022).
\newblock Wavelet conditional renormalization group.
\newblock {\em arXiv preprint arXiv:2207.04941}.

\bibitem[Meyer, 1992]{Meyer:92c}
Meyer, Y. (1992).
\newblock {\em Wavelets and Operators}.
\newblock Advanced mathematics. Cambridge university press.

\bibitem[Meyn and Tweedie, 1993]{meyn1993criteria_iii}
Meyn, S.~P. and Tweedie, R.~L. (1993).
\newblock Stability of {M}arkovian processes. {III}. {F}oster-{L}yapunov
  criteria for continuous-time processes.
\newblock {\em Advances in Applied Probability}, 25(3):518--548.

\bibitem[Milchev et~al., 1986]{milchev1986finite}
Milchev, A., Heermann, D., and Binder, K. (1986).
\newblock Finite-size scaling analysis of the $\varphi$4 field theory on the
  square lattice.
\newblock {\em Journal of statistical physics}, 44(5):749--784.

\bibitem[Nachmani et~al., 2021]{nachmani2021non}
Nachmani, E., Roman, R.~S., and Wolf, L. (2021).
\newblock Non gaussian denoising diffusion models.
\newblock {\em arXiv preprint arXiv:2106.07582}.

\bibitem[Nichol and Dhariwal, 2021]{nichol2021improved}
Nichol, A. and Dhariwal, P. (2021).
\newblock Improved denoising diffusion probabilistic models.
\newblock {\em arXiv preprint arXiv:2102.09672}.

\bibitem[Rahaman et~al., 2019]{rahaman2019spectral}
Rahaman, N., Baratin, A., Arpit, D., Draxler, F., Lin, M., Hamprecht, F.,
  Bengio, Y., and Courville, A. (2019).
\newblock On the spectral bias of neural networks.
\newblock In {\em International Conference on Machine Learning}, pages
  5301--5310. PMLR.

\bibitem[Saharia et~al., 2021]{saharia2021image}
Saharia, C., Ho, J., Chan, W., Salimans, T., Fleet, D.~J., and Norouzi, M.
  (2021).
\newblock Image super-resolution via iterative refinement.
\newblock {\em arXiv preprint arXiv:2104.07636}.

\bibitem[Salimans and Ho, 2022]{salimans2022progressive}
Salimans, T. and Ho, J. (2022).
\newblock Progressive distillation for fast sampling of diffusion models.
\newblock {\em arXiv preprint arXiv:2202.00512}.

\bibitem[San-Roman et~al., 2021]{san2021noise}
San-Roman, R., Nachmani, E., and Wolf, L. (2021).
\newblock Noise estimation for generative diffusion models.
\newblock {\em arXiv preprint arXiv:2104.02600}.

\bibitem[Song et~al., 2020]{song2020denoising}
Song, J., Meng, C., and Ermon, S. (2020).
\newblock Denoising diffusion implicit models.
\newblock {\em arXiv preprint arXiv:2010.02502}.

\bibitem[Song and Ermon, 2019]{song2019generative}
Song, Y. and Ermon, S. (2019).
\newblock Generative modeling by estimating gradients of the data distribution.
\newblock In {\em Advances in Neural Information Processing Systems}.

\bibitem[Song and Ermon, 2020]{song2020improved}
Song, Y. and Ermon, S. (2020).
\newblock Improved techniques for training score-based generative models.
\newblock In {\em Advances in Neural Information Processing Systems}.

\bibitem[Song et~al., 2021]{song2020score}
Song, Y., Sohl{-}Dickstein, J., Kingma, D.~P., Kumar, A., Ermon, S., and Poole,
  B. (2021).
\newblock Score-based generative modeling through stochastic differential
  equations.
\newblock In {\em International Conference on Learning Representations}.

\bibitem[Vincent, 2011]{vincent2011connection}
Vincent, P. (2011).
\newblock A connection between score matching and denoising autoencoders.
\newblock {\em Neural Computation}, 23(7):1661--1674.

\bibitem[Wainwright and Simoncelli, 1999]{wainwright1999scale}
Wainwright, M.~J. and Simoncelli, E. (1999).
\newblock Scale mixtures of gaussians and the statistics of natural images.
\newblock {\em Advances in neural information processing systems}, 12.

\bibitem[Wilson, 1971]{wilson1971renormalization}
Wilson, K.~G. (1971).
\newblock Renormalization group and critical phenomena. ii. phase-space cell
  analysis of critical behavior.
\newblock {\em Physical Review B}, 4(9):3184.

\bibitem[Wilson, 1983]{wilson1983renormalization}
Wilson, K.~G. (1983).
\newblock The renormalization group and critical phenomena.
\newblock {\em Reviews of Modern Physics}, 55(3):583.

\bibitem[Xiao et~al., 2021]{xiao2021tackling}
Xiao, Z., Kreis, K., and Vahdat, A. (2021).
\newblock Tackling the generative learning trilemma with denoising diffusion
  gans.
\newblock {\em arXiv preprint arXiv:2112.07804}.

\bibitem[Yu et~al., 2020]{yu2020wavelet}
Yu, J.~J., Derpanis, K.~G., and Brubaker, M.~A. (2020).
\newblock Wavelet flow: Fast training of high resolution normalizing flows.
\newblock {\em Advances in Neural Information Processing Systems},
  33:6184--6196.

\bibitem[Zhang and Chen, 2022]{zhang2022exponential}
Zhang, Q. and Chen, Y. (2022).
\newblock Fast sampling of diffusion models with exponential integrator.

\end{thebibliography}

\section{Organization of the Supplemental Material}

We provide the pseudocode for the WSGM algorithm in
\Cref{sec:wsgm-pseudocode}. Details about wavelet transforms and their whitening
properties are presented in \Cref{sec:wavelets-introduction,sec:wavelet-transforms}.  The proofs of
\Cref{sec:time-sampling-score} and \Cref{sec:wavel-diff-score} are gathered
\Cref{sec:proof-section21} and \Cref{sec:proof33} respectively. Details about
the Gaussian model and the $\varphi^4$ model are given in
\Cref{app:gaussian_exps} and \Cref{sec:deta-varph-model} respectively. Finally,
experimental details and additional experiments are described in
\Cref{sec:exper-deta-addit}.

\section{WSGM Algorithm}
\label{sec:wsgm-pseudocode}

In \Cref{alg:cascaded_wavelet}, we provide the pseudo code for WSGM. Notice that the training of score models at each scale can be done in parallel, while the sampling is done sequentially one scale after the next. The sequential nature of sampling is the price to pay for the conditional decomposition, as we model only the conditional scores $\nabla_{\bar x_j} \log \bar p_j(\bar x_j | x_j)$. Generating all scales simultaneously would require learning the so-called free energy gradient $\nabla_{x_j} \log \bar p_j(\bar x_j | x_j)$. It is also required for likelihood computation. However, the conditional decomposition of WSGM allows not learning this free energy gradient and possibly reduces score estimation errors.


\begin{algorithm}[!h]
\caption{\small Wavelet Score-based Generative Model}
\label{alg:cascaded_wavelet}
\begin{algorithmic}[1]
  \small
  \Require $J$, $N_{\textrm{iter}}$, $N$, $T$,$\{\bar\theta_{j,0}, \theta_{J, 0}\}_{j=0}^{J}$, $\{x^m_0\}_{m=1}^M$
  \State{\color{commentcolor}{/// WAVELET TRANSFORM ///}}
  \For{$j \in \{1, \dots, J\}$} 
  \For{$m \in \{1, \dots, M\}$}
  \State $x_{j}^m = \gamma_j^{-1} \rmG x_{j-1}^m$,  $\bar{x}_{j}^m =  \gamma_j^{-1} \wG x_{j-1}^m$ \Comment Wavelet transform of the dataset
  \EndFor
  \EndFor
  \State{\color{commentcolor}{/// TRAINING ///}}
  \State Train score network $\bs_{\theta^\star_{J}}$ at scale $J$ with dataset $\{ x_{J}^m\}_{m=0}^M$ \Comment Unconditional SGM training
\For{$j \in \{J, \dots, 1\}$} \Comment Can be run in parallel
\For{$n \in \{0, \dots, N_{\textrm{iter}}-1\}$}
\State Sample $(\bar{x}_{j,0}, x_{j})$ from $\{\bar{x}_j^m, x_{j}^m\}_{m=1}^M$ 
\State Sample $t$ in $\ccint{0, T}$ and $\bar{Z} \sim \mathrm{N}(0, \Id)$
\State $\bar{x}_{j,t} = \rme^{-t} \bar{x}_{j,0} + (1 - \rme^{-2t})^{1/2} \bar{Z}$ 
\State $\ell(\bar\theta_{j,n}) = \normLigne{(\rme^{-t}\bar{x}_{j,0} - \bar{x}_{j,t}) - (1 - \rme^{-2t})^{1/2} \bar\bs_{\bar\theta_{j,n}}(t, \bar{x}_{j,t} | x_{j})}^2$ 
\State $\bar\theta_{j,n+1} = \verb|optimizer_update|(\bar\theta_{j,n}, \ell(\bar\theta_{j,n}))$ \Comment ADAM optimizer step 
\EndFor
\State $\bar\theta^\star_j = \bar\theta_{j, N_{\textrm{iter}}}$
\EndFor
\State{\color{commentcolor}{/// SAMPLING ///}}
\State ${\txwav}_{J} = \verb|unconditionalSGM|(T, N, \bs_{\theta_J^\star})$ \Comment Unconditional SGM sampling
\For{$j \in \{ J, \dots, 1\}$}
\State $\tox_{j} = \verb|EulerMaruyama|(T, N, \bar\bs_{\bar\theta_j^\star}(\cdot, \cdot | \txwav_{j}))$ \Comment
Euler-Maruyama recursion following \eqref{eq:backward_disc}
\State $\txwav_{j-1} = \gamma_j \rmG^\top \txwav_{j} + \gamma_j \wG^\top \tox_{j}$ \Comment Wavelet reconstruction
\EndFor
\State {\bfseries return} $\{\bar\theta^\star_{j}, \theta^\star_J\}_{j=1}^{J}$, $\txwav_{0}$
\end{algorithmic}
\end{algorithm}

\section{Introduction to Orthogonal Wavelet Bases}
\label{sec:wavelets-introduction}

This section introduces properties of orthogonal wavelet bases as specified by the operators $G$ and $\bar G$. In this section, we drop the normalizing factors $\gamma_j$ for clarity.

Let $x_0$ be a signal. The index $u$ in $x_0(u)$ belongs to an $n$-dimensional grid of linear size $L$ and hence with $L^n$ sites, with $n=2$ for images.

Let us denote $x_j$ the coarse-grained version of $x_0$ at a scale $2^j$ defined over a coarser grid with intervals $2^j$ and hence $(2^{-j} L)^n$ sites. The coarser signal $x_j$ is iteratively computed from $x_{j-1}$ by applying a coarse-graining operator, which acts as a scaling filter $G$ which eliminates high frequencies and subsamples
the grid \cite{Mallat:89}:
\begin{equation}
  \label{low-pass200}
(G x_{j-1}) (u) =  \sum_{u'} x_{j-1} (u')\, G (2u-u')\,\, .
\end{equation}
The index $u$ on the left-hand side runs on the coarser grid, whereas $u'$ runs on the finer one. 

\newcommand\aphi{x}
\newcommand\dphi{\bar x}
The degrees of freedom of $\aphi_{j-1}$ that are not in  $\aphi_{j}$ are encoded in orthogonal wavelet coefficients $\dphi_j$.
The representation $(\aphi_j , \dphi_j)$ 
is an orthogonal change of basis calculated from $\aphi_{j-1}$. The coarse signal
$\aphi_j$ is calculated in \eqref{low-pass200} with a low-pass scaling filter $G$ and a subsampling. In dimension $n$,
the wavelet coefficients $\dphi_j$ have $2^n-1$ channels computed with a convolution and subsampling operator $\bar G$. We thus have:
\begin{equation}
  \label{fastdec30}
\aphi_j = G\, \aphi_{j-1} ~~\mbox{and}~~
\dphi_j = \bar G\, \aphi_{j-1} .
\end{equation}
The wavelet filter
$\bar G$ computes $2^n-1$ wavelet coefficients $\dphi_j (u,k)$ indexed by $1 \leq k \leq 2^n-1$, 
with separable high-pass filters $\bar G_k(u)$:
\[
\dphi_{j} (u,k) =  \sum_{u'} \aphi_{j-1} (u')\, \bar G_k (2u-u').
\]

As an example, the Haar wavelet leads to a block averaging filter $G$:
\[
\aphi_j(u)= \frac{\aphi_{j-1}(2u)+\aphi_{j-1}(2u+1)}{\sqrt{2}}.
\]
In $n=1$ dimension, there is a single wavelet channel in $\dphi_j$. The corresponding wavelet filter $\bar G$ computes the wavelet coefficients with normalized increments:
\begin{equation}
    \label{Harwnsdf}
\dphi_j(u)= \frac{\aphi_{j-1}(2u)-\aphi_{j-1}(2u+1)}{\sqrt{2}}.
\end{equation}
If $n = 2$, then there are $3$ channels as illustrated in \Cref{fig1}.

The fast wavelet transform cascades \eqref{fastdec30} for $1 \leq j \leq J$ 
to compute the decomposition of the high-resolution signal $\aphi_0$ into its orthogonal wavelet representation
over $J$ scales:
\begin{equation}
\label{orthowave}
\left\{ \aphi_J\, ,\, \dphi_j \right\}_{1 \leq j \leq J} .
\end{equation}

The wavelet orthonormal filters $G$ and $\bar G$ define
a unitary transformation, which satisfies:
\begin{equation}
  \label{consdsf}
\bar G G^\top = G \bar G^\top = 0~~\mbox{and}~~G^\top G + \bar G^T \bar G = \mathrm{Id}~,
\end{equation}
where $\mathrm{Id}$ is the identity. \cite{Mallat:89} gives a condition on the Fourier transforms of $G$ and $\bar G$ to build such filters.
The filtering equations \eqref{fastdec30}
can then be inverted with the adjoint operators:
\begin{equation}
  \label{fastrec3}
  \aphi_{j-1} = G^\top \aphi_j + \bar G^\top \dphi_j~.
\end{equation}
The adjoint $G^\top$ enlarge the grid size of $\aphi_j$ by inserting a zero between each coefficients, and then filters the output:
\[
(G^\top \aphi_j)(u) = \sum_{u'} \aphi_j (u' )\, G (2u'-u) .
\]
The adjoint of $\bar G$ performs the same operations over the $2^n-1$ channels and adds them:
\[
(\bar G^\top \dphi_j )(u) = \sum_{k=1}^{2^n-1} \sum_{u'} \dphi_j (u',k )\, \bar G_k (2u'-u) .
\]
The fast inverse wavelet transform \cite{Mallat:89} recovers 
$\aphi_0$ from its wavelet representation \eqref{orthowave} 
by progressively recovering $\aphi_{j-1}$ from $\aphi_j$ and $\dphi_j$
with \eqref{fastrec3}, for $j$ going from $J$ to $1$.


\section{Orthogonal Wavelet Transform}
\label{sec:wavelet-transforms}

This appendix relates the fast discrete wavelet transform to decomposition of
finite energy functions in orthonormal bases of $\Ld([0,1]^n)$.  We shall then
prove that the covariance of normalized wavelet coefficients is a well-conditioned matrix, independently from scales. This is a central result to prove \Cref{lemma:bound_spectrum}.  The results of this appendix are based on
the multiresolution theory \cite{mallat1999wavelet} and the representation of
singular operators in wavelet orthonormal bases \cite{Meyer:92c}.

\paragraph{Orthonormal wavelet transform}
From an input discrete signal $x_0 (u) = x(u)$ defined over an $n$-dimensional
grid of width $L$, we introduced in \eqref{eq:low_res}
a normalized wavelet transform which
computes wavelet coefficients $\ox_j(u,k)$ having
$2^n -1$ channels $1 \leq k < 2^n$.
A non-normalized orthonormal wavelet transform is obtained 
by setting $\gamma_j = 1$:
\begin{equation}
  \label{fastdec3}
x^w_j = G\, x^w_{j-1} ~~\mbox{and}~~
\ox^w_j = \oG\, x^w_{j-1} ,
\end{equation}
with $x^w_0 = x_0$ and where $G$ is a low-pass filtering and subsampling by $2$:
\begin{equation}
  \label{low-pass20}
  \textstyle{
    G x^w_{j-1} (u) =  \sum_{u'} x_{j-1} (u')\, G (2u-u')\,.
    }
\end{equation}
The wavelet convolutional and subsampling operator
$\oG$ computes $2^n-1$ wavelet signals $\ox^w_j (u,k)$, 
with  $2^n-1$ high-pass filters $\oG_k(u)$ indexed by $1 \leq k \leq 2^n-1$: 
\[
  \textstyle{
    \ox^w_{j-1} (u,k) =  \sum_{u'} x^w_{j-1} (u')\, \oG_k (2u-u').
    }
\]
We write $\ox^w = (\ox^w_j , x^w_J)_{j \leq J}$ the vector of non-normalized wavelet coefficients.

The multiresolution wavelet theory \cite{mallat1999wavelet,Meyer:92c} proves
that the coefficients of $\ox^w$ can also be written as the decomposition
coefficients of a finite energy function, in a wavelet orthonormal basis of the
space $\Ld(\R^n)$ of finite energy functions.  These wavelets arise from the
cascade of the convolutional filters $G$ and $\oG$ in \eqref{fastdec3} when we
iterate on $j$ \cite{mallat1999wavelet}.  This wavelet orthonormal basis is thus
entirely specified by the choice of the filters $G$ and $\oG$.  A wavelet
orthonormal basis is defined by a \emph{scaling function} $\psi^0 (v)$ for
$v \in \R^n$ which has a unit integral $\int {\psi^0 (v)} \,\rmd v = 1$, and $2^n-1$
\emph{wavelets} which have a zero integral $\int \psi^k(v)\, \rmd v = 0$ for
$1 \leq k < 2^n$.  Each of these functions are dilated and translated by
$u \in \Z^n$, for $1 \leq k <2^n$ and $j \in \Z$:
\begin{equation}
  \label{wavensdof}
\psi^k_{j,u} (v) = 2^{- nj/2}\, \psi^k (2^{-j} v - u).    
\end{equation}
The main result  proved in \cite{mallat1999wavelet,Meyer:92c},
is that for appropriate filters $G$ and $\bar G$ such that
$(G,\oG)$ is unitary, 
the family of translated and dilated wavelets
up to the scale $2^J$:
\begin{equation}
  \label{waveletbasis}
  \textstyle{
    \{ \psi^0_{J,u}\,,\,\psi^k_{j,u}  \}_{1 \leq k < 2^n\,,\,j \leq J \,,\,u \in \Z^n}
    }
\end{equation}
is an orthonormal basis of ${\bf L^2}(\R^n)$.
A periodic wavelet basis of ${\bf L^2}([0,1]^n)$ is defined
by replacing each  wavelet $\psi^k_{j,u}$ by the periodic function
$\sum_{r \in \Z^n} \psi^k_{j,u}(v- r)$ which we shall still
write $\psi^k_{j,u}$. 

The properties of the wavelets $\psi^k_{j,u}$ depend upon the choice of the
filters $G$ and $\oG$. If these filters have a compact support then one can
verify \cite{mallat1999wavelet} 
that all wavelets $\psi^k_{j,u}$ have a compact support of size proportional
to $2^j$.
With an appropriate choice of filters, one can also define wavelets having
$q$ vanishing moments, which means that they are orthogonal to any polynomial $Q(v)$ of degree strictly smaller than $q$:
\[
  \textstyle{
    \int_{[0,1]^n} Q(v)\,\psi_{j,u}^k (v)\, \rmd v = 0 .
    }
\]
One can also ensure that wavelet are $q$ times continuously differentiable.
Daubechies wavelets \cite{mallat1999wavelet} are examples of orthonormal wavelets which can have $q$ vanishing moments and be ${\bf C}^q$ for any $q$.

The relation between the fast wavelet transform and these wavelet orthonormal
bases proves \cite{mallat1999wavelet} that any discrete signal $x_0 (u)$ of
width $L$ can be written as a discrete approximation at a scale $2^{\ell}=L^{-1}$ ($\ell < 0$) of a 
(non-unique) function $f \in \Ld([0,1]^n)$. The support of $f$ is normalized whereas the
approximation scale $2^\ell$ decreases as the number of samples $L$ increases. The coefficients $x_0(u)$ are inner
products of $f$ with the orthogonal family of scaling functions at the scale
$2^{\ell}$ for all $u \in \Z^n$ and $2^{\ell} u \in [0,1]^n$:
\[
  \textstyle{
    x_0 (u) = \int_{[0,1]^n} f(v)\, \psi^0_{\ell,u} (v) \, \rmd v = \lb f , \psi^0_{\ell,u}\rb .
    }
\]
Let $V_{\ell}$ be the space generated by the orthonormal family of scaling functions
$\{ \psi^0_{\ell,u} \}_{2^{\ell} u \in [0,1]^n}$, and  $P_{V_{\ell}} f$ be
the orthogonal projection  of $f$ in $V_{\ell}$.
The signal $x_0$ gives the orthogonal 
decomposition coefficients of  $P_{V_{\ell}} f$ in this
family of scaling functions. One can prove \cite{mallat1999wavelet} that
the non-normalized wavelet coefficients $\ox^w_j$ of $x_0$ computed with a fast
wavelet transform are equal to the orthogonal wavelet coefficients of $f$
at the scale $2^{j+\ell}$, for all $u \in \Z^n$ and $2^{j+\ell} u \in [0,1]^n$:
\[
  \textstyle{
    \ox_j^w (u,k) = \int_{[0,1]^n} f(v)\, \psi^k_{j+\ell,u} (v) \, \rmd v = \lb f , \psi^k_{j+\ell,u}\rb .
    }
\]
and at the largest scale $2^J$
\[
  \textstyle{
    x_J^w (u,k) = \int_{[0,1]^n} f(v)\, \psi^0_{J+\ell,u} (v) \, \rmd v = \lb f , \psi^k_{j+J,u}\rb .
    }
\]

\paragraph{Normalized covariances}
We now consider a periodic stationary multiscale random process $x(u)$ of width $L$.
It covariance is diagonalised in a Fourier basis
and its power spectrum (eigenvalues) has a power-law decay
$P(\om) = c (\xi^\eta + |\om|^\eta)^{-1}$, for frequencies $\om = 2 \pi m / L$
with $m \in \{0, \dots, L-1\}^n$.  The following lemma proves that the covariance matrix
$\bar \Sigma$ of the normalized wavelet coefficients $\ox$ of $x$ is well
conditioned, with a condition number which does not depend upon $L$. It relies on an equivalence between Sobolev norms and weighted norms in a wavelet orthonormal basis.

\begin{lemma}
  \label{sec:norm-covar}
  For a wavelet transform corresponding to wavelets having $q > \eta$ vanishing
  moments, which have a compact support and are $q$ times continuously differentiable, there exists
  $C_2 \geq C_1 > 0$ such that for any $L$ the covariance $\bar \Sigma$
  of $\bar x = (\ox_j , x_J)_{j \leq J}$ satisfies:
\begin{equation}
    \label{stability}
  C_1\, \Id \leq \bar \Sigma \leq C_2 \,\Id .
\end{equation}
\end{lemma}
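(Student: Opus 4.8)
The plan is to reduce the two-sided estimate \eqref{stability} to a norm equivalence, and then to invoke the wavelet characterization of Sobolev norms. Since $x$ is stationary, its covariance operator $\Sigma$ is the Fourier multiplier with symbol $P(\om)=c(\xi^\eta+|\om|^\eta)^{-1}$. Let $W$ denote the unitary orthonormal wavelet transform \eqref{fastdec3}, and let $D$ be the diagonal operator equal to $\gamma_j^2$ on the block of coefficients at scale $j$ (and on the coarsest block $x_J$). By \eqref{eq:low_res} the normalized coefficients are $\ox = D^{-1/2}Wx$, so that $\bar\Sigma = D^{-1/2}\,W\Sigma W^\top\,D^{-1/2}$. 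Writing $f=W^\top v$, the bound \eqref{stability} is therefore equivalent to
\[
C_1\sum_{j,u,k}\gamma_j^2\,|\lb f,\psi^k_{j,u}\rb|^2\ \le\ \lb\Sigma f,f\rb\ \le\ C_2\sum_{j,u,k}\gamma_j^2\,|\lb f,\psi^k_{j,u}\rb|^2 ,
\]
to hold for all $f$ with constants independent of $L$.

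First I would identify each side with a Sobolev quadratic form. On the one hand $\lb\Sigma f,f\rb=\sum_\om P(\om)|\hat f(\om)|^2$, and since $P(\om)\asymp(1+|\om|^\eta)^{-1}$ this is comparable to $\|f\|_{H^{-\eta/2}}^2$. On the other hand, the normalization in \eqref{eq:low_res} sets $\gamma_j^2$ to the scale-$j$ coefficient variance, so that $\gamma_j^2\asymp 2^{j\eta}$; since $\psi^k_{j,u}$ is frequency-localized at $|\om|\sim 2^{-j}$, this weight is exactly $P(\om)^{-1}$ at the characteristic frequency of scale $j$, which makes the diagonal of $\bar\Sigma$ of order one. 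Hence the displayed inequality is precisely the assertion that the weighted square function $\big(\sum_{j,u,k}2^{-2js}|\lb f,\psi^k_{j,u}\rb|^2\big)^{1/2}$, with $s=-\eta/2$, is equivalent to the $H^{s}$ norm.

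This equivalence is the wavelet characterization of Sobolev norms \cite{mallat1999wavelet,Meyer:92c}, valid with constants depending only on the wavelet as soon as $|s|<q$; here $|s|=\eta/2<q$ because $q>\eta$, and the required vanishing moments, compact support and $\mathscr{C}^q$ regularity are exactly the hypotheses of the lemma. Its mechanism is Meyer's representation of singular operators in wavelet bases: the renormalized matrix $\gamma_j^{-1}\gamma_{j'}^{-1}\lb\Sigma\psi^k_{j,u},\psi^{k'}_{j',u'}\rb$ is \emph{almost diagonal}, the $q$ vanishing moments producing geometric decay in $|j-j'|$ and the smoothness together with the compact support producing summable spatial localization in the translation indices, so that Schur's test bounds the operator from above and below. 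The main obstacle is the uniformity in $L$: the dimension $d=L^n$ grows and the number of scales $J\sim\log_2 L$ diverges, so the almost-diagonal estimate must be summable with a bound independent of the range of scales. Two minor points then remain to be dispatched: the low-frequency block $|\om|\lesssim\xi$ (including the scaling coefficients $x_J$), on which $P$ is bounded above and below and which therefore forms a uniformly well-conditioned finite block; and the passage from the discrete periodic transform on $[0,1]^n$ to the continuous basis, handled through the identification $x_0\mapsto P_{V_\ell}f$ established earlier in this appendix.
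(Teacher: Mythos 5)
Your proposal takes essentially the same route as the paper's proof: reduce \eqref{stability} to Meyer's wavelet characterization of Sobolev norms, pass between the discrete transform and the continuous periodic basis via the identification $x_0 \mapsto P_{V_\ell}f$, and treat the low-frequency/$x_J$ block separately; the only difference is cosmetic, in that you characterize the covariance form as an $H^{-\eta/2}$ norm, while the paper characterizes the inverse covariance as an $H^{\eta/2}$ norm (symbol $\xi^\eta+|\om|^\eta$) and then inverts the operator inequalities. Two slips that do not affect the plan: the weight $\gamma_j^2 \asymp 2^{j\eta}$ matches $P(\om)$ (not $P(\om)^{-1}$) at the characteristic frequency of scale $j$, and this comparability of the empirical normalization with the dyadic weights --- which you assert from frequency localization --- is precisely what the paper proves rigorously, by comparing the diagonal of the non-normalized covariance $\bar\Sigma_w$ with the theoretical weights inside the operator inequality.
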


The remaining of the appendix is a proof of this lemma.  Without loss of
generality, we shall suppose that ${\mathbb E}[x] = 0$.  Let $\sigma_{j,k}^2$ be
the variance of $\ox^w_{j} (u,k)$, and $D$ be the diagonal matrix whose
diagonal values are $\sigma_{j,k}^{-1}$ .  The vector of normalized wavelet
coefficients $\ox = (\ox_j , x_J)_{j \leq J}$ are related to the non-normalized
wavelet coefficients $\ox^w$ by a multiplication by $D$:
\[
\ox = D\, \ox^w .
\]
 Let $\bar \Sigma_w$ be the covariance of $\ox^w$.  It
results from this equation that the covariance $\bar \Sigma$ of $\ox$ and the
covariance $\bar \Sigma_w$ of $\ox^w$ satisfy:
\begin{equation}
    \label{xwx}
\bar \Sigma = D \bar \Sigma_w  D .
\end{equation}
The diagonal normalization $D$ is adjusted so that the variance of each coefficient of $\ox$ is equal to $1$, which implies that
the diagonal of $\bar \Sigma$ is the identity. We must now prove that $\bar \Sigma$
satisfies \eqref{stability}, which is equivalent to prove
that there exists $C_1$ and $C_2$ such that:
\begin{equation}
\label{intermdres}
C_1\, \Id \leq  D \bar \Sigma_w  D \leq C_2\, \Id .
\end{equation}
To prove \eqref{intermdres},
we relate it to Sobolev norm equivalences that have been
proved in harmonic analysis. We begin by stating the result on Sobolev inequalities and then prove that it implies \eqref{intermdres} for appropriate constants $C_1$ and $C_2$.

Let $\Sigma_\infty$ be the singular self-adjoint
convolutional operator over $\Ld(\R^n)$
defined in the Fourier domain for all $\om \in \R^n$:
\[
\widehat{\Sigma_\infty f}(\om) = \hat f(\om) \,(\xi^\eta + |\om|^\eta) .
\]
Observe that:
\[
  \textstyle{
\lb \Sigma_\infty f , f \rb = \frac 1 {(2 \pi)^n} \int_{\R^n}
|\hat f(\om)|^2 \,(\xi^\eta + |\om|^\eta)\, \rmd\om 
}
\]
is a Sobolev norm of exponent $\eta$.  Such Sobolev norms are equivalent to
weighted norms in wavelet bases, as proved in Theorem 4, Chapter 3 in
\cite{Meyer:92c}. To take into account the constant $\xi$, we introduce
a maximum scale $2^{J'} = \xi^{-1}$. 
For all $f \in \Ld(\R^n)$, there exists $B \geq A > 0$ such that:
\begin{align}
  \label{Sobolev0}
  &A\, \lb \Sigma_\infty f , f \rb \leq   \textstyle{
    \sum_{u \in \Z^n} 2^{-J \eta} |\lb f , \psi^0_{{J'},u} \rb|^2} \\
  &\qquad \qquad \qquad \textstyle{+
  \sum_{j =-\infty}^{J'} \sum_{u \in \Z^n} \sum_{k=1}^{2^n-1}
  2^{-j \eta}\, |\lb f , \psi^k_{j,u} \rb|^2 \leq 
  B\, \lb \Sigma_\infty f , f \rb .}
\end{align}
The remaining of the proof shows that these inequalities imply similar inequalities
over the covariance of discrete wavelet coefficients. This is done by first restricting it to a finite support and then using the correspondence between the orthonormal wavelet coefficients of
$f$ and the discrete wavelet coefficients $\ox^w$ of $x_0$. 

One can verify that the equivalence \eqref{Sobolev0} remains valid for 
functions $f \in \Ld([0,1]^n)$ decomposed over periodic wavelet bases,
because functions in $\Ld([0,1]^n)$ can be written $f(v) = \sum_{r \in \Z^n} \tilde f(v-r)$
with $\tilde f \in \Ld(\R^n)$:
\begin{align}
  \label{Sobolev2}
&\textstyle{A\, \lb \Sigma_\infty f , f \rb \leq 
  \sum_{2^j u \in [0,1]^n} 2^{-{J'} \eta} |\lb f , \psi^0_{{J'},u} \rb|^2} \\
  & \qquad \qquad \qquad \textstyle{+
 \sum_{j =-\infty}^{J'} \sum_{2^j u \in [0,1]^n} \sum_{k=1}^{2^n-1}
2^{-j \eta}\, |\lb f , \psi^k_{j,u} \rb|^2 \leq 
B\, \lb \Sigma_\infty f , f \rb .}
\end{align}
Applying this result to $f \in V_{\ell}$ is equivalent to restricting
$\Sigma_\infty$ to $V_{\ell}$, which proves that $\Sigma_\ell = P_{V_\ell} \Sigma_\infty P_{V_\ell}$ satisfies:
\begin{align}
  \label{Sobolev4}
&\textstyle{A\, \lb \Sigma_\ell f , f \rb \leq 
  \sum_{2^{J'} u\in [0,1]^n} 2^{-{J'} \eta} |\lb f , \psi^0_{{J'},u} \rb|^2 }
\\ & \qquad \qquad \qquad \textstyle{+
 \sum_{j=\ell+1}^{J'} \sum_{2^j u \in [0,1]^n} \sum_{k=1}^{2^n-1}
2^{-j \eta}\, |\lb f , \psi^k_{j,u} \rb|^2 \leq 
B\, \lb  \Sigma_\ell f , f \rb .}
\end{align}
The operator $\Sigma_\ell = P_{V_\ell} \,\Sigma_\infty\,P_{V_\ell}$
is covariant with respect to shifts by any $m 2^{\ell}$ for $m \in \Z^n$
because $P_{V_\ell}$ and $\Sigma_\infty$ are covariant to such shifts. 
Its representation in the basis of scaling functions $\{ \psi^0_{\ell,u} \}_{2^{\ell} u \in  [0,1]^n}$ is thus a Toeplitz matrix which is diagonalized by a Fourier transform.
There exists $0 < A_1 \leq B_1$ such that for all $\ell < 0$ and all 
$\om \in [-2^{-\ell} \pi , 2^{-\ell} \pi]^n$,
\begin{equation}
    \label{spec-rest}
A_1 \, (\xi^\eta + |\om|^\eta) \leq P_\ell (\om) \leq B_1\, (\xi^\eta + |\om|^\eta) .
\end{equation}
Indeed, the spectrum of $\Sigma_\infty$ is $c\, (\xi^\eta + |\om|^\eta)$ for $\om \in \R^n$
and $P_{\V_\ell}$ performs a filtering with the scaling function $\psi^0_\ell$ whose
support is essentially restricted to the frequency interval $[- \pi 2^{-\ell} , \pi 2^{-\ell}]$ so that the spectrum of $P_{V_\ell} \,\Sigma_\infty\,P_{V_\ell}$ is equivalent
to the spectrum of $\Sigma_\infty$ restricted to this interval. 

The lemma hypothesis supposes that the covariance $\widetilde \Sigma$ of $x_0$
has a spectrum equal to $c\, (\xi^\eta + |\om|^\eta)^{-1}$ and hence that the
spectrum of $\widetilde \Sigma^{-1}$ is $c^{-1}\, (\xi^\eta + |\om|^\eta)$.
Since $x_0$ are decomposition coefficients of $f \in \V_\ell$ in the basis of
scaling functions, \eqref{spec-rest} can be rewritten for any $f \in \V_\ell$:
\begin{equation}
    \label{spec-rest3}
A_1\,c\,\lb  \widetilde \Sigma^{-1} x_0 , x_0\rb \leq \lb  \Sigma_\ell f , f \rb
\leq B_1\, c \,\lb  \widetilde \Sigma^{-1} x_0 , x_0\rb .
\end{equation}
Since the orthogonal wavelet coefficients $\bar x^w$ defines an orthonormal representation
of $x_0$, the covariance $\bar \Sigma_w$ of $\bar x^w$ satisfies
$ \lb \bar \Sigma_w^{-1} \bar x^w , \bar x^w\rb = 
\lb  \widetilde \Sigma^{-1} x_0 , x_0\rb $. 
Moreover, we saw that that the wavelet coefficients $\bar x^w$ of $x_0$ satisfy
$\ox^w_j (u,k) = \lb f , \psi^k_{j+\ell,u}\rb$ and at the largest scale
$\ox^w_J (u,k) = \lb f , \psi^0_{J+\ell,u}\rb$. Hence for $J +\ell = J'$,
we derive from
\eqref{Sobolev4} and \eqref{spec-rest3} that:
\begin{align}
  \label{Sobolev}
&\textstyle{A\,A_1\,c\, \lb \bar \Sigma_w^{-1} \bar x^w , \bar x^w \rb \leq 
  \sum_{2^J u\in [0,1]^n} 2^{-(J+\ell) \eta} \,| x^w_J (u)|^2 }
\\ & \qquad \qquad \qquad \textstyle{+
 \sum_{j=1}^J \sum_{2^j u \in [0,1]^n} \sum_{k=1}^{2^n-1}
2^{-(j+\ell) \eta}\, |\ox^w_j (u,k)|^2 \leq 
B\,B_1\,c\, \lb \bar \Sigma_w^{-1} \bar x^w , \bar x^w \rb .}
\end{align}
It results that for $A_2 = A\,A_1\,c$ and $B_2 = B\,B_1\,c$ we have:
\begin{equation}
    \label{nonsdf0asd}
A_2\,\lb \bar \Sigma_w ^{-1}\bar x^w , \bar x^w \rb \leq
2^{-(J+\ell) \eta}\, \|x^w_J \|^2 + \sum_{j=1}^J
2^{-(j+\ell) \eta}\, \|\ox_j^w \|^2
\leq B_2\,\lb \bar \Sigma_w^{-1} \bar x^w , \bar x^w \rb .
\end{equation}
Let $\tilde D$ be the diagonal operator over the wavelet coefficients $\ox^w$,
whose diagonal values are $2^{-\eta (j+\ell)/2}$ at all scales $2^j$.
These inequalities can be rewritten as operator inequalities:
\begin{equation}
    \label{nonsdf0asd2}
A_2 \,\bar \Sigma_w^{-1} \leq \tilde D^{2} \leq B_2\, \bar \Sigma_w^{-1},
\end{equation}
and hence:
\begin{equation}
  \label{Dinequal0}
A_2 \Id \leq \tilde D \bar \Sigma_w \tilde D \leq B_2 \Id .
\end{equation}

Since $D^{-2}$ is the diagonal of $\bar \Sigma_w$, we derive from \eqref{Dinequal0} that:
\begin{equation}
  \label{Dinequal}
A_2\,  \tilde D^{-2} \leq   D^{-2} \leq B_2\,  \tilde D^{-2} .
\end{equation}
Inserting this equation in \eqref{Dinequal0}
proves that:
\[
 A_2\,B_2^{-1}\, \Id \leq D\,\bar \Sigma_w\,D \leq B_2\,A^{-1}_2 \,\Id ,
\]
and since $\bar \Sigma = D\,\bar \Sigma_w\,D$ it proves
the lemma result \eqref{intermdres}, with $C_1 = A_2 B^{-1}_2$ and $C_2 = B_2 A_2^{-1}$.

\section{Proof of \Cref{sec:time-sampling-score}}
\label{sec:proof-section21}

In this section, we first present the continuous-time framework in a Gaussian
setting in \Cref{sec:gaussian-setting-1}. The general outline of the proof of
\Cref{prop:approx_gaussian_kl} is presented in
\Cref{sec:conv-results-discr}. Technical lemmas are gathered in
\Cref{sec:technical-lemmas}. The proof of \Cref{thm:control_diffusion} is
presented in \Cref{sec:main-content}.

\subsection{Gaussian setting}
\label{sec:gaussian-setting-1}

In what follows we present the Gaussian setting used in the proof of
\Cref{prop:approx_gaussian_kl}. We assume that $p_0 = \mathrm{N}(0, \Sigma)$
with $\Sigma \in \mathbb{S}_d(\rset)_+$.  Let $\rmD \in \mathcal{M}_d(\rset)_+$
a diagonal positive matrix such that $\Sigma = \rmP^\top \rmD \rmP$ with $\rmP$
an orthonormal matrix. We consider the following forward dynamics
\begin{equation}
  \rmd x_t = - x_t \rmd t + \sqrt{2} \rmd w_t ,
\end{equation}
with $\mathcal{L}(x_0) = p_0$.
We also consider the backward dynamics given by
\begin{equation}
  \rmd y_t = \{ y_t + 2\nabla \log p_{T-t}(y_t)\} \rmd t + \sqrt{2} \rmd w_t ,
\end{equation}
with $\mathcal{L}(y_0) = p_\infty = \mathrm{N}(0, \Id)$. Note that since for any
$t \in \ccint{0,T}$ and $x \in \rset^d$, $\nabla \log p_t(x) = - \Sigma_t^{-1}x$
with $\Sigma_t = \exp[-2t]\Sigma + (1-\exp[-2t]) \Id$, we have that
$(y_t)_{t \in \ccint{0,T}}$ is a Gaussian process. In particular, we can compute
the mean and the covariance matrix of $y_t$ in a closed form for any
$t \in \ccint{0,T}$. The results of \Cref{sec:gaussian-setting-continuous} will
not be used to prove \Cref{prop:approx_gaussian_kl}. However, they provide some
insights regarding the evolution of the mean and covariance of the backward
process.

\begin{proposition}
  \label{sec:gaussian-setting-continuous}
  For any $t \in \ccint{0,T}$, we have that
  $\mathcal{L}(y_t) = \mathrm{N}(0, \bar{\Sigma}_t)$ with
  \begin{equation}
    \bar{\Sigma}_t = \rmP^\top ((1 - \exp[-2t])\bar{\rmD}_t + \exp[-2t] \bar{\rmD}_t^2) \rmP,
  \end{equation}
  and
  \begin{equation}
    \bar{\rmD}_t = (\Id + (\rmD - \Id)\exp[-2(T-t)]) \oslash (\Id + (\rmD - \Id)\exp[-2T]) \eqsp . 
  \end{equation}
\end{proposition}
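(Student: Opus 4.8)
The plan is to use that $\Sigma$, and hence every $\Sigma_{T-t}$, are simultaneously diagonalized by $\rmP$, which reduces the backward dynamics to a family of decoupled scalar equations. Since $\nabla \log p_{T-t}(y) = -\Sigma_{T-t}^{-1}y$ with $\Sigma_{T-t} = \rmP^\top \Lambda_{T-t}\rmP$ and $\Lambda_{T-t} = \rme^{-2(T-t)}\rmD + (1 - \rme^{-2(T-t)})\Id$ diagonal, the change of variables $v_t = \rmP y_t$ turns the backward SDE into $\rmd v_t = (\Id - 2\Lambda_{T-t}^{-1})v_t\,\rmd t + \sqrt{2}\,\rmd \tilde w_t$, where $\tilde w_t = \rmP w_t$ is again a Brownian motion. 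As $\Lambda_{T-t}^{-1}$ is diagonal, this is a system of $d$ independent scalar linear SDEs. Its initial law $\mathcal{L}(v_0) = \mathcal{L}(\rmP y_0) = \mathrm{N}(0,\Id)$ is isotropic, and a linear SDE with additive Gaussian noise and a drift having no constant term maps centered Gaussians to centered Gaussians; hence $\mathcal{L}(v_t) = \mathrm{N}(0, \mathrm{diag}(V_1(t), \dots, V_d(t)))$ and consequently $\mathcal{L}(y_t) = \mathrm{N}(0, \rmP^\top \mathrm{diag}(V_1(t),\dots,V_d(t))\, \rmP)$. The whole problem thus reduces to computing the scalar variances $V_i(t)$.

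Fix a coordinate $i$ and abbreviate $g(t) = 1 + (\lambda_i - 1)\rme^{-2(T-t)}$ for the $i$-th diagonal entry of $\Lambda_{T-t}$, with $\lambda_i$ the $i$-th diagonal entry of $\rmD$. The variance of the corresponding scalar process satisfies the linear ODE $\dot V = 2(1 - 2/g)V + 2$ with $V(0) = 1$. The crux is the elementary identity $\dot g = 2(g-1)$, immediate from the exponential form of $g$; it gives $\tfrac{\rmd}{\rmd t}\log g = 2 - 2/g$, so that the integrating factor $\mu(t) = \exp(-\int_0^t 2(1 - 2/g(s))\,\rmd s)$ collapses to the closed form $\mu(t) = \rme^{2t}\,\bar d_t^{-2}$, where $\bar d_t = g(t)/g(0)$ is precisely the $i$-th diagonal entry of $\bar{\rmD}_t$.

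Integrating $\tfrac{\rmd}{\rmd t}(\mu V) = 2\mu$ with $\mu(0)V(0) = 1$ then yields $V(t) = \rme^{-2t}\bar d_t^{2}\,\bigl(1 + 2\int_0^t \rme^{2s}\bar d_s^{-2}\,\rmd s\bigr)$. The remaining integral is computed by the substitution $u = g(s)$, using $\rmd u = 2(g-1)\,\rmd s$ together with $g - 1 \propto \rme^{2s}$, which turns it into $\int u^{-2}\,\rmd u$ and leaves a rational expression in $g(t)$ and $g(0)$. After simplification this collapses to $V(t) = (1 - \rme^{-2t})\,\bar d_t + \rme^{-2t}\,\bar d_t^{2}$, i.e.\ the $i$-th diagonal entry of $(1 - \rme^{-2t})\bar{\rmD}_t + \rme^{-2t}\bar{\rmD}_t^{2}$. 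Reassembling over $i$ via $\rmP$ gives the stated $\bar\Sigma_t$.

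The only real obstacle is bookkeeping: the final integral and the algebraic cancellation reducing it to the advertised two-term form. A cleaner alternative that avoids integrating altogether is to differentiate the claimed $V(t) = (1-\rme^{-2t})\bar d_t + \rme^{-2t}\bar d_t^2$ and verify directly that it solves the ODE with $V(0)=1$, again using $\dot g = 2(g-1)$ (equivalently $\dot{\bar d}_t = 2\bar d_t - 2/g(0)$) to match both sides after substituting the explicit form of $\bar d_t$. Either route works, and the identity $\dot g = 2(g-1)$ is what makes everything explicit.
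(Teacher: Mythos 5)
Your proof is correct and follows essentially the same route as the paper's: diagonalize the backward SDE via $\rmP$, reduce to independent scalar variance ODEs $\dot V = (2 - 4/g)\,V + 2$ with $V(0)=1$, and solve by an integrating factor, which is exactly the paper's variation-of-constants computation with $\alpha_T^i = (\lambda_i - 1)\rme^{-2T}$ (your $\mu(t)^{-1} = \rme^{-2t}\bar d_t^{\,2}$ is precisely the paper's factor $\exp[-2t](1+\alpha_T^i\exp[2t])^2/(1+\alpha_T^i)^2$). The only cosmetic difference is that you organize the cancellation through the identity $\dot g = 2(g-1)$ and the substitution $u = g(s)$, where the paper writes the same antiderivatives explicitly; both arguments implicitly assume $\lambda_i \neq 1$, the degenerate case following by continuity.
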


Note that $\bar{\rmD}_0 = \Id$ and
$\bar{\rmD}_T = \rmD \oslash (\Id + (\rmD - \Id)\exp[-2T]) \approx \rmD$. Hence,
we have $\bar{\Sigma}_T \approx \Sigma$ and therefore
$\mathcal{L}(y_T) \approx p_0$.

\begin{proof}
  First, note that for any $t \in \ccint{0,T}$ we have that
  \begin{equation}
    \textstyle{
      y_t = y_0 + \int_0^t ( \Id - 2\Sigma_{T-t}^{-1}) y_t + \sqrt{2} w_t = y_0 + \int_0^t ( \Id - 2\rmP^\top D_{T-t}^{-1} \rmP ) y_t + \sqrt{2} w_t ,
      }
  \end{equation}
  with $\rmD_{T-t} = \exp[-2(T-t)] \rmD + (1 - \exp[-2(T-t)]) \Id$. Denote
  $\{y_t^\rmP\}_{t \in \ccint{0,T}} = \{\rmP y_t\}_{t \in
    \ccint{0,T}}$. Using that $\rmP^\top \rmP = \Id$, we have that for any
  $t \in \ccint{0,T}$
  \begin{equation}
    \textstyle{
      y_t^\rmP =y_0^\rmP + \int_0^t ( \Id - 2 D_{T-t}^{-1}) y_t^\rmP + \sqrt{2} w_t^\rmP ,
      }
    \end{equation}
    where $\{w_t^\rmP\}_{t \in \ccint{0,T}} = \{\rmP w_t\}_{t \in
      \ccint{0,T}}$. Note that since $\rmP$ is orthonormal,
    $\{w_t^\rmP\}_{t \in \ccint{0,T}}$ is also a $d$-dimensional Brownian
    motion. We also have that $\mathcal{L}(y_0^\rmP) = \mathrm{N}(0,
    \Id)$. Hence for any $\{\{y_t^{\rmP, i}\}_{t\in \ccint{0,T}}\}_{i=1}^d$ is a
    collection of $d$ independent Gaussian processes, where for any
    $i \in \{1, \dots, d\}$ and $t \in \ccint{0,T}$,
    $y_t^{\rmP, i} = \langle y_t^{\rmP}, e_i \rangle$ and $\{e_i\}_{i=1}^d$ is
    the canonical basis of $\rset^d$. Let $i \in \{1, \dots, d\}$ and for any
    $t \in \ccint{0,T}$ denote $u_t^i = \expeLigne{y_t^{\rmP, i}}$ and
    $v_t^i = \expeLigne{(y_t^{\rmP, i})^2}$. We have that for any
    $t \in \ccint{0,T}$, $\partial_t u_t^i = (1 - 1/\rmD_t^i) u_t^i$ with
    $u_0 =0$ and $\rmD_t^i = \exp[-2t] \rmD_i + 1 - \exp[-2t]$. Hence, we get
    that for any $t \in \ccint{0,T}$, $u_t^i = 0$. Using It\^o's lemma we have
    that
    \begin{equation}
      \label{eq:edo}
      \partial v_t^i = \{ 2 - 4 / \rmD_{T-t}^i \} v_t^i + 2 ,
    \end{equation}
    with $v_0^i = 1$. Denote $\alpha^i_T = (\rmD^i - 1)\exp[-2T]$, we have that
    for any $t \in \ccint{0,T}$, $\rmD_{T-t}^i = 1 + \alpha^i_T
    \exp[2t]$. Therefore, we get that for any $t \in \ccint{0,T}$
    \begin{equation}
      2 - 4 / \rmD_{T-t}^i = -2 + 2 \times (2\alpha_T^i)\exp[2t] /(1 + \alpha_T^i \exp[2t]) = - 2 + 2 \partial_t \log(1 + \alpha_T^i \exp[2t]) .
    \end{equation}
    Hence, we have that for any $t \in \ccint{0,T}$
    \begin{equation}
      \textstyle{
        \int_0^t 2 - 4 / \rmD_{T-s}^i \rmd s = -2t + \log((1 + \alpha_T^i \exp[2t])^2/(1 + \alpha_T^i )^2) .
        }
      \end{equation}
      Hence, there exists $C_t^i \in \rmc^1(\ccint{0,T}, \rset)$ such that for
      any $t \in \ccint{0,T}$, $v_t^i = C_t^i \exp[-2t](1 + \alpha_T^i \exp[2t])^2/(1 + \alpha_T^i )^2$.
      Using \eqref{eq:edo}, we have that for any $t \in \ccint{0,T}$
      \begin{equation}
        \partial_t C_t^i = 2 \exp[2t] ((1 + \alpha_T^i \exp[2t])/(1 + \alpha_T^i))^{-2} = -(1/\alpha_T^i)(1+\alpha_T^i)^2 \partial_t (1 + \alpha_T^i \exp[2t])^{-1}.
      \end{equation}
      Hence, we have that for any $t \in \ccint{0,T}$
      \begin{equation}
        C_t^i = (1/\alpha_T^i)(1+\alpha_T^i)^2 [ (1 + \alpha_T^i)^{-1} - (1 + \alpha_T^i \exp[2t])^{-1}  ] + A ,
      \end{equation}
      with $A \geq 0$. Hence, we get that for any $t \in \ccint{0,T}$
      \begin{align}
        v_t^i &= (1/\alpha_T^i) \exp[-2t] (1 + \alpha_T^i \exp[2t]) [ (1 + \alpha_T^i \exp[2t])/(1 + \alpha_T^i) - 1 ] \\
        & \qquad \qquad + A \exp[-2t](1 + \alpha_T^i \exp[2t])^2/(1 + \alpha_T^i )^2 .
      \end{align}
      In addition, we have that $v_0^i=1$ and therefore $A=1$. Therefore, for any $t \in \ccint{0,T}$ we have
      \begin{align}
        v_t^i &= (1/\alpha_T^i) \exp[-2t] (1 + \alpha_T^i \exp[2t]) [ (1 + \alpha_T^i \exp[2t])/(1 + \alpha_T^i) - 1 ] \\
              & \qquad \qquad + \exp[-2t](1 + \alpha_T^i \exp[2t])^2/(1 + \alpha_T^i )^2 \\
         &=  (1 - \exp[-2t]) (1 + \alpha_T^i \exp[2t]) /(1 + \alpha_T^i)  + \exp[-2t](1 + \alpha_T^i \exp[2t])^2/(1 + \alpha_T^i )^2 ,      
      \end{align}
      which concludes the proof.
    \end{proof}

    \subsection{Convergence results for the discretization}
\label{sec:conv-results-discr}

In what follows, we denote
$(Y_k)_{k \in \{0, \dots, N-1\}} = (\twx_{t_k})_{k \in \{0, \dots, N-1\}}$, the
sequence given by \eqref{eq:backward_disc}. The following result gives an
expansion of the covariance matrix and the mean of $Y_N$, i.e. the output of
SGM, in the case where $p = \mathrm{N}(\mu, \Sigma)$.

\begin{theorem}
  \label{prop:approx_gaussian}
  Let $N \in \nset$, $\delta > 0$ and $T = N \delta$. Then, we have that
  $\twx_{t_N} \sim \mathrm{N}(\hat{\mu}_N, \hat{\Sigma}_N)$ with
  \begin{equation}
    \hat{\Sigma}_N = \Sigma + \exp[-4T] \hat{\Sigma}_T + \delta \hat{\mathrm{E}}_T + \delta^2 \mathrm{R}_{T, \delta} \eqsp , \qquad \hat{\mu}_N = \mu + \exp[-2T] \hat{\mu}_T + \delta \hat{e}_T + \delta^2 r_{T, \delta} \eqsp ,
  \end{equation}
  where
  $\hat{\Sigma}_T, \hat{\mathrm{E}}_T, \mathrm{R}_{T, \delta} \in \rset^{d
    \times d}$, $\hat{\mu}_T, \hat{e}_T, r_{T, \delta} \in \rset^d$ and
  $\normLigne{\mathrm{R}_{T, \delta}} + \normLigne{r_{T, \delta}} \leq R$ not
  dependent on $T \geq 0$ and $\delta >0$. We have that
  \begin{align}
  &\hat{\Sigma}_T =  - (\Sigma - \Id)(\Sigma \Sigma_T^{-1})^{2}  \eqsp , \label{eq:T_approx}\\
    &\textstyle{ \hat{\mathrm{E}}_T = \Id - (1/2)\Sigma^2 (\Sigma - \Id)^{-1} \log(\Sigma)  + \exp[-2T] \tilde{\mathrm{E}}_T\eqsp .} \label{eq:gamma_approx}
  \end{align}
  In addition, we have 
  \begin{align}
    &\hat{\mu}_T = - \Sigma_T^{-1}\Sigma \mu \eqsp ,  \\
    &\textstyle{ \hat{e}_T = \{-2 \Sigma^{-1} -  (1/4) \Sigma (\Sigma - \Id)^{-1} \log(\Sigma)\} \mu + \exp[-2T] \tilde{\mu}_T \eqsp ,  }
  \end{align}
  with $\tilde{\mathrm{E}}_T, \tilde \mu_T$ bounded and not dependent on $T$.
\end{theorem}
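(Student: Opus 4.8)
The plan is to reduce everything to scalar recursions via the spectral decomposition $\Sigma = \rmP^\top \rmD \rmP$, solve the resulting linear recursions in closed form, and then extract the $\delta$-expansion through a careful sum-to-integral analysis whose remainder is controlled \emph{uniformly} in $T$ by exploiting exponential decay of the relevant integrands.

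First I would make the score explicit. For $p = \mathrm{N}(\mu, \Sigma)$ one has $p_t = \mathrm{N}(\rme^{-t}\mu, \Sigma_t)$ with $\Sigma_t = \rme^{-2t}\Sigma + (1 - \rme^{-2t})\Id$, so the score is the affine map $\nabla\log p_t(x) = -\Sigma_t^{-1}(x - \rme^{-t}\mu)$. Substituting this into the Euler--Maruyama recursion \eqref{eq:backward_disc} for the backward dynamics yields the affine Gaussian recursion $Y_{k+1} = B_k Y_k + 2\delta \rme^{-s_k}\Sigma_{s_k}^{-1}\mu + \sqrt{2\delta}\, z_{k+1}$, where $s_k = T - t_k = T-k\delta$ and $B_k = \Id + \delta(\Id - 2\Sigma_{s_k}^{-1})$. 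Hence $\mathcal{L}(Y_k)$ is Gaussian at every step, with mean and covariance obeying the linear recursions $\hat m_{k+1} = B_k \hat m_k + 2\delta\rme^{-s_k}\Sigma_{s_k}^{-1}\mu$ and $\hat C_{k+1} = B_k \hat C_k B_k^\top + 2\delta\,\Id$, initialised at $\hat m_0 = 0$, $\hat C_0 = \Id$, with output $\hat\mu_N = \hat m_N$, $\hat\Sigma_N = \hat C_N$. Since $\Sigma$, $\Sigma_s$ and every $B_k$ are simultaneously diagonalised by $\rmP$, these decouple into $d$ independent scalar recursions indexed by the eigenvalues $d_i$ of $\Sigma$; it suffices to expand the scalar variance $\hat v_N^{(i)}$ and mean $\hat u_N^{(i)}$ and reassemble by functional calculus, every matrix in the statement ($\Sigma_T^{-1}$, $(\Sigma-\Id)^{-1}\log(\Sigma)$, and so on) being a function of $\Sigma$.

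Next I would solve the scalar recursions explicitly as $\hat v_N^{(i)} = \Pi_{0,N} + 2\delta\sum_{j=0}^{N-1}\Pi_{j+1,N}$ and $\hat u_N^{(i)} = \sum_{j=0}^{N-1}\big(\prod_{l=j+1}^{N-1} b_l^{(i)}\big)\, c_j$, where $\Pi_{a,b} = \prod_{l=a}^{b-1}(b_l^{(i)})^2$, $b_l^{(i)} = 1 + \delta\,g(s_l)$ with $g(s) = 1 - 2/\Sigma_s^{(i)}$, $\Sigma_s^{(i)} = 1+(d_i-1)\rme^{-2s}$, and $c_j = 2\delta\rme^{-s_j}\mu^{(i)}/\Sigma_{s_j}^{(i)}$, $\mu^{(i)}=(\rmP\mu)_i$. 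Each product is analysed by writing $\log\Pi_{a,b} = 2\sum_{l=a}^{b-1}\log(1+\delta g(s_l))$, expanding $\log(1+x) = x - x^2/2 + O(x^3)$, and converting the Riemann sums of $g$ and $g^2$ to integrals by Euler--Maclaurin, retaining the leading integral, the $O(\delta)$ quadrature/endpoint correction, and an $O(\delta^2)$ remainder. The governing closed form is $\int_0^T g\,\rmd s = -T + \log\Sigma^{(i)} - \log\Sigma_T^{(i)}$, which exponentiates to $\Pi_{0,N} = \rme^{-2T}(d_i/\Sigma_T^{(i)})^2(1+O(\delta))$; combined with the sum this produces the announced $\rme^{-4T}$ boundary term, while the $\log(\Sigma)$ appearing in $\hat E_T$ originates from the same primitive $\int_0^T \rmd s/\Sigma_s^{(i)}$. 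A clean consistency check guides the bookkeeping: the $\delta$-independent part of $\hat v_N^{(i)}$ must equal the exact continuous covariance $\bar\Sigma_T^{(i)}$ of \Cref{sec:gaussian-setting-continuous}, and a short algebraic identity (setting $\alpha = d_i-1$, $\epsilon=\rme^{-2T}$) gives $\bar\Sigma_T^{(i)} = d_i - \rme^{-4T}(d_i-1)(d_i/\Sigma_T^{(i)})^2$, exactly the splitting $\Sigma + \rme^{-4T}\hat\Sigma_T$ with $\hat\Sigma_T = -(\Sigma-\Id)(\Sigma\Sigma_T^{-1})^2$; the $O(\delta)$ term then collects into $\hat E_T$. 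The mean is handled identically, its leading order reproducing $\mu$ and its $\rme^{-2T}$ correction reproducing $\hat\mu_T = -\Sigma_T^{-1}\Sigma\mu$.

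The main obstacle will be showing that the remainder is genuinely $O(\delta^2)$ \emph{uniformly in} $T$, rather than the naive $O(N\delta^2)=O(T\delta)$ one would fear from summing $N=T/\delta$ local errors. The resolution is that, after subtracting the limit $\lim_{s\to\infty}g(s) = -1$, every integrand controlling a remainder decays like $\rme^{-2s}$: indeed $g(s)+1 = 2(d_i-1)\rme^{-2s}/\Sigma_s^{(i)}$, and $g',g''$ are likewise $O(\rme^{-2s})$ and integrable on $[0,\infty)$. This confines the nontrivial contributions to a boundary layer near the data end $s=0$, makes the accumulated quadrature errors summable independently of $T$, and gives $\normLigne{\mathrm{R}_{T,\delta}} + \normLigne{r_{T,\delta}} \leq R$ with $R$ absolute; the residual $T$-dependence of $\hat E_T$ and $\hat e_T$ is then exactly the explicit $\rme^{-2T}\tilde{\mathrm{E}}_T$, $\rme^{-2T}\tilde\mu_T$ pieces with bounded, $T$-independent coefficients. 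One must be careful that $\Sigma_s^{(i)}$ stays in $[\min(1,d_i),\max(1,d_i)]$ so that $g$ and its derivatives are bounded and $b_l^{(i)}>0$ for $\delta$ small, ensuring the products and their logarithms are well defined throughout.
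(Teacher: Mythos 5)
Your proposal follows essentially the same route as the paper's proof: diagonalize by $\rmP$ to reduce the mean and covariance updates to scalar affine recursions, unroll them into products and sums of products, expand $\log(1+\delta g)$ and convert the resulting sums to integrals via Euler--Maclaurin, and evaluate the same explicit integrals (e.g.\ $\int_0^T g\,\rmd s = -T+\log d_i - \log \Sigma_T^{(i)}$) to identify $\hat\Sigma_T$, $\hat{\mathrm{E}}_T$, $\hat\mu_T$, $\hat e_T$. Your two additions are refinements rather than a different method: the consistency check against the continuous-time covariance of \Cref{sec:gaussian-setting-continuous} (which the paper states but deliberately does not use), and the explicit argument that the $O(\delta^2)$ remainder is uniform in $T$ because $g+1$, $g'$, $g''$ decay like $\rme^{-2s}$ --- a point the paper's quadrature estimates assert but leave implicit.
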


Before turning to the proof of \Cref{prop:approx_gaussian}, we state a few
consequences of this result.

\begin{corollary}
  \label{sec:conv-results-discr-1}
  Let $\{\twx_{t_k}\}_{k=0}^N$ the sequence defined by
  \eqref{eq:backward_disc}. We have that
  $\twx_{t_N} \sim \mathrm{N}(\mu_N, \Sigma_N)$ with
  \begin{align}
    &\Sigma_N = \Sigma + \delta \Sigma_\delta + \exp[-4T] \Sigma_T + \Sigma_{\delta, T} \eqsp , \\
    &\mu_N = \mu + \delta \mu_\delta + \exp[-2T] \mu_T + \mu_{\delta, T} \eqsp ,
  \end{align}
  with
  \begin{align}
    & \Sigma_T = - (\Sigma - \Id)\Sigma^2 \eqsp , \\
    & \Sigma_\delta = \Id - (1/2)\Sigma^2 (\Sigma - \Id)^{-1} \log(\Sigma) \eqsp , \\    
    & \mu_T = \Sigma \mu  , \\
    & \mu_\delta = \{-2 \Sigma^{-1}-  (1/4) \Sigma (\Sigma - \Id)^{-1} \log(\Sigma)\} \mu  \eqsp . 
  \end{align}
  In addition, we have
  $\lim_{\delta \to 0, T \to +\infty} \normLigne{\Sigma_{\delta, T}}/(\delta +
  \exp[-4T]) = 0$ and
  $\lim_{\delta \to 0, T \to +\infty} \normLigne{\mu_{\delta, T}}/(\delta +
  \exp[-2T]) = 0$
\end{corollary}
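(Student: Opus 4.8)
The plan is to read Corollary~\ref{sec:conv-results-discr-1} as a repackaging of the sharp expansion already produced by Theorem~\ref{prop:approx_gaussian}: the coefficients $\Sigma_\delta$, $\Sigma_T$, $\mu_\delta$ and $\mu_T$ are the $T \to +\infty$ limits of the coefficients appearing there, and every remaining contribution is absorbed into the residuals $\Sigma_{\delta, T}$ and $\mu_{\delta, T}$. The one structural fact used throughout is that $\Sigma_t = \Id + \exp[-2t](\Sigma - \Id)$, so that $\Sigma_T \to \Id$ and $\Sigma\Sigma_T^{-1} \to \Sigma$ as $T \to +\infty$, with all corrections of order $\exp[-2T]$.

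For the covariance, I would substitute \eqref{eq:T_approx}--\eqref{eq:gamma_approx} into $\hat{\Sigma}_N = \Sigma + \exp[-4T]\hat{\Sigma}_T + \delta\hat{\mathrm{E}}_T + \delta^2\mathrm{R}_{T, \delta}$ and split each coefficient into its $T$-independent leading part and a vanishing correction. The $\delta$-coefficient $\hat{\mathrm{E}}_T$ yields the leading term $\delta\Sigma_\delta$ with $\Sigma_\delta = \Id - (1/2)\Sigma^2(\Sigma - \Id)^{-1}\log(\Sigma)$ and the leftover $\delta\exp[-2T]\tilde{\mathrm{E}}_T$. Since $\hat{\Sigma}_T = -(\Sigma - \Id)(\Sigma\Sigma_T^{-1})^2 \to -(\Sigma - \Id)\Sigma^2 = \Sigma_T$, I would write $\exp[-4T]\hat{\Sigma}_T = \exp[-4T]\Sigma_T - \exp[-4T](\Sigma - \Id)[(\Sigma\Sigma_T^{-1})^2 - \Sigma^2]$, so that the residual is
\begin{equation*}
\Sigma_{\delta, T} = \delta\exp[-2T]\tilde{\mathrm{E}}_T - \exp[-4T](\Sigma - \Id)[(\Sigma\Sigma_T^{-1})^2 - \Sigma^2] + \delta^2\mathrm{R}_{T, \delta} .
\end{equation*}

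The only step needing care is checking that $\normLigne{\Sigma_{\delta, T}}/(\delta + \exp[-4T]) \to 0$ in the joint limit, and I would bound the three pieces separately. As $\mathrm{R}_{T, \delta}$ is uniformly bounded, $\delta^2\mathrm{R}_{T, \delta}$ is $O(\delta^2)$, hence $O(\delta)$ after dividing by $\delta + \exp[-4T] \geq \delta$. From the expansion of $\Sigma_T^{-1}$ one has $(\Sigma\Sigma_T^{-1})^2 - \Sigma^2 = O(\exp[-2T])$, so the middle piece is $O(\exp[-6T])$, hence $O(\exp[-2T])$ after dividing by $\delta + \exp[-4T] \geq \exp[-4T]$. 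The cross term is the delicate one: the arithmetic--geometric inequality applied to $\delta^{1/2}$ and $\exp[-2T]$ gives $\delta\exp[-2T] \leq \delta^{1/2}\cdot\tfrac{1}{2}(\delta + \exp[-4T])$, so that its quotient is $O(\delta^{1/2})$ and vanishes. All three bounds tend to $0$, which proves the claim for the covariance.

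The mean is treated identically, substituting the closed forms of $\hat{\mu}_T$ and $\hat{e}_T$. The $\delta$-coefficient supplies the leading $\delta\mu_\delta$ with $\mu_\delta = \{-2\Sigma^{-1} - (1/4)\Sigma(\Sigma - \Id)^{-1}\log(\Sigma)\}\mu$, while $\hat{\mu}_T = -\Sigma_T^{-1}\Sigma\mu$ differs from its $T$-independent limit $\mu_T$ by an $O(\exp[-2T])$ correction coming from $\Sigma_T^{-1}$. The residual $\mu_{\delta, T}$ then gathers $\delta\exp[-2T]\tilde{\mu}_T$, this $\exp[-2T]\cdot O(\exp[-2T])$ correction, and $\delta^2 r_{T, \delta}$; the same three estimates, now dividing by $\delta + \exp[-2T]$ and using $\delta\exp[-2T] \leq \min(\delta, \exp[-2T])(\delta + \exp[-2T])$ for the cross term, give $\normLigne{\mu_{\delta, T}}/(\delta + \exp[-2T]) \to 0$. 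The main obstacle is thus not any single estimate but the bookkeeping of which $T$-corrections survive division by the non-homogeneous normalizers $\delta + \exp[-4T]$ and $\delta + \exp[-2T]$, handled uniformly by the AM--GM bound on the cross terms.
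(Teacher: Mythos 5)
Your overall route is exactly the paper's: \Cref{sec:conv-results-discr-1} is stated in the paper without a separate proof, as an immediate repackaging of \Cref{prop:approx_gaussian}, and your verification of the covariance part fills in that bookkeeping correctly. The decomposition of $\Sigma_{\delta,T}$ into the three pieces $\delta\exp[-2T]\tilde{\mathrm{E}}_T$, $-\exp[-4T](\Sigma-\Id)\bigl[(\Sigma\Sigma_T^{-1})^2-\Sigma^2\bigr]$ and $\delta^2\mathrm{R}_{T,\delta}$, the estimate $(\Sigma\Sigma_T^{-1})^2-\Sigma^2=O(\exp[-2T])$ (valid since $\Sigma_T^{-1}$ is bounded uniformly in $T$), and the AM--GM bound $\delta\exp[-2T]\le\tfrac12\,\delta^{1/2}(\delta+\exp[-4T])$ for the cross term are all sound, as is the analogous treatment with normalizer $\delta+\exp[-2T]$ on the mean side.

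The mean part, however, contains one concrete false step. You assert that $\hat{\mu}_T=-\Sigma_T^{-1}\Sigma\mu$ differs from the corollary's $\mu_T=\Sigma\mu$ by an $O(\exp[-2T])$ correction. Since $\Sigma_T\to\Id$, in fact $\hat{\mu}_T\to-\Sigma\mu$, so $\hat{\mu}_T-\mu_T\to-2\Sigma\mu\neq 0$; taken literally, your residual $\mu_{\delta,T}$ then contains a term of size $2\exp[-2T]\normLigne{\Sigma\mu}$ which does not vanish after dividing by $\delta+\exp[-2T]$. This is an inconsistency internal to the paper itself: \Cref{prop:approx_gaussian} and \Cref{sec:conv-results-discr-1} disagree on the sign of this coefficient, and one of the two is a typo. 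A direct one-dimensional check with exact scores in continuous time (the generated mean equals $\mu-\exp[-2T]\Sigma_T^{-1}\Sigma\mu$) shows the theorem's sign is the correct one, so the corollary should read $\mu_T=-\Sigma\mu$; with that correction your argument goes through verbatim. A blind proof ought to detect and resolve this sign clash rather than silently identify the limit of $-\Sigma_T^{-1}\Sigma\mu$ with $+\Sigma\mu$.
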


At first sight, it might appear surprising that $\Sigma^{-1}$ does not appear in
${\Sigma}_T$ and ${\mu}_T$. Note that in the extreme case where
$\Sigma =0$ and $\delta \to0$, i.e. we only consider the error associated with
the fact that $T \neq +\infty$, then we have no error. This is because in this
case the associated continuous-time process is an Ornstein-Uhlenbeck bridge which
has distribution $\mathrm{N}(\mu, 0)$ at time $T$.

We will use the following result.

\begin{lemma}
\label{lemma:kl_gaussian}
  Let $\pi_i = \mathrm{N}(\mu_i, \Sigma_i)$ for $i \in \{0,1\}$, with
  $\mu_0, \mu_1 \in \rset^d$ and $\Sigma_0, \Sigma_1 \in
  \mathbb{S}_d(\rset)_+$. Then, we have that 
  \begin{equation}
    \KLLigne{\pi_0}{\pi_1} = (1/2) \{ \log(\det(\Sigma_1)/\det(\Sigma_0)) - d + \mathrm{Tr}(\Sigma_1^{-1}\Sigma_0) + (\mu_1 - \mu_0)^\top \Sigma_1^{-1}(\mu_1 - \mu_0)\} .
  \end{equation}
\end{lemma}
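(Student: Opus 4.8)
The plan is to compute the Kullback--Leibler divergence directly from its integral definition, using only the explicit form of the Gaussian density together with the first two moments under $\pi_0$. Recall that for a nondegenerate Gaussian $\mathrm{N}(\mu, \Sigma)$ on $\rset^d$, the log-density is
\[
\log \pi(x) = -\tfrac{d}{2}\log(2\pi) - \tfrac{1}{2}\log\det(\Sigma) - \tfrac{1}{2}(x - \mu)^\top \Sigma^{-1}(x-\mu).
\]
First I would write $\KLLigne{\pi_0}{\pi_1} = \mathbb{E}_{\pi_0}[\log \pi_0(x) - \log \pi_1(x)]$ and expand the integrand. The normalizing constants $-\tfrac{d}{2}\log(2\pi)$ cancel, leaving the determinant term $\tfrac{1}{2}\log(\det(\Sigma_1)/\det(\Sigma_0))$ together with the difference of the two quadratic forms $-\tfrac{1}{2}(x-\mu_0)^\top \Sigma_0^{-1}(x-\mu_0) + \tfrac{1}{2}(x-\mu_1)^\top \Sigma_1^{-1}(x-\mu_1)$.

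The core of the argument is then to take expectations of these two quadratic forms under $x \sim \pi_0$. For the first, I would invoke the cyclic property of the trace to write $\mathbb{E}_{\pi_0}[(x-\mu_0)^\top \Sigma_0^{-1}(x-\mu_0)] = \mathrm{Tr}(\Sigma_0^{-1} \mathbb{E}_{\pi_0}[(x-\mu_0)(x-\mu_0)^\top]) = \mathrm{Tr}(\Sigma_0^{-1}\Sigma_0) = d$, using that the covariance of $\pi_0$ is $\Sigma_0$. For the second, I would decompose $x - \mu_1 = (x-\mu_0) + (\mu_0 - \mu_1)$ and expand the quadratic form into three pieces: the cross term vanishes in expectation since $\mathbb{E}_{\pi_0}[x-\mu_0] = 0$, the purely random term contributes $\mathrm{Tr}(\Sigma_1^{-1}\Sigma_0)$ by the same trace identity, and the deterministic term yields $(\mu_1 - \mu_0)^\top \Sigma_1^{-1}(\mu_1-\mu_0)$.

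Collecting these contributions reproduces the claimed expression immediately. There is no substantive obstacle in this computation: the only points requiring minor care are the conversion of a scalar quadratic form into a trace against the covariance matrix, and the bookkeeping in the expansion of the second quadratic form around $\mu_0$ rather than $\mu_1$. Nondegeneracy of both covariances (the assumption $\Sigma_0, \Sigma_1 \in \mathbb{S}_d(\rset)_+$) ensures that all densities, inverses, and determinants are well defined, so the stated identity holds without further restriction.
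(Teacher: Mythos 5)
Your proof is correct, and it is the standard argument: the paper itself states this lemma without proof, treating it as a classical identity, so there is no alternative derivation to compare against. Your computation — cancelling the normalizing constants, converting the quadratic form $\mathbb{E}_{\pi_0}[(x-\mu_0)^\top \Sigma_0^{-1}(x-\mu_0)]$ into $\mathrm{Tr}(\Sigma_0^{-1}\Sigma_0)=d$, and expanding $x-\mu_1=(x-\mu_0)+(\mu_0-\mu_1)$ so that the cross term vanishes in expectation — is exactly the textbook route and establishes the stated formula in full.
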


In particular, applying \Cref{lemma:kl_gaussian} we have that for any
$\Sigma \in \mathbb{S}_d(\rset)_+$
\begin{equation}
\label{eq:kl_computation}
  \KLLigne{\mathrm{N}(0, \Sigma)}{\mathrm{N}(0, \Id)} = (1/2)\{-\log(\det(\Sigma)) + \mathrm{Tr}(\Sigma) -d \}.
\end{equation}

\begin{proposition}
    \label{sec:conv-results-discr-2}
    Let $\{\twx_{t_k}\}_{k=0}^N$ the sequence defined by
    \eqref{eq:backward_disc}. We have that
    $\twx_{t_N} \sim \mathrm{N}(\mu_N, \Sigma_N)$, with $\mu_N, \Sigma_N$ given
    by \Cref{sec:conv-results-discr-1}.
    We have that
    \begin{align}
      \KLLigne{\mathrm{N}(\mu, \Sigma)}{\mathrm{N}(\mu_N, \Sigma_N)} \leq \delta \absLigne{\trace(\Sigma^{-1}\Sigma_\delta)} + \exp[-4T] \absLigne{\trace(\Sigma^{-1}\Sigma_T)} + \exp[-4T]\mu^\top\Sigma \mu + E_{T, \delta} \eqsp ,
    \end{align}
    with $E_{T, \delta}$ a higher order term such that $\lim_{T \to +\infty, \delta \to 0} E_{T, \delta} /(\delta + \exp[-4T]) = 0$.
\end{proposition}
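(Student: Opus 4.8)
The plan is to start from the exact Gaussian Kullback--Leibler divergence and expand it using the description of $\mu_N$ and $\Sigma_N$ from \Cref{sec:conv-results-discr-1}. Applying \Cref{lemma:kl_gaussian} with $\pi_0 = \mathrm{N}(\mu,\Sigma)$ and $\pi_1 = \mathrm{N}(\mu_N,\Sigma_N)$ gives
\[
\KLLigne{\mathrm{N}(\mu,\Sigma)}{\mathrm{N}(\mu_N,\Sigma_N)} = \tfrac12\big[\log\det(\Sigma^{-1}\Sigma_N) + \trace(\Sigma_N^{-1}\Sigma) - d\big] + \tfrac12 (\mu_N-\mu)^\top\Sigma_N^{-1}(\mu_N-\mu).
\]
I would then substitute $\Sigma_N = \Sigma + \delta\Sigma_\delta + \rme^{-4T}\Sigma_T + \Sigma_{\delta,T}$ and $\mu_N = \mu + \delta\mu_\delta + \rme^{-2T}\mu_T + \mu_{\delta,T}$, treat the covariance bracket and the mean quadratic form separately, and collect every contribution of order higher than $\delta + \rme^{-4T}$ into the single remainder $E_{T,\delta}$.

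For the covariance bracket, set $G = \Sigma_N^{-1}\Sigma - \Id = -\Sigma_N^{-1}(\Sigma_N - \Sigma)$; then the bracket equals $\trace(G) - \log\det(\Id+G) = \trace(f(G))$, with $f(t) = t - \log(1+t)$ applied spectrally. The key observation is that $\Sigma$ is a critical point of $\Sigma_N \mapsto \log\det\Sigma_N + \trace(\Sigma_N^{-1}\Sigma)$, so the first-order (in $\Sigma_N - \Sigma$) contribution vanishes; since $\Sigma_N - \Sigma = O(\delta + \rme^{-4T})$ and $f(t) = t^2/2 + O(t^3)$ near $0$, this whole bracket is $O((\delta+\rme^{-4T})^2) = o(\delta+\rme^{-4T})$ and is absorbed into $E_{T,\delta}$. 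The explicit terms $\delta\,\absLigne{\trace(\Sigma^{-1}\Sigma_\delta)}$ and $\rme^{-4T}\absLigne{\trace(\Sigma^{-1}\Sigma_T)}$ in the statement are then a nonnegative first-order envelope for this contribution; they mirror the arguments appearing inside $f$ in \Cref{prop:approx_gaussian_kl} and make the final inequality hold with room to spare.

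For the mean quadratic form, expand $\Sigma_N^{-1} = \Sigma^{-1} + O(\delta + \rme^{-4T})$, so that up to $o(\delta + \rme^{-4T})$ it equals $\tfrac12(\mu_N-\mu)^\top\Sigma^{-1}(\mu_N-\mu)$. Expanding with $\mu_N - \mu = \delta\mu_\delta + \rme^{-2T}\mu_T + \mu_{\delta,T}$, the lowest-order surviving term is $\tfrac12\rme^{-4T}\mu_T^\top\Sigma^{-1}\mu_T$, and substituting $\mu_T = \Sigma\mu$ from \Cref{sec:conv-results-discr-1} yields $\mu_T^\top\Sigma^{-1}\mu_T = \mu^\top\Sigma\mu$, giving the leading mean term $\tfrac12\rme^{-4T}\mu^\top\Sigma\mu \le \rme^{-4T}\mu^\top\Sigma\mu$. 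The remaining cross term of order $\delta\rme^{-2T}$ and the $\delta^2$ term are negligible, since $\delta\rme^{-2T}/(\delta+\rme^{-4T}) \le \rme^{-2T} \to 0$ and $\delta^2 = o(\delta)$; these, together with the $\Sigma_N^{-1} - \Sigma^{-1}$ correction, go into $E_{T,\delta}$.

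I expect the main obstacle to be the uniform bookkeeping of $E_{T,\delta}$ rather than any single clever step. Concretely, one must justify the matrix Taylor expansions of $\Sigma_N^{-1}$, $\log\det(\Id+G)$ and $f(G)$ uniformly in $(\delta,T)$, which requires $\Sigma_N$ to remain uniformly positive definite and bounded as $\delta \to 0$, $T \to \infty$ — this follows from $\Sigma_N \to \Sigma \succ 0$ together with the boundedness of $\Sigma_\delta,\Sigma_T,\mu_\delta,\mu_T$ and of the remainders $\Sigma_{\delta,T},\mu_{\delta,T}$ guaranteed by \Cref{sec:conv-results-discr-1}. One must also check that each cross term produced by non-commutativity is genuinely $o(\delta+\rme^{-4T})$, and finally invoke $\normLigne{\Sigma_{\delta,T}} = o(\delta+\rme^{-4T})$ and $\normLigne{\mu_{\delta,T}} = o(\delta + \rme^{-2T})$ to conclude $\lim_{T\to\infty,\delta\to0} E_{T,\delta}/(\delta+\rme^{-4T}) = 0$.
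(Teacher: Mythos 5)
Your proposal is correct and follows the route the paper itself intends for this proposition (whose proof it leaves implicit, immediately after the relevant ingredients): apply \Cref{lemma:kl_gaussian} to the expansions of $\mu_N$ and $\Sigma_N$ from \Cref{sec:conv-results-discr-1}, note that the covariance bracket $\log\det(\Sigma^{-1}\Sigma_N)+\trace(\Sigma_N^{-1}\Sigma)-d$ is second order in $\Sigma_N-\Sigma$ (hence $o(\delta+\rme^{-4T})$ and absorbable into $E_{T,\delta}$), and extract the genuinely first-order mean contribution $\tfrac{1}{2}\rme^{-4T}\mu^\top\Sigma\mu$ via $\mu_T=\Sigma\mu$. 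Your handling of the cross term through $\delta\rme^{-2T}/(\delta+\rme^{-4T})\le\rme^{-2T}\to 0$, and your reading of the two nonnegative trace terms as slack that only loosens the inequality, are both sound, so the argument is complete.
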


We now prove \Cref{prop:approx_gaussian}.

\begin{proof}
  For any $k$, denote $Y_k = \twx_{t_{N-k}}$.
  First, we recall that for any $k \in \{0, \dots, N-1\}$ and $x \in \rset^d$,
  $\nabla \log p_{T-k\gamma}(x) = -\Sigma_{T-k\gamma}^{-1} x$ where for any $t \in \ccint{0,T}$
    \begin{equation}
      \Sigma_t = (1 - \exp[-2t]) \Id + \exp[-2t] \Sigma \eqsp . 
    \end{equation}
Hence, we get that for any $k \in \{0, \dots, N-1\}$
\begin{equation}
  \label{eq:recur_recu}
  Y_{k+1} = ((1 + \gamma) \Id - 2\gamma \Sigma_{T - k \gamma}^{-1}) Y_k  + 2 \gamma \Sigma_{T - k \gamma}^{-1} M_{T-k \gamma}+ \sqrt{2 \gamma} Z_{k+1} \eqsp ,
\end{equation}
where for any $t \in \ccint{0,T}$, $M_{t} = \exp[-t] \mu$. Therefore, we get
that for any $k \in \{0, \dots, N\}$, $Y_k$ is a Gaussian random variable.  
Using \eqref{eq:recur_recu}, we have that for any
$k \in \{0, \dots, N-1\}$
\begin{equation}
  \label{eq:recursion_cov}
  \expeLigne{\hat{Y}_{k+1} \hat{Y}_{k+1}^\top} = ((1 + \gamma) \Id - 2\gamma \Sigma_{T - k \gamma}^{-1}) \expeLigne{\hat{Y}_{k} \hat{Y}_{k}^\top} ((1 + \gamma) \Id - 2\gamma \Sigma_{T - k \gamma}^{-1}) + 2 \gamma \Id \eqsp ,
\end{equation}
where for any $k \in \{0, \dots, N\}$, $\hat{Y}_k = Y_k - \expeLigne{Y_k}$.
There exists $\rmP \in \rset^{d \times d}$ orthogonal such that
$\rmD = \rmP \Sigma \rmP^\top$ is diagonal. Note that for any
$k \in \{0, \dots, N-1\}$, we have that
$\Lambda_k = \rmP ((1+ \gamma) \Id - 2 \gamma \Sigma_{T-k\gamma}^{-1})
\rmP^\top$ is diagonal. For any $k \in \{0, \dots, N\}$, define
$\rmH_k = \rmP \expeLigne{\hat{Y}_{k} \hat{Y}_{k}^\top} \rmP^\top$. Note that
$\rmH_0 = \Id$.  Using \eqref{eq:recursion_cov}, we have that for any
$k \in \{0, \dots, N-1\}$
\begin{equation}
  \label{eq:recursion_cov_cov}
  \rmH_{k+1} = \Lambda_k^2 \rmH_k + 2 \gamma \Id \eqsp . 
\end{equation}
Hence, for any $k \in \{0, \dots, N\}$, $\rmH_k$ is diagonal. For any diagonal
matrix $C \in \rset^{d \times d}$ denote $\{c^1, \dots, c^d\}$ its diagonal
elements. Let $i \in \{1, \dots, d\}$. Using \eqref{eq:recursion_cov_cov}, we have
that for any $k \in \{0, \dots, N-1\}$
\begin{equation}
  h_{k+1}^i = (\lambda_k^i)^2 h_k^i + 2 \gamma \eqsp . 
\end{equation}
Using this result we have that for any $k \in \{0, \dots, N\}$
\begin{equation}
  \label{eq:unroll_recursion}
  \textstyle{
    h_k^i = (\prod_{\ell=0}^{k-1} \lambda_\ell^i)^2 + 2\gamma \sum_{\ell=0}^{k-1} (\prod_{j=0}^{\ell-1} \lambda_{k-1-j}^i)^2 = (\prod_{\ell=0}^{k-1} \lambda_\ell^i)^2 + 2\gamma \sum_{\ell=0}^{k-1} (\prod_{j=k-\ell}^{k-1} \lambda_{j}^i)^2 \eqsp . 
    }
\end{equation}
Let $k_1, k_2 \in \{0, \dots, N\}$ with $k_1 < k_2$. In what follows, we derive
an expansion of $I_{k_1, k_2} = \prod_{k=k_1}^{k_2} \lambda_k^i$ w.r.t.
$\gamma >0$. We have that
\begin{align}
  \label{eq:Ik1k2}
I_{k_1, k_2} &\textstyle{= \prod_{k=k_1}^{k_2} \lambda_k^i = \exp[\sum_{k=k_1}^{k_2} \log(\lambda_k^i)] = \exp[\sum_{k=k_1}^{k_2} \log(1 + \gamma a_k^i)]} \eqsp , 
\end{align}
where for any $k \in \{0, \dots, N\}$, $a_k^i = 1 - 2/d_{(N-k)\gamma}^i$, with
$d_{(N-k)\gamma}^i = 1 + \exp[-2(N-k)\gamma](d^i - 1)$.  Hence, there exist
$(b_{k, \gamma}^i)_{k \in \{0, \dots, N\}}$ bounded  such that for any $k \in \{0, \dots, N\}$ we have
\begin{equation}
  \log(1 + \gamma a_k^i) = \gamma a_k^i - (\gamma^2/2) (a_k^i)^2 + \gamma^3 b_{k, \gamma}^i \eqsp . 
\end{equation}
In addition, using \Cref{prop:maclaurin}, there exists
$C_{k_1, k_2}^\gamma \geq 0$ such that $ \gamma C_{k_1, k_2}^\gamma \leq C$ with
$C \geq 0$ not dependent on $k_2, k_2 \in \{0, \dots, N\}$, $\gamma >0$ and
\begin{equation}
  \textstyle{
    \sum_{k=k_1}^{k_2} \log(1 + \gamma a_k^i) = \int_{t_1}^{t_2^+} a^i(t) \rmd t - (\gamma/2) [\int_{t_1}^{t_2^+} a^i(t)^2 \rmd t + a^i(t_2^+) - a^i(t_1)] + C_{k_1, k_2}^\gamma \gamma^3  \eqsp ,
    }
  \end{equation}
  with $t_1 = k_1 \gamma$, $t_2^+ = (k_2 + 1) \gamma$ and for any
  $t \in \ccint{0,T}$, $a_t = 1 - 2/d_{T-t}^i$ with
  $d_{T-t}^i = 1 + \exp[-2(T-t)](d^i - 1)$.  Hence, using this result and
  \eqref{eq:Ik1k2}, we get that there exists $D_{k_1, k_2}^\gamma \geq 0$ such
  that $ \gamma D_{k_1, k_2}^\gamma \leq D$ with $D \geq 0$ not dependent on
  $k_2, k_2 \in \{0, \dots, N\}$, $\gamma >0$ and
\begin{equation}
  \textstyle{I_{k_1, k_2} = \exp[\int_{t_1}^{t_2^+} a^i(t) \rmd t ] - \exp[\int_{t_1}^{t_2^+} a^i(t)](\gamma/2)[\int_{t_1}^{t_2^+} a^i(t)^2 \rmd t + a^i(t_2^+) - a^i(t_1)] + \gamma^3 D_{k_1, k_2}^\gamma  \eqsp . } \label{eq:val_Ik1k2}
\end{equation}
Using this result, we get that there exists $E^\gamma_1 \geq 0$ such that
$\gamma E^\gamma_1 \leq E$ with $E \geq 0$ not dependent on $\gamma$ such that
\begin{equation}
  \textstyle{ (\prod_{\ell=0}^{N-1} \lambda_\ell^i)^2 =  \exp[2\int_0^T a^i(t) \rmd t] - \gamma \exp[2\int_0^T a^i(t) \rmd t][\int_{0}^{T} a^i(t)^2 \rmd t + a^i(T) - a^i(0)] + \gamma^3 E_1^\gamma \eqsp . } \label{eq:term_uno}
\end{equation}
Similarly, using \eqref{eq:val_Ik1k2}, there exist $E \geq 0$ and
$(E^\gamma_{2, \ell})_{\ell \in \{0, \dots, N\}}$ such that for any
$\ell \in \{0, \dots, N \}$, $E_{2, \ell}^\gamma \geq 0$ and
$\gamma E^\gamma_{2, \ell} \leq E$ with $E \geq 0$ not dependent on $\gamma$ and
$\ell$ such that
\begin{align}
  \textstyle{2 \gamma \sum_{\ell=0}^{N-1} (\prod_{j=N-\ell}^{N-1} \lambda_j^i)^2} &= \textstyle{ (2\gamma) \sum_{\ell=0}^{N-1}   \exp[2\int_{T-\ell \gamma}^T a^i(t) \rmd t]\}} \\
                                                                                  & \textstyle{ -2\gamma^2 \sum_{\ell=0}^{N-1} \{ \exp[2\int_{T- \ell \gamma}^T a^i(t) \rmd t][\int_{T- \ell \gamma}^{T} a^i(t)^2 \rmd t + a^i(T) - a^i(T-\ell \gamma)] \} } \\
  & + \textstyle{ \gamma^4 \sum_{\ell=0}^{N-1} E_{2, \ell}^\gamma \eqsp . }
\end{align}
Therefore, using \Cref{prop:maclaurin}, there exists $E_3^\gamma$ such that
$\gamma E_3^\gamma \leq E$ with $E \geq 0$ not dependent on $\gamma$ and
\begin{align}
  \textstyle{2 \gamma \sum_{\ell=0}^{N-1} (\prod_{j=N-\ell}^{N-1} \lambda_j^i)^2} &= \textstyle{2 \int_0^T \exp[2 \int_{T-t}^T a^i(s) \rmd s] \rmd t + \gamma (1 - \exp[2 \int_0^Ta^i(t) \rmd t]) } \\
                                                                                  &\textstyle{-2\gamma \int_0^T \{ \exp[2\int_{T-t}^T a^i(s) \rmd s][\int_{T- t}^{T} a^i(s)^2 \rmd t + a^i(T) - a^i(T-t)] \} \rmd t} \\
  &+ \gamma^3 E_3^\gamma \eqsp . \label{eq:term_duo}
\end{align}
Hence, combining \eqref{eq:term_uno} and \eqref{eq:term_duo} we get that 
\begin{equation}
  h_N^i = c_T^i - \gamma e_T^i + \gamma^3 E^\gamma \eqsp ,
\end{equation}
with
\begin{equation}
  \label{eq:ct_res}
  \textstyle{
    c_T^i  = \exp[2\int_0^T a^i(t) \rmd t] + 2\int_0^T \exp[2\int_{T-t}^T a^i(s) \rmd s] \rmd t \eqsp . 
    }
\end{equation}
and 
\begin{align}
  \label{eq:et_res}
  e_T^i  &= \textstyle{-\exp[2\int_0^T a^i(t) \rmd t][\int_{0}^{T} a^i(t)^2 \rmd t + a^i(T) - a^i(0)]} \textstyle{+ 1 - \exp[2 \int_0^T a^i(t) \rmd t ]} \\
    & \qquad \textstyle{ - 2 \int_0^T \exp[2\int_{T-t}^T a^i(s) \rmd s][\int_{T-t}^{T} a^i(s)^2 \rmd s + a^i(T) - a^i(T-t)] \rmd t \eqsp . 
    }
\end{align}
In what follows, we compute $c_T^i$ and $e_T^i$.
\begin{enumerate}[wide, labelwidth=!, labelindent=0pt, label=(\roman*)]
\item Using \Cref{lemma:integral_uno}  we have
  \begin{equation}
    \textstyle{ \exp[2\int_0^T a^i(t) \rmd t] = d^2 \exp[-2T]/(1 + \exp[-2T](d-1))^2 \eqsp . }
  \end{equation}
  In addition, using \Cref{lemma:integral_exp_uno} we have
  \begin{equation}
    \textstyle{ \int_0^T \exp[2\int_{T-t}^T a^i(s) \rmd s] = (d/2) (1 - \exp[-2T])/(1 + \exp[-2T](d-1)) \eqsp . }
  \end{equation}
  Combining these results and \eqref{eq:ct_res}, we get that
  \begin{align}
    c_T^i &= d + d^2 \exp[-2T](1 - (1 + \exp[-2T](d-1))^{-1})/(1 + \exp[-2T](d-1)) \\
    &= d + d^2 (d-1) \exp[-4T]/(1 + \exp[-2T](d-1))^2 \eqsp . 
  \end{align}
\item We conclude for $e_T^i$ using \Cref{prop:explicit_lambda} with
  $\lambda = d^i - 1$.
\end{enumerate}
This concludes the proof of \eqref{eq:gamma_approx}. Next, we compute the
evolution of the mean. Using \eqref{eq:recur_recu}, we have 
\begin{equation}
  \label{eq:recur_recu}
  \expeLigne{Y_{k+1}} = ((1 + \gamma) \Id - 2\gamma \Sigma_{T - k \gamma}^{-1}) \expeLigne{Y_k}  + 2 \gamma \expeLigne{\Sigma_{T - k \gamma}^{-1} M_{T-k \gamma}} \eqsp ,
\end{equation}
Note that for any $k \in \{0, \dots, N-1\}$, we have that
$\Lambda_k = \rmP ((1+ \gamma) \Id - 2 \gamma \Sigma_{T-k\gamma}^{-1})
\rmP^\top$ is diagonal. For any $k \in \{0, \dots, N\}$, define
$\rmH_k = \rmP \expeLigne{Y_{k}} \rmP^\top$. Note that
$\rmH_0 = 0$. For any $k \in \{0, \dots, N-1\}$ we have that
\begin{equation}
  \label{eq:recu_expectation}
  \rmH_{k+1} = \Lambda_k \rmH_k + 2 \gamma \rmD_{T- k\gamma}^{-1} \rmV_{T- k \gamma} \eqsp ,
\end{equation}
where for any $t \in \ccint{0,T}$, $\rmD_t = \rmP \Sigma_t \rmP^\top$ and
$\rmV_t = \rmP M_{t}$. Let $i \in \{1, \dots, d\}$. Using
\eqref{eq:recu_expectation}, we have for any $k \in \{0, \dots, N-1\}$
\begin{equation}
  \label{eq:recu_one_dim_exp}
  h_{k+1}^i = \lambda_k^i h_k^i + 2 \gamma v_{T-k \gamma}^i / d_{T-k\gamma}^i \eqsp . 
\end{equation}
In what follows, we define for any $t \in \ccint{0,T}$, $r(t)^i = v_{T-t}^i / d_{T-t}^i$
and note that for any $t \in \ccint{0,T}$
\begin{equation}
  \label{eq:r_express}
  r(t)^i = \exp[-(T-t)] / (1 + \exp[-2(T-t)](d^i - 1)) (\rmP \mu)^i \eqsp . 
\end{equation}
Using \eqref{eq:recu_one_dim_exp} and that $h_0^i = 0$, we have that for any $k \in \{0, \dots, N\}$
\begin{equation}
  \label{eq:unroll_recursion_exp}
  \textstyle{
    h_k^i =  2\gamma \sum_{\ell=0}^{k-1} r((k - \ell - 1)\gamma) \prod_{j=0}^{\ell-1} \lambda_{k-1-j}^i  =  2\gamma \sum_{\ell=0}^{k-1} r((k - \ell - 1)\gamma) \prod_{j=k-\ell}^{k-1} \lambda_{j}^i \eqsp . 
  }
\end{equation}
Using \eqref{eq:val_Ik1k2}, we get that there exists $D^\gamma \geq 0$ such that
$\gamma D^\gamma \leq D$ not dependent on $\gamma$ and
\begin{align}
  &h_N^i = \textstyle{2 \gamma \sum_{k=0}^{N-1}  r(T - (k + 1) \gamma) \exp[\int_{T - k \gamma}^{T} a^i(t) \rmd t ]} \\
        & \quad  \textstyle{- \gamma^2 \sum_{k=0}^{N-1} r(T - (k + 1) \gamma) \exp[\int_{T - k \gamma}^{T} a^i(t) \rmd t ][\int_{T - k \gamma}^{T} a^i(t)^2 \rmd t + a^i(T) - a^i(T - k \gamma)] } \\
        & \quad \textstyle{  + \sum_{k=0}^{N-1}  \gamma^4 D_{k, N}^\gamma} \\
&= \textstyle{2 \gamma \sum_{k=0}^{N-1}  r(T - (k + 1) \gamma) \exp[\int_{T - k \gamma}^{T} a^i(t) \rmd t ]} \\
        & \quad  \textstyle{- \gamma^2 \sum_{k=0}^{N-1} r(T - (k + 1) \gamma) \exp[\int_{T - k \gamma}^{T} a^i(t) \rmd t ][\int_{T - k \gamma}^{T} a^i(t)^2 \rmd t + a^i(T) - a^i(T - k \gamma)] } \\
  & \quad \textstyle{  + \gamma^3  D^\gamma \eqsp . }   
\end{align}
Using \Cref{prop:maclaurin}, we get that there exists $E^\gamma \geq 0$ such that
$\gamma E^\gamma \leq E$ not dependent on $\gamma$ and
\begin{align}
  &h_N^i = \textstyle{2 \gamma \sum_{k=0}^{N-1}  r(T - (k + 1) \gamma) \exp[\int_{T - k \gamma}^{T} a^i(t) \rmd t ]} \\
  & \quad  \textstyle{- \gamma \int_0^T r(T - t) \exp[\int_{T - t}^{T} a^i(s) \rmd s ][\int_{T - t}^{T} a^i(t)^2 \rmd t + a^i(T) - a^i(T - t)] \rmd t } \\
  & \quad \textstyle{  + \gamma^3  E^\gamma \eqsp . }     
\end{align}
In addition, for any $k \in \{0, \dots, N\}$, there exists $u_k \geq 0$ with
$u_k \leq u$ and $u \geq 0$ not dependent on $k$ and
\begin{equation}
  r(T-(k+1)\gamma) = r(T-k\gamma) -r'(T-k \gamma)\gamma + u_k \gamma^2 \eqsp . 
\end{equation}
Using this result, we get that exists $F^\gamma \geq 0$ such that
$\gamma F^\gamma \leq F$ not dependent on $\gamma$ and
\begin{align}
  &h_N^i = \textstyle{2 \gamma \sum_{k=0}^{N-1}  r(T - k \gamma) \exp[\int_{T - k \gamma}^{T} a^i(t) \rmd t ]} \\
  & \quad \textstyle{-2\gamma^2 \sum_{k=0}^{N-1} r'(T-k \gamma) \exp[\int_{T - k \gamma}^{T} a^i(t) \rmd t ] } \\
  & \quad  \textstyle{- \gamma \int_0^T r(T - t) \exp[\int_{T - t}^{T} a^i(s) \rmd s ][\int_{T - t}^{T} a^i(t)^2 \rmd t + a^i(T) - a^i(T - t)] \rmd t } \\
  & \quad \textstyle{  + \gamma^3  F^\gamma \eqsp . }     
\end{align}
Using \Cref{prop:maclaurin}, we get that there exists $G^\gamma \geq 0$ such that
$\gamma G^\gamma \leq G$ not dependent on $\gamma$ and
\begin{align}
  &h_N^i = \textstyle{2 \gamma \sum_{k=0}^{N-1}  r(T - k \gamma) \exp[\int_{T - k \gamma}^{T} a^i(t) \rmd t ]} \\
  & \quad \textstyle{-2\gamma \int_0^T r'(T-t) \exp[\int_{T - t}^{T} a^i(t) \rmd s ] \rmd t } \\
  & \quad  \textstyle{- \gamma \int_0^T r(T - t) \exp[\int_{T - t}^{T} a^i(s) \rmd s ][\int_{T - t}^{T} a^i(t)^2 \rmd t + a^i(T) - a^i(T - t)] \rmd t } \\
  & \quad \textstyle{  + \gamma^3  G^\gamma \eqsp . }     
\end{align}
In addition, using \Cref{prop:maclaurin}, we get that there exists
$H^\gamma \geq 0$ such that $\gamma H^\gamma \leq H$ not dependent on $\gamma$
and
\begin{align}
  &h_N^i = \textstyle{2 \int_0^T  r(T - t) \exp[\int_{T - t}^{T} a^i(s) \rmd s ] \rmd t} \\
  &\quad \textstyle{- \gamma \{r(0) \exp[\int_{0}^{T} a^i(t) \rmd t ]- r(T)\}}  \textstyle{-2\gamma \int_0^T r'(T-t) \exp[\int_{T - t}^{T} a^i(s) \rmd s ] \rmd t } \\
  & \quad  \textstyle{- \gamma \int_0^T r(T - t) \exp[\int_{T - t}^{T} a^i(s) \rmd s ][\int_{T - t}^{T} a^i(t)^2 \rmd t + a^i(T) - a^i(T - t)] \rmd t } \\
  & \quad \textstyle{  + \gamma^3  H^\gamma \eqsp . } \label{eq:inter_integ}    
\end{align}
In addition, we have by integration by part
\begin{align}
  &\textstyle{\int_0^T r'(T-t) \exp[\int_{T - t}^{T} a^i(s) \rmd s ] \rmd t} \\
  & \qquad \qquad \textstyle{= - \{r(0) \exp[\int_0^T a^i(t) \rmd t] - r(T)\} - \int_0^T r(T-t) a^i(T-t) \exp[\int_{T - t}^{T} a^i(s) \rmd s ] \rmd t \eqsp .  }
\end{align}
Combining this result and \eqref{eq:inter_integ} we get that
\begin{align}
  &h_N^i = \textstyle{2 \int_0^T  r(T - t) \exp[\int_{T - t}^{T} a^i(s) \rmd s ] \rmd t} \\
  &\quad \textstyle{+ \gamma \{r(0) \exp[\int_{0}^{T} a^i(t) \rmd t ]- r(T)\}}  \\
  & \quad  \textstyle{- \gamma \int_0^T r(T - t) \exp[\int_{T - t}^{T} a^i(s) \rmd s ][\int_{T - t}^{T} a^i(t)^2 \rmd t + a^i(T) - 3 a^i(T - t)] \rmd t } \\
  & \quad \textstyle{  + \gamma^3  H^\gamma \eqsp . }  \label{eq:almost_dooone}
\end{align}
In what follows, we assume that $d^i \neq 0$. The case where $d^i = 0$ is left
to the reader.  Finally using \eqref{eq:r_express} and \Cref{lemma:integral_uno}
we have that for any $t \in \ccint{0,T}$
\begin{align}
  \textstyle{
  \exp[\int_{T - t}^{T} a^i(s) \rmd s ] r(T-t)^i} &= \exp[-2t] / (1 + \exp[-2t](d^i - 1))^2 (\rmP \mu)^i d^i \\
  &= \textstyle{ \exp[2 \int_{T - t}^{T} a^i(s) \rmd s ] (\rmP \mu)^i / d^i \eqsp .
    }
\end{align}
Therefore, combining this result and \eqref{eq:almost_dooone}, we get that 
\begin{align}
  &h_N^i   = (\rmP \mu)^i/d^i [\textstyle{2 \int_0^T \exp[2 \int_{T-t}^{T} a^i(s) \rmd s] \rmd t} \\
  &\quad \textstyle{+ \gamma \{\exp[2 \int_{0}^{T} a^i(t) \rmd t] - 1\}}  \\
  & \quad  \textstyle{- \gamma \int_0^T \exp[2\int_{T-t}^T a^i(s) \rmd s] [\int_{T - t}^{T} a^i(t)^2 \rmd t + a^i(T) - 3 a^i(T - t)] \rmd t }] \\
  & \quad \textstyle{  + \gamma^3  H^\gamma \eqsp , }  \label{eq:almost_doooone}
\end{align}
which concludes the proof upon using \Cref{lemma:integral_exp_uno} and \Cref{prop:mean_mean}.
\end{proof}

\subsection{Technical lemmas}
\label{sec:technical-lemmas}

We are going to make use of the following lemma which is a direct consequence of
the Euler-MacLaurin formula.

\begin{proposition}
  \label{prop:maclaurin}
  Let $f \in \rmc^\infty(\ccint{0,T})$, and
  $(u_k^\gamma)_{k \in \{0, \dots, N-1\}}$ with $N \in \nset$ and
  $\gamma = T/N > 0$ such that for any $k \in \{0, \dots, N-1\}$,
  $u_k^\gamma = f(k \gamma)$. Then, there exists $C \geq 0$ such that
  \begin{equation}
    \textstyle{
      \int_0^T f(t) \rmd t - \gamma \sum_{k=0}^{N-1} u_k^\gamma -  (\gamma/2) \{f(T) - f(0)\} =   C \gamma^2 \eqsp .
      }
  \end{equation}
\end{proposition}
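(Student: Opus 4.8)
The plan is to establish the identity directly by a subinterval-by-subinterval Taylor expansion, which is precisely the first-order content of the Euler--Maclaurin formula. Fix $\gamma = T/N$ and write $t_k = k\gamma$, so that $u_k^\gamma = f(t_k)$. Since $f \in \rmc^\infty(\ccint{0,T})$, on each subinterval $\ccint{t_k, t_{k+1}}$ Taylor's theorem with Lagrange remainder gives, for $t \in \ccint{t_k, t_{k+1}}$, the expansion $f(t) = f(t_k) + f'(t_k)(t - t_k) + \tfrac12 f''(\zeta_{k,t})(t-t_k)^2$ for some $\zeta_{k,t} \in \ccint{t_k, t}$. Integrating over the subinterval and using $\int_{t_k}^{t_{k+1}}(t-t_k)\,\rmd t = \gamma^2/2$ together with $\int_{t_k}^{t_{k+1}}(t-t_k)^2\,\rmd t = \gamma^3/3$, I would obtain
\[
\int_{t_k}^{t_{k+1}} f(t)\,\rmd t = \gamma f(t_k) + \tfrac{\gamma^2}{2}f'(t_k) + R_k, \qquad \absLigne{R_k} \leq \tfrac16\, \gamma^3 \sup\nolimits_{\ccint{0,T}}\absLigne{f''}.
\]

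Summing this over $k \in \{0,\dots,N-1\}$ and recalling $u_k^\gamma = f(t_k)$ then yields
\[
\int_0^T f(t)\,\rmd t - \gamma\sum_{k=0}^{N-1} u_k^\gamma = \tfrac{\gamma^2}{2}\sum_{k=0}^{N-1} f'(t_k) + \sum_{k=0}^{N-1} R_k,
\]
where the aggregated remainder obeys $\absLigne{\sum_k R_k} \leq \tfrac16 N\gamma^3 \sup\absLigne{f''} = \tfrac{T}{6}\gamma^2 \sup\absLigne{f''}$, using the crucial identity $N\gamma = T$. The next step is to read off $\gamma\sum_{k=0}^{N-1} f'(t_k)$ as a left Riemann sum for $\int_0^T f'(t)\,\rmd t = f(T)-f(0)$. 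Since $f' \in \rmc^1(\ccint{0,T})$, the one-sided quadrature estimate $\absLigne{\gamma\sum_{k=0}^{N-1} f'(t_k) - (f(T)-f(0))} \leq \tfrac{T}{2}\gamma \sup\absLigne{f''}$ holds, whence $\tfrac{\gamma^2}{2}\sum_{k=0}^{N-1} f'(t_k) = \tfrac{\gamma}{2}(f(T)-f(0)) + O(\gamma^2)$.

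Combining the two preceding displays gives
\[
\int_0^T f(t)\,\rmd t - \gamma\sum_{k=0}^{N-1} u_k^\gamma - \tfrac{\gamma}{2}\{f(T)-f(0)\} = C^\gamma \gamma^2,
\]
with $\absLigne{C^\gamma}$ bounded by a fixed multiple of $T\sup_{\ccint{0,T}}\absLigne{f''}$, uniformly in $N$; this is the claimed identity (the constant $C$ of the statement being this uniformly bounded quantity).

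The one point requiring genuine care --- and the only real obstacle --- is the uniformity of the constant. The individual subinterval errors $R_k$ are each $O(\gamma^3)$, and it is only because there are exactly $N = T/\gamma$ of them that their sum collapses to $O(\gamma^2)$ instead of diverging; I would therefore track the factor $N\gamma = T$ explicitly at every summation, and likewise bound the secondary Riemann-sum error for $f'$ by $\sup\absLigne{f''}$ rather than leaving it as an unquantified $o(1)$. Note that nothing beyond $f \in \rmc^2$ is actually used, so the $\rmc^\infty$ hypothesis is comfortably sufficient. One could alternatively invoke the Euler--Maclaurin summation formula with a single Bernoulli correction term and estimate its integral remainder, but the direct Taylor argument above is self-contained and makes the uniform control of $C$ entirely transparent.
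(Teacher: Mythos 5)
Your proof is correct, and it reaches the result by a more elementary, self-contained route than the paper. The paper's entire proof is the single line ``Apply the classical Euler--MacLaurin formula to $t \mapsto f(t\gamma)$'', i.e., it invokes the first-order Euler--Maclaurin (trapezoidal) estimate as a black box; you instead reprove that estimate directly: Taylor expansion with remainder on each subinterval $[k\gamma,(k+1)\gamma]$, summation of the $O(\gamma^3)$ local errors into a global $O(\gamma^2)$ term via $N\gamma = T$, and a left-Riemann-sum estimate for $f'$ to turn $\tfrac{\gamma^2}{2}\sum_k f'(k\gamma)$ into the boundary correction $\tfrac{\gamma}{2}(f(T)-f(0))$ up to another $O(\gamma^2)$. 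What your version buys is precisely what the one-line citation leaves implicit: an explicit constant (a fixed multiple of $T\sup_{[0,T]}|f''|$) that is uniform in $N$ and $\gamma$, and the observation that only $f \in \mathscr{C}^2$ is needed. It also settles the correct reading of the statement itself: taken literally, the proposition cannot hold with a single $\gamma$-independent $C \geq 0$ and exact equality, since Euler--Maclaurin gives the leading error $-\tfrac{\gamma^2}{12}(f'(T)-f'(0)) + O(\gamma^4)$, which need not be nonnegative nor exactly proportional to $\gamma^2$. Both the paper's citation and your argument in fact establish the statement in the form in which it is used later (e.g.\ in the proof of \Cref{prop:approx_gaussian}): the left-hand side equals $C^\gamma\gamma^2$ with $\absLigne{C^\gamma}$ bounded uniformly in $N$, and your explicit bookkeeping of $\absLigne{C^\gamma}$ makes that uniformity transparent, whereas the paper leaves it to the reader.
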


\begin{proof}
  Apply the classical Euler-MacLaurin formula to $t \mapsto f(t \gamma)$.
\end{proof}


We will also use the following lemmas.

\begin{lemma}
  \label{lemma:integral_uno}
  Let $\lambda \in \ooint{-1, +\infty}$ and $a: \ \ccint{0,T} \to \rset$ such
  that for any $t \in \ccint{0,T}$,
  \begin{equation}
    a(t) = 1 - 2/(1 + \exp[-2(T-t)]\lambda) \eqsp . 
  \end{equation}
  Then, we have that for any $t \in \ccint{0,T}$,
  \begin{equation}
    \textstyle{
      \int_{T-t}^T a(s) \rmd s = t + \log((1+\lambda)/(\exp[2t]+ \lambda)) \eqsp . 
      }
    \end{equation}
    In particular, we have that for any $t \in \ccint{0,T}$
    \begin{equation}
          \textstyle{
      \exp[2\int_{T-t}^T a(s) \rmd s] = \exp[-2t] (1+\lambda)^2 /(1+ \lambda\exp[-2t])^2 \eqsp . 
      }
    \end{equation}

  \end{lemma}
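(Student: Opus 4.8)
The plan is to evaluate the integral directly, since the integrand admits an elementary antiderivative. First I would separate the constant and rational parts, writing $\int_{T-t}^T a(s)\,\rmd s = t - \int_{T-t}^T \frac{2}{1 + \lambda\exp[-2(T-s)]}\,\rmd s$, the term $\int_{T-t}^T 1\,\rmd s = t$ being immediate. Before manipulating the remaining integral I would record that the hypothesis $\lambda > -1$ ensures $1 + \lambda\exp[-2(T-s)] > 0$ on all of $[0,T]$: indeed $\exp[-2(T-s)] \in (0,1]$, so $\lambda\exp[-2(T-s)] > -1$. This guarantees that the integrand is continuous and that the logarithms appearing below are well-defined.

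For the remaining integral I would substitute $u = T-s$, which maps the limits $s \in \{T-t, T\}$ to $u \in \{t, 0\}$ and, after accounting for the orientation, gives $\int_{T-t}^T \frac{2}{1+\lambda\exp[-2(T-s)]}\,\rmd s = \int_0^t \frac{2}{1+\lambda\exp[-2u]}\,\rmd u$. Multiplying the numerator and denominator by $\exp[2u]$ rewrites the integrand as $\frac{2\exp[2u]}{\exp[2u]+\lambda}$, which is precisely the logarithmic derivative of $u \mapsto \exp[2u]+\lambda$. Hence this integral equals $\log(\exp[2t]+\lambda) - \log(1+\lambda)$, and combining with the constant part yields $\int_{T-t}^T a(s)\,\rmd s = t + \log((1+\lambda)/(\exp[2t]+\lambda))$, which is the claimed identity.

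For the ``in particular'' consequence I would simply exponentiate, using $\exp[2\log z] = z^2$, to obtain $\exp[2\int_{T-t}^T a(s)\,\rmd s] = \exp[2t]\,(1+\lambda)^2/(\exp[2t]+\lambda)^2$. The elementary factorization $(\exp[2t]+\lambda)^2 = \exp[4t]\,(1+\lambda\exp[-2t])^2$ then turns this into $\exp[-2t]\,(1+\lambda)^2/(1+\lambda\exp[-2t])^2$, which is the stated expression.

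Every step here is an elementary substitution or algebraic rearrangement, so there is no genuine obstacle. The only points requiring a little care are tracking the orientation and endpoints under the change of variables $u = T-s$, and confirming positivity of the denominator, so that $\log(\exp[2u]+\lambda)$ is a valid antiderivative across the whole interval of integration.
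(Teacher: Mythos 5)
Your proposal is correct and follows essentially the same route as the paper's proof: a change of variables $s \mapsto T-s$, separation of the constant term, multiplication of the rational integrand by $\exp[2u]/\exp[2u]$ to expose the logarithmic derivative of $u \mapsto \exp[2u] + \lambda$, and evaluation. The only differences are cosmetic (you split before substituting rather than after, and you spell out the positivity check and the final exponentiation, which the paper leaves implicit).
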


  \begin{proof}
    Let $t \in \ccint{0,T}$. We have that
    $\int_{T-t}^T a(s) \rmd s = \int_0^{t} a(T-s) \rmd s$.  Define $b$ such that
    for any $t \in \ccint{0,T}$, $b(t) = a (T-t)$. In particular, we have that
    for any $t \in \ccint{0,T}$
    \begin{equation}
      b(t) = 1 - 2 / (1 +\lambda \exp[-2t]) \eqsp . 
    \end{equation}
    Hence, we have
    \begin{align}
      \textstyle{\int_0^t b(s) \rmd s} &= \textstyle{t - 2 \int_0^t (1 + \lambda \exp[-2s])^{-1} \rmd s} \\
                                       &= \textstyle{t - \int_0^t 2 \exp[2s] / (\exp[2s] + \lambda ) \rmd s} \\
                                       &= \textstyle{t + \log((1 + \lambda)/(\exp[2t] + \lambda)) } \eqsp ,
    \end{align}
    which concludes the proof.
  \end{proof}

  \begin{lemma}
  \label{lemma:integral_exp_uno}
  Let $\lambda \in \ooint{-1, +\infty}$ and $a: \ \ccint{0,T} \to \rset$ such
  that for any $t \in \ccint{0,T}$,
  \begin{equation}
    a(t) = 1 - 2/(1 + \exp[-2(T-t)]\lambda) \eqsp . 
  \end{equation}
  Then,  we have that for any $t \in \ccint{0,T}$,
  \begin{align}
      \textstyle{\int_0^t \exp[ 2\int_{T-s}^T a(u) \rmd u] \rmd s} &=     \textstyle{(1/2)(1+\lambda)^2 [(1 + \lambda \exp[-2t])^{-1} - 1/(1+\lambda)]/\lambda} \\
      &= \textstyle{(1/2)(1+\lambda)  (1 -  \exp[-2t])/(1 + \lambda \exp[-2t])  } \eqsp . 
    \end{align}
  \end{lemma}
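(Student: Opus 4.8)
The plan is to reduce the double integral to an elementary single-variable integral by invoking \Cref{lemma:integral_uno}, which already supplies a closed form for the inner exponential. Concretely, \Cref{lemma:integral_uno} (applied with its dummy variable renamed) gives $\exp[2\int_{T-s}^T a(u)\,\rmd u] = \exp[-2s](1+\lambda)^2/(1+\lambda\exp[-2s])^2$, so the left-hand side of the claim equals $(1+\lambda)^2 \int_0^t \exp[-2s]/(1+\lambda\exp[-2s])^2\,\rmd s$. All that remains is to evaluate this last integral.

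The key step is the substitution $w = 1+\lambda\exp[-2s]$, for which $\rmd w = -2\lambda \exp[-2s]\,\rmd s$, equivalently $\exp[-2s]\,\rmd s = -\rmd w/(2\lambda)$. Under this change of variables the integrand collapses to $w^{-2}$, whose antiderivative is $-w^{-1}$. Evaluating between the endpoints $w = 1+\lambda$ (at $s=0$) and $w = 1+\lambda\exp[-2t]$ (at $s=t$) and reinstating the prefactor $(1+\lambda)^2$ yields $\tfrac{1}{2}(1+\lambda)^2\lambda^{-1}[(1+\lambda\exp[-2t])^{-1} - (1+\lambda)^{-1}]$, which is the first expression in the statement.

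To pass to the second form I would place the bracketed difference over a common denominator and use $(1+\lambda) - (1+\lambda\exp[-2t]) = \lambda(1-\exp[-2t])$. The factor $\lambda$ then cancels against the $1/\lambda$ prefactor, one power of $(1+\lambda)$ cancels against the common denominator, and what is left is exactly $\tfrac{1}{2}(1+\lambda)(1-\exp[-2t])/(1+\lambda\exp[-2t])$.

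I expect no genuine obstacle: the computation is routine once \Cref{lemma:integral_uno} is in hand. The only subtlety is the degenerate case $\lambda = 0$, where the substitution formally divides by zero. This case is handled directly: when $\lambda = 0$ the integrand reduces to $\exp[-2s]$ and the integral equals $\tfrac{1}{2}(1-\exp[-2t])$, which coincides with the $\lambda \to 0$ limit of the second closed form, so the stated identity extends by continuity across $\lambda = 0$.
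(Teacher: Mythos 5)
Your proof is correct and follows essentially the same route as the paper: both invoke \Cref{lemma:integral_uno} to reduce the left-hand side to $(1+\lambda)^2\int_0^t \exp[-2s]/(1+\lambda\exp[-2s])^2\,\rmd s$, evaluate that integral (your substitution $w = 1+\lambda\exp[-2s]$ is just an explicit form of the paper's recognition of the antiderivative of $2\lambda\exp[-2s]/(1+\lambda\exp[-2s])^2$), and treat $\lambda = 0$ as a separate case. If anything, your handling of $\lambda=0$ is more explicit than the paper's one-line remark.
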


  \begin{proof}
    Let $t \in \ccint{0,T}$. Using \Cref{lemma:integral_uno} we have that for any $s \in \ccint{0,T}$
    \begin{equation}
      \textstyle{ \exp[2\int_{T-s}^T a(u) \rmd u] = (1+\lambda)^2 \exp[2s] / (\lambda + \exp[2s])^2 = (1+\lambda)^2 \exp[-2s]/(1 + \lambda \exp[-2s])^2 \eqsp .} 
    \end{equation}
    Assume that $\lambda \neq 0$. Then, we have that
    \begin{align}
      \textstyle{\int_0^t \exp[2\int_{T-s}^T a(u) \rmd u] \rmd s } &= \textstyle{(1/2)(1+\lambda)^2/\lambda\int_0^t 2 \lambda \exp[-2t]/(1 + \lambda \exp[-2t])^2 \rmd s} \\
                                                                  &= \textstyle{(1/2)(1+\lambda)^2 [(1 + \lambda \exp[-2t])^{-1} - 1/(1+\lambda)]/\lambda  } \\
      &= \textstyle{(1/2)(1+\lambda)  (1 -  \exp[-2t])/(1 + \lambda \exp[-2t])  } \eqsp .
    \end{align}
    We conclude the proof upon remarking that his result still holds in the case where $\lambda =0$.
  \end{proof}

  \begin{lemma}
  \label{lemma:integral_exp_duo}
  Let $\lambda \in \ooint{-1, +\infty}$ and $a: \ \ccint{0,T} \to \rset$ such
  that for any $t \in \ccint{0,T}$,
  \begin{equation}
    a(t) = 1 - 2/(1 + \exp[-2(T-t)]\lambda) \eqsp . 
  \end{equation}
  Then, if $\lambda \neq 0$, we have that for any $t \in \ccint{0,T}$,
  \begin{align}
    &\textstyle{
      \int_0^t \exp[ 2\int_{T-s}^T a(u) \rmd u]/(1 + \lambda \exp[-2s]) \rmd s} \\
    & \qquad \qquad \qquad \textstyle{= (1/4)(1+\lambda)^2 [(1 + \lambda \exp[-2t])^{-2} - 1/(1+\lambda)^2]/\lambda 
      } \\
    & \qquad \qquad \qquad \textstyle{=   (1/4)(1 - \exp[-2t])(2 + \lambda (1 + \exp[-2t]))/(1 + \lambda \exp[-2t])^2 } \eqsp . 
    \end{align}
    If $\lambda =0$ we have
    \begin{equation}
      \label{eq:2}
      \textstyle{
        \int_0^t \exp[ 2\int_{T-s}^T a(u) \rmd u]/(1 + \lambda \exp[-2s]) \rmd s = (1/2)(1 - \exp[-2t]) \eqsp . 
        }
    \end{equation}

  \end{lemma}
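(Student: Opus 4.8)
The plan is to collapse the double integral to an elementary rational integral by first invoking \Cref{lemma:integral_uno}, which already supplies a closed form for the inner exponential. That lemma gives
\[
\exp[2\textstyle\int_{T-s}^T a(u)\,\rmd u] = \exp[-2s]\,(1+\lambda)^2/(1 + \lambda\exp[-2s])^2 ,
\]
so the integrand of interest becomes $\exp[-2s]\,(1+\lambda)^2/(1 + \lambda\exp[-2s])^3$, and the whole problem reduces to evaluating $(1+\lambda)^2\int_0^t \exp[-2s]/(1 + \lambda\exp[-2s])^3\,\rmd s$.

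Assuming $\lambda \neq 0$, I would then substitute $w = 1 + \lambda\exp[-2s]$, so that $\rmd w = -2\lambda\exp[-2s]\,\rmd s$ and the limits $s=0,t$ map to $w = 1+\lambda$ and $w = 1+\lambda\exp[-2t]$. The integral turns into $\frac{(1+\lambda)^2}{2\lambda}\int_{1+\lambda\exp[-2t]}^{1+\lambda} w^{-3}\,\rmd w$, and $\int w^{-3}\,\rmd w = -\tfrac12 w^{-2}$ immediately yields the first claimed expression $\tfrac14(1+\lambda)^2[(1+\lambda\exp[-2t])^{-2} - (1+\lambda)^{-2}]/\lambda$.

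To pass to the second form, I would put the bracket over the common denominator $(1+\lambda\exp[-2t])^2(1+\lambda)^2$ and factor the numerator $(1+\lambda)^2 - (1+\lambda\exp[-2t])^2$ as a difference of squares, namely $\lambda(1-\exp[-2t])\,(2 + \lambda(1+\exp[-2t]))$. The single factor $\lambda$ cancels the $1/\lambda$ and the $(1+\lambda)^2$ cancels, leaving $\tfrac14(1-\exp[-2t])(2+\lambda(1+\exp[-2t]))/(1+\lambda\exp[-2t])^2$, as asserted.

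Finally, the degenerate case $\lambda = 0$ is treated directly: there $a \equiv -1$, so $\exp[2\int_{T-s}^T a] = \exp[-2s]$ while $1 + \lambda\exp[-2s] \equiv 1$, reducing the integral to $\int_0^t \exp[-2s]\,\rmd s = \tfrac12(1-\exp[-2t])$; this also agrees with the $\lambda\to 0$ limit of the general formula, providing a consistency check. The computation is entirely elementary once \Cref{lemma:integral_uno} is in hand, so the only thing demanding care is bookkeeping the sign and orientation of the limits in the substitution, together with the difference-of-squares factorization that produces the factor of $\lambda$ needed to clear the denominator.
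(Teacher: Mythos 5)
Your proposal is correct and follows essentially the same route as the paper: both invoke \Cref{lemma:integral_uno} to reduce the integrand to $(1+\lambda)^2\exp[-2s]/(1+\lambda\exp[-2s])^3$, evaluate the resulting elementary integral (your explicit substitution $w = 1+\lambda\exp[-2s]$ is just a repackaging of the paper's direct recognition of the antiderivative), and obtain the second form via the same difference-of-squares factorization. Your direct treatment of the $\lambda=0$ case via $a\equiv -1$, with the limit $\lambda\to 0$ as a consistency check, matches the paper's concluding remark.
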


  \begin{proof}
    Let $t \in \ccint{0,T}$. Using \Cref{lemma:integral_uno} we have that for any $s \in \ccint{0,T}$
    \begin{equation}
      \textstyle{ \exp[\int_{T-s}^T a(u) \rmd u]/(1 + \lambda \exp[-2s]) = (1+\lambda)^2 \exp[-2s] / (1 + \lambda \exp[-2s])^3 \eqsp .} 
    \end{equation}
    Assume that $\lambda \neq 0$. Then, we have that
    \begin{align}
      &\textstyle{\int_0^t \exp[\int_{T-s}^T a(u) \rmd u]/(1 + \lambda \exp[-2s]) \rmd s } = \textstyle{(1/2)(1+\lambda)^2/\lambda\int_0^t 2 \lambda \exp[-2t]/(1 + \lambda \exp[-2t])^3 \rmd s} \\
                                                                  & \qquad \qquad = \textstyle{(1/4)(1+\lambda)^2 [(1 + \lambda \exp[-2t])^{-2} - 1/(1+\lambda)^2]/\lambda  } \\
      & \qquad \qquad = \textstyle{  (1/4)(1 - \exp[-2t])(2 + \lambda (1 + \exp[-2t]))/(1 + \lambda \exp[-2t])^2  } \eqsp .
    \end{align}
    We conclude the proof upon remarking that his result still holds in the case where $\lambda =0$.
  \end{proof}

  \begin{lemma}
    \label{lemma:hola_pas_carre}
  Let $\lambda \in \ooint{-1, +\infty}$ and $a: \ \ccint{0,T} \to \rset$ such
  that for any $t \in \ccint{0,T}$,
  \begin{equation}
    a(t) = 1 - 2/(1 + \exp[-2(T-t)]\lambda) \eqsp . 
  \end{equation}
  Then, we have that for any $t \in \ccint{0,T}$
  \begin{align}
    &\textstyle{
      \int_0^t \exp[ 2\int_{T-s}^T a(u) \rmd u] a(T-s) \rmd s} \\
    & \qquad \qquad \textstyle{= -(1/2)(1- \exp[-2t])(1 - \lambda^2 \exp[-2t])/(1 + \lambda \exp[-2t])^2 \eqsp . 
      }
  \end{align}
\end{lemma}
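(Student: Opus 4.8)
The plan is to turn this into an elementary rational integral, using the closed form for the exponential weight already established in \Cref{lemma:integral_uno} and then removing $s$ through the substitution $w = \exp[-2s]$.

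First I would assemble the integrand explicitly. By \Cref{lemma:integral_uno}, for every $s \in \ccint{0,T}$ one has $\exp[2\int_{T-s}^T a(u)\,\rmd u] = (1+\lambda)^2 \exp[-2s]/(1 + \lambda\exp[-2s])^2$, and directly from the definition of $a$, $a(T-s) = 1 - 2/(1+\lambda\exp[-2s]) = (\lambda\exp[-2s]-1)/(1+\lambda\exp[-2s])$. Multiplying these, the integrand becomes $(1+\lambda)^2\exp[-2s](\lambda\exp[-2s]-1)/(1+\lambda\exp[-2s])^3$.

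Next I would substitute $w = \exp[-2s]$, so that $\exp[-2s]\,\rmd s = -\tfrac12\,\rmd w$ and $s$ ranging over $\ccint{0,t}$ corresponds to $w$ ranging over $\ccint{\exp[-2t],1}$. This reduces the quantity to be computed to $\tfrac{1}{2}(1+\lambda)^2 \int_{\exp[-2t]}^{1} (\lambda w - 1)/(1+\lambda w)^3\,\rmd w$. The one slightly non-mechanical step is to spot a clean antiderivative: writing $\lambda w - 1 = (1+\lambda w) - 2$ decomposes the integrand as $(1+\lambda w)^{-2} - 2(1+\lambda w)^{-3}$, and the two primitives combine (using $(1+\lambda w)^{-2} - (1+\lambda w)^{-1} = -\lambda w/(1+\lambda w)^2$) into the single expression $-w/(1+\lambda w)^2$, which is moreover valid at $\lambda = 0$ by continuity. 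Evaluating between the limits gives $\tfrac12[(1+\lambda)^2\exp[-2t]/(1+\lambda\exp[-2t])^2 - 1]$.

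Finally I would place this over the common denominator $(1+\lambda\exp[-2t])^2$ and simplify the numerator $-(1+\lambda\exp[-2t])^2 + (1+\lambda)^2\exp[-2t]$. Expanding in powers of $\exp[-2t]$ and factoring yields $-(1-\exp[-2t])(1-\lambda^2\exp[-2t])$, which reproduces the claimed right-hand side. There is no genuine obstacle here, since every step is elementary; my only caution is to confirm this factorization, which I would sanity-check against the boundary case $t=0$ (both sides vanish) and the degenerate case $\lambda = 0$ (where the integrand collapses to $-\exp[-2s]$ and the formula reduces correctly to $-\tfrac12(1-\exp[-2t])$).
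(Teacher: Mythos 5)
Your proof is correct, but it takes a genuinely different route from the paper's. The paper proves this lemma by splitting $a(T-s) = 1 - 2/(1+\lambda\exp[-2s])$ and reducing the integral to two previously established closed forms: $\int_0^t \exp[2\int_{T-s}^T a(u)\,\rmd u]\,\rmd s$ from \Cref{lemma:integral_exp_uno} and $\int_0^t \exp[2\int_{T-s}^T a(u)\,\rmd u]/(1+\lambda\exp[-2s])\,\rmd s$ from \Cref{lemma:integral_exp_duo}, then recombining the two rational expressions. You instead substitute the explicit formula for the exponential weight from \Cref{lemma:integral_uno}, change variables $w = \exp[-2s]$, and integrate the resulting rational function directly via the single antiderivative $-w/(1+\lambda w)^2$. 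Your route is more self-contained: it bypasses \Cref{lemma:integral_exp_duo} entirely, and because your antiderivative involves no division by $\lambda$, the degenerate case $\lambda = 0$ is handled uniformly rather than by the separate limiting argument the paper's lemmas each require ("the result still holds in the case where $\lambda = 0$"). What the paper's route buys is economy at the level of the whole appendix: \Cref{lemma:integral_exp_uno} and \Cref{lemma:integral_exp_duo} are reused in several other proofs (e.g.\ \Cref{lemma:hola_carre}, \Cref{prop:expression_fat_2}, \Cref{prop:mean_mean}), so once they are available, this lemma follows by bookkeeping with no new integration. Your final factorization of the numerator as $-(1-\exp[-2t])(1-\lambda^2\exp[-2t])$ checks out, as do your boundary and $\lambda = 0$ sanity checks.
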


\begin{proof}
  Let $t \in \ccint{0,T}$. We have that
  \begin{align}
    &\textstyle{\int_0^t \exp[ 2\int_{T-s}^T a(u) \rmd u] a(T-s) \rmd s} \\
    & \qquad \textstyle{= \int_0^t \exp[ 2\int_{T-s}^T a(u) \rmd u] \rmd s - 2 \int_0^t  \exp[ 2\int_{T-s}^T a(u) \rmd u] / (1 + \lambda \exp[-2s]) \rmd s } \eqsp . \label{eq:combi_i}
  \end{align}
  Using \Cref{lemma:integral_exp_uno}, we have that
  \begin{equation}
    \textstyle{
      \int_0^t \exp[ 2\int_{T-s}^T a(u) \rmd u] \rmd t = (1/2)(1+\lambda)(1 - \exp[-2t])/(1 + \lambda \exp[-2t])\eqsp . 
    }
    \label{eq:uno_unO}
  \end{equation}
  In addition, using \Cref{lemma:integral_exp_duo}, we have 
  \begin{align}
    &\textstyle{
      \int_0^t  \exp[ 2\int_{T-s}^T a(u) \rmd u] / (1 + \lambda \exp[-2s]) \rmd s} \\
    & \qquad \qquad \textstyle{= (1/4)(1 - \exp[-2t])(2 + \lambda(1 + \exp[-2t]))/(1 + \lambda \exp[-2t])^2 \eqsp . 
      } \label{eq:uno_duO}
  \end{align}
  Combining \eqref{eq:uno_unO} and \eqref{eq:uno_duO} in \eqref{eq:combi_i} we have that
  \begin{align}
    &\textstyle{\int_0^t \exp[ 2\int_{T-s}^T a(u) \rmd u] a(T-s) \rmd s} \\
    & \qquad = (1/2)(1 - \exp[-2t])[(1+\lambda)(1 + \lambda \exp[-2t])]/(1 + \lambda \exp[-2t])^2 \\
    & \qquad \qquad - (1/2)(1-\exp[-2t]) (2 + \lambda(1 + \exp[-2t]))/(1 + \lambda \exp[-2t])^2 \\
    & \qquad = -(1/2)(1- \exp[-2t])(1 - \lambda^2 \exp[-2t])/(1 + \lambda \exp[-2t])^2 \eqsp ,
  \end{align}
  which concludes the proof.
\end{proof}

  \begin{lemma}
    \label{lemma:integral_changement}
    Let $\lambda \in \ooint{-1, +\infty}$ we have that for any $t \in \ccint{0,T}$
    \begin{equation}
      \textstyle{\int_0^t (1 + \lambda \exp[-2s])^{-1}} \rmd s = (1/2) \log((\lambda + \exp[2t])/(\lambda +1)) \eqsp . 
    \end{equation}
    In addition, we have for any $t \in \ccint{0,T}$
        \begin{equation}
      \textstyle{\int_0^t (1 + \lambda \exp[-2s])^{-2}} \rmd s = (1/2) \log((\lambda + \exp[2t])/(\lambda +1)) + (\lambda/2)[(\exp[2t] + \lambda)^{-1} - (\lambda + 1)^{-1}] \eqsp .
    \end{equation}
    Finally, we have that for any $t \in \ccint{0,T}$
        \begin{align}
      \textstyle{\int_0^t (1 + \lambda \exp[-2s])^{-3}} \rmd s &=  \textstyle{(1/2) \log((\lambda + \exp[2t])/(\lambda +1)) + \lambda [(\exp[2t] + \lambda)^{-1} - (\lambda + 1)^{-1}] } \\
      & \qquad \qquad - (\lambda^2/4)[(\exp[2t] + \lambda)^{-2} - (\lambda + 1)^{-2}] \eqsp . 
    \end{align}    
  \end{lemma}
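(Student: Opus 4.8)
The plan is to evaluate the three integrals by a single change of variables that turns each integrand into a rational function of an auxiliary variable, followed by an elementary partial-fraction integration. Concretely, I would substitute $u = \exp[2s]$, so that $\rmd u = 2 u \, \rmd s$, i.e. $\rmd s = \rmd u/(2u)$, and $(1 + \lambda \exp[-2s])^{-1} = u/(u + \lambda)$. Since $\lambda > -1$ and $u \geq 1$ over the range of integration, the denominator $u + \lambda$ stays strictly positive, so no singularity is met and every logarithm below is well defined. Under this substitution the limits become $u = 1$ at $s = 0$ and $u = \exp[2t]$ at $s = t$, and the integral of $(1 + \lambda \exp[-2s])^{-p}$ becomes $(1/2) \int_1^{\exp[2t]} u^{p-1} (u + \lambda)^{-p} \, \rmd u$ for $p \in \{1, 2, 3\}$.

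For $p = 1$ the integrand is $(u + \lambda)^{-1}$ with antiderivative $\log(u + \lambda)$, which after evaluation gives the first identity. For $p = 2$ I would decompose $u (u + \lambda)^{-2} = (u + \lambda)^{-1} - \lambda (u + \lambda)^{-2}$, with antiderivative $\log(u + \lambda) + \lambda (u + \lambda)^{-1}$; evaluating between the limits and carrying the prefactor $1/2$ yields the second identity. For $p = 3$, writing $v = u + \lambda$ and expanding $u^2 = v^2 - 2 \lambda v + \lambda^2$ gives $u^2 (u + \lambda)^{-3} = v^{-1} - 2 \lambda v^{-2} + \lambda^2 v^{-3}$, whose antiderivative is $\log v + 2 \lambda v^{-1} - (\lambda^2/2) v^{-2}$; reverting to $u$ and evaluating between $u = 1$ and $u = \exp[2t]$ produces the third identity.

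There is no real obstacle here, since each step is a one-line elementary integration; the only bookkeeping is checking that the factor $1/2$ coming from $\rmd s = \rmd u/(2u)$, combined with the partial-fraction coefficients, reproduces exactly the stated prefactors $1/2$, $\lambda/2$, $\lambda$, and $\lambda^2/4$. I would also remark on the degenerate case $\lambda = 0$: unlike \Cref{lemma:integral_exp_uno,lemma:integral_exp_duo}, the right-hand sides here contain no division by $\lambda$, so they are continuous at $\lambda = 0$ and there collapse to $\int_0^t \rmd s = t = (1/2) \log(\exp[2t])$, in agreement with the substitution, which itself remains valid at $\lambda = 0$. Hence the three formulas hold for all $\lambda \in \ooint{-1, +\infty}$.
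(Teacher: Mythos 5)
Your proof is correct and follows essentially the same route as the paper's: the substitution $u = \exp[2s]$ reducing each integral to $(1/2)\int_1^{\exp[2t]} u^{k-1}(u+\lambda)^{-k}\,\rmd u$, followed by the decompositions $u = (u+\lambda)-\lambda$ and $u^2 = (u+\lambda)^2 - 2\lambda(u+\lambda)+\lambda^2$. Your added remarks on the positivity of $u+\lambda$ for $\lambda > -1$ and the continuity at $\lambda = 0$ are sound, minor refinements of the same argument.
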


  \begin{proof}
    Let $k \in \{1, 2, 3\}$. Using the change of variable $u \mapsto \exp[2u]$
    we have that
    \begin{equation}
      \textstyle{\int_0^{t}(1 + \lambda \exp[-2s])^{-k} \rmd s = (1/2) \int_1^{\exp[2t]} u^{k-1} / (u + \lambda) \rmd u \eqsp . }
    \end{equation}
    Therefore, we have that
    \begin{equation}
      \textstyle{\int_0^{t}(1 + \lambda \exp[-2s])^{-1} \rmd s = (1/2) \int_1^{\exp[2t]}  (u + \lambda)^{-1} \rmd u = (1/2)\log((\lambda + \exp[2t])/(\lambda +1)) \eqsp . }
    \end{equation}
    In addition, using that for any $u \in \ccint{0,T}$, $u = (u + \lambda) - \lambda$ we have that
    \begin{align}
      &\textstyle{\int_0^{t}(1 + \lambda \exp[-2s])^{-2} \rmd s} = \textstyle{(1/2) \int_1^{\exp[2t]}  u(u + \lambda)^{-2} \rmd u } \\
      & \qquad = \textstyle{(1/2) \int_1^{\exp[2t]}  (u + \lambda)^{-1} \rmd u - (\lambda/2) \int_1^{\exp[2t]} (u + \lambda)^{-2} \rmd u } \\
      & \qquad = \textstyle{(1/2)\log((\lambda + \exp[2t])/(\lambda +1)) + (\lambda/2)[(\exp[2t] + \lambda)^{-1} - (\lambda + 1)^{-1}] } \eqsp . 
    \end{align}
    Finally, using that for any $u \in \ccint{0,T}$, $u \in \ccint{0,T}$, $u^2 = (u + \lambda)^2 - 2\lambda(u+\lambda) + \lambda^2$ we have that
    \begin{align}
      &\textstyle{\int_0^{t}(1 + \lambda \exp[-2s])^{-3} \rmd s} = \textstyle{(1/2) \int_1^{\exp[2t]}  u^2(u + \lambda)^{-2} \rmd u } \\
      & \qquad = \textstyle{(1/2) \int_1^{\exp[2t]}  (u + \lambda)^{-1} \rmd u - \lambda \int_1^{\exp[2t]} (u + \lambda)^{-2} \rmd u + (\lambda^2/2) \int_1^{\exp[2t]} (u + \lambda)^{-3} \rmd u} \\
      & \qquad = \textstyle{(1/2) \log((\lambda + \exp[2t])/(\lambda +1)) + \lambda [(\exp[2t] + \lambda)^{-1} - (\lambda + 1)^{-1}] } \\
      & \qquad \qquad - (\lambda^2/4)[(\exp[2t] + \lambda)^{-2} - (\lambda + 1)^{-2}] \eqsp ,
    \end{align}
    which concludes the proof.
  \end{proof}
  
\begin{lemma}
  \label{lemma:integral_duo}
  Let $\lambda \in \ooint{-1, +\infty}$ and $a: \ \ccint{0,T} \to \rset$ such
  that for any $t \in \ccint{0,T}$,
  \begin{equation}
    a(t) = 1 - 2/(1 + \exp[-2(T-t)]\lambda) \eqsp . 
  \end{equation}
  Then, we have that for any $t \in \ccint{0,T}$,  
  \begin{equation}
    \textstyle{\int_{T-t}^T a(s)^2 \rmd s} = t - 2\lambda(1 - \exp[-2t]) / [(1 + \lambda)(1 + \lambda \exp[-2t])] \eqsp . 
  \end{equation}
  \end{lemma}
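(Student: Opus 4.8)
The plan is to reduce the integral over $[T-t,T]$ to one over $[0,t]$ by the substitution $u = T-s$, exactly as in the opening move of \Cref{lemma:integral_uno}. Writing $b(u) = a(T-u) = 1 - 2/(1 + \lambda \exp[-2u])$, this gives $\int_{T-t}^T a(s)^2 \rmd s = \int_0^t b(u)^2 \rmd u$, so the remaining task is purely a one-dimensional integral of a rational function of $\exp[-2u]$, for which the primitives have already been assembled.

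Next I would expand the square as $b(u)^2 = 1 - 4(1 + \lambda \exp[-2u])^{-1} + 4(1 + \lambda \exp[-2u])^{-2}$ and integrate term by term. The constant contributes $t$, while the other two integrals are precisely the first two primitives recorded in \Cref{lemma:integral_changement}. Substituting them, the two logarithmic contributions $-4 \cdot (1/2)\log((\lambda + \exp[2t])/(\lambda+1))$ and $+4 \cdot (1/2)\log((\lambda + \exp[2t])/(\lambda+1))$ cancel identically, leaving only $t + 2\lambda[(\exp[2t] + \lambda)^{-1} - (\lambda+1)^{-1}]$.

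The final step is cosmetic: I would place the two fractions over the common denominator $(\lambda+1)(\exp[2t]+\lambda)$ to obtain $2\lambda(1 - \exp[2t])/[(\lambda+1)(\exp[2t]+\lambda)]$, then multiply numerator and denominator by $\exp[-2t]$ to reach the claimed form $-2\lambda(1 - \exp[-2t])/[(1+\lambda)(1 + \lambda \exp[-2t])]$. I do not expect any genuine obstacle here; the computation is elementary once the substitution is in place, and the only point requiring care is the sign produced when passing from $1 - \exp[2t]$ to $-(1 - \exp[-2t])$ in that last rewrite. A quick sanity check at $t=0$ (both sides vanish) and at $\lambda = 0$ (where $a \equiv -1$, so the left-hand side is $t$ and the correction term vanishes) confirms the normalization.
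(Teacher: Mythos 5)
Your proposal is correct and follows essentially the same route as the paper's proof: the substitution $u = T-s$ reducing to $\int_0^t b(u)^2\, \rmd u$, the expansion $b(u)^2 = 1 - 4(1+\lambda \exp[-2u])^{-1} + 4(1+\lambda \exp[-2u])^{-2}$, term-by-term integration via \Cref{lemma:integral_changement} with the logarithmic terms cancelling, and the final algebraic rewriting into the claimed form. Nothing further is needed.
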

  
  \begin{proof}
    Let $t \in \ccint{0,T}$. Similarly to the proof of
    \Cref{lemma:integral_uno}, we have that
    $\int_{T-t}^T a(s) \rmd s = \int_0^{t} a(T-s) \rmd s$.  Define $b$ such that
    for any $t \in \ccint{0,T}$, $b(t) = a (T-t)$. In particular, we have that
    for any $t \in \ccint{0,T}$
    \begin{equation}
      b(t) = 1 - 2 / (1 +\lambda \exp[-2t]) \eqsp . 
    \end{equation}
    We have that
    \begin{equation}
      \textstyle{\int_{T-t}^Ta(s)^2 \rmd s = \int_0^t b(s)^2 \rmd s  = \int_0^t (1 - 4/(1 + \lambda \exp[-2s]) + 4/(1 + \lambda \exp[-2s])^2) \rmd s  \eqsp . }
    \end{equation}
    Combining this result and \Cref{lemma:integral_changement}, we have
  \begin{align}
    \textstyle{\int_{T-t}^T a(s)^2 \rmd s} &= t + 2\lambda [(\lambda + \exp[2t])^{-1} - (\lambda + 1)^{-1} ] \\
    &= t - 2\lambda(1 - \exp[-2t]) / [(1 + \lambda)(1 + \lambda \exp[-2t])] \eqsp . 
  \end{align}    
    \end{proof}

    \begin{lemma}
      \label{lemma:hola_carre}
  Let $\lambda \in \ooint{-1, +\infty}$ and $a: \ \ccint{0,T} \to \rset$ such
  that for any $t \in \ccint{0,T}$,
  \begin{equation}
    a(t) = 1 - 2/(1 + \exp[-2(T-t)]\lambda) \eqsp . 
  \end{equation}
  Then, if $\lambda \neq 0$, we have that 
  \begin{align}
    &\textstyle{\int_0^T \exp[2\int_{T-t}^T a(s) \rmd s] (\int_{T-t}^T a(s)^2 \rmd s) \rmd t} = - (T/2)(1+\lambda)^2\exp[-2T]/(1+\lambda\exp[-2T]) \\
                                                                                             & \qquad \qquad + (1+\lambda)^2/(4\lambda)\log((1+\lambda)/(1 + \lambda \exp[-2T])) \\
    & \qquad \qquad -(\lambda/2)(1-\exp[-2T])^2/(1 + \lambda \exp[-2T])^2 \eqsp . \label{eq:big_eq_err}
  \end{align}
  If $\lambda = 0$, we have that
  \begin{equation}
    \textstyle{\int_0^T \exp[2\int_{T-t}^T a(s) \rmd s] (\int_{T-t}^T a(s)^2 \rmd s) \rmd t} = -(T/2) \exp[-2T] + (1/4)(1 - \exp[-2T]) \eqsp . \label{eq:small_eq}
  \end{equation}
\end{lemma}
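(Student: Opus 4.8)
The plan is to collapse the double integral into elementary pieces, every one of which has already been evaluated in the preceding lemmas, and then to collect terms. First I would use \Cref{lemma:integral_uno} to put the exponential weight in closed form,
\[
\exp\Bigl[2\int_{T-t}^T a(s)\,\rmd s\Bigr] = \frac{(1+\lambda)^2\exp[-2t]}{(1+\lambda\exp[-2t])^2} =: g(t),
\]
and \Cref{lemma:integral_duo} to replace the inner quadratic integral by
\[
\int_{T-t}^T a(s)^2\,\rmd s = t - \frac{2\lambda(1-\exp[-2t])}{(1+\lambda)(1+\lambda\exp[-2t])}.
\]
Substituting both and expanding splits the target integral into a linear-weight term and a purely rational term,
\[
\int_0^T t\,g(t)\,\rmd t \;-\; \frac{2\lambda}{1+\lambda}\int_0^T \frac{(1-\exp[-2t])\,g(t)}{1+\lambda\exp[-2t]}\,\rmd t .
\]

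For the first term I would integrate by parts using the antiderivative of $g$ furnished by \Cref{lemma:integral_exp_uno}, namely $G_1(t) := \int_0^t g(s)\,\rmd s = \tfrac12(1+\lambda)(1-\exp[-2t])/(1+\lambda\exp[-2t])$, so that $\int_0^T t\,g(t)\,\rmd t = T\,G_1(T) - \int_0^T G_1(t)\,\rmd t$. The leftover $\int_0^T G_1$ reduces, via the identity $(1-\exp[-2t])/(1+\lambda\exp[-2t]) = 1 - (1+\lambda)\exp[-2t]/(1+\lambda\exp[-2t])$, to a logarithm coming from the $k=1$ primitive in \Cref{lemma:integral_changement}. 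For the second term I would use that same identity together with $\exp[-2t]/(1+\lambda\exp[-2t]) = \lambda^{-1}\bigl(1 - (1+\lambda\exp[-2t])^{-1}\bigr)$, which rewrites the integrand as a linear combination of $g(t)$ and $g(t)/(1+\lambda\exp[-2t])$; the two primitives are exactly the right-hand sides of \Cref{lemma:integral_exp_uno} and \Cref{lemma:integral_exp_duo}.

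With these substitutions every piece is explicit, and the final step is the collection of the rational and logarithmic contributions. The two $T$-proportional pieces (one from $T\,G_1(T)$, one from $\int_0^T G_1$) combine to give precisely $-\tfrac{T}{2}(1+\lambda)^2\exp[-2T]/(1+\lambda\exp[-2T])$; the logarithm assembles into $\tfrac{(1+\lambda)^2}{4\lambda}\log((1+\lambda)/(1+\lambda\exp[-2T]))$; and the second rational term, after factoring out $(1-\exp[-2T])/(1+\lambda\exp[-2T])^2$, telescopes to $-\tfrac{\lambda}{2}(1-\exp[-2T])^2/(1+\lambda\exp[-2T])^2$, which is exactly \eqref{eq:big_eq_err}. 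The main obstacle is nothing conceptual but purely this bookkeeping: several competing terms carry factors of $\lambda$ and $\lambda^{-1}$, and one must track the cancellations carefully to confirm the non-logarithmic part collapses as claimed.

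Finally, since the factors $\lambda^{-1}$ are only removable singularities, I would dispose of $\lambda = 0$ directly rather than by taking a limit. There $a \equiv -1$, so $g(t) = \exp[-2t]$ and $\int_{T-t}^T a(s)^2\,\rmd s = t$, whence the integral is simply $\int_0^T t\,\exp[-2t]\,\rmd t = -\tfrac{T}{2}\exp[-2T] + \tfrac14(1-\exp[-2T])$, which is \eqref{eq:small_eq} and also coincides with the $\lambda\to 0$ limit of \eqref{eq:big_eq_err}.
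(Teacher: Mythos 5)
Your proposal is correct — I checked the two key collections of terms and they both come out as you claim — but it takes a genuinely different route through the computation than the paper does. The paper keeps the inner integral symbolic and integrates by parts on the product itself: writing $A(t) = \int_0^t \exp[2\int_{T-s}^T a(u)\,\rmd u]\,\rmd s$ and $b(t) = a(T-t)$, it uses $\int_0^T g(t)\bigl(\int_0^t b^2\bigr)\rmd t = A(T)\int_0^T b^2 - \int_0^T A(t)b(t)^2\,\rmd t$, then expands $2A(t)b(t)^2$ as a linear combination of $1$, $(1+\lambda\rme^{-2t})^{-1}$, $(1+\lambda\rme^{-2t})^{-2}$, $(1+\lambda\rme^{-2t})^{-3}$ and integrates term by term with \Cref{lemma:integral_changement}. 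You instead substitute the closed form of the inner integral from \Cref{lemma:integral_duo} at the outset, which splits the target into $\int_0^T t\,g(t)\,\rmd t$ (handled by a second, more localized integration by parts against $G_1 = A$) plus a purely rational integral (handled by \Cref{lemma:integral_exp_uno} and \Cref{lemma:integral_exp_duo}). The trade-off: the paper's single integration by parts removes the $t$-weight globally but forces the somewhat bulky expansion of $A\cdot b^2$ with coefficients like $(1+\lambda)(5+\lambda)/\lambda$; your decomposition keeps each piece small and localizes the $\lambda^{-1}$ cancellations into two short computations (the $T$-linear terms combining into $-\tfrac{T}{2}(1+\lambda)^2\rme^{-2T}/(1+\lambda\rme^{-2T})$, and the rational term telescoping to $-\tfrac{\lambda}{2}(1-\rme^{-2T})^2/(1+\lambda\rme^{-2T})^2$), at the cost of needing the extra lemma \Cref{lemma:integral_exp_duo} and a second integration by parts. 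Your treatment of $\lambda = 0$ by direct computation (with $a \equiv -1$, so the integral is just $\int_0^T t\,\rme^{-2t}\,\rmd t$) matches the paper's, as does your observation that \eqref{eq:small_eq} is the $\lambda \to 0$ limit of \eqref{eq:big_eq_err}.
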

Note that taking $\lambda \to 0$ in \eqref{eq:big_eq_err} we recover
\eqref{eq:small_eq}, using that for any $u > 0$,
$\lim_{\lambda \to 0} \log(1 + \lambda u)/\lambda = u$.

\begin{proof}
  We first start with the case $\lambda \neq 0$.
  Similarly to the proof of \Cref{lemma:integral_uno}, we have that
  $\int_{T-t}^T a(s) \rmd s = \int_0^{t} a(T-s) \rmd s$.  Define $b$ such that
  for any $t \in \ccint{0,T}$, $b(t) = a (T-t)$. We have that
  \begin{equation}
    \textstyle{
      \int_0^T \exp[2\int_{T-t}^T a(s) \rmd s] (\int_{T-t}^T a(s)^2 \rmd s) \rmd t = \int_0^T \exp[2\int_{T-t}^T a(s) \rmd s] (\int_{0}^t b(s)^2 \rmd s) \rmd t \eqsp .
      }
    \end{equation}
    Let $A: \ \ccint{0,T} \to \rset$ such that for any $t \in \ccint{0,T}$,
    \begin{equation}
      \textstyle{
        A(t) = \int_0^t \exp[2\int_{T-s}^T a(u) \rmd u] \rmd s \eqsp .
        }
    \end{equation}
    Note that $A(0) = 0$. Hence, by integration by parts, we have
    \begin{equation}
      \textstyle{ \int_0^T \exp[2\int_{T-t}^T a(s) \rmd s] (\int_{0}^t b(s)^2 \rmd s) \rmd t = A(T)\int_0^T b(t)^2 \rmd t - \int_0^T A(t) b(t)^2 \rmd t \eqsp . }
    \end{equation}
    In what follows, we compute $\int_0^T A(t) b(t)^2 \rmd t$. First, we recall that for any $t \in \ccint{0,T}$
    \begin{equation}\label{eq:b_square}
      b(t)^2 = (1 - 2/(1 + \lambda \exp[-2t]))^2 = 1 - 4/(1 + \lambda \exp[-2t]) + 4/(1 + \lambda \exp[-2t])^2 \eqsp . 
    \end{equation}
    In addition, using \Cref{lemma:integral_exp_uno}, we have that for any $t \in \ccint{0,T}$
    \begin{equation}
      \label{eq:A_exp}
      A(t) = (1/2)\{ (1+\lambda)^2/(\lambda(1 + \lambda \exp[-2t])) - (1+\lambda)/\lambda\} \eqsp . 
    \end{equation}
    Using \eqref{eq:b_square} and \eqref{eq:A_exp} we have that for any $t \in \ccint{0,T}$
    \begin{align}
      2A(t)b(t)^2 &= -(1+\lambda)/\lambda + [4(1+\lambda)/\lambda +(1+\lambda)^2/\lambda] u_1(t) \\
                 & \qquad - [4(1+\lambda)^2/\lambda + 4(1+\lambda)/\lambda] u_2(t) + [4(1+\lambda)^2/\lambda] u_3(t) \\
      &= -(1+\lambda)/\lambda + [(1+\lambda)(5 + \lambda)/\lambda] u_1(t) \\
      & \qquad - [4(1+\lambda)(2+\lambda)/\lambda] u_2(t) + [4(1+\lambda)^2/\lambda] u_3(t)\eqsp , \label{eq:dev_long}
    \end{align}
    where for any $k \in \{1,2, 3\}$ and $t \in \ccint{0,T}$ we have
    \begin{equation}
      u_k(t) = (1 + \lambda \exp[-2t])^{-k} \eqsp . 
    \end{equation}
    For any $k \in \{0, 1, 2\}$ denote $v_k : \ \ccint{0,T} \to \rset$ such that
    for any $t \in \ccint{0,T}$ and $k \in \{1, 2\}$
    \begin{equation}
      v_0(t) = \log((\lambda + \exp[2t])/(\lambda+1)) \eqsp , \qquad v_k(t) = (\exp[2t] + \lambda)^{-k} - (1 + \lambda)^{-k}\eqsp .
    \end{equation}
    Combining \eqref{eq:dev_long} and \Cref{lemma:integral_changement}, we get
    that for any $t \in \ccint{0,T}$
    \begin{align}
      &\textstyle{2 \int_0^t A(s) b(s)^2 \rmd s} = -[(1+\lambda)/\lambda]t \\
                                              & \qquad + (1/2) \{ [(1+\lambda)(5 + \lambda)/\lambda] - [4(1+\lambda)(2+\lambda)/\lambda] + [4(1+\lambda)^2/\lambda] \} v_0(t) \\
                                              & \qquad + \{ - (\lambda/2) [4(1+\lambda)(2+\lambda)/\lambda] + \lambda [4(1+\lambda)^2/\lambda]\} v_1(t) \\
                                              & \qquad  - (\lambda^2/4)  [4(1+\lambda)^2/\lambda] v_2(t) \\
      &= -[(1+\lambda)/\lambda]t + (1+\lambda)^2/(2\lambda) v_0(t) + 2(1+\lambda)\lambda v_1(t) - \lambda(1+\lambda)^2 v_2(t) \eqsp . 
    \end{align}
    In addition, we have that
    \begin{align}
      &-[(1+\lambda)/\lambda]t + (1+\lambda)^2/(2\lambda) v_0(t) = -[(1+\lambda)/\lambda]t + [(1+\lambda)^2/\lambda]t \\ 
                                                                & \qquad \qquad \qquad + (1+\lambda)^2/(2\lambda)\log((1 + \lambda \exp[-2t])/(1 + \lambda)) \\
      &\qquad \qquad =(1+\lambda)t +(1+\lambda)^2/(2\lambda)\log((1 + \lambda \exp[-2t])/(1 + \lambda)) \eqsp . 
    \end{align}
    Therefore, we get that
    \begin{align}
      \textstyle{2 \int_0^t A(s) b(s)^2 \rmd s} & = (1+\lambda)t +(1+\lambda)^2/(2\lambda)\log((1 + \lambda \exp[-2t])/(1 + \lambda))  \\
      & \qquad + 2(1+\lambda)\lambda v_1(t) + \lambda(1+\lambda)^2 v_2(t) \eqsp . \label{eq:et_un}
    \end{align}
    In addition, we have that
    \begin{equation}
      (1+\lambda)\lambda v_1(t) = -\lambda(1 - \exp[-2T])/(1 + \lambda \exp[-2T]) \eqsp . \label{eq:et_deux}
    \end{equation}
    We also have that
    \begin{align}
      \lambda(1+\lambda)^2 v_2(t) &= \lambda (2 \lambda + 1 - 2 \lambda \exp[2T] - \exp[4T])/(\exp[2T] + \lambda)^2 \\
                                  &= \lambda (1 - \exp[2T])(1 + 2\lambda  + \exp[2T]) /(\exp[2T] + \lambda)^2 \\
                                    &= -\lambda (1 - \exp[-2T])(1 + (1 + 2\lambda) \exp[-2T]) /(1 + \lambda \exp[-2T])^2 \eqsp . \label{eq:et_trois}
    \end{align}
    Finally, using \Cref{lemma:integral_exp_uno} and \Cref{lemma:integral_duo} we have
    \begin{align}
      \textstyle{A(T) \int_0^T b(t)^2 \rmd t} &= \textstyle{(1/2)(1+\lambda)  (1 -  \exp[-2T])/(1 + \lambda \exp[-2T])}\\
                                              & \qquad \times (T - 2 \lambda(1 - \exp[-2T]) / [(1 + \lambda)(1 + \lambda \exp[-2T])])   \\
                                              & = (T/2)(1+\lambda)  (1 -  \exp[-2T])/(1 + \lambda \exp[-2T]) \\
      & \qquad - \lambda (1 -  \exp[-2T])^2/(1 + \lambda \exp[-2T])^2 \eqsp .  \label{eq:et_quatre}
    \end{align}
    Combining \eqref{eq:et_un}, \eqref{eq:et_deux},
    \eqref{eq:et_trois} and \eqref{eq:et_quatre} we get
  \begin{align}
    &\textstyle{\int_0^T \exp[2\int_{T-t}^T a(s) \rmd s] (\int_{T-t}^T a(s)^2 \rmd s) \rmd t} = (T/2)(1+\lambda)(1-\exp[-2T])/(1+\lambda \exp[-2T]) \\
                                                                                             & \qquad \qquad - \lambda(1- \exp[-2T])^2/(1 + \lambda \exp[-2T])^2 \\
                                                                                             & \qquad \qquad - (1 + \lambda)(T/2) + (1+\lambda)^2/(4\lambda)\log((1+\lambda)/(1 + \lambda \exp[-2T])) \\
    & \qquad \qquad +\lambda(1 - \exp[-2T])/(1 + \lambda \exp[-2T]) \\
    & \qquad \qquad -(\lambda/2)(1-\exp[-2T])((1+2\lambda)\exp[-2T] +1)/(1 + \lambda \exp[-2T])^2 \eqsp . \label{eq:big_eq_err_inter}
  \end{align}
  In addition, we have that
  \begin{align}
    &-(\lambda/2)(1-\exp[-2T])^2/(1 + \lambda \exp[-2T])^2 & \\
    &\qquad \qquad \qquad = - \lambda(1- \exp[-2T])^2/(1 + \lambda \exp[-2T])^2 \\
                                                                            & \qquad \qquad \qquad \qquad +\lambda(1 - \exp[-2T])/(1 + \lambda \exp[-2T]) \\
    & \qquad \qquad \qquad \qquad -(\lambda/2)(1-\exp[-2T])((1+2\lambda)\exp[-2T] +1)/(1 + \lambda \exp[-2T])^2 \eqsp .
  \end{align}
  Combining this result and \eqref{eq:big_eq_err_inter}, we get
  \begin{align}
    &\textstyle{\int_0^T \exp[2\int_{T-t}^T a(s) \rmd s] (\int_{T-t}^T a(s)^2 \rmd s) \rmd t} = (T/2)(1+\lambda)(1-\exp[-2T])/(1+\lambda \exp[-2T]) \\
                                                                                             & \qquad \qquad - (1 + \lambda)(T/2) + (1+\lambda)^2/(4\lambda)\log((1+\lambda)/(1 + \lambda \exp[-2T])) \\
    & \qquad \qquad -(1/2)(1-\exp[-2T])^2/(1 + \lambda \exp[-2T])^2 \eqsp . \label{eq:big_eq_err_inter2}
  \end{align}
  Finally, we have
  \begin{align}
    &(T/2)(1+\lambda)(1-\exp[-2T])/(1+\lambda \exp[-2T]) - (T/2)(1+\lambda) \\
    &\qquad = - (T/2)(1+\lambda)^2\exp[-2T]/(1+\lambda\exp[-2T]) \eqsp ,
  \end{align}
  which concludes the proof in the case $\lambda \neq 0$ upon combining this
  result and \eqref{eq:big_eq_err_inter2}. In the case $\lambda = 0$, we have that
  for any $t \in \ccint{0,T}$, $a(t) = -1$ and therefore by integration by part
  we have
    \begin{equation}
      \textstyle{\int_0^T \exp[2\int_{T-t}^T a(s) \rmd s] (\int_{T-t}^T a(s)^2 \rmd s) \rmd t} = -(T/2) \exp[-2T] + (1/4)(1 - \exp[-2T]) \eqsp ,
    \end{equation}
    which concludes the proof.
  \end{proof}

    We are now ready to prove the following results.

    \begin{proposition}
      \label{prop:expression_fat_1}
      Let $\lambda \in \ooint{-1, +\infty}$ and $a: \ \ccint{0,T} \to \rset$ such
  that for any $t \in \ccint{0,T}$,
  \begin{equation}
    a(t) = 1 - 2/(1 + \exp[-2(T-t)]\lambda) \eqsp . 
  \end{equation}
  Then,  we have that for any $t \in \ccint{0,T}$,
  \begin{align}
    &\textstyle{\exp[2\int_0^T a(t) \rmd t]\{ \int_0^Ta(t)^2 \rmd t + a(T) - a(0) + 1\}} \\
    & \qquad \qquad \qquad = (T+1)\exp[-2T](\lambda+1)^2/(1 +\lambda \exp[-2T])^2 \eqsp . 
  \end{align}
  \end{proposition}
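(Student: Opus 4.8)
The plan is to reduce the claimed identity to a one-line algebraic cancellation, using the two integral evaluations already established in \Cref{lemma:integral_uno} and \Cref{lemma:integral_duo}. First I would specialize \Cref{lemma:integral_uno} to $t = T$, which gives the prefactor in closed form:
\begin{equation*}
\exp[2\int_0^T a(t)\, \rmd t] = \exp[-2T](1+\lambda)^2/(1 + \lambda\exp[-2T])^2 .
\end{equation*}
The key observation is that the right-hand side of the proposition is exactly $(T+1)$ times this prefactor. Hence the whole statement is equivalent to the much simpler claim that $\int_0^T a(t)^2\, \rmd t + a(T) - a(0) + 1 = T + 1$, that is, $\int_0^T a(t)^2\, \rmd t + a(T) - a(0) = T$.

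Next I would evaluate the three remaining pieces. Specializing \Cref{lemma:integral_duo} to $t = T$ yields
\begin{equation*}
\int_0^T a(t)^2\, \rmd t = T - 2\lambda(1 - \exp[-2T])/[(1+\lambda)(1 + \lambda\exp[-2T])] ,
\end{equation*}
while the two boundary values follow directly from the definition of $a$: one has $a(T) = 1 - 2/(1+\lambda) = (\lambda - 1)/(\lambda + 1)$ and $a(0) = 1 - 2/(1 + \lambda\exp[-2T])$, so that
\begin{equation*}
a(T) - a(0) = 2/(1 + \lambda\exp[-2T]) - 2/(1+\lambda) .
\end{equation*}

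Finally, substituting these into the reduced claim, it remains to check that
\begin{equation*}
-2\lambda(1 - \exp[-2T])/[(1+\lambda)(1 + \lambda\exp[-2T])] + 2/(1 + \lambda\exp[-2T]) - 2/(1+\lambda) = 0 .
\end{equation*}
I would verify this by clearing the common denominator $(1+\lambda)(1 + \lambda\exp[-2T])$: the three numerators become $-2\lambda(1-\exp[-2T])$, $2(1+\lambda)$ and $-2(1+\lambda\exp[-2T])$, whose sum collapses to $0$ once the $\lambda$, $\lambda\exp[-2T]$ and constant terms cancel pairwise. This closes the argument. The computation is entirely routine; the only genuine step is the initial observation that the target factorizes as $(T+1)$ times $\exp[2\int_0^T a]$, after which everything reduces to this single cancellation, so I do not anticipate any real obstacle beyond careful bookkeeping of the denominators.
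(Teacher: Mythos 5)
Your proof is correct and follows essentially the same route as the paper: both invoke \Cref{lemma:integral_uno} and \Cref{lemma:integral_duo} at $t=T$ and then observe that $a(T)-a(0)$ exactly cancels the correction term $-2\lambda(1-\exp[-2T])/[(1+\lambda)(1+\lambda\exp[-2T])]$ in $\int_0^T a(t)^2\,\rmd t$, leaving $T+1$ inside the braces. The only cosmetic difference is that the paper first combines $a(T)-a(0)$ into a single fraction so the cancellation is immediate, whereas you verify it by clearing the common denominator.
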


  \begin{proof}
    First, we have that
    \begin{align}
      a(T) - a(0) &= 1 - 2 /(1+\lambda) - 1 + 2/(1+\lambda \exp[-2T]) \\
                  &= 2\lambda(1 - \exp[-2T])/[(1+\lambda)(1+\lambda \exp[-2T])] \eqsp .
                    \label{eq:al_uno}
    \end{align}
    In addition, using \Cref{lemma:integral_duo} we have
  \begin{equation}
    \textstyle{\int_{0}^T a(s)^2 \rmd s} = T - 2 \lambda(1 - \exp[-2T]) / [(1 + \lambda)(1 + \lambda \exp[-2T])] \eqsp .
    \label{eq:al_duo}
  \end{equation}
  Finally, using \Cref{lemma:integral_uno} we have that
  \begin{equation}
          \textstyle{
      \exp[2\int_{0}^T a(s) \rmd s] = \exp[-2T] (\lambda+1)^2 /(1+ \lambda\exp[-2T])^2 \eqsp . 
      }\label{eq:al_tertio}
    \end{equation}
    We conclude the proof upon combining \eqref{eq:al_uno}, \eqref{eq:al_duo}
    and \eqref{eq:al_tertio}.
  \end{proof}

  Finally, we have the following proposition.

  \begin{proposition}
    \label{prop:expression_fat_2}
      Let $\lambda \in \ooint{-1, +\infty}$ and $a: \ \ccint{0,T} \to \rset$ such
  that for any $t \in \ccint{0,T}$,
  \begin{equation}
    a(t) = 1 - 2/(1 + \exp[-2(T-t)]\lambda) \eqsp . 
  \end{equation}
  Then, if $\lambda \neq 0$, we have that for any $t \in \ccint{0,T}$,
  \begin{align}
  &\textstyle{\int_0^T \exp[2\int_{T-t}^T a(s) \rmd s] [\int_{T-t}^T a(s)^2 \rmd s + a(T) - a(T-t)] \rmd t } \\
  & \qquad = - (T/2)(1+\lambda)^2\exp[-2T]/(1+\lambda\exp[-2T]) \\
   & \qquad \qquad + (1+\lambda)^2/(4\lambda)\log((1+\lambda)/(1 + \lambda \exp[-2T])) \\
    & \qquad \qquad + (\lambda/2) \exp[-2T]/(1 + \lambda \exp[-2T])^2 \eqsp .
  \end{align}
  If $\lambda = 0$, we have that
  \begin{align}
    &\textstyle{\int_0^T \exp[2\int_{T-t}^T a(s) \rmd s] [\int_{T-t}^T a(s)^2 \rmd s + a(T) - a(T-t)] \rmd t } \\
    & \qquad \qquad  \qquad \qquad = -(T/2)\exp[-2T] + (1/4)(1 - \exp[-2T]) \eqsp . 
  \end{align}

  \end{proposition}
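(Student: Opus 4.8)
The plan is to split the bracketed integrand into its three summands and evaluate each resulting integral against the identities assembled just above, which were tailored precisely for this. Writing $g(t) = \exp[2\int_{T-t}^T a(s)\,\rmd s]$, the target integral is $\int_0^T g(t)(\int_{T-t}^T a(s)^2\,\rmd s)\,\rmd t + a(T)\int_0^T g(t)\,\rmd t - \int_0^T g(t)\,a(T-t)\,\rmd t$. The first term is exactly \Cref{lemma:hola_carre}. The second is the constant $a(T) = (\lambda-1)/(\lambda+1)$ (read off from the definition of $a$ at $t=T$) multiplied by the closed form for $\int_0^T g(t)\,\rmd t$ furnished by \Cref{lemma:integral_exp_uno} at $t=T$, namely $\tfrac12(1+\lambda)(1-\exp[-2T])/(1+\lambda\exp[-2T])$. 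The third term is the integral computed in \Cref{lemma:hola_pas_carre} at $t=T$, taken with a minus sign.

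With these three closed forms in hand, the rest is purely algebraic. Each is a sum of a term linear in $T$, a logarithm of the form $\log((1+\lambda)/(1+\lambda\exp[-2T]))$, and rational functions of $\exp[-2T]$ sharing the denominator $(1+\lambda\exp[-2T])^2$. Only \Cref{lemma:hola_carre} contributes the $T$-linear and logarithmic pieces, so those pass straight through; the one delicate step is to place all the rational contributions over the common denominator and simplify the numerator. I expect this consolidation to be the main obstacle: the cancellations are substantial and sign slips are easy, so the bookkeeping must be done carefully to land on the compact stated form.

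To sidestep most of that algebra I would first try simplifying the bracket before integrating. Feeding $\int_{T-t}^T a(s)^2\,\rmd s$ from \Cref{lemma:integral_duo} together with the explicit values $a(T) = (\lambda-1)/(\lambda+1)$ and $a(T-t) = 1 - 2/(1+\lambda\exp[-2t])$, the two rational contributions $-2\lambda(1-\exp[-2t])/[(1+\lambda)(1+\lambda\exp[-2t])]$ and $2/(1+\lambda\exp[-2t])$ collapse to the constant $2/(1+\lambda)$, which cancels against $a(T)-1 = -2/(1+\lambda)$. This reduces the bracket to $t$ and the target to $\int_0^T t\,g(t)\,\rmd t$. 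A single integration by parts, using that $H(t) = \tfrac12(1+\lambda)(1-\exp[-2t])/(1+\lambda\exp[-2t])$ is the antiderivative of $g$ with $H(0)=0$ (\Cref{lemma:integral_exp_uno}) and the elementary logarithmic integral of $H$ (of the type in \Cref{lemma:integral_changement}), then produces the closed form directly; I would cross-check that it agrees with the three-term decomposition above.

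Finally, the degenerate case $\lambda = 0$ is immediate and should be treated on its own: there $a \equiv -1$, so $g(t) = \exp[-2t]$ and the bracket again equals $t$, whence the target is $\int_0^T t\,\exp[-2t]\,\rmd t = -(T/2)\exp[-2T] + \tfrac14(1-\exp[-2T])$ by one integration by parts. This is consistent with the $\lambda\to 0$ limit of the $\lambda\neq 0$ expression, using $\log(1+\lambda u)/\lambda \to u$ as $\lambda \to 0$.
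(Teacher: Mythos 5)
Your first paragraph is essentially the paper's proof: the paper performs the same three-way split and evaluates the pieces by \Cref{lemma:hola_carre}, \Cref{lemma:hola_pas_carre}, and \Cref{lemma:integral_exp_uno} multiplied by $a(T)=1-2/(1+\lambda)$, then consolidates the rational terms. Your shortcut is genuinely different and it is correct: by \Cref{lemma:integral_duo} the bracket equals
\begin{equation}
t \;-\; \frac{2\lambda(1-\rme^{-2t})}{(1+\lambda)(1+\lambda\rme^{-2t})} \;+\; \frac{2}{1+\lambda\rme^{-2t}} \;-\; \frac{2}{1+\lambda} \;=\; t ,
\end{equation}
since the last three terms have vanishing numerator over the common denominator $(1+\lambda)(1+\lambda\rme^{-2t})$.

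However, if you carry your plan to completion you will not land on the stated right-hand side, and this is not a defect of your argument: the proposition as stated is incorrect. With $g(t)=\exp[2\int_{T-t}^{T}a(s)\,\rmd s]$ and $H(t)=\tfrac12(1+\lambda)(1-\rme^{-2t})/(1+\lambda\rme^{-2t})$ the antiderivative of $g$ from \Cref{lemma:integral_exp_uno}, integration by parts together with \Cref{lemma:integral_changement} gives
\begin{align}
\int_0^T t\, g(t)\,\rmd t &= T H(T) - \int_0^T H(t)\,\rmd t \\
&= -\frac{T(1+\lambda)^2\rme^{-2T}}{2(1+\lambda\rme^{-2T})} + \frac{(1+\lambda)^2}{4\lambda}\,\log\frac{1+\lambda}{1+\lambda\rme^{-2T}} ,
\end{align}
i.e.\ only the first two terms of the claimed formula; the third term $(\lambda/2)\rme^{-2T}/(1+\lambda\rme^{-2T})^2$ is spurious. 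The cross-check you defer confirms this rather than the statement: executing the three-term decomposition with the lemmas as actually stated, the rational contributions of \Cref{lemma:hola_carre}, of $a(T)\int_0^T g(t)\,\rmd t$, and of $-\int_0^T g(t)a(T-t)\,\rmd t$ cancel exactly, giving the same two-term answer. A numerical test settles it: for $\lambda=1$, $T=1$ the integrand is $t\,\mathrm{sech}^2(t)$, so the integral equals $\tanh(1)-\log\cosh(1)\approx 0.3278$, which matches the two-term expression, whereas the stated formula gives $\approx 0.3803$.

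The discrepancy originates in the paper's own proof: in \eqref{eq:almost_uno} the third term of \Cref{lemma:hola_carre}, namely $-(\lambda/2)(1-\rme^{-2T})^2/(1+\lambda\rme^{-2T})^2$, is misquoted as $+(3\lambda/2)\rme^{-2T}(1-\rme^{-2T})(1+\lambda)/(1+\lambda\rme^{-2T})^2$, and a subsequent rational-function identity silently drops a factor of $(1-\rme^{-2T})$. Since the spurious term is $O(\rme^{-2T})$, the asymptotic conclusions drawn downstream in \Cref{prop:explicit_lambda} and \Cref{prop:mean_mean} are unaffected, but the exact identity should be corrected. Your handling of $\lambda=0$ is right and agrees with the paper; note that the spurious term vanishes at $\lambda=0$ and in the $\lambda\to 0$ limit, which is why neither the degenerate case nor the limit check detects the error --- only a finite-$\lambda$ evaluation does.
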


  \begin{proof}
    We assume that $\lambda \neq 0$. The case where $\lambda = 0$ is left to the
    reader.  First, using \Cref{lemma:hola_carre}, we have that
    \begin{align}
          &\textstyle{\int_0^T \exp[2\int_{T-t}^T a(s) \rmd s] (\int_{T-t}^T a(s)^2 \rmd s) \rmd t} = - (T/2)(1+\lambda)^2\exp[-2T]/(1+\lambda\exp[-2T]) \\
                                                                                             & \qquad \qquad + (1+\lambda)^2/(4\lambda)\log((1+\lambda)/(1 + \lambda \exp[-2T])) \\
          & \qquad \qquad +(3\lambda/2)\exp[-2T](1-\exp[-2T])(1+\lambda)/(1 + \lambda \exp[-2T])^2 \eqsp .
              \label{eq:almost_uno}
    \end{align}
  Second, using \Cref{lemma:hola_pas_carre}, we have that
  \begin{align}
        &\textstyle{
      \int_0^T \exp[ 2\int_{T-t}^T a(u) \rmd u] a(T-t) \rmd t} \\
    & \qquad \qquad \textstyle{= -(1/2)(1- \exp[-2T])(1 - \lambda^2 \exp[-2T])/(1 + \lambda \exp[-2T])^2 \eqsp . \label{eq:almost_duo}
      }
  \end{align}
  Third, using \Cref{lemma:integral_exp_uno} and that $a(T) = 1 -2/(1+\lambda)$, we have that
  \begin{align}
    \textstyle{
    a(T) \int_0^T \exp[2\int_{T-t}^T a(s) \rmd s] \rmd t} &= a(T) \textstyle{(1/2)(1+\lambda)  (1 -  \exp[-2T])/(1 + \lambda \exp[-2T])  }\\
                                                          &=(1/2)(1+\lambda)  (1 -  \exp[-2T])/(1 + \lambda \exp[-2T]) \\
    & \qquad  -  (1 -  \exp[-2T])/(1 + \lambda \exp[-2T]) \eqsp .\label{eq:almost_tertio}
  \end{align}
  Combining \eqref{eq:almost_uno}, \eqref{eq:almost_duo} and \eqref{eq:almost_tertio} we get
\begin{align}
  &\textstyle{\int_0^T \exp[2\int_{T-t}^T a(s) \rmd s] [\int_{T-t}^T a(s)^2 \rmd s + a(T) - a(T-t)] \rmd t } \\
  & \qquad = - (T/2)(1+\lambda)^2\exp[-2T]/(1+\lambda\exp[-2T]) \\
   & \qquad \qquad + (1+\lambda)^2/(4\lambda)\log((1+\lambda)/(1 + \lambda \exp[-2T])) \\
   & \qquad \qquad +(3\lambda/2)\exp[-2T](1-\exp[-2T])(1+\lambda)/(1 + \lambda \exp[-2T])^2 \\
   & \qquad \qquad  +(1/2)(1- \exp[-2T])(1 - \lambda^2 \exp[-2T])/(1 + \lambda \exp[-2T])^2 \\ 
      & \qquad \qquad + (1/2)(1+\lambda)  (1 -  \exp[-2T])/(1 + \lambda \exp[-2T]) \\
     & \qquad \qquad  -  (1 -  \exp[-2T])/(1 + \lambda \exp[-2T])
\end{align}
In addition, we have
\begin{align}
  & (\lambda/2) (1 - \exp[-2T])/(1 + \lambda \exp[-2T])^2 \\
  &\qquad = (1/2)(1- \exp[-2T])(1 - \lambda^2 \exp[-2T])/(1 + \lambda \exp[-2T])^2 \\ 
      & \qquad \qquad + (1/2)(1+\lambda)  (1 -  \exp[-2T])/(1 + \lambda \exp[-2T]) \\
     & \qquad \qquad  -  (1 -  \exp[-2T])/(1 + \lambda \exp[-2T]) \eqsp ,
\end{align}
Finally, we have
\begin{align}
  &(\lambda/2)\exp[-2T]/(1 + \lambda \exp[-2T])^2 \\
    & \qquad \qquad = -(\lambda/2)(1-\exp[-2T])^2/(1 + \lambda \exp[-2T])^2 \\
    & \qquad \qquad \qquad + (\lambda/2) (1 - \exp[-2T])/(1 + \lambda \exp[-2T])^2 \eqsp .
\end{align}
which concludes the proof.
  \end{proof}

  Finally, we have the following result.

  \begin{proposition}
    \label{prop:explicit_lambda}
        Let $\lambda \in \ooint{-1, +\infty}$ and $a: \ \ccint{0,T} \to \rset$ such
  that for any $t \in \ccint{0,T}$,
  \begin{equation}
    a(t) = 1 - 2/(1 + \exp[-2(T-t)]\lambda) \eqsp . 
  \end{equation}
  Then, if $\lambda \neq 0$, we have that for any $t \in \ccint{0,T}$,
  \begin{align}
    &\textstyle{-\exp[2\int_0^T a(t) \rmd t]\{ \int_0^Ta(t)^2 \rmd t + a(T) - a(0)\} + 1 - \exp[2\int_0^T a(t) \rmd t]} \\
    & \qquad \qquad - 2 \textstyle{\int_0^T \exp[2\int_{T-t}^T a(s) \rmd s] [\int_{T-t}^T a(s)^2 \rmd s + a(T) - a(T-t)] \rmd t } \\
    & \qquad \qquad \qquad = 1 - \exp[-2T] (1 - \lambda T \exp[-2T]) (\lambda+1)^2/(1 +\lambda \exp[-2T])^2 \\
   & \qquad \qquad \qquad \qquad - (1+\lambda)^2/(2\lambda)\log((1+\lambda)/(1 + \lambda \exp[-2T])) \\
  & \qquad \qquad \qquad \qquad -\lambda\exp[-2T]/(1 + \lambda \exp[-2T])^2 \eqsp . 
      \end{align}
      In particular, we have that 
  \begin{align}
        & \textstyle{-\exp[2\int_0^T a(t) \rmd t]\{ \int_0^Ta(t)^2 \rmd t + a(T) - a(0)\} + 1 - \exp[2\int_0^T a(t) \rmd t]} \\
        & \qquad \qquad - 2 \textstyle{\int_0^T \exp[2\int_{T-t}^T a(s) \rmd s] [\int_{T-t}^T a(s)^2 \rmd s + a(T) - a(T-t)] \rmd t } \\
    & \qquad \qquad \qquad = 1 - (1/2) (1+\lambda)^2\log(1+\lambda)/\lambda + O(\exp[-2T]) \eqsp . 
  \end{align}  
  If $\lambda = 0$, we have that for any $t \in \ccint{0,T}$
  \begin{align}
        &\textstyle{-\exp[2\int_0^T a(t) \rmd t]\{ \int_0^Ta(t)^2 \rmd t + a(T) - a(0)\} + 1 - \exp[2\int_0^T a(t) \rmd t]} \\
        & \qquad \qquad - 2 \textstyle{\int_0^T \exp[2\int_{T-t}^T a(s) \rmd s] [\int_{T-t}^T a(s)^2 \rmd s + a(T) - a(T-t)] \rmd t } \\
    & \qquad \qquad \qquad = (1/2)(1 - \exp[-2T]) \eqsp . 
  \end{align}

  \end{proposition}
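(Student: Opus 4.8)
The plan is to recognize the left-hand side as a fixed linear combination of the two quantities already evaluated in \Cref{prop:expression_fat_1} and \Cref{prop:expression_fat_2}, so that only elementary algebra remains. Writing $E = \exp[2\int_0^T a(t)\,\rmd t]$, I would first note that the three non-integral terms regroup as
\[
-E\left(\int_0^T a(t)^2\,\rmd t + a(T) - a(0)\right) + 1 - E = 1 - E\left(\int_0^T a(t)^2\,\rmd t + a(T) - a(0) + 1\right),
\]
and the parenthesised factor on the right is exactly the quantity computed in \Cref{prop:expression_fat_1}. Hence these three terms collapse to $1 - (T+1)\exp[-2T](\lambda+1)^2/(1 + \lambda \exp[-2T])^2$.

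Next I would substitute the closed form from \Cref{prop:expression_fat_2} into the remaining term $-2\int_0^T(\cdots)\,\rmd t$. This contributes a $T$-linear piece $T(1+\lambda)^2\exp[-2T]/(1+\lambda\exp[-2T])$, the logarithmic piece $-(1+\lambda)^2/(2\lambda)\log((1+\lambda)/(1+\lambda\exp[-2T]))$, and the residual $-\lambda\exp[-2T]/(1+\lambda\exp[-2T])^2$. The latter two already coincide verbatim with the target, so the only genuine computation is to merge the two $T$-dependent contributions. Placing them over the common denominator $(1+\lambda\exp[-2T])^2$ and factoring out $(\lambda+1)^2\exp[-2T]$ yields numerator $-(T+1) + T(1+\lambda\exp[-2T]) = -(1 - \lambda T\exp[-2T])$, which gives precisely $-\exp[-2T](1 - \lambda T\exp[-2T])(\lambda+1)^2/(1+\lambda\exp[-2T])^2$, the first term of the claimed identity.

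The asymptotic statement then follows by letting $T\to+\infty$: both $\exp[-2T](1 - \lambda T\exp[-2T])(\lambda+1)^2/(1+\lambda\exp[-2T])^2$ and $\lambda\exp[-2T]/(1+\lambda\exp[-2T])^2$ are $O(\exp[-2T])$ (using $T\exp[-2T]\to 0$), while $\log((1+\lambda)/(1+\lambda\exp[-2T]))\to\log(1+\lambda)$ with an $O(\exp[-2T])$ remainder, leaving $1 - (1/2)(1+\lambda)^2\log(1+\lambda)/\lambda + O(\exp[-2T])$. For the degenerate case $\lambda = 0$, where the $1/\lambda$ factors are singular, I would instead compute directly from $a(t) \equiv -1$: then $E = \exp[-2T]$, $\int_0^T a^2 = T$, $a(T) - a(0) = 0$, so the three non-integral terms give $1 - (T+1)\exp[-2T]$, and adding $-2$ times the $\lambda = 0$ value $-(T/2)\exp[-2T] + (1/4)(1-\exp[-2T])$ from \Cref{prop:expression_fat_2} cancels the $T\exp[-2T]$ terms and leaves $(1/2)(1 - \exp[-2T])$.

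The only real obstacle is the bookkeeping in the middle step: one must carry the denominators $(1+\lambda\exp[-2T])$ and $(1+\lambda\exp[-2T])^2$ accurately so that the $(T+1)$ coefficient inherited from \Cref{prop:expression_fat_1} and the $T$ coefficient from \Cref{prop:expression_fat_2} recombine into the single factor $1 - \lambda T\exp[-2T]$. There is no conceptual difficulty once the two preceding propositions are available.
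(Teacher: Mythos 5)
Your proposal is correct and follows essentially the same route as the paper: the paper's proof likewise reduces the statement to \Cref{prop:expression_fat_1} and \Cref{prop:expression_fat_2} together with the algebraic recombination $-(T+1)\exp[-2T](1+\lambda)^2/(1+\lambda\exp[-2T])^2 + T(1+\lambda)^2\exp[-2T]/(1+\lambda\exp[-2T]) = -\exp[-2T](1-\lambda T\exp[-2T])(\lambda+1)^2/(1+\lambda\exp[-2T])^2$, which is exactly your middle step (and your version even fixes a typo in the paper, which writes $(1+\lambda^2)$ where $(1+\lambda)^2$ is meant). Your explicit treatment of the asymptotics and of the $\lambda=0$ case via $a\equiv -1$ is also sound.
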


  \begin{proof}
    The proof is a direct consequence of \Cref{prop:expression_fat_1}, \Cref{prop:expression_fat_2} and the fact that
    \begin{align}
      &- \exp[-2T] (1 - \lambda T \exp[-2T]) (\lambda+1)^2/(1 +\lambda \exp[-2T])^2 \\      
      &\qquad =-(T+1)\exp[-2T] (1+\lambda^2)/(1+\lambda\exp[-2T])^2 \\
      &\qquad \qquad + T(1+\lambda)^2\exp[-2T]/(1+\lambda\exp[-2T]) \eqsp . 
    \end{align}
  \end{proof}

  \begin{proposition}
    \label{prop:mean_mean}
        Let $\lambda \in \ooint{-1, +\infty}$ and $a: \ \ccint{0,T} \to \rset$ such
  that for any $t \in \ccint{0,T}$,
  \begin{equation}
    a(t) = 1 - 2/(1 + \exp[-2(T-t)]\lambda) \eqsp . 
  \end{equation}
  Then, we have that
  \begin{align}
    &\textstyle{ \exp[2\int_0^T a(t) \rmd t] - 1 - \int_0^T\exp[2 \int_{T-t}^T a(s) \rmd s] \{ \int_{T-t}^T a(s)^2 \rmd s + a(T) - 3 a(T-t) \} \rmd t } \\
    & \qquad = \exp[-2T] (1+\lambda)^2 /(1+ \lambda\exp[-2T])^2 - 1 \\
    & \qquad \qquad -(1/2)(1+\lambda)  (1 -  \exp[-2T])/(1 + \lambda \exp[-2T])(1 -2/(1+\lambda)) \\
    & \qquad \qquad -(3/2)(1- \exp[-2T])(1 - \lambda^2 \exp[-2T])/(1 + \lambda \exp[-2T])^2 \\
& \qquad \qquad + (T/2)(1+\lambda)^2\exp[-2T]/(1+\lambda\exp[-2T]) \\
    & \qquad \qquad - (1+\lambda)^2/(4\lambda)\log((1+\lambda)/(1 + \lambda \exp[-2T])) \\
    & \qquad \qquad +(\lambda/2)(1-\exp[-2T])^2/(1 + \lambda \exp[-2T])^2 \eqsp .    
  \end{align}
  In particular, we have
  \begin{align}
    &\textstyle{ \exp[2\int_0^T a(t) \rmd t] - 1 - \int_0^T\exp[2 \int_{T-t}^T a(s) \rmd s] \{ \int_{T-t}^T a(s)^2 \rmd s + a(T) - 3 a(T-t) \} \rmd t } \\
    & \qquad \qquad =  -2 - (1+\lambda)^2/(4\lambda)\log(1+\lambda) + O(\exp[-2T])  \eqsp . 
  \end{align}
\end{proposition}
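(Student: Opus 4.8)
The plan is to recognize the left-hand side as a linear combination of integrals that have \emph{already} been evaluated in closed form in the technical lemmas above, so that proving the identity reduces to assembling these pieces and matching coefficients; no new integration is required.

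Concretely, I would first isolate the four elementary blocks. The standalone exponential $\exp[2\int_0^T a(t)\,\rmd t]$ is given directly by \Cref{lemma:integral_uno} (taken at $t=T$), which yields $\exp[-2T](\lambda+1)^2/(1+\lambda\exp[-2T])^2$. For the integral term, I would expand the brace $\int_{T-t}^T a(s)^2\,\rmd s + a(T) - 3a(T-t)$ and distribute the weight $\exp[2\int_{T-t}^T a(s)\,\rmd s]$ over its three summands. This produces precisely three integrals already computed: the first, $\int_0^T \exp[2\int_{T-t}^T a(s)\,\rmd s]\,(\int_{T-t}^T a(s)^2\,\rmd s)\,\rmd t$, is \Cref{lemma:hola_carre}; the second, $a(T)\int_0^T \exp[2\int_{T-t}^T a(s)\,\rmd s]\,\rmd t$, follows from \Cref{lemma:integral_exp_uno} at $t=T$ combined with $a(T) = 1 - 2/(1+\lambda)$; and the third, $-3\int_0^T \exp[2\int_{T-t}^T a(s)\,\rmd s]\,a(T-t)\,\rmd t$, is $-3$ times \Cref{lemma:hola_pas_carre}.

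Substituting these closed forms into $\exp[2\int_0^T a] - 1 - (\text{integral})$ and tracking signs, the blocks fall into place line by line: \Cref{lemma:integral_uno} supplies the first two lines of the stated identity, $-a(T)$ times \Cref{lemma:integral_exp_uno} supplies the $(1+\lambda)(1-\exp[-2T])(1-2/(1+\lambda))$ line, $+3$ times \Cref{lemma:hola_pas_carre} supplies the $(1-\lambda^2\exp[-2T])$ line, and $-1$ times \Cref{lemma:hola_carre} supplies the remaining three lines (the $T\exp[-2T]$ term, the logarithmic term, and the $(1-\exp[-2T])^2$ term). Because each lemma already returns its output in the normalized shape appearing in the statement, the only task is to confirm that coefficients and signs agree, which they do without further manipulation; the case $\lambda=0$ is treated identically by reading off the $\lambda=0$ branches of \Cref{lemma:integral_exp_uno}, \Cref{lemma:hola_carre} and \Cref{lemma:hola_pas_carre}.

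For the asymptotic, I would send $T\to+\infty$ in the exact identity. Every prefactor $1/(1+\lambda\exp[-2T])^k$ tends to $1$, each factor $(1-\exp[-2T])$ tends to $1$, and $\log((1+\lambda)/(1+\lambda\exp[-2T]))\to\log(1+\lambda)$, while $(T/2)(1+\lambda)^2\exp[-2T]/(1+\lambda\exp[-2T])$ is absorbed into the remainder since $T\exp[-2T]=O(\exp[-T])$. The surviving constants are $-1$ (from $\exp[2\int_0^T a]-1$, the exponential itself vanishing), $+\tfrac12-\tfrac{\lambda}{2}$ (from the $a(T)$ block), $-\tfrac32$ (from the triple block), and $+\tfrac{\lambda}{2}$ (from the $(1-\exp[-2T])^2$ term); the two $\pm\lambda/2$ contributions cancel and the numerical parts sum to $-2$, leaving $-2 - \tfrac{(1+\lambda)^2}{4\lambda}\log(1+\lambda) + O(\exp[-2T])$. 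The main thing to watch, and the only source of potential error, is precisely this sign-and-coefficient bookkeeping together with correctly relegating the $T\exp[-2T]$ term to the error; there is no genuine analytic obstacle beyond the integrals already established.
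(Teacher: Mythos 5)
Your proposal is correct and follows essentially the same route as the paper's own proof: the paper likewise evaluates $\exp[2\int_0^T a]$ via \Cref{lemma:integral_uno}, splits the bracketed integrand into its three pieces handled by \Cref{lemma:hola_carre}, \Cref{lemma:integral_exp_uno} (with $a(T)=1-2/(1+\lambda)$), and \Cref{lemma:hola_pas_carre}, and then combines the closed forms. Your sign/coefficient bookkeeping and the cancellation of the $\pm\lambda/2$ terms in the limit are right; the only (shared) looseness is that the $T\exp[-2T]$ term makes the remainder $O(T\exp[-2T])$ rather than literally $O(\exp[-2T])$, a slack already present in the paper's statement.
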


\begin{proof}
  Using \Cref{lemma:integral_uno}, we have that
  \begin{equation}
    \textstyle{\exp[2\int_0^T a(t) \rmd t] = \exp[-2T] (1+\lambda)^2 /(1+ \lambda\exp[-2T])^2 \eqsp . } \label{eq:uno_exp}
  \end{equation}
  Using \Cref{lemma:integral_exp_uno}, we have
  \begin{equation}
    \textstyle{ \int_0^T \exp[2  \int_{T-t}^T a(s) \rmd s] \rmd t =  (1/2)(1+\lambda)  (1 -  \exp[-2t])/(1 + \lambda \exp[-2t]) }\eqsp .\label{eq:duo_exp}
  \end{equation}
  Using \Cref{lemma:hola_pas_carre}, we have 
  \begin{align}
    &\textstyle{
      \int_0^T \exp[ 2\int_{T-t}^T a(s) \rmd s] a(T-t) \rmd t} \\
    & \qquad \qquad \textstyle{= -(1/2)(1- \exp[-2T])(1 - \lambda^2 \exp[-2T])/(1 + \lambda \exp[-2T])^2 \eqsp . 
      } \label{eq:tertio_exp}
  \end{align}
  Finally, using \Cref{lemma:hola_carre}, we have
  \begin{align}
    &\textstyle{\int_0^T \exp[2\int_{T-t}^T a(s) \rmd s] (\int_{T-t}^T a(s)^2 \rmd s) \rmd t} = - (T/2)(1+\lambda)^2\exp[-2T]/(1+\lambda\exp[-2T]) \\
    & \qquad \qquad + (1+\lambda)^2/(4\lambda)\log((1+\lambda)/(1 + \lambda \exp[-2T])) \\
    & \qquad \qquad -(\lambda/2)(1-\exp[-2T])^2/(1 + \lambda \exp[-2T])^2 \eqsp . \label{eq:quatro_exp}
  \end{align}
  We conclude upon combining \eqref{eq:uno_exp}, \eqref{eq:duo_exp},
  \eqref{eq:tertio_exp}, \eqref{eq:quatro_exp} and that
  $a(T) = 1- 2/(1+\lambda)$.
\end{proof}

\subsection{General setting}
\label{sec:main-content}

In this section, we prove \Cref{thm:control_diffusion}. In order to compare our
results with \cite[Theorem 1]{debortoli2021diffusion}, we redefine a few
processes.  Let $p \in \Pens(\rset^d)$ be the target distribution. Consider the
Ornstein-Ulhenbeck forward dynamics $(x_t)_{t \in \ccint{0,T}}$ such that
$\rmd x_t = -x_t \rmd t + \sqrt{2} \rmd w_t$ and $x_0$ has distribution
$p_0$. We consider the backward chain $(X_k)_{k \in \{0, \dots, N\}}$ such that
for any $k \in \{0, \dots, N-1\}$,
\begin{equation}
  \label{eq:reverse_discrete}
  X_k = X_{k+1} + \gamma_{k+1} \{ X_{k+1} + 2 \nabla \log p_{t_{k+1}}(X_{k+1}) \} + \sqrt{2 \gamma_{k+1}} Z_{k+1}  ,
\end{equation}
with $\{Z_k\}_{k \in \nset}$ a family of i.i.d. Gaussian random variables with
zero mean and identity covariance matrix, $t_{k} = \sum_{\ell=1}^k \gamma_{\ell}$,
$\sum_{\ell=1}^{N} \gamma_{\ell} = T$ and $X_N$ has distribution
$p_0 = \mathrm{N}(0, \Id)$ (independent from $\{Z_k\}_{k \in \nset}$). Notice
that here we do not consider a score approximation in the recursion in order to
clarify our approximation results. We recall the following result from
\cite[Theorem 1]{debortoli2021diffusion}.

\begin{theorem}
  Assume that $p_0$ admits a bounded density (w.r.t. the Lebesgue measure)
  $p_0 \in \rmc^3(\rset^d, \ooint{0, +\infty})$  and that there
  exist $d_1, A_1, A_2, A_3 \geq 0$, $\beta_1, \beta_2, \beta_3 \in \nset$ and
  $\mtt_1 > 0$ such that for any $x \in \rset^d$ and $i \in \{1, 2, 3\}$
            \begin{equation}
              \textstyle{
              \normLigne{\nabla^i \log p_0(x)} \leq A_i(1 + \normLigne{x}^{\beta_i}) , \quad \langle \nabla \log p_0(x), x \rangle \leq -\mtt_1 \norm{x}^2 + d_1 \norm{x} ,}
          \end{equation}
          with $\beta_1 = 1$.  Then  there exist
          $B, C, D \geq0$ such that for any $N \in \nset$
          and $\{\gamma_k\}_{k=1}^N$ with $\gamma_k > 0$ for any
          $k \in \{1, \dots, N\}$ we have 
          \begin{equation}
            \label{eq:ineq_thm_1}
 \tvnormLigne{\mathcal{L}(X_0)-p_0} \leq   C \exp[D T] \sqrt{\gamma^\star} + B \exp[-T]  .          
          \end{equation}
          where $\gamma^\star = \sup_{k \in \{1, \dots, N\}} \gamma_k$ and
          $\mathcal{L}(X_0)$ is the distribution of $X_0$ given in \eqref{eq:reverse_discrete}.
        \end{theorem}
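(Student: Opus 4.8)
The plan is to split the error in \eqref{eq:ineq_thm_1} into an \emph{initialization} error (starting the backward chain at $\mathrm{N}(0,\Id)$ rather than at the true forward marginal $p_T$) and a \emph{discretization} error, and to bound each by a combination of Girsanov's theorem, the data-processing inequality, and Pinsker. Write $\mathbb{P}$ for the path law of the exact continuous reverse diffusion $\rmd y_t = \{y_t + 2\nabla\log p_{T-t}(y_t)\}\rmd t + \sqrt 2\,\rmd w_t$ started from $p_T$; by construction its time-$T$ marginal is exactly $p_0$. Let $\hat X$ denote the discrete recursion \eqref{eq:reverse_discrete} but initialized at $p_T$, and let $\mathbb{Q}$ be the law of its piecewise-constant-drift continuous interpolation driven by the same Brownian motion. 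The triangle inequality gives
\begin{equation*}
\tvnormLigne{\mathcal{L}(X_0)-p_0} \leq \tvnormLigne{\mathcal{L}(X_0)-\mathcal{L}(\hat X_0)} + \tvnormLigne{\mathcal{L}(\hat X_0)-p_0}.
\end{equation*}

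For the first term, total variation is non-expansive under Markov kernels, so applying the common backward transition kernel yields $\tvnormLigne{\mathcal{L}(X_0)-\mathcal{L}(\hat X_0)}\leq\tvnormLigne{\mathrm{N}(0,\Id)-p_T}$. Using the explicit Ornstein--Uhlenbeck representation \eqref{eq:OU_sol}, $p_T$ is the law of $\rme^{-T}x_0+(1-\rme^{-2T})^{1/2}z$; I would bound $\KLLigne{p_T}{\mathrm{N}(0,\Id)}$ using the dissipativity hypothesis $\langle\nabla\log p_0(x),x\rangle\leq-\mtt_1\normLigne{x}^2+d_1\normLigne{x}$ to control $\expeLigne{\normLigne{x_0}^2}$, and apply Pinsker to obtain the $B\rme^{-T}$ contribution.

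The second term is the core. By data processing followed by Pinsker, $\tvnormLigne{\mathcal{L}(\hat X_0)-p_0}^2\leq\KLLigne{\mathbb{Q}}{\mathbb{P}}/2$, and Girsanov's theorem gives
\begin{equation*}
\KLLigne{\mathbb{Q}}{\mathbb{P}} = \tfrac14\,\textstyle\int_0^T\expeLigne{\normLigne{b_t-b_{\underline t}}^2}\,\rmd t,
\end{equation*}
with $b_t=y_t+2\nabla\log p_{T-t}(y_t)$ and $\underline t$ the preceding grid point. The integrand decomposes into a spatial part, bounded using the Lipschitz control on $\nabla\log p_{T-t}$ inherited from $\normLigne{\nabla^2\log p_0(x)}\leq A_2(1+\normLigne{x}^{\beta_2})$ together with the heat-flow smoothing of $p_t$, multiplied by $\normLigne{y_t-y_{\underline t}}$; and a temporal part measuring how much the score moves in time over one step. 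A one-step displacement estimate $\expeLigne{\normLigne{y_t-y_{\underline t}}^2}\lesssim\gamma^\star(1+\expeLigne{\normLigne{y_{\underline t}}^2})$ then reduces the bound to uniform-in-$t$ moment estimates for $(y_t)$.

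The main obstacle is precisely these moment bounds. The reverse drift carries the expansive linear term $+y_t$, so controlling $\expeLigne{\normLigne{y_t}^{2p}}$ (for $p$ large enough to match the polynomial degrees $\beta_i$ in $\normLigne{\nabla^i\log p_0(x)}\leq A_i(1+\normLigne{x}^{\beta_i})$) via It\^o's formula and Gr\"onwall produces growth exponential in $T$, which is exactly the origin of the $\rme^{DT}$ prefactor. I would derive these moments from the dissipativity hypothesis, and separately verify a Novikov-type integrability condition legitimizing the Girsanov change of measure. Assembling the estimates gives $\KLLigne{\mathbb{Q}}{\mathbb{P}}\lesssim\rme^{2DT}\gamma^\star$, hence $\tvnormLigne{\mathcal{L}(\hat X_0)-p_0}\lesssim\rme^{DT}\sqrt{\gamma^\star}$; combined with the initialization term this yields \eqref{eq:ineq_thm_1}.
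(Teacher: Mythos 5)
First, a point of orientation: the paper does not prove this statement at all — it is recalled verbatim from \cite[Theorem 1]{debortoli2021diffusion}, and the paper's own analysis goes into proving the sharper \Cref{thm:control_diffusion}, which removes the $\rme^{DT}$ factor and makes the constants explicit. Your proposal should therefore be judged as a reconstruction of the cited proof, and as such it is sound: the triangle-inequality split; the contraction of total variation under the common backward kernel, followed by a KL-plus-Pinsker estimate of $\tvnormLigne{\mathrm{N}(0,\Id) - p_T}$ (joint convexity of KL reduces this to an explicit Gaussian-to-Gaussian divergence of order $\rme^{-2T}(\expeLigne{\normLigne{x_0}^2}+d)$, with the second moment supplied by the dissipativity hypothesis), giving the $B\rme^{-T}$ term; and the Girsanov/data-processing/Pinsker treatment of the discretization error, with the correct factor $1/4$ for diffusion coefficient $\sqrt{2}$. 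Your diagnosis of where $\rme^{DT}$ comes from is also exactly right: Gr\"onwall applied to moments of a process whose drift contains the expansive term $+y_t$ can only give exponential-in-$T$ control, and under the weak polynomial-growth assumptions on $\nabla^i \log p_0$ this is essentially unavoidable along your route.

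The instructive comparison is with how the paper proves \Cref{thm:control_diffusion}, because it isolates precisely which step of your plan must be replaced to delete the exponential factor. The paper never takes moments under the backward or discretized dynamics. The initialization error is bounded via the exponential decay of relative entropy along the \emph{forward} Ornstein--Uhlenbeck semigroup (the log-Sobolev inequality for $p_\infty$), yielding $\sqrt{2}\,\rme^{-T}\KLLignesqrt{p}{\mathrm{N}(0,\Id)}$. For the discretization error, after the Girsanov-type bound the paper changes measure from the reverse dynamics started at $p_\infty$ to the reverse of the forward process, using the identity $(\rmd \Qbb / \rmd \Pbb^R)(\omega) = (\rmd p_\infty / \rmd p_T)(\omega_0)$ together with Cauchy--Schwarz; the price is the $\chi^2$-type factor $\expeLigne{(\rmd p_\infty/\rmd p_T)(x_T)^2}^{1/2}$, which \Cref{item:bound_p_infty} bounds by an absolute constant once $T$ is moderately large. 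All remaining expectations are then of functionals of the forward OU process, whose marginals $p_t$ have moments bounded uniformly in time, so Gr\"onwall on the expansive reverse drift is never invoked. In short: your approach proves the stated (weaker) bound and mirrors the cited reference; the paper's approach shows that the $\rme^{DT}$ prefactor is an artifact of where the moments are computed rather than an intrinsic feature of the sampling scheme — this is the whole point of the stronger assumptions \eqref{params:thm2} and of \Cref{thm:control_diffusion}.
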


In the rest of this note we improve the theorem in the following way:
\begin{enumerate}[label=(\alph*)]
\item We remove the exponential dependency w.r.t. the time in the first term of the RHS of \eqref{eq:ineq_thm_1}.
\item We provide explicit bounds $B, C, D \geq 0$ depending on the parameters of $p_0$.
\end{enumerate}

\begin{lemma}
  \label{lemma:control_growth}
  Assume
  \begin{equation}\label{params:thm2_app}
    \textstyle{
  \sup_{x,t} \normLigne{\nabla^2 \log p_t(x)} \leq
  \Ktt}~~\mbox{and}~~
  \textstyle{\normLigne{\partial_t \nabla \log p_t(x)} \leq \Mtt\, \rme^{-\alpha t}\,
  \normLigne{x}}.
  \end{equation}
  Then there exists $D \geq 0$ such that for any
  $x \in \rset^d$ and $t \in \ccint{0,T}$,
  $\normLigne{\nabla \log p_t(x)} \leq D(1 + \normLigne{x})$ with
  $D = \normLigne{\nabla \log p_0(0)} + \Ktt + C T$.
\end{lemma}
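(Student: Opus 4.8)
The plan is to control $\norm{\nabla \log p_t(x)}$ by decomposing it into three pieces: its value at the space-time origin, a purely spatial increment at time $0$ governed by the Hessian bound, and a purely temporal increment at the point $x$ governed by the $\partial_t$ bound. Fix $t \in \ccint{0,T}$ and $x \in \rset^d$ and write the telescoping identity
\begin{equation}
\nabla \log p_t(x) = \nabla \log p_0(0) + \{\nabla \log p_0(x) - \nabla \log p_0(0)\} + \{\nabla \log p_t(x) - \nabla \log p_0(x)\} \eqsp .
\end{equation}
This is legitimate because the standing assumption that $\nabla \log p_t(x)$ is $\mathscr{C}^2$ in both $t$ and $x$ lets me integrate along straight paths in space and in time.

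First I would bound the spatial increment at time $0$ using the fundamental theorem of calculus along the segment $u \mapsto ux$, namely $\nabla \log p_0(x) - \nabla \log p_0(0) = \int_0^1 \nabla^2 \log p_0(ux)\, x\, \rmd u$, so the first bound in \eqref{params:thm2_app} gives $\norm{\nabla \log p_0(x) - \nabla \log p_0(0)} \leq \Ktt \norm{x}$. For the temporal increment, I would write $\nabla \log p_t(x) - \nabla \log p_0(x) = \int_0^t \partial_s \nabla \log p_s(x)\, \rmd s$ and invoke the second bound in \eqref{params:thm2_app}:
\begin{equation}
\norm{\nabla \log p_t(x) - \nabla \log p_0(x)} \leq \int_0^t \Mtt\, \rme^{-\alpha s} \norm{x}\, \rmd s \leq \Mtt\, t\, \norm{x} \leq \Mtt\, T\, \norm{x} \eqsp ,
\end{equation}
where the last steps simply discard the exponential factor. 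Combining the three pieces and using $\norm{x} \leq 1 + \norm{x}$ yields $\norm{\nabla \log p_t(x)} \leq \norm{\nabla \log p_0(0)} + (\Ktt + \Mtt T)(1 + \norm{x}) \leq D(1 + \norm{x})$ with $D = \norm{\nabla \log p_0(0)} + \Ktt + C T$ and $C = \Mtt$, as claimed.

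There is no genuine obstacle here: the only points requiring care are the justification of the two integral representations, which is guaranteed by the $\mathscr{C}^2$ hypothesis, and the observation that it is the \emph{crude} estimate $\int_0^t \Mtt \rme^{-\alpha s}\, \rmd s \leq \Mtt T$ (rather than the sharper $\Mtt/\alpha$ obtained by keeping the exponential decay) that produces the stated linear-in-$T$ term. In fact a tighter constant $D = \norm{\nabla \log p_0(0)} + \Ktt + \Mtt/\alpha$, independent of $T$, is available by retaining the decay, but the looser form in the statement already suffices for the discretization estimates that follow.
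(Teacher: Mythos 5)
Your proof is correct and follows essentially the same route as the paper's: the same decomposition into $\nabla \log p_0(0)$, a spatial increment controlled by the Hessian bound $\Ktt\normLigne{x}$, and a temporal increment controlled by the crude estimate $\int_0^t \Mtt \rme^{-\alpha s}\,\rmd s \leq \Mtt T$. Your closing observation that keeping the exponential decay gives a $T$-independent constant $\Mtt/\alpha$ also matches a remark the paper makes right after the lemma.
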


\begin{proof}
  Let $x \in \rset^d$ and $t \in \ccint{0,T}$. Since
  $(t,x) \mapsto \log p_t(x) \in \rmc^{2}(\ccint{0,T}\times \rset^d,
  \ooint{0,+\infty})$, we have that
  \begin{align}
    \textstyle{
      \nabla \log p_t(x)} &= \textstyle{\nabla \log p_0(x) + \int_0^t \partial_s \nabla \log p_s(x) \rmd s }\\
    &=  \textstyle{\nabla \log p_0(0) + \int_0^1 \nabla^2 \log p_{0}(ux)(x) \rmd u +  \int_0^t \partial_s \nabla \log p_s(x) \rmd s .
      }
  \end{align}
  Therefore,  we have that
  \begin{align}
    \normLigne{\nabla \log p_t(x)} &\leq \textstyle{\normLigne{\nabla \log p_0(0)} + \Ktt \normLigne{x} +  \int_0^t \normLigne{\partial_s \nabla \log p_s(x)} \rmd s } \\
                                   &\leq \textstyle{\normLigne{\nabla \log p_0(0)} + \Ktt \normLigne{x} +  \Mtt \sum_{k=0}^{N-1}(t_k - t_{k-1}) \exp[-\alpha t_k] \normLigne{x} } \\
                                   &\leq \textstyle{\normLigne{\nabla \log p_0(0)} + \Ktt \normLigne{x} +  \Mtt T \normLigne{x} } ,
  \end{align}
which concludes the proof.
\end{proof}

Note that in the previous proposition we can derive a tighter bound for $D$
which does not depend on the limiting time $T > 0$. However, we do not use the
bound $D>0$ in our quantitative result and therefore our simple bound suffices.

We also have the following useful lemma.

\begin{lemma}
  \label{item:bound_p_infty}
  Let $T \geq \log(2 \expeLigne{\normLigne{X_0}^2}) + \log(2)/2$ and assume that there exists $\eta > 0$ such that $aaa$ Then, we have
  \begin{equation}    
    \textstyle{
      \int_{\rset^d} p_\infty(x_T)^2 / p_T(x_T) \rmd x_T \leq \exp[4] + E_T \eqsp ,
      }
    \end{equation}
    with $E_T  \sim C\exp[-T]$ when $T \to +\infty$ and $C \geq 0$.
\end{lemma}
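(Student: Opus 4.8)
The plan is to exploit the explicit Gaussian structure of the forward process. Writing $\sigma_T^2 = 1 - \exp[-2T]$ and letting $g_T$ denote the density of $\mathrm{N}(0,\sigma_T^2 \Id)$, the Ornstein--Uhlenbeck representation \eqref{eq:OU_sol} gives $p_T(x) = \mathbb{E}[g_T(x - \exp[-T]X_0)]$. The first step is to produce a clean pointwise \emph{lower} bound on $p_T$. Factoring
\[
g_T(x - \exp[-T]X_0) = g_T(x)\exp[\langle x, \exp[-T]X_0\rangle/\sigma_T^2 - \exp[-2T]\normLigne{X_0}^2/(2\sigma_T^2)]
\]
and applying Jensen's inequality $\mathbb{E}[\rme^{Y}] \geq \rme^{\mathbb{E}[Y]}$ to the random exponent yields
\[
p_T(x) \geq g_T(x)\exp[\langle x, \exp[-T]\mathbb{E}[X_0]\rangle/\sigma_T^2 - \exp[-2T]\mathbb{E}[\normLigne{X_0}^2]/(2\sigma_T^2)].
\]
Only the first two moments of $X_0$ enter, which is the content of the standing moment assumption on $X_0$ and explains the appearance of $\mathbb{E}[\normLigne{X_0}^2]$ in the threshold on $T$.

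Second, I would substitute this lower bound into the target integral, reducing it to a purely Gaussian computation:
\[
\int p_\infty(x)^2/p_T(x)\,\rmd x \leq \exp[\exp[-2T]\mathbb{E}[\normLigne{X_0}^2]/(2\sigma_T^2)]\int \frac{p_\infty(x)^2}{g_T(x)}\exp[-\langle x, \exp[-T]\mathbb{E}[X_0]\rangle/\sigma_T^2]\,\rmd x.
\]
The remaining integrand is the exponential of a quadratic form in $x$ whose leading coefficient is $-(1 - 1/(2\sigma_T^2))$; completing the square and integrating is elementary, but convergence requires $1 - 1/(2\sigma_T^2) > 0$, i.e. $\sigma_T^2 > 1/2$, equivalently $T > \log(2)/2$, which is exactly the second part of the threshold. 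The Gaussian integral produces the per-coordinate prefactor $\sigma_T^2/(2\sigma_T^2-1)^{1/2}$ raised to the power $d$, together with a bounded factor coming from the mean.

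Third, I would bound the two resulting factors using the hypothesis $T \geq \log(2\mathbb{E}[\normLigne{X_0}^2]) + \log(2)/2$, which forces $\exp[-2T] \leq (8\mathbb{E}[\normLigne{X_0}^2]^2)^{-1}$. This makes the scalar exponential factor at most $\exp[\text{const}]$ and, crucially, keeps $\sigma_T^2/(2\sigma_T^2-1)^{1/2} = 1 + O(\exp[-4T])$ so close to $1$ that even after raising to the $d$-th power it stays bounded by an absolute constant (using $\mathbb{E}[\normLigne{X_0}^2] \gtrsim d$). Collecting these crude constants gives the claimed bound $\exp[4]$, while a first-order expansion of the very same two factors isolates the remainder $E_T$ and shows $E_T \sim C\exp[-T]$ as $T\to+\infty$.

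The main obstacle is the dimension dependence: the naive Gaussian bound carries a factor raised to the power $d$, so the argument produces a \emph{dimension-free} constant only because the threshold on $T$ scales with $\mathbb{E}[\normLigne{X_0}^2]$ (hence with $d$) and drives the per-coordinate prefactor to $1$ fast enough that $(1 + O(\exp[-4T]))^d$ remains $O(1)$. Getting this trade-off right --- simultaneously ensuring $\sigma_T^2 > 1/2$ for integrability and $\exp[-2T]\mathbb{E}[\normLigne{X_0}^2]$ small for the prefactor --- is the delicate point; the nonzero-mean term is a routine additional computation that merely shifts the Gaussian and contributes only lower-order corrections.
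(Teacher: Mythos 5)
Your proposal is correct, and it takes a genuinely different route from the paper's proof. The paper also proceeds by lower-bounding $p_T$ and then integrating, but its mechanism is different: it splits the square in the Gaussian kernel via $\normLigne{x_T - \rme^{-T}x_0}^2 \leq (1+\vareps)\normLigne{x_T}^2 + (1+1/\vareps)\rme^{-2T}\normLigne{x_0}^2$ with $\vareps = \rme^{-T}$, then lower-bounds the resulting $x_0$-integral by restricting to the ball $\cball{0}{\rme^{T/2}}$ and applying Markov's inequality to $\normLigne{X_0}^2$; the constant $\exp[4]$ in the statement is exactly the factor $\exp[-4\rme^{-T}R^2]$ produced by the choice $R^2 = \rme^{T}$, and the $x_T$-integration contributes $(1-\rme^{-T})^{-d/2}$. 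Your Jensen-inequality bound replaces the truncation-plus-Markov step entirely, uses the same moment information ($\expeLigne{\normLigne{X_0}^2} < +\infty$), and reduces everything to an exact Gaussian computation; your key expansion checks out, since with $u = \rme^{-2T}$ one has $\sigma_T^2(2\sigma_T^2-1)^{-1/2} = (1-u)(1-2u)^{-1/2} = 1 + u^2/2 + O(u^3)$, so the per-coordinate prefactor is indeed $1 + O(\rme^{-4T})$. What your route buys is a sharper bound: it converges to $1$ rather than to $\exp[4]$, with remainder $O(\rme^{-2T})$ rather than $O(\rme^{-T})$ (still consistent with the statement, which allows $C = 0$ in $E_T \sim C\rme^{-T}$), and it accommodates a nonzero mean explicitly. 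The dimension trade-off you flag is real but no worse than in the paper: you need roughly $\expeLigne{\normLigne{X_0}^2}^4 \gtrsim d$ to keep $(1+O(\rme^{-4T}))^d$ bounded, whereas the paper's factor $(1-\rme^{-T})^{-d/2}$ requires the stronger normalization $\expeLigne{\normLigne{X_0}^2} \gtrsim d$; and for fixed $d$ the asymptotic claim as $T \to +\infty$ holds in both arguments without any such assumption, which is all the (admittedly garbled) statement demands.
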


If $p_\infty$ satisfies the following $\Phi$-entropy inequality for any
$f: \ \rset_d \to \ooint{0,\infty}$ measurable
\begin{equation}
  \label{eq:functional_inequality}
  \textstyle{
    \int_{\rset^d} \normLigne{\nabla f(x)}^2/f(x)^3 p_\infty(x) \rmd x \leq C [\int_{\rset^d} (1/f(x)) p_\infty(x) \rmd x - 1/(\int_{\rset^d} f(x) p_\infty(x) \rmd x)] \eqsp ,
    }
  \end{equation}
  with $C \geq 0$. Then, we have as in  \cite[Proposition 7.6.1]{bakry:gentil:ledoux:2014}
  \begin{equation}
    \textstyle{
      \chi^2(p_\infty||p_t) = \int_{\rset^d} p_\infty^2(x)/p_T(x) \rmd x - 1 \leq \rme^{-Ct} \eqsp ,
      }
  \end{equation}
  which immediately concludes the proof of \Cref{item:bound_p_infty}. However,
  to the best of our knowledge, establishing \eqref{eq:functional_inequality}
  remains an open problem. Note that controlling $\chi^2(p_t||p_\infty)$ is much
  easier as the exponential decay of this divergence is linked with the
  Poincar\'e inequality which is satisfied in our Gaussian setting. In what
  follows, we consider another approach which relies on the structure of the
  Ornstein-Ulhenbeck transition kernel and provide non-tight upper bounds.
  
\begin{proof}
  Let $T \geq 0$, $\vareps >0$ and $x_T \in \rset^d$
  \begin{equation}
    \normLigne{x_T - \rme^{-T} x_0}^2 \leq (1 + \vareps) \normLigne{x_T}^2 + (1+1/\vareps) \rme^{-2T} \normLigne{x_0}^2 \eqsp . 
  \end{equation}
  Let $\vareps >0$ and $x_T \in \rset^d$, we have
  \begin{align}
    &\textstyle{
      p_T(x_T)^{-2} \leq \exp[(1+\vareps)/\sigma_T^2\normLigne{x_T}^2 ] }\\
    & \qquad \qquad \qquad \qquad \times \textstyle{(\int_{\rset^d} p(x_0) \exp[-\rme^{-2T}(1+1/\vareps)/(2\sigma_T^2)\normLigne{x_0}^2]\rmd x_0)^{-2} (2\uppi\sigma_T^2)^d  \eqsp .
      }
  \end{align}
  For any $x_T \in \rset^d$, we have
  \begin{align}
    p_\infty(x_T)^2/p_T(x_T) &\leq \exp[\{-1+(1+\vareps)/(2\sigma_T^2)\}\normLigne{x_T}^2 ](2\uppi/\sigma_T^2)^{-d/2} \\ & \qquad \qquad \textstyle{ (\int_{\rset^d} p(x_0) \exp[-\rme^{-2T}(1+1/\vareps)/(2\sigma_T^2)\normLigne{x_0}^2] \rmd x_0)^{-1}  \eqsp . }
  \end{align}
  In what follows, we set $\vareps = \rme^{-T}$. We have that
  \begin{equation}
    -1 + (1+\vareps)/(2\sigma_T^2) = (2\sigma_T^2)^{-1}(-2\sigma_T^2 + 1 + \vareps) = -(1 - 2\rme^{-T} + \vareps)/(2\sigma_T^2) = -(1 - \rme^{-T})/(2\sigma_T^2) \eqsp . 
  \end{equation}
  Therefore, we get that
  \begin{equation}
    \label{eq:UNO_de}
    \textstyle{
      \int_{\rset^d} \exp[\{-1+(1+\vareps)/(2\sigma_T^2)\}\normLigne{x_T}^2 ](2\uppi/\sigma_T^2)^{-d/2} \rmd x_T = (1 - \rme^{-T})^{-d/2} \eqsp .
      }
    \end{equation}
    In addition, we have that for any $R \geq 0$ using that $\sigma_T^2 \geq 1/2$ since $T \geq \log(2)/2$
    \begin{align}
      &\textstyle{
        \int_{\rset^d} p(x_0) \exp[-\rme^{-2T}(1+1/\vareps)/\sigma_T^2\normLigne{x_0}^2] \rmd x_0 } \\
      & \qquad \qquad  \geq \textstyle{\probaLigne{X_0 \in \cball{0}{R}} \exp[-\rme^{-2T}(1+1/\vareps)/\sigma_T^2R^2]} \\ 
        & \qquad \qquad  \geq \textstyle{\probaLigne{X_0 \in \cball{0}{R}} \exp[-4\rme^{-T}R^2]
        } \label{eq:inter_DUO}
    \end{align}
    Now let $R^2 = \rme^T$. We obtain
    \begin{equation}
      \textstyle{
        \int_{\rset^d} p(x_0) \exp[-\rme^{-2T}(1+1/\vareps)/\sigma_T^2\normLigne{x_0}^2] \rmd x_0 } \geq \probaLigne{X_0 \in \cball{0}{\rme^{T/2}}} \exp[-4] \eqsp . 
    \end{equation}
    In addition, using Markov inequality, we have
    \begin{equation}
      \probaLigne{X_0 \in \cball{0}{\rme^{T/2}}} = 1 - \probaLigne{\normLigne{X_0}^2 \geq \rme^T} \geq 1 - \expeLigne{\normLigne{X_0}^2} \rme^{-T} \geq 0 \eqsp . 
    \end{equation}
    Therefore, combining this result and \eqref{eq:inter_DUO}, we have
    \begin{equation}
      \label{eq:DUO_de}
      \textstyle{
        \int_{\rset^d} p(x_0) \exp[-\rme^{-2T}(1+1/\vareps)/\sigma_T^2\normLigne{x_0}^2] \rmd x_0 } \geq \exp[-4] (1 - \expeLigne{\normLigne{X_0}^2} \rme^{-T}) > 0 \eqsp . 
    \end{equation}
    We conclude upon combining \eqref{eq:UNO_de} and \eqref{eq:DUO_de}.
\end{proof}

We are now ready to state the following lemma.

\begin{lemma}
  \label{lemma:strong_solution_backward}
  There exists a unique strong solution to the SDE
  $\rmd y_t = \{ y_t + 2 \nabla \log p_{T-t}(y_t)\} \rmd t + \sqrt{2}
  \rmd w_t$ with initial condition $\mathcal{L}(y_0) = p_\infty$. In
  addition, we have that
  $\expeLigne{\sup_{t \in \ccint{0,T}}\normLigne{y_t}^\alpha}<+\infty$ for any 
  $\alpha >0$.
\end{lemma}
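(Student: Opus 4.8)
The plan is to recognize this as a classical well-posedness statement for an It\^o SDE whose drift is globally Lipschitz with linear growth, and then to append standard moment estimates. Write the backward drift as $b(t,y) = y + 2\nabla \log p_{T-t}(y)$ and observe that the diffusion coefficient $\sqrt{2}\,\Id$ is constant, hence trivially Lipschitz and bounded. The whole argument therefore reduces to checking two structural properties of $b$ under the standing hypotheses \eqref{params:thm2_app} of \Cref{thm:control_diffusion}.

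First I would verify global Lipschitz continuity in space, uniformly in time. For fixed $t$, the map $y \mapsto \nabla\log p_{T-t}(y)$ has Jacobian $\nabla^2 \log p_{T-t}(y)$ bounded in operator norm by $\Ktt$, so it is $\Ktt$-Lipschitz; adding the identity part gives $\normLigne{b(t,y) - b(t,y')} \leq (1 + 2\Ktt)\normLigne{y - y'}$ for all $t \in \ccint{0,T}$. Next, linear growth follows directly from \Cref{lemma:control_growth}, which gives $\normLigne{\nabla \log p_{T-t}(y)} \leq D(1 + \normLigne{y})$, whence $\normLigne{b(t,y)} \leq (1 + 2D)(1 + \normLigne{y})$. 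Joint measurability (in fact continuity) in $(t,y)$ is immediate from the $\mathscr{C}^2$ regularity assumed in \Cref{thm:control_diffusion}. With these two properties, and since the initial law $p_\infty = \mathrm{N}(0, \Id)$ has finite moments of every order, the classical existence and uniqueness theorem for SDEs with Lipschitz, linearly growing coefficients yields a unique strong solution $(y_t)_{t \in \ccint{0,T}}$ adapted to the filtration generated by $y_0$ and $(w_t)_t$.

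For the moment bound, fix an integer $p \geq 1$ and apply It\^o's formula to $t \mapsto \normLigne{y_t}^{2p}$. The drift contribution and the second-order (trace) term are each dominated by $C(1 + \normLigne{y_t}^{2p})$ using the linear growth of $b$ and the boundedness of the diffusion coefficient, while the stochastic integral is a local martingale. Taking expectations and applying Gr\"onwall's inequality gives $\sup_{t \in \ccint{0,T}} \expeLigne{\normLigne{y_t}^{2p}} < +\infty$. To control the supremum over time I would bound the martingale term via the Burkholder--Davis--Gundy inequality, which reduces $\expeLigne{\sup_{s \leq t}\normLigne{y_s}^{2p}}$ to the already-controlled time-integrated moments, yielding $\expeLigne{\sup_{t \in \ccint{0,T}}\normLigne{y_t}^{2p}} < +\infty$. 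Finally, for an arbitrary real $\alpha > 0$ I choose $p$ with $2p \geq \alpha$ and invoke Lyapunov's (Jensen's) inequality, $\expeLigne{\sup_{t}\normLigne{y_t}^{\alpha}} \leq \left(\expeLigne{\sup_{t}\normLigne{y_t}^{2p}}\right)^{\alpha/(2p)} < +\infty$, completing the proof.

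The argument is essentially routine precisely because \eqref{params:thm2_app} delivers a globally Lipschitz drift, so no localization or Osgood-type argument is needed. The only points requiring genuine care are the uniform-in-time nature of the Lipschitz and growth constants (both handled above, the latter through \Cref{lemma:control_growth}) and the passage to the supremum moment, where the Burkholder--Davis--Gundy step is the one truly probabilistic ingredient rather than a deterministic estimate.
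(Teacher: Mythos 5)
Your proof is correct, but it takes a different technical route from the paper at both stages. For existence and uniqueness, you exploit the standing Hessian bound $\sup_{x,t}\normLigne{\nabla^2 \log p_t(x)} \leq \Ktt$ to get a drift that is \emph{globally} Lipschitz in space uniformly in time, and then invoke the classical It\^o theorem; the paper never uses the global Lipschitz property here, instead observing only that $b \in \rmc^1$ (hence locally Lipschitz) with linear growth from \Cref{lemma:control_growth}, and ruling out explosion with the Lyapunov function $V(x) = (1/2)\normLigne{x}^2$ via Ikeda--Watanabe and Meyn--Tweedie. Your shortcut is perfectly valid under the hypotheses in force and is arguably simpler; the paper's argument is the one that would survive if the uniform Hessian bound were weakened to local boundedness. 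For the moment bound, you run the generic machinery: It\^o's formula on $\normLigne{y_t}^{2p}$, Gr\"onwall, and Burkholder--Davis--Gundy to pass from pointwise-in-time to supremum moments. The paper avoids It\^o and BDG entirely by exploiting the fact that the noise is \emph{additive}: from the integral form of the SDE, $\sup_{s \in \ccint{0,t}}\normLigne{y_s}^\alpha$ is bounded pathwise by a convexity inequality involving $\normLigne{y_0}^\alpha$, the time-integrated drift (controlled by linear growth), and $\sup_{s \in \ccint{0,T}}\normLigne{w_s}^\alpha$, whose expectation is finite; Gr\"onwall then closes the estimate, and Jensen handles $\alpha \leq 1$ exactly as in your last step. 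So your BDG step is the one genuinely probabilistic ingredient of your route, but it is dispensable here precisely because the diffusion coefficient is constant; in exchange, your argument would extend unchanged to state-dependent diffusion coefficients, where the paper's pathwise trick would not.
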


\begin{proof}
  Let $b: \ \ccint{0,T} \times \rset^d$ given for any $t \in \ccint{0,T}$ and
  $x \in \rset^d$ by $b(t,x) = x + 2 \nabla \log p_t(x)$. We have that
  $b \in \rmc^1(\ccint{0,T} \times \rset^d, \rset^d)$ and in particular is
  locally Lipschitz. In addition, using \Cref{lemma:control_growth} we have that
  for any $t \in \ccint{0,T}$ and $x \in \rset^d$,
  $\normLigne{b(t,x)} \leq (1 + D)\normLigne{x}$. Hence using \cite[Theorem 2.3,
  Theorem 3.1]{ikeda1989sto} and \cite[Theorem 2.1]{meyn1993criteria_iii} (with
  $V(x) = (1/2)\normLigne{x}^2$) there exists a unique strong solution to the
  SDE
  $\rmd y_t = \{ y_t + 2 \nabla \log p_{T-t}(y_t)\} \rmd t + \sqrt{2}
  \rmd w_t$ with initial condition $\mathcal{L}(y_0) = p_\infty$. Let
  $\alpha > 1$, then we have for any $t \in \ccint{0,T}$
  \begin{align}
    \textstyle{\sup_{s \in \ccint{0,t}} \normLigne{y_t}^\alpha} &\leq \textstyle{3^{\alpha-1} [\normLigne{y_0}^\alpha + t^{\alpha-1}(1+D)^\alpha\int_0^t \sup_{u \in \ccint{0,s}} \normLigne{y_u}^\alpha \rmd u + 2^{\alpha/2} \sup_{s \in \ccint{0,t}} \normLigne{w_u}^\alpha]} . 
  \end{align}
  Using that $\expeLigne{\sup_{s \in \ccint{0,T}} \normLigne{w_u}^\alpha}$
  and Gr\"onwall's lemma, we get that
  $\expeLigne{\textstyle{\sup_{t \in \ccint{0,T}}
      \normLigne{y_t}^\alpha}}<+\infty$ for any $\alpha >1$. The result is
  extended to any $\alpha >0$ since for any $\alpha \in \ocint{0,1}$ we have
  that
  \begin{equation}
    \textstyle{\expeLigne{\textstyle{\sup_{t \in \ccint{0,T}}
      \normLigne{y_t}^\alpha}} \leq \expeLigne{\textstyle{\sup_{t \in \ccint{0,T}}
      \normLigne{y_t}}}^\alpha <+\infty} .
  \end{equation}

\end{proof}

We are now ready to prove \Cref{thm:control_diffusion}.

\begin{proof}
  The beginning of the proof is similar to the one of \cite[Theorem
  1]{debortoli2021diffusion}.  For any $k \in \{1, \dots, N\}$, denote $\Rker_k$
  the Markov kernel such that for any $x \in \rset^d$, $\msa \in \mcb{\rset^d}$
  and $k \in \{0, \dots, N-1\}$ we have
    \begin{equation}
      \textstyle{\Rker_{k+1}(x, \msa) = (4 \uppi \gamma_{k+1})^{-d/2} \int_{\msa} \exp[-\norm{\tilde{x} - \Tnplusun(x)}^2/(4 \gamma_{k+1})] \rmd \tilde{x}  , }
    \end{equation}
    where for any $x \in \rset^d$,
    $\Tnplusun(x) = x + \gamma_{k+1} \defEnsLigne{x + 2 \nabla \log
      p_{t_{k+1}}(x)}$. Define for any $k_0, k_1 \in \{1, \dots, N\}$ with
    $k_1 \geq k_0$ $\Qker_{k_0, k_1} = \prod_{\ell = k_0}^{k_1}
    \Rker_{k_1 + k_0 - \ell}$. Finally, for ease of notation, we also define for any
    $k \in \{1, \dots, N\}$, $\Qker_k = \Qker_{k+1,N}$. Note that for any
    $k \in \{1, \dots, N\}$, $X_k$ has distribution $p_{\infty} \Qker_k$,
    where $p_{\infty}\in \Pens(\rset^d)$ with density w.r.t. the Lebesgue
    measure $\pref$. Let $\Pbb \in \Pens(\contspace)$ be the probability measure
    associated with the diffusion
    \begin{equation}
      \rmd x_t = - x_t \rmd t + \sqrt{2} \rmd w_t  , \quad x_0 \sim p_0  , 
    \end{equation}
    First, we have for any $\msa \in \mcb{\rset^d}$
    \begin{equation}
      p_0 \Pbb_{T|0} (\Pbb^R)_{T|0} (\msa) = \Pbb_T (\Pbb^R)_{T|0} (\msa) = (\Pbb^R)_0 (\Pbb^R)_{T|0} (\msa)  = (\Pbb^R)_T(\msa) = p_0(\msa)  . 
    \end{equation}
    Hence $p_0 = p_0 \Pbb_{T|0} (\Pbb^R)_{T|0}$.  Using this result 
    we have
    \begin{align}
      \tvnormLigne{p_0 - p_{\infty} \Qker_0} &= \tvnormLigne{p_0 \Pbb_{T|0} (\Pbb^R)_{T|0} - p_{\infty} \Qker_0} \\
                                         &\leq \tvnormLigne{p_0 \Pbb_{T|0} (\Pbb^R)_{T|0} - p_{\infty} (\Pbb^R)_{T|0}} +\tvnormLigne{p_{\infty} (\Pbb^R)_{T|0} - p_{\infty} \Qker_0} \\
          &\leq \tvnormLigne{p_0 \Pbb_{T|0}- p_{\infty}} +\tvnormLigne{p_{\infty} (\Pbb^R)_{T|0} - p_{\infty} \Qker_0}  .                                  
    \end{align}
    Note that $\mathcal{L}(X_0)= p_{\infty} \Qker_0$ and therefore
    \begin{equation}
      \tvnormLigne{\mathcal{L}(X_0) - p_0} \leq \tvnormLigne{p_0 \Pbb_{T|0}- p_{\infty}} +\tvnormLigne{p_{\infty} (\Pbb^R)_{T|0} - p_{\infty} \Qker_0}  .
    \end{equation}
    We now bound each one of these terms.
    \begin{enumerate}[wide, labelwidth=!, labelindent=0pt, label=(\alph*)]
    \item First, we bound $\tvnormLigne{p_0 \Pbb_{T|0}- p_{\infty}}$. Using the 
      Pinsker inequality \cite[Equation 5.2.2]{bakry:gentil:ledoux:2014} we have
      that
      \begin{equation}
        \label{eq:pinsker}
        \tvnormLigne{p_0 \Pbb_{T|0}- p_{\infty}} \leq \sqrt{2} \KLLignesqrt{p_0 \Pbb_{T|0}}{p_{\infty}}  . 
      \end{equation}
      In addition, $p_\infty$ satisfies the log-Sobolev inequality with
      constant $C=1$, \cite{gross1975logarithmic}. Namely, for any
      $f \in \rmc^1(\rset^d, \ooint{0,+\infty})$ such that
      $f \in \rmL^1(p_\infty)$ and
      $\int_{\rset^d} \normLigne{\nabla \log f(x)}^2 f(x) \rmd p_\infty(x) <
      +\infty$ we have
      \begin{align}
        &\textstyle{
          \int_{\rset^d} f(x) \log f(x) \rmd p_{\infty}(x) - (\int_{\rset^d} f(x) \rmd p_{\infty}(x)) (\log \int_{\rset^d} f(x) \rmd p_{\infty}(x)) } \\
        & \qquad \qquad \qquad \qquad  \textstyle{\leq (C/2) \int_{\rset^d}  \normLigne{\nabla \log f(x)}^2 f(x) \rmd p_\infty(x)  ,          
          }
      \end{align}
      with $C=1$. Therefore, using \cite[Theorem
      5.2.1]{bakry:gentil:ledoux:2014} we have that for any
      $f \in \rmL^1(p_\infty)$ with
      $\int_{\rset^d} \abs{f(x)} \abs{\log f(x)} \rmd p_\infty(x) < +\infty$      
      \begin{equation}
        \label{eq:entropy_decay}
        \Ent{p_\infty}{\Pbb_{T|0}[f]} \leq \exp[-2T]  \Ent{p_\infty}{f}  ,
      \end{equation}
      where for any $g \in \rmL^1(p_\infty)$ with 
      $\int_{\rset^d} \abs{g(x)} \abs{\log g(x)} \rmd p_\infty(x) < +\infty$ we define
      \begin{equation}
        \textstyle{
          \Ent{p_\infty}{g} = \int_{\rset^d} g(x) \log g(x) \rmd p_{\infty}(x) - (\int_{\rset^d} g(x) \rmd p_{\infty}(x)) (\log \int_{\rset^d} g(x) \rmd p_{\infty}(x)) .
          }
        \end{equation}
        Note that
        $(\rmd p_T / \rmd p_\infty) = \Pbb_{T|0} [\rmd p_0 / \rmd
        p_\infty]$ and that for any $\mu \in \Pens(\rset^d)$ with
        $\KL{\mu}{p_\infty} < +\infty$ we have
        $\Ent{p_\infty}{\rmd \mu / \rmd p_\infty} =
        \KLLigne{\mu}{p_\infty}$. Using these results, \eqref{eq:pinsker} and 
        \eqref{eq:entropy_decay} we get that
        \begin{equation}
          \label{eq:first_bound_kl}
        \tvnormLigne{p_0 \Pbb_{T|0}- p_{\infty}} \leq \sqrt{2} \exp[-T] \KLLignesqrt{p_0}{p_\infty}  . 
      \end{equation}
      In addition, we have that
      \begin{equation}
        \textstyle{
          \KLLigne{p_0}{p_\infty} = (d/2) \log(2 \uppi) + \int_{\rset^d} \normLigne{x}^2 \rmd p_0(x) - \mathrm{H}(p_0)  ,
          }
        \end{equation}
        where $\mathrm{H}(p_0) = - \int_{\rset^d} \log(p_0(x)) p_0(x) \rmd x$.
        Combining this result and \eqref{eq:first_bound_kl} we get that
        \begin{equation}
          \label{eq:first_bound_kl_clop}
          \textstyle{
            \tvnormLigne{p_0 \Pbb_{T|0}- p_{\infty}} \leq \sqrt{2} \exp[-T] ((d/2) \log(2 \uppi) + \int_{\rset^d} \normLigne{x}^2 \rmd p_0(x) - \mathrm{H}(p_0))^{1/2}   ,
            }
          \end{equation}
          which concludes the first part of the proof.
        \item First, let $\Qbb \in \Pens(\mathcal{C})$ such that
          $\Qbb = p_\infty \Pbb^R_{|0}$, where $\Pbb^R_{|0}$ is the
          disintegration of $\Pbb^R$ w.r.t. $\phi: \ \contspace \to \rset^d$
          given for any $\omega \in \contspace$ by $\phi(\omega) = \omega_T$,
          see \cite{leonard2014some} for instance. Note that for any
          $f \in \rmc(\contspace)$ with $f$ bounded we have
      \begin{align}
        \textstyle{
        \Qbb[f] = \int_{\rset^d} \int_{\contspace} f(\omega) \Pbb^{R}_{|0}(\omega_0, \rmd \omega) \rmd p_\infty(\omega_0)} &= \textstyle{\int_{\rset^d} \int_{\contspace} f(\omega) \Pbb^{R}_{|0}(\omega_0, \rmd \omega) (\rmd p_\infty / \rmd p_T)(\omega_0)\rmd p_T(\omega_0) } \\
&= \textstyle{\int_\contspace f(\omega) (\rmd p_\infty / \rmd p_T)(\omega_0) \rmd \Pbb^R(\omega)}  . 
      \end{align}
      Therefore, we get that for any $\omega \in \contspace$,
      $(\rmd \Qbb / \rmd \Pbb^R)(\omega) = (\rmd p_\infty / \rmd
      p_T)(\omega_0)$.  Let
      $\Rbb = p_\infty \Pbb_{|0}$. Note that for any $t \in \ccint{0,T}$,
      $\Rbb_t = p_\infty$ and that $\Rbb$ is associated with the process
      $\rmd x_t = -x_t \rmd t + \sqrt{2} \rmd w_t$ with
      $\mathcal{L}(x_0) = p_\infty$. In particular, $\Rbb$ satisfies
      \cite[Hypothesis 1.8]{cattiaux2021time}. Using \cite[Theorem
      2.4]{leonard2014some} we have that
      \begin{equation}
        \textstyle{\KLLigne{\Pbb}{\Rbb} = \KLLigne{p_0}{p_\infty} + \int_{\rset^d} \KLLigne{\Pbb_{|0}(x_0)}{\Pbb_{|0}(x_0)} \rmd p_0(x_0) = \KLLigne{p_0}{p_\infty} < +\infty.}
      \end{equation}
      Therefore, we can apply \cite[Theorem 4.9]{cattiaux2021time}. Let
      $u \in \rmc^\infty_c(\rset^d, \rset)$, we have that 
      $(\bfM_t^u(y))_{t \in \ccint{0,T}}$ is a local martingale, where we
      have for any $t \in \ccint{0,T}$
      \begin{equation}
        \textstyle{\bfM_t^u(y) = u(y_t) - u(y_0) - \int_0^t \{ \langle \nabla u(y_s), y_s + 2\nabla \log p_{T-s}(y_s) \rangle + \Delta u(y_s) \}\rmd s ,}
      \end{equation}
      where $\mathcal{L}(y) = \Pbb^R$.  Since $u$ is compactly supported we
      have that
      $\sup_{\omega \in \contspace} \sup_{t \in \ccint{0,T}}
      \absLigne{\bfM_t^u(\omega)} < +\infty$ and therefore
      $(\bfM_t^u(y))_{t \in \ccint{0,T}}$ is a martingale. We now show that
      $(\bfM_t^u(y))_{t \in \ccint{0,T}}$ is a martingale, with
      $\mathcal{L}(y) = \Qbb$. Since
      $\sup_{\omega \in \contspace} \sup_{t \in \ccint{0,T}}
      \absLigne{\bfM_t^u(\omega)} < +\infty$, we have that for any
      $t \in \ \ccint{0,T}$, $\expeLigne{\absLigne{\bfM_t^u}}<+\infty$. Let
      $t,s \in \ccint{0,T}$ with $t > s$ and $g : \contspace \to \rset^d$
      bounded.  We have that
      $\expeLigne{\absLigne{g(\{x_{T-s}\}_{s \in \ccint{0,t}})}^2(\rmd p_\infty / \rmd
        p_T)(x_T)^2} < +\infty$. Hence, we have that
      \begin{equation}
        \expeLigne{(\bfM_t^u(x_{T-\cdot}) - \bfM_s^u(x_{T-\cdot})) g(\{x_{T-s}\}_{s \in \ccint{0,t}})(\rmd p_\infty / \rmd
        p_T)(x_T)} = 0 .
      \end{equation}
      Using this result and that for any $\omega \in \contspace$,
      $(\rmd \Qbb / \rmd \Pbb^R)(\omega) = (\rmd p_\infty / \rmd
      p_T)(\omega_0)$ we get
      \begin{equation}
        \expeLigne{(\bfM_t^u(y) - \bfM_s^u(y)) g(\{y_s\}_{s \in \ccint{0,t}})} = 0 .
      \end{equation}
      Hence, for any $u \in \rmc^2_c(\rset^d, \rset)$,
      $(\bfM_t^u(y))_{t \in \ccint{0,T}}$ is a martingale. In addition,
      $(\bfM_t^u(\bfZ))_{t \in \ccint{0,T}}$ is a martingale using
      \Cref{lemma:strong_solution_backward} and It\^o's lemma, where $\bfZ$ is
      the solution to the SDE in \Cref{lemma:strong_solution_backward}. In
      addition, we have that
      $\mathcal{L}(\bfZ_0) = \mathcal{L}(y_0) = p_\infty$. Using
      \Cref{lemma:control_growth} and the remark following \cite[Hypothesis
      1.8]{cattiaux2021time}, we get that
      $\mathcal{L}(\bfZ) = \mathcal{L}(y) = \Qbb$. We have just shown that the
      time-reversed process with initialisation $p_\infty$ can be obtained as a
      strong solution of an SDE.  Using \Cref{lemma:control_growth} and
      \Cref{lemma:strong_solution_backward}, we have that for any
      $t \in \ccint{0,T}$
      \begin{equation}
        \textstyle{\expeLigne{\int_0^t \normLigne{x_s + 2\nabla \log p_s(x_s)}^2 \rmd s + \int_0^t \normLigne{w_s + 2\nabla \log p_s(w_s)}^2 \rmd s}<+\infty.}
      \end{equation}
      Combining this result and \cite[Lemma S13]{debortoli2021diffusion} we have
      that
      \begin{equation}
        \label{eq:girsanov_uno}
    \textstyle{
    \tvnormLigne{p_\infty \Pbb^R_{T|0} - p_\infty \Qker_0}^2  \leq  (1/2) \int_0^T \expeLigne{\normLigne{b_1(t, (y_s)_{s \in \ccint{0,T}}) - b_2(t, (y_s)_{s \in \ccint{0,T}})}^2} \rmd t  ,}
  \end{equation}
      where for any $t \in \ccint{0,T}$ and $\omega \in \contspace$ we have that
      \begin{equation}
        b_1(t, \omega) = \omega_t + 2\nabla \log p_{T-t}(\omega_t)  , \qquad b_2(t, \omega) = \omega_{t_\gamma} + 2\nabla \log p_{T- t_\gamma}(\omega_{t_\gamma})  ,
      \end{equation}
      where
      $t_\gamma = \sum_{k=0}^{N-1} \mathbb{1}_{\coint{T - t_{k+1}, T -
          t_{k}}}(t) (T - t_{k+1})$. Noting that $(y_t)_{t \in \ccint{0,T}}$
      is distributed according to $\Qbb$ and using that
      $(\rmd \Qbb / \rmd \Pbb^R)(\omega) = (\rmd p_\infty / \rmd
      p_T)(\omega_0)$, \eqref{eq:girsanov_uno} and the Cauchy-Schwarz
      inequality we have
      \begin{align}
        \label{eq:pinsker_girsanov}
        &\textstyle{
          \tvnormLigne{p_\infty \Pbb^R_{T|0} - p_\infty \Qker_0}^2}  \\
        &\textstyle{\leq (1/2) \expeLigne{(\rmd p_\infty / \rmd
          p_T)(x_T)^2}^{1/2} \int_0^T \expeLignesqrt{\normLigne{b_1(t, (x_{T-s})_{s \in \ccint{0,T}}) - b_2(t, (x_{T-s})_{s \in \ccint{0,T}})}^4} \rmd t }  \\
        & \textstyle{\leq (1/2) \expeLigne{(\rmd p_\infty / \rmd
          p_T)(x_T)^2}^{1/2}  }\\
        & \qquad \textstyle{\times \int_0^T \expeLignesqrt{\normLigne{b_1(T-t, (x_{T-s})_{s \in \ccint{0,T}}) - b_2(T-t, (x_{T-s})_{s \in \ccint{0,T}})}^4} \rmd t }  . 
      \end{align}
In addition, we have that for any $t \in \ccint{0,T}$ and $\omega \in \contspace$ we have
\begin{align}
  &\normLigne{b_1(t, \omega)  - b_2(t, \omega)} \\
  & \leq \normLigne{\omega_t - \omega_{t_\gamma}} + 2 \normLigne{\nabla \log p_{T-t}(\omega_t) - \nabla \log p_{T-t_\gamma}(\omega_t)} \\
  & \qquad \qquad + 2 \normLigne{\nabla \log p_{T-t_\gamma}(\omega_t) - \nabla \log p_{T-t_\gamma}(\omega_{t_\gamma})} \\
  &  \textstyle{\leq (1 + 2\sup_{s \in \ccint{0,T}} \sup_{x \in \rset^d}\normLigne{\nabla^2 \log p_s(x)})} \normLigne{\omega_t - \omega_{t_\gamma}} \\
  & \qquad \qquad \textstyle{+ 2 \sup_{s \in \ccint{T-t, T- t_{\gamma}}} \normLigne{\partial_t \nabla \log p_t(\omega_t)}(t - t_\gamma)} \\
&  \textstyle{\leq (1 + 2\Ktt) \normLigne{\omega_t - \omega_{t_\gamma}} + 2 \sup_{s \in \ccint{T-t, T- t_{\gamma}}} \normLigne{\partial_s \nabla \log p_s(\omega_t)} (t - t_\gamma)  }  . 
\end{align}
Note that
\begin{equation}
  \textstyle{
  T - (T - t)_\gamma = T - \sum_{k=0}^{N-1} \mathbb{1}_{\coint{T - t_{k+1}, T -
          t_{k}}}(T - t) (T - t_{k+1}) = \sum_{k=0}^{N-1} \mathbb{1}_{\ocint{t_k,t_{k+1}}}(t) t_{k+1}  . }
\end{equation}
For any $t \in \ccint{0,T}$, denote
$t^\gamma = T - (T - t)_\gamma = \sum_{k=0}^{N-1}
\mathbb{1}_{\ocint{t_k,t_{k+1}}}(t) t_{k+1}$. Therefore, we get that for any
$t \in \ocint{t_k,t_{k+1}}$
\begin{align}
  &\normLigne{b_1(T-t, \omega)  - b_2(T-t, \omega)} \\
  & \qquad \leq \textstyle{(1 + 2 \Ktt) \normLigne{\omega_{T-t} - \omega_{(T-t)_\gamma}} + 2 \sup_{s \in \ccint{t, t^\gamma}} \normLigne{\partial_s \nabla \log p_s(\omega_{T-t})} (t^\gamma - t)} \\
                                                   & \qquad \leq \textstyle{(1 + 2 \Ktt) \normLigne{\omega_{T-t} - \omega_{(T-t)_\gamma}} + 2 \sup_{s \in \ccint{t_k, t_{k+1}}} \normLigne{\partial_s \nabla \log p_s(\omega_{T-t})} \gamma_{k+1}} \\
  & \qquad \leq \textstyle{(1 + 2 \Ktt) \normLigne{\omega_{T-t} - \omega_{(T-t)_\gamma}} + 2 \mathrm{S}_{t_{k}}(\omega_{T-t}) \gamma_{k+1}}  . 
\end{align}
Combining this result and that for any $a,b \geq 0$, $(a+b)^4 \leq 8 a^4 + 8b^4$ we get that for any $t \in \ocint{t_k, t_{k+1}}$
\begin{align}
    \label{eq:almost_there}
    &\expeLigne{\normLigne{b_1(T-t, (x_{T-s})_{s \in \ccint{0,T}})  - b_2(T-t, (x_{T-s})_{s \in \ccint{0,T}})}^4} \\
    & \qquad \qquad \leq \textstyle{8(1 + 2 \Ktt)^4 \expeLigne{\normLigne{x_{t} - x_{t_k}}^4} + 16 \expeLigne{\mathrm{S}_{t_{k}}(x_t)^4} \gamma_{k+1}^4}.
\end{align}
In addition, we have that for any $t \in \ccint{0,T}$, $x_t = \exp[-t]x_0 + w_{(1-\exp[-2t])^{1/2}}$.
Hence, for any $s,t \in \ccint{0,T}$ with $t > s$ we have
\begin{equation}
  \normLigne{x_t - x_s} \leq \exp[-s](\exp[t- s] - 1) \normLigne{x_0} + \normLigne{w_{(1-\exp[-2t])} - w_{(1-\exp[-2s])}}  . 
\end{equation}
Therefore, we have that for any $s,t \in \ccint{0,T}$ with $t > s$
\begin{align}
  \expeLigne{\normLigne{x_t - x_s}^4} &\leq 8\exp[-4s](1 - \exp[-t+s])^4 \expeLigne{\normLigne{x_0}^4} + 8 \expeLigne{\normLigne{w_{(1-\exp[-2t])} - w_{(1-\exp[-2s])}}^4} \\
                                            &\leq 8\exp[-4s](1 - \exp[-t+s])^4 \expeLigne{\normLigne{x_0}^4} + 24 (\exp[-t] - \exp[-s])^2 \\
                                            &\leq 8\exp[-4s](1 - \exp[-t+s])^4 \expeLigne{\normLigne{x_0}^4} + 24 \exp[-2s](1 - \exp[-t+s])^2 \\
  &\leq 8\expeLigne{\normLigne{x_0}^4}\exp[-4s](t-s)^4  + 24 \exp[-2s](t-s)^2  . \label{eq:inter_ou}
\end{align}
In addition, using that that for any $k \in \{0, \dots, N-1\}$ and
$x \in \rset^d$, $\mathtt{S}_{t_k}(x) \leq \Mtt \exp[-\alpha t_k] \normLigne{x}$ we get
that
\begin{equation}
  \expeLigne{\mathtt{S}_{t_k}(x_t)^4} \leq 24 \Mtt^4 \exp[-4\alpha t_k] \{1 + \expeLigne{\normLigne{x_0}^4}\}  . 
\end{equation}

Combining this result, \eqref{eq:almost_there} and \eqref{eq:inter_ou} we get that for any $t \in \ocint{t_k, t_{k+1}}$
\begin{align}
  &\expeLigne{\normLigne{b_1(T-t, (x_{T-s})_{s \in \ccint{0,T}})  - b_2(T-t, (x_{T-s})_{s \in \ccint{0,T}})}^4} \\
  & \qquad \qquad \qquad \leq 64(1 + 2 \Ktt)^4\expeLigne{\normLigne{x_0}^4}\exp[-4t_k]\gamma_{k+1}^4  \\
  &\qquad \qquad \qquad \qquad   + 192 (1 + 2 \Ktt)^4 \exp[-2t_k]\gamma_{k+1}^2 + 384 \Mtt^4 \exp[-4\alpha t_k] \{1 + \expeLigne{\normLigne{x_0}^4}\} \gamma_{k+1}^4  . 
\end{align}
Using this result and that for any $a,b \geq 0$,
$(a+b)^{1/2} \leq a^{1/2} + b^{1/2}$, we have for any $t \in \ocint{t_k,t_{k+1}}$
\begin{align}
  &\expeLignesqrt{\normLigne{b_1(T-t, (x_{T-s})_{s \in \ccint{0,T}})  - b_2(T-t, (x_{T-s})_{s \in \ccint{0,T}})}^4} \\
  & \qquad \qquad \qquad \leq 8(1 + 2 \Ktt)^2\expeLignesqrt{\normLigne{x_0}^4}\exp[-2t_k]\gamma_{k+1}^2  \\
  &\qquad \qquad \qquad  \qquad + 14 (1 + 2 \Ktt)^2 \exp[-t_k]\gamma_{k+1} + 20 \Mtt^2 \exp[-2\alpha t_k] \{1 + \expeLignesqrt{\normLigne{x_0}^4}\} \gamma_{k+1}^2  . \label{eq:almost_almost_there}
\end{align}
We have that for any $\beta >0$,
\begin{equation}
  \textstyle{
 \sum_{k=0}^{N-1} \exp[-\beta t_k] \leq \sum_{k \in \nset} \exp[-\beta
 \gamma_\star k] \leq (1 - \exp[-\beta \gamma_\star])^{-1} \leq 1 + \beta / \gamma_\star  .
 }
\end{equation}
Then using this result, \eqref{eq:almost_almost_there} and \eqref{eq:pinsker_girsanov} we get that
\begin{align}
        &\tvnormLigne{p_\infty \Pbb^R_{T|0} - p_\infty \Qker_0}^2  \textstyle{\leq  \expeLigne{(\rmd p_\infty / \rmd
                                                                        p_T)(x_T)^2}^{1/2}} [4(1 + 2 \Ktt)^2\expeLignesqrt{\normLigne{x_0}^4}(1 +/(2\gamma_\star))(\gamma^\star)^3  \\
  &\qquad \qquad \qquad + 7 (1 + 2 \Ktt)^2 (1 +1/\gamma_\star)(\gamma^\star)^2 + 10 \Mtt^2 \{1 + \expeLignesqrt{\normLigne{x_0}^4}\} (1 +1/(2 \alpha\gamma_\star))  (\gamma^\star)^3]  . 
\end{align}
Therefore, we get that
\begin{align}
        &\tvnormLigne{p_\infty \Pbb^R_{T|0} - p_\infty \Qker_0}  \textstyle{\leq  \expeLigne{(\rmd p_\infty / \rmd
                                                                        p_T)(x_T)^2}^{1/4}} [2(1 + 2 \Ktt)\expeLignesqrtt{\normLigne{x_0}^4}(1 +/(2\gamma_\star)^{1/2})(\gamma^\star)^{3/2}  \\
        &\qquad \qquad \qquad \  + 3 (1 + 2 \Ktt) (1 +1/\gamma_\star^{1/2})\gamma^\star + 4 \Mtt \{1 + \expeLignesqrtt{\normLigne{x_0}^4}\} (1 +1/(2 \alpha\gamma_\star)^{1/2})  (\gamma^\star)^{3/2}] \\
  &\qquad \textstyle{\leq  \expeLigne{(\rmd p_\infty / \rmd
    p_T)(x_T)^2}^{1/4}} [6(1 + 2 \Ktt)(1+\expeLignesqrtt{\normLigne{x_0}^4}) \\
  & \qquad \qquad \qquad + 4 \Mtt \{1 + \expeLignesqrtt{\normLigne{x_0}^4}\} (1 +1/(2 \alpha)^{1/2}) ]((\gamma^\star)^{2}/\gamma_\star)^{1/2} \\
  &\qquad \textstyle{\leq  6 (1+\expeLignesqrtt{\normLigne{x_0}^4}) \expeLigne{(\rmd p_\infty / \rmd
                                                                        p_T)(x_T)^2}^{1/4}} [1 +  \Ktt + \Mtt (1 +1/(2 \alpha)^{1/2}) ]((\gamma^\star)^{2}/\gamma_\star)^{1/2}  ,
\end{align}
which concludes the proof upon using \Cref{item:bound_p_infty}.
    \end{enumerate}    
  \end{proof}

  We now check that the assumption of \Cref{thm:control_diffusion} are satisfied in a Gaussian setting.

\begin{proposition}
  \label{prop:gaussian_case}
  Assume that $p_0 = \mathrm{N}(0, \Sigma)$ and that
  $T \geq 1 +(1/2)[\log^+(\normLigne{\Sigma}) + \log(d+1)]$ then we have that
  for any $t \in \ccint{0,T}$ and $x \in \rset^d$
  \begin{equation}
    \normLigne{\nabla^2 \log p_t(x)} \leq \max(1, \normLigne{\Sigma^{-1}})  , \qquad \normLigne{\partial_t \nabla \log p_t(x)} \leq 2 \exp[-2 t] \max(1, \normLigne{\Sigma^{-1}})^2 \normLigne{\Sigma - \Id}\normLigne{x}  . 
  \end{equation}
  In addition, we have that $\int_{\rset^d} p_\infty(x)^2/p_T(x) \rmd x \leq \sqrt{2}$.
\end{proposition}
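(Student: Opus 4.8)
The plan is to exploit the explicit Gaussian score. Since $p_0 = \mathrm{N}(0,\Sigma)$, the setting of \Cref{sec:gaussian-setting-1} gives $\nabla\log p_t(x) = -\Sigma_t^{-1}x$ with $\Sigma_t = (1-\exp[-2t])\Id + \exp[-2t]\Sigma$, so that $\nabla^2\log p_t(x) = -\Sigma_t^{-1}$ is independent of $x$ and both regularity bounds reduce to spectral estimates on $\Sigma_t$. First I would diagonalise $\Sigma$ and observe that each eigenvalue of $\Sigma_t$ equals $1 + \exp[-2t](\sigma-1) = (1-\exp[-2t])\cdot 1 + \exp[-2t]\,\sigma$, where $\sigma$ ranges over the eigenvalues of $\Sigma$. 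As $\exp[-2t]\in(0,1]$ this is a convex combination of $1$ and $\sigma$, hence lies between $\min(1,\sigma)$ and $\max(1,\sigma)$. In particular $\lambda_{\min}(\Sigma_t) \ge \min(1,\lambda_{\min}(\Sigma))$, which yields $\normLigne{\nabla^2\log p_t(x)} = \normLigne{\Sigma_t^{-1}} = \lambda_{\min}(\Sigma_t)^{-1} \le \max(1,\normLigne{\Sigma^{-1}})$, the first claimed bound.

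For the time-derivative bound I would differentiate directly: $\partial_t\Sigma_t = -2\exp[-2t](\Sigma-\Id)$, hence $\partial_t\Sigma_t^{-1} = 2\exp[-2t]\,\Sigma_t^{-1}(\Sigma-\Id)\Sigma_t^{-1}$ and $\partial_t\nabla\log p_t(x) = -2\exp[-2t]\,\Sigma_t^{-1}(\Sigma-\Id)\Sigma_t^{-1}x$. Taking operator norms and inserting $\normLigne{\Sigma_t^{-1}}\le\max(1,\normLigne{\Sigma^{-1}})$ from the previous step gives exactly $\normLigne{\partial_t\nabla\log p_t(x)} \le 2\exp[-2t]\max(1,\normLigne{\Sigma^{-1}})^2\normLigne{\Sigma-\Id}\normLigne{x}$, so that the hypotheses of \Cref{thm:control_diffusion} hold with $\Ktt = \max(1,\normLigne{\Sigma^{-1}})$, $\alpha = 2$ and $\Mtt = 2\max(1,\normLigne{\Sigma^{-1}})^2\normLigne{\Sigma-\Id}$.

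The integral $\int_{\rset^d} p_\infty(x)^2/p_T(x)\,\rmd x$ is the genuinely quantitative part and I expect it to be the main obstacle. Since $p_\infty = \mathrm{N}(0,\Id)$ and $p_T = \mathrm{N}(0,\Sigma_T)$, the integrand is Gaussian with inverse-covariance $2\Id - \Sigma_T^{-1}$; the integral converges iff $\Sigma_T \succ \tfrac12\Id$, which I would check via $\lambda_{\min}(\Sigma_T) > 1 - \exp[-2T] \ge 1 - \exp[-2] > \tfrac12$ (valid since $T\ge 1$). Evaluating the Gaussian integral then gives $\det(\Sigma_T)^{1/2}\det(2\Id-\Sigma_T^{-1})^{-1/2} = \prod_i \mu_i/(2\mu_i-1)^{1/2}$, where $\mu_i = 1 + \exp[-2T](\sigma_i-1)$ are the eigenvalues of $\Sigma_T$. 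Using the algebraic identity $\mu^2/(2\mu-1) = 1/(1-r^2)$ with $r = (\mu-1)/\mu$, the square of the integral becomes $\prod_i (1-r_i^2)^{-1}$, and it remains to show this product is at most $2$.

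Here the hypothesis on $T$ is exactly what is needed: since $\log^+(\normLigne{\Sigma}) = \log\max(1,\normLigne{\Sigma})$, the condition gives $\exp[-2T] \le \exp[-2]/(\max(1,\normLigne{\Sigma})(d+1))$, and because $\absLigne{\sigma_i-1} \le \max(1,\normLigne{\Sigma})$ and $\mu_i \ge 1-\exp[-2]$, I obtain the uniform bound $\absLigne{r_i} \le \exp[-2]/((d+1)(1-\exp[-2]))$. Then $-\sum_i\log(1-r_i^2) \le 2\sum_i r_i^2 \le 2d\max_i r_i^2 \le \exp[-4]/(1-\exp[-2])^2$, which is far below $\log 2$; exponentiating yields $\prod_i(1-r_i^2)^{-1}\le 2$, hence $\int_{\rset^d} p_\infty^2/p_T \le \sqrt{2}$. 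The only delicate points are securing the convergence condition $\Sigma_T\succ\tfrac12\Id$ and tracking the dimension dependence carefully, so that the factor $(d+1)^{-1}$ coming from the $T$-condition cancels the $d$ terms accumulated in the sum over eigenvalues.
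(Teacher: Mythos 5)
Your proof is correct, and its first two parts coincide with the paper's own argument: writing $\Sigma_t = (1-\mathrm{e}^{-2t})\operatorname{Id} + \mathrm{e}^{-2t}\Sigma$, using the convex-combination bound $\lambda_{\min}(\Sigma_t)\ge\min(1,\lambda_{\min}(\Sigma))$ to get $\normLigne{\Sigma_t^{-1}}\le\max(1,\normLigne{\Sigma^{-1}})$, and differentiating $\Sigma_t^{-1}$ to get the second estimate is exactly what the paper does. Where you genuinely diverge is the bound on $\int_{\rset^d} p_\infty(x)^2/p_T(x)\,\rmd x$. The paper argues pointwise on the density ratio: it lower-bounds the quadratic form $x^\top(2\operatorname{Id}-\Sigma_T^{-1})x \ge (1-\vareps)\normLigne{x}^2$ with $\vareps = 1/(2d)$, integrates to pick up a factor $(1-\vareps)^{-d/2}\le 2^{1/4}$, and then separately controls $\det(\Sigma_T)^{1/2}$ via $\log(1+u)\le u$ and the hypothesis on $T$. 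You instead evaluate the integral exactly, $\int_{\rset^d} p_\infty^2/p_T\,\rmd x = \prod_i \mu_i(2\mu_i-1)^{-1/2} = \prod_i (1-r_i^2)^{-1/2}$ with $r_i = (\mu_i-1)/\mu_i$, and only then estimate. This buys three things: an exact identity in place of the paper's two-step majorization (whose final constants are stated somewhat loosely — read literally, $2^{1/4}\cdot\det(\Sigma_T)^{1/2}\le 2^{1/4}\cdot 2$ would overshoot $\sqrt{2}$, though the derivation actually yields a smaller determinant bound); a transparent role for the hypothesis on $T$, which forces $\absLigne{r_i}\le \mathrm{e}^{-2}/((d+1)(1-\mathrm{e}^{-2}))$ uniformly so that the $d$ eigenvalue contributions sum to $O(1/(d+1))\cdot d$, far below $\log 2$; and the sharper information that the integral is in fact $1+o(1)$ as $T\to\infty$. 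Two small points to make explicit in a final write-up: the inequality $-\log(1-u)\le 2u$ you invoke needs $u\le 1/2$ (trivially satisfied here since $r_i^2 < 10^{-2}$), and $\lambda_{\min}(\Sigma_T)\ge 1-\mathrm{e}^{-2T}$ holds with equality only for singular $\Sigma$, which is excluded anyway since $\normLigne{\Sigma^{-1}}$ appears in the statement; either way $1-\mathrm{e}^{-2} > 1/2$ secures positive definiteness of $2\operatorname{Id}-\Sigma_T^{-1}$ and hence convergence of the Gaussian integral.
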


\begin{proof}
  Recall that for any $t \in \ccint{0,T}$,
  $x_t = \exp[-t] x_0 + w_{1- \exp[-2t]}$. Therefore, we have that for
  any $t \in \ccint{0,T}$, $p_t = \mathrm{N}(0, \Sigma_t)$ with
  $\Sigma_t = \exp[-2t] \Sigma + (1- \exp[-2t]) \Id$. Hence, we get that for any
  $t \in \ccint{0,T}$ and $x \in \rset^d$,
  $\nabla^2 \log p_t(x) = (\exp[-2t] \Sigma + (1 - \exp[-2t] \Id)^{-1}$. Using
  this result, we have that for any $t \in \ccint{0,T}$ and $x \in \rset^d$,
  $\normLigne{\nabla^2 \log p_t(x)} \leq \max(1, \normLigne{\Sigma^{-1}})$.  Similarly,
  for any $t \in \ccint{0,T}$ and $x \in \rset^d$ we have
  \begin{equation}
    \partial_t \nabla \log p_t(x) = \partial_t \Sigma_t^{-1} x = - \Sigma_t^{-1} (\partial_t \Sigma_t) \Sigma_t^{-1} x  . 
  \end{equation}
  Hence, for any $t \in \ccint{0,T}$ and $x \in \rset^d$ we have
  $\normLigne{\partial_t \nabla \log p_t(x)} \leq 2 \exp[-2 t] \max(1,
  \normLigne{\Sigma^{-1}})^2 \normLigne{\Sigma - \Id}\normLigne{x}$. Finally, we have
  that for any $t \in \ccint{0,T}$ and $x \in \rset^d$
  \begin{equation}
   \langle x, [2 \Id - (\exp[-2t] \Sigma + (1 - \exp[-2t]) \Id)^{-1}]x \rangle \geq (2 - (\exp[-2t] \normLigne{\Sigma^{-1}}^{-1} + (1- \exp[-2t]))^{-1})\normLigne{x}^2  . 
 \end{equation}
 Let $\vareps \in \ocint{0,1/2}$.  For any $t \in \ccint{0,T}$, we have that
 $2 - (\exp[-2t] \normLigne{\Sigma^{-1}}^{-1} + (1- \exp[-2t]))^{-1} \geq 1 -
 \vareps$ if and only if
 $\exp[-2t](1 - \normLigne{\Sigma^{-1}}^{-1}) \leq 1 -
 (1+\vareps)^{-1}$. Using that
 $-\log(1 - (1+\vareps)^{-1}) = \log(1 + \vareps^{-1})$ we have that for any
 $t \geq (1/2) \log(1 + \vareps^{-1})$
 and $x \in \rset^d$
 \begin{equation}
   p_\infty(x)^2/p_t(x) \leq \exp[-\normLigne{x}^2/4](2\uppi)^{-d/2} \det(\Sigma_t)  . 
 \end{equation}
 Combining this result and the fact that
 $\int_{\rset^d} \exp[-\normLigne{x}^2/2(1-\vareps)] \rmd x= (2(1-\vareps)
 \uppi)^{d/2}$, we get that for any 
 $t \geq (1/2) \log(1 +
 \vareps^{-1})$ 
 \begin{equation}
   \textstyle{
     \int_{\rset^d} p_\infty(x)^2/p_t(x) \rmd x \leq \int_{\rset^d} \exp[-\normLigne{x}^2/2(1-\vareps)](2\uppi)^{-d/2} \det(\Sigma_t)^{1/2}  \rmd x \leq (1-\vareps)^{d/2} \det(\Sigma_t)^{1/2} .
     }
   \end{equation}
   Let $\vareps = 1/(2d) \leq 1/2$. Note that
   $T \geq (1/2) \{-\log(\absLigne{\normLigne{\Sigma^{-1}}^{-1}-1}) + \log(1 +
   2d)\}$. Hence, we have that
   \begin{equation}
      \textstyle{
        \int_{\rset^d} p_\infty(x)^2/p_T(x) \rmd x \leq \exp[-\log(1-1/(2d))(d/2)] \det(\Sigma_T)^{1/2} .
        }
   \end{equation}
Since for any $t \in \coint{0,1/2}$, $-\log(1-t)\leq \log(2)t$ we get that
\begin{equation}
  \label{eq:inequality_inter}
    \textstyle{
        \int_{\rset^d} p_\infty(x)^2/p_T(x) \rmd x \leq 2^{1/4} \det(\Sigma_T)^{1/2} .
        }  
\end{equation}
Finally, using that $\Sigma_T = \exp[-2T] \Sigma + (1 - \exp[-2T]) \Id$ we have that
\begin{equation}
  \det(\Sigma_T)^{1/2} \leq (\exp[-2T]\normLigne{\Sigma} + 1 - \exp[-2T])^{d/2} \leq (1 + \exp[-2T] \normLigne{\Sigma})^{d/2} .
\end{equation}
Hence, using that result and that for any $t \geq 0$, $\log(1+t) \leq t$ we have
\begin{equation}
  \det(\Sigma_T)^{1/2} \leq \exp[\exp[-2T]\normLigne{\Sigma}(d/2)] \eqsp . 
\end{equation}
Since,
$T \geq (1/2) \{ \log(\normLigne{\Sigma}) + \log(d) + \log(2) -
\log(\log(2^{1/4})) \}$, we get that $\det(\Sigma_T)^{1/2} \leq 2$, which
concludes the proof upon combining this result and \eqref{eq:inequality_inter}.
\end{proof}

Therefore, we get the following simplified result in the Gaussian setting.

\begin{corollary}
  Assume that $p = \mathrm{N}(0, \Sigma)$, with
  $\normLigne{\Sigma^{-1}} \geq 1$, $\gamma^\star = \gamma_\star= \gamma >0$ and
  $T \geq 1 + (1/2)[\log^+(\normLigne{\Sigma}) + \log(d +1)]$, then we have
  \begin{align}
    \tvnormLigne{\mathcal{L}(X_0) - p_0} &\leq \textstyle{\exp[-T/2] (\log^+(\normLigne{\Sigma^{-1}}) + \normLigne{\Sigma - \Id})^{1/2}} \\
    & \qquad \qquad  \textstyle{+ 48  (1+\normLigne{\Sigma}^{1/2}d^{1/2}) \normLigne{\Sigma^{-1}}^2 [1 +  \normLigne{\Sigma - \Id} ]\sqrt{\gamma}  .} 
  \end{align}  
\end{corollary}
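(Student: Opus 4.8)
The plan is to specialize the quantitative estimate produced inside the proof of \Cref{thm:control_diffusion} to the Gaussian target $p = \mathrm{N}(0,\Sigma)$, feeding in the explicit regularity constants supplied by \Cref{prop:gaussian_case}. Recall that the proof of \Cref{thm:control_diffusion} yields, after the two-term triangle-inequality split and before any simplification, the bound
\begin{multline*}
\tvnormLigne{\mathcal{L}(X_0) - p_0} \leq \sqrt{2}\,\rme^{-T}\KLLignesqrt{p_0}{p_\infty} \\ + 6\,(1 + \expeLignesqrtt{\normLigne{x_0}^4})\,\expeLigne{(\rmd p_\infty/\rmd p_T)(x_T)^2}^{1/4}\,[1 + \Ktt + \Mtt(1 + 1/(2\alpha)^{1/2})]\,\gamma^{1/2},
\end{multline*}
where I have used $\gamma^\star = \gamma_\star = \gamma$ so that $((\gamma^\star)^2/\gamma_\star)^{1/2} = \gamma^{1/2}$. \Cref{prop:gaussian_case} verifies the hypotheses of \Cref{thm:control_diffusion} with the explicit values $\Ktt = \max(1,\normLigne{\Sigma^{-1}}) = \normLigne{\Sigma^{-1}}$ (using $\normLigne{\Sigma^{-1}}\geq 1$), $\Mtt = 2\normLigne{\Sigma^{-1}}^2\normLigne{\Sigma - \Id}$, $\alpha = 2$, and the chi-square control $\expeLigne{(\rmd p_\infty/\rmd p_T)(x_T)^2} = \int_{\rset^d} p_\infty^2/p_T \leq \sqrt{2}$; this last inequality is exactly where the lower bound on $T$ first enters. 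So the whole corollary is, in principle, a substitution of these four quantities followed by collection of universal constants.

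First I would control the statistical term $\sqrt{2}\,\rme^{-T}\KLLignesqrt{p_0}{p_\infty}$. By the closed form \eqref{eq:kl_computation} of \Cref{lemma:kl_gaussian}, $\KLLigne{p_0}{p_\infty} = \tfrac12\sum_i(\lambda_i - 1 - \log\lambda_i)$ where $\{\lambda_i\}$ are the eigenvalues of $\Sigma$. Splitting according to whether $\lambda_i \geq 1$ or $\lambda_i < 1$, and using $\lambda_i - 1 \leq \normLigne{\Sigma - \Id}$ in the first case and $-\log\lambda_i \leq \log\normLigne{\Sigma^{-1}}$ in the second, each summand is at most $\log^+\normLigne{\Sigma^{-1}} + \normLigne{\Sigma - \Id}$, whence $\KLLigne{p_0}{p_\infty} \leq (d/2)(\log^+\normLigne{\Sigma^{-1}} + \normLigne{\Sigma - \Id})$. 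The delicate point is then dimensional bookkeeping: the factor $\sqrt{d}$ emerging from $\KLLignesqrt{\cdot}{\cdot}$ must be absorbed into the exponential via the lower bound on $T$, which gives $\sqrt{d+1}\leq \rme^{T-1}$, trading the dimensional prefactor against the exponential decay and producing a term of the advertised $\rme^{-T/2}$-type form.

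For the discretization term I would bound the fourth moment by writing $x_0 = \Sigma^{1/2}g$ with $g \sim \mathrm{N}(0,\Id)$, giving $\expeLigne{\normLigne{x_0}^4} = (\trace\Sigma)^2 + 2\trace(\Sigma^2) \leq 3(\trace\Sigma)^2 \leq 3 d^2 \normLigne{\Sigma}^2$, so that $1 + \expeLignesqrtt{\normLigne{x_0}^4} \leq 3^{1/4}(1 + \normLigne{\Sigma}^{1/2} d^{1/2})$, matching the dimensional factor in the claimed constant. Substituting $\Ktt,\Mtt,\alpha$ and using $\normLigne{\Sigma^{-1}}\geq 1$ to collapse $1 + \Ktt + \Mtt(1 + 1/(2\alpha)^{1/2}) = 1 + \normLigne{\Sigma^{-1}} + 3\normLigne{\Sigma^{-1}}^2\normLigne{\Sigma - \Id} \leq 3\normLigne{\Sigma^{-1}}^2(1 + \normLigne{\Sigma - \Id})$, and inserting the chi-square bound $\sqrt{2}$, the product of all numerical prefactors (roughly $6\cdot 2^{1/8}\cdot 3^{1/4}\cdot 3 \approx 26$) sits comfortably below $48$, yielding the second summand.

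I expect the main obstacle to be the two instances of dimensional bookkeeping rather than any individual estimate: matching the exact $\rme^{-T/2}$ decay rate in the first term requires carefully trading the $\sqrt{d}$ arising from the Kullback--Leibler divergence against the exponential through the hypothesis on $T$, and the eigenvalue-wise control of the KL divergence must hold uniformly in $\Sigma$. Everything else reduces to a mechanical substitution of the constants from \Cref{prop:gaussian_case} into the quantitative conclusion of \Cref{thm:control_diffusion}, followed by a final aggregation of universal constants.
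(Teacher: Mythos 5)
Your overall route is exactly the paper's: specialize the quantitative bound obtained inside the proof of \Cref{thm:control_diffusion} with the constants $\Ktt = \normLigne{\Sigma^{-1}}$, $\Mtt = 2\normLigne{\Sigma^{-1}}^2\normLigne{\Sigma - \Id}$, $\alpha = 2$ and the $\chi^2$ control $\int_{\rset^d} p_\infty^2/p_T \le \sqrt{2}$ supplied by \Cref{prop:gaussian_case}, then estimate the Gaussian Kullback--Leibler term and the fourth moment. Your treatment of the discretization term is correct and matches the paper's step for step: the moment computation $\expeLigne{\normLigne{x_0}^4} = (\trace \Sigma)^2 + 2\trace(\Sigma^2) \le 3 d^2 \normLigne{\Sigma}^2$, the collapse $1 + \Ktt + \Mtt(1 + 1/(2\alpha)^{1/2}) \le 3\normLigne{\Sigma^{-1}}^2(1 + \normLigne{\Sigma - \Id})$ (valid precisely because $\normLigne{\Sigma^{-1}} \ge 1$), and the aggregation of universal constants to roughly $26 \le 48$ all go through.

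The gap is in your handling of the first term. Your per-eigenvalue estimate $\KLLigne{p_0}{p_\infty} \le (d/2)(\log^+\normLigne{\Sigma^{-1}} + \normLigne{\Sigma - \Id})$ is right, so the theorem's term becomes $\sqrt{2}\,\rme^{-T}\KLLignesqrt{p_0}{p_\infty} \le \rme^{-T}\sqrt{d}\,(\log^+\normLigne{\Sigma^{-1}} + \normLigne{\Sigma - \Id})^{1/2}$, and the whole issue is whether $\rme^{-T}\sqrt{d} \le \rme^{-T/2}$, i.e.\ whether $\rme^{-T/2}\sqrt{d} \le 1$, i.e.\ $T \ge \log d$. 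Your proposed trade, $\sqrt{d+1} \le \rme^{T-1}$, does not deliver this: spending the whole inequality gives $\rme^{-T}\sqrt{d} \le \rme^{-1}$, a constant with no residual decay in $T$ (strictly weaker than the claimed $\rme^{-T/2}$ for large $T$); reserving the factor $\rme^{-T/2}$ instead leaves you needing $\rme^{-T/2}\sqrt{d} \le 1$, whereas the hypothesis $T \ge 1 + (1/2)[\log^+\normLigne{\Sigma} + \log(d+1)]$ only yields $\rme^{-T/2}\sqrt{d} \le \rme^{-1/2}\normLigne{\Sigma}^{-1/4} d^{1/2}(d+1)^{-1/4} \approx \rme^{-1/2}(d/\normLigne{\Sigma})^{1/4}$, which is unbounded in $d$ (the hypothesis is weaker than $T \ge \log d$ as soon as $d \gtrsim \rme^{2}\normLigne{\Sigma}$). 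To be fair, the paper's own proof is no better on this point: its last displayed inequality replaces $(-\log\det(\Sigma) + \trace(\Sigma) - d)^{1/2}$ by $(\log^+\normLigne{\Sigma^{-1}} + \normLigne{\Sigma - \Id})^{1/2}$, which silently drops the same $\sqrt{d}$. So you have faithfully reconstructed the argument, including its weak point; but as written your absorption step fails, and a tight proof of the first term as displayed would require either strengthening the hypothesis to $T \gtrsim \log d$ or retaining a dimension-dependent factor in the bound.
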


\begin{proof}
  Using \eqref{eq:kl_computation} and \Cref{prop:gaussian_case} we have 
  \begin{align}
    &\tvnormLigne{\mathcal{L}(X_0) - p_0} \leq \textstyle{\exp[-T/2] (-\log(\det(\Sigma)) + \mathrm{Tr}(\Sigma) -d)^{1/2}} \\
    & \qquad \qquad \qquad \qquad \qquad   \textstyle{+ 12  (1+(\int_{\rset^d} \normLigne{x}^4 \rmd p_0(x))^{1/4})  [1 +  \Ktt + 2C ]\sqrt{(\gamma^\star)^{2}/\gamma_\star} } \\
    &\qquad \leq \textstyle{\exp[-T/2] (-\log(\det(\Sigma)) + \mathrm{Tr}(\Sigma) -d)^{1/2}} \\
    & \qquad \qquad  \textstyle{+ 12  (1+(\int_{\rset^d} \normLigne{x}^4 \rmd p_0(x))^{1/4})  [1 +  \normLigne{\Sigma^{-1}} + 2\normLigne{\Sigma^{-1}}^2 \normLigne{\Sigma - \Id} ]\sqrt{(\gamma^\star)^{2}/\gamma_\star}  } \\
    &\qquad \leq \textstyle{\exp[-T/2] (-\log(\det(\Sigma)) + \mathrm{Tr}(\Sigma) -d)^{1/2}} \\
    & \qquad \qquad  \textstyle{+ 12  (1+3^{1/4}\normLigne{\Sigma}^{1/2}d^{1/2})  [1 +  \normLigne{\Sigma^{-1}} + 2\normLigne{\Sigma^{-1}}^2 \normLigne{\Sigma - \Id} ]\sqrt{(\gamma^\star)^{2}/\gamma_\star}  } \\
    &\qquad \leq \textstyle{\exp[-T/2] (-\log(\det(\Sigma)) + \mathrm{Tr}(\Sigma) -d)^{1/2}} \\
    & \qquad \qquad  \textstyle{+ 48  (1+\normLigne{\Sigma}^{1/2}d^{1/2}) \normLigne{\Sigma^{-1}}^2 [1 +  \normLigne{\Sigma - \Id} ]\sqrt{(\gamma^\star)^{2}/\gamma_\star}  } \\
    &\qquad \leq \textstyle{\exp[-T/2] (\log^+(\normLigne{\Sigma^{-1}}) + \normLigne{\Sigma - \Id})^{1/2}} \\
    & \qquad \qquad  \textstyle{+ 48  (1+\normLigne{\Sigma}^{1/2}d^{1/2}) \normLigne{\Sigma^{-1}}^2 [1 +  \normLigne{\Sigma - \Id} ]\sqrt{(\gamma^\star)^{2}/\gamma_\star}  .}             
  \end{align}  
\end{proof}


\section{Proof of \Cref{lemma:bound_spectrum}}
\label{sec:proof33}

\begin{proof}
  For any $x$ and $j$, denote $\bar{p}_{j,0}(\cdot|x)$ the distribution of
  $\twx_{j,0}$ given $x_j = x$ and $p_{j,0}$ the distribution of
  $\tx_{j,0}$. For any $j$ we have
  \begin{equation}
    \KLLigne{p_j}{p_{j,0}} = \KLLigne{p_{j+1}}{p_{j+1,0}} + \expeLigne{\KLLigne{\bar{p}_{j+1}(\cdot|x_{j+1})}{\bar{p}_{j+1,0}(\cdot|x_{j+1}) }}  . 
  \end{equation}
  By recursion, we have that
  \begin{equation}
    \textstyle{
      \KLLigne{p}{p_{0}} = \KLLigne{p_{J}}{p_{j,0}} + \sum_{j=1}^{J} \expeLigne{\KLLigne{\bar{p}_{j}(\cdot|x_{j})}{\bar{p}_{j,0}(\cdot|x_{j}) }}  .
    }
    \end{equation}
    Combining \Cref{sec:conv-results-discr-2} and \Cref{sec:norm-covar},
    we get that
  \begin{equation}
    \textstyle{
      \KLLigne{p}{p_{0}} \leq (\delta + \exp[-4T]) (2^{-J}L)^n + \sum_{j=1}^J (\delta + \exp[-4T]) (2^{-j}L)^n(2^n-1) +E_{T, \delta}  .
    }
  \end{equation}
  Therefore, $\KLLigne{p}{p_{0}} \leq (\delta + \exp[-4T]) L^n + E_{T, \delta}$, which concludes the proof.
\end{proof}


\section{Experimental Details on Gaussian Experiments}\label{app:gaussian_exps}

We now give some details on the experiments in \Cref{sec:time-sampl-accur} (\Cref{fig:gaussian}). We use the exact formulas for the Stein score of $p_t$ in this case: if $x_0 \sim \mathcal{N}(M,\Sigma)$, then $x_t \sim N(M_t, \Sigma_t)$ with $M_t = e^{-t}M$ and
$$ \Sigma_t = \mathrm{e}^{-2t}\Sigma + (1 - \mathrm{e}^{-2t})\Id.$$
Under an ideal situation where there is no score error, the discretization of the (backward) generative process is given by equation \eqref{eq:recur_recu}: 
\begin{equation}
  x_{k+1} = ((1 + \delta) \Id - 2\delta \Sigma_{T - k \delta}^{-1}) x_k  + 2 \delta \Sigma_{T - k \delta}^{-1} M_{T-k \delta}+ \sqrt{2 \delta} z_{k+1} \eqsp ,
\end{equation}
where $\delta$ is the uniform step size and $z_k$ are iid white Gaussian random variables. For the SGM case, $M=0$. The starting step of this discretization is itself $x_0 \sim \mathcal{N}(0,\Id)$. From this formula, the covariance matrix $\hat{\Sigma}_k$ of $x_k$ satisfies the recursion \eqref{eq:recursion_cov}: 
\begin{equation}
  \hat{\Sigma}_{k+1} = ((1 + \delta) \Id - 2\delta \Sigma_{T - k \delta}^{-1}) \hat{\Sigma}_k ((1 + \delta) \Id - 2\delta \Sigma_{T - k \delta}^{-1}) + 2 \delta \Id \eqsp ,
\end{equation}
from which we can exactly compute $\hat{\Sigma}_k$ for very $k$, and especially for $k=N=T/\delta$, as a function of $\Sigma$, the final time $T$, and the step size $\delta$. In all our experiments, we choose stationary processes: their covariance $\Sigma$ is diagonal in a Fourier basis, with eigenvalues (\emph{power spectrum}) noted $\hat{P}_k$. All the $x_k$ remain stationary so $\hat{\Sigma}_k$ is still diagonal in a Fourier basis, with power spectrum noted $\hat{P}_k$. The error displayed in the left panel of  \Cref{fig:gaussian} is: $$\Vert \hat{P}_N - P\Vert = \max_\omega |\hat{P}_N(\omega) - P(\omega)|/\max_\omega |P(\omega)|,$$ normalized by the operator norm of $\Sigma$. 

The illustration in the middle panel of \Cref{fig:gaussian}, for WSGM, is done for simplicity only at one scale (ie, at $j=1$ in \Cref{alg:cascaded_wavelet}): instead of stacking the full cascade of conditional distributions for all $j=J, \dotsc, 1$, we use the true low-frequencies $\txwav_{j,0} = x_1$. Here, we use Daubechies-4 wavelets. We sample $\tox_{j,0}$ using the Euler-Maruyama recursion \eqref{eq:recur_recu}-\eqref{eq:recursion_cov} for the conditional distribution. We recall that in the Gaussian case, $\bar{x}_1$ and $x_1$ are jointly Gaussian. The conditional distribution of $\bar{x}_1$ given $x_1$ is known to be $\mathcal{N}(Ax_1, \Gamma)$, where:
\begin{align}
    &A = - \mathrm{Cov}(\bar{x}_1, x_1)\mathrm{Var}(x_1)^{-1}, && \Gamma = \mathrm{Var}(\bar{x}_1) - \mathrm{Cov}(\bar{x}_1, x_1)\mathrm{Var}(x_1)^{-1}\mathrm{Cov}(\bar{x}_1, x_1)^\top.
\end{align}
We solve the recursion \eqref{eq:recursion_cov} with a step size $\delta$ and $N = T/\delta$ steps; the sampled conditional wavelet coefficients $\tox_{j,0}$ have conditional distribution noted $\mathcal{N}(\hat{A}_N x, \hat{\Gamma}_N)$. The full covariance of $(\tx_{j,0}, \tox_{j,0})$, written in the basis given by the high/low frequencies, is now given by
$$ \hat{\Sigma}_N = \begin{bmatrix}\hat{\Gamma}_N & \mathrm{Cov}(x_1, \bar{x}_1)\hat{A}_N^\top \\ \hat{A}_N \mathrm{Cov}(x_1, \bar{x}_1)^\top & \mathrm{Cov}(x_1, x_1) \end{bmatrix}.$$
\Cref{fig:gaussian}, middle panel compares the eigenvalues (power spectrum) of these covariances, as a function of $\delta$, with the ones of $\Sigma$. 

The right panel of \Cref{fig:gaussian} gives the smallest $N$ needed to reach $\Vert \hat{P}_N - P \Vert = 0.1$ in both cases (SGM and WSGM), based on a power law extrapolation of the curves $N\mapsto \hat{P}_N$.

\section{Experimental Details on the $\varphi^4$ Model}
\label{sec:deta-varph-model}
In this section, we develop and precise the results in \Cref{sec:results_phi4}.

\subsection{The Critical $\varphi^4$ Process and its Stein Score Regularity }

The macroscopic energy of non-Gaussian distributions can be specified as shown in \eqref{eq:gibbs}, 
where $K$ is a coupling matrix and $V$ is non-quadratic potential. The $\varphi^4$-model over the $L\times L$ periodic grid is the special case defined by $C=\Delta$ (the two-dimensional discrete Laplacian) and $V$ is a quartic potential: 
\begin{equation}\label{eq:defin_energy}
  \textstyle{
    E(x) = \frac{\beta}{2}\sum_{|u-v|=1} (x(u) - x(v))^2 + \sum_u (x(u)^2 - 1)^2.
    }
\end{equation}
Here, $\beta$ is a parameter proportional to an inverse temperature.

In physics, the $\varphi^4$ model is a typical example of second-order phase transitions: the quadratic part reduces spatial fluctuations, and $V$ favors configurations whose entries remain close to $\pm 1$ (in physics, this is often called a \emph{double-well potential}). In the thermodynamic limit $L\to \infty$, both term compete according to the value of $\beta$. 
\begin{itemize}
    \item For $\beta \ll 1$, the quadratic term becomes negligible and the marginals of the field become independent; this is the disordered state.
    \item For $\beta \gg 1$, the quadratic term favors configuration which are spatially smooth and the potential term drives the values towards $\pm 1$, resulting in an ordered state, where all values of the field are simultaneously close to either $+1$ or to $-1$.
\end{itemize}

\begin{figure}
  \begin{center}
    \includegraphics[width=0.2\textwidth]{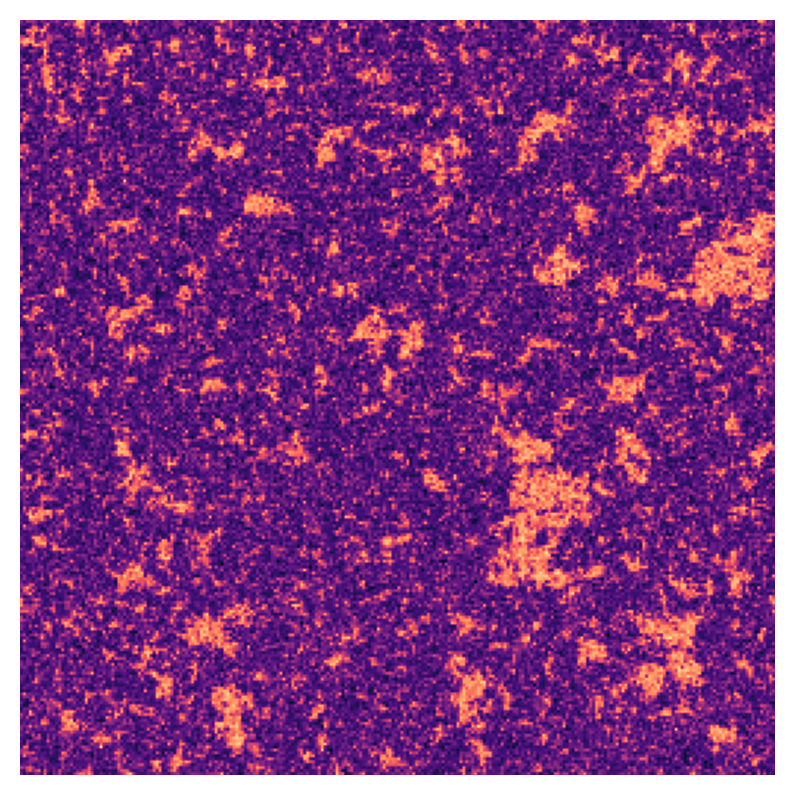}    \includegraphics[width=0.2\textwidth]{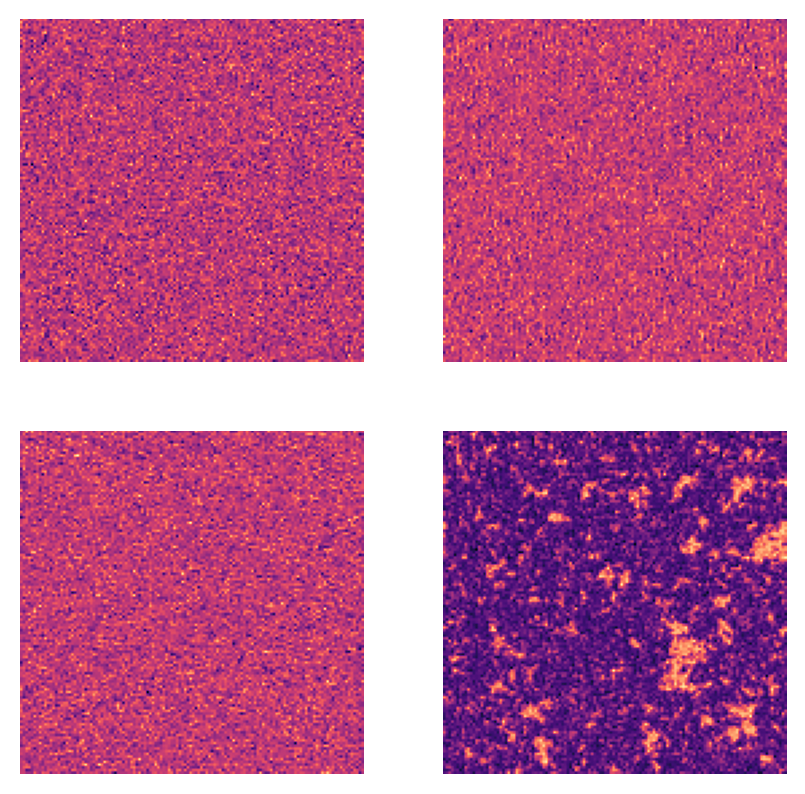}
  \end{center}
  \caption{Example of a realization of a $\varphi^4$ critical field ($L=256$) with its wavelet decomposition on the left (lower-frequencies are on bottom right panel). }
\end{figure}

 A phase transition occurs between these two regimes at a critical temperature $\beta_c \sim 0.68$ \cite{milchev1986finite,  kaupuvzs2016corrections}. At this point, the $\varphi^4$ field display very long-range correlations and an ill-conditioned Hessian $\nabla^2 \log p$. The sampling of $\varphi^4$ at this critical point becomes very difficult. This ``critical slowing down'' phenomenon is why, from a machine learning point of view, the critical $\varphi^4$ field is an excellent example of hard-to-learn and hard-to-sample distribution, still accessible for mathematical analysis.

Our wavelet diffusion considers the sampling of the conditional probability $p(\bar{x}_1|x_1)$ instead of $p(x_0)$, by inverting the noise diffusion projected on the wavelet coefficients. \Cref{thm:control_diffusion} indicates that the loss obtained with any SGM-type method depends on the regularity parameters of $\nabla \log p_t$ in \eqref{params:thm2}. 

Strictly speaking, to get a bound on $K$ we should control the norm of $\nabla^2 \log p_t$ over all $x$ and $t$. However, a look at the proof of the theorem indicates that this control does not have to be uniform in $x$; for instance, there is no need to control this Hessian in domains which have an extremely small probability under $p_t$. Moreover, since $p_t$ is a convolution between $p_0$ and a Gaussian, we expect that a control over $\nabla^2 \log p_0(x)$ will actually be sufficient to control $\nabla^2\log p_t(x)$ for all $t>0$; these facts are non-rigorous for the moment. The distribution of some spectral statistics of $\nabla^2 \log p_0$ over samples drawn from the $\varphi^4$-model are shown in  \Cref{fig:hessian_conditioning} (blue).

Considering conditional probabilities $\bar{p}$ instead of $p$ acts on the Hessian of the $\varphi^4$-energy as a projection over the wavelet field: in the general context of \eqref{eq:gibbs}, 

\begin{align}\label{eq:hessian_cond}
 -\nabla^2_x \log p(x_0) = K + \nabla^2 V(x_0) , \qquad -\nabla^2_{\bar{x}_1} \log p(\bar{x}_1|x_1) =  \gamma^2 \bar{G}(K  +  \nabla^2 V(x_0)) \bar{G}^\top .
\end{align} 
The proof is in \Cref{sec:deta-varph-model}. The  distribution of the conditioning number of $\nabla_x^2 \log p$ and $ \nabla^2_{\bar{x}}\log p$ over samples drawn from the $\varphi^4$ model is shown at \Cref{fig:hessian_conditioning}: the Hessian of the wavelet log-probability is orders-of-magnitude better conditioned than its single-scale counterpart, with a very concentrated distribution. The same phenomenon occurs at each scale $j$, and the same is true for $\lambda_{\min},\lambda_{\max}$. It turns out that considering wavelet coefficient not only concentrates these eigenvalues, but also drives $\lambda_{\min}$ away from 0. In the context of multiscale Gaussian processes, \Cref{sec:norm-covar} gives a rigorous proof of this phenomenon. In the general case, $\nabla^2 \log p_t$ is not reduced to the inverse of a covariance matrix, but we expect the same phenomenon to be true.

\begin{figure}
    \centering
    \includegraphics[width=0.8\textwidth]{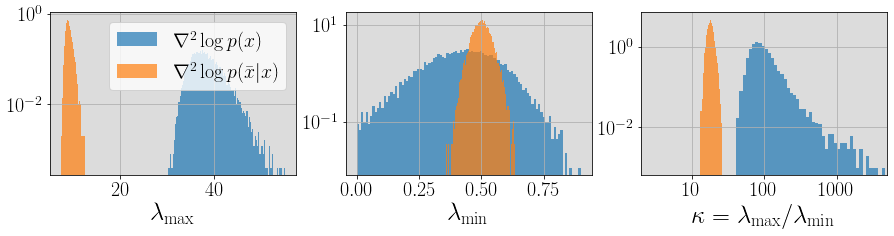}
    \caption{Histograms of $10^5$ realizations of $\lambda_{\min}, \lambda_{\max}$ and $\kappa = \lambda_{\max}/\lambda_{\min}$ of the  Hessian matrices in \eqref{eq:hessian_cond} for the critical $\varphi^4$ model in dimension $L=32$. The mean values of $\kappa$ are respectively $\mu=18.32$ and $\bar{\mu} = 210.53$; standard deviations are $\sigma = 1.78$ and $\bar{\sigma} = 9451.37$.  }
    \label{fig:hessian_conditioning}
\end{figure}

\subsection{Score Models and Details on our Numerical Experiments of $\varphi^4$}\label{app:score_phi4}

In this section, we give some details on our numerical experiments from \Cref{sec:results_phi4}. 

\subsubsection*{Training Data and Wavelets}

We used samples from the $\varphi^4$ model generated using a classical MCMC algorithm --- the sampling script will be publicly available in our repository. 

The wavelet decompositions of our fields were performed using Python's \texttt{pywavelets} package and \texttt{Pytorch Wavelets} package. For synthetic experiments, we used the Daubechies wavelets with $p=4$ vanishing moments (see \cite[Section 7.2.3]{mallat1999wavelet}).

\subsubsection*{Score Model}

At the first scale $j=0$, the distribution of the $\varphi^4$ models falls into the general form given in \eqref{eq:gibbs}, and it is assumed that at each scale $j$, the distribution of the field at scale $j$ still assumes this shape --- with modified constants and coupling parameters. The score model we use at each scale is given by:
\begin{equation}\label{eq:phi4score}
  \textstyle{
    {\bs}_{K,\theta}(x) = \frac{1}{2}x^\top K x + \sum_{u}(\theta_1v_1(x(u)) + \dotsb + \theta_m v_m(x(u))),
    }
\end{equation}
where the parameters are $K,\theta_1, \dotsc, \theta_m$ and $v_i$ are a family of smooth functions. One can also represent this score as $s_{K,\theta} = K \cdot xx^\top + \theta^\top U(x)$ where $U_i(x) = \sum_u v_i(x(u))$.

\subsubsection*{Learning}

We trained our various algorithms using SGM or WSGM up to a time $T=5$ with $n_{\mathrm{train}} = 2000$ steps of forward diffusion. At each step $t$, the parameters were learned by minimizing the score loss:
$$ \ell(K,\theta) = \mathbb{E}[|\nabla s_{K,\theta}(x_t)|^2 + 2 \Delta_x s_{K,\theta}(x_t)]$$
using the Adam optimiser with learning rate $\texttt{lr} = 0.01$ and default parameters $\alpha, \beta$. At the start of the diffusion ($t=0$) we use 10000 steps of gradient descent. For $t>1$, we use only 100 steps of gradient descent, but initialized at $(K_{t-1}, \theta_{t-1})$.

\subsubsection*{Sampling}

For the sampling, we used uniform steps of discretization. 

For the error metric, we first measure the $L^2$-norm between the power spectra $P, \hat{P}$ of the true $\varphi^4$ samples, and our syntheses; more precisely, we set: 
$$ D_1 = \Vert P - \hat{P}\Vert^2.$$
This error on second-order statistics is perfectly suitable for Gaussian processes, but must be refined for non-Gaussian processes. We also consider the total variation distance between the histograms of the marginal distributions (in the case of two-dimensions, pixel-wise histograms). We note this error $D_2$; our final error measure is $D_1 + D_2$. This is the error used in Figure \Cref{fig:hessian_conditioning}. 

\subsection{Proofs of \eqref{eq:hessian_cond}}

In the sequel, $\nabla f$ is the gradient of a function $f$, and $\nabla^2$ is the Hessian matrix of $f$. The \emph{Laplacian} of $f$ is the trace of the Hessian. 

\begin{lemma}
Let $U:\mathbb{R}^n \to \mathbb{R}$ be smooth and $M$ be a $n\times m$ matrix. We set $F(x) = U(Mx)$ where $x\in \mathbb{R}^m$. Then, $\nabla^2 F(x) = M^\top \nabla^2 U(x) M$. 
\end{lemma}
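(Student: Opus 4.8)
The statement is a direct consequence of the chain rule, so the plan is simply to organize the two successive differentiations and keep careful track of the evaluation points and matrix dimensions. First I would fix coordinates: write $y = Mx \in \mathbb{R}^n$, so that $y_i = \sum_{k=1}^m M_{ik} x_k$ and $F(x) = U(y)$. The first-order chain rule then gives
\begin{equation}
  \partial_{x_k} F(x) = \sum_{i=1}^n (\partial_{y_i} U)(Mx)\, \partial_{x_k} y_i = \sum_{i=1}^n M_{ik}\, (\partial_{y_i} U)(Mx),
\end{equation}
which in matrix form reads $\nabla F(x) = M^{\top}\, \nabla U(Mx)$, a vector in $\mathbb{R}^m$ since $M^{\top}$ is $m \times n$ and $\nabla U(Mx) \in \mathbb{R}^n$.

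Next I would differentiate once more. Applying $\partial_{x_\ell}$ to the expression above and using the chain rule on the inner factor $(\partial_{y_i} U)(Mx)$, whose dependence on $x$ again passes through $y = Mx$, I obtain
\begin{equation}
  \partial_{x_\ell}\partial_{x_k} F(x) = \sum_{i=1}^n M_{ik} \sum_{j=1}^n (\partial_{y_j}\partial_{y_i} U)(Mx)\, M_{j\ell} = \sum_{i,j} M_{ik}\, [\nabla^2 U(Mx)]_{ij}\, M_{j\ell}.
\end{equation}
Recognizing the right-hand side as the $(k,\ell)$ entry of $M^{\top} \nabla^2 U(Mx) M$ then yields $\nabla^2 F(x) = M^{\top} \nabla^2 U(Mx) M$, which is the claimed identity (with the understanding that $\nabla^2 U$ is evaluated at the point $Mx$, as is needed for the application to \eqref{eq:hessian_cond} where $M = \bar G$ and $U = V$).

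There is no genuine obstacle here: the result is pure multivariable calculus, valid because $U$ is smooth so that all second partials exist and Schwarz's theorem applies. The only points requiring attention are bookkeeping, namely tracking that the outer derivative contributes the factor $M_{ik}$ while the second differentiation contributes $M_{j\ell}$ through the inner $y$-dependence, and being explicit that the Hessian of $U$ is evaluated at $Mx$ rather than at $x$. To apply this to the energy $E$ of \eqref{eq:gibbs}, I would invoke the lemma with $M = \bar G$ and $U = V$ (together with the quadratic part, whose Hessian transforms as $\bar G K \bar G^{\top}$ by the same computation specialized to a quadratic form), and incorporate the normalization factor $\gamma$, giving the stated $-\nabla^2_{\bar x_1} \log p(\bar x_1 | x_1) = \gamma^2 \bar G (K + \nabla^2 V(x_0)) \bar G^{\top}$.
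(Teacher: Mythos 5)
Your proof is correct and follows essentially the same route as the paper's: two applications of the chain rule with explicit index bookkeeping, identifying $\partial_{x_\ell}\partial_{x_k}F(x)=\sum_{i,j}M_{ik}[\nabla^2 U(Mx)]_{ij}M_{j\ell}$ as the $(k,\ell)$ entry of $M^\top \nabla^2 U(Mx)\,M$. You also rightly note that the Hessian of $U$ must be evaluated at $Mx$ — the lemma statement's ``$\nabla^2 U(x)$'' is a slight imprecision that the paper's own proof (and yours) implicitly corrects.
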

\begin{proof}
  Let $U:\mathbb{R}^n \to \mathbb{R}$ be smooth and $M$ be a $n\times m$ matrix. Then, $$\partial_k F(x) = \sum_{i=1}^n M_{i,k} (\partial_i U)(Mx).$$ Similarly,  
  \begin{equation}\label{p:partialij}\partial_{k,\ell}F(x) = \sum_{i=1}^n \sum_{j=1}^n M_{i,k}M_{j,\ell}\partial_{j}(\partial_iU)(Mx).\end{equation}
  This is equal to $(M^\top \nabla^2U M)_{k,\ell}$. 
\end{proof}

\begin{lemma}Under the setting of the preceding lemma, if $U(x) = \sum_{i=1}^n f(x_i)$, then (1) $\nabla^2 U(x) = \mathrm{diag}(u''(x_1), \dotsc, u''(x_n))$ and (2) the Laplacian of $F(x) = U(Mx)$ is given by 
$$ \Delta F(x) = \sum_{i=1}^n (M^\top M)_{i,i}u''(x_i). $$
\end{lemma}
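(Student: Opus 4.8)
The plan is to reduce both parts to elementary coordinate-wise differentiation, invoking the preceding lemma for the second claim. Write $u$ for the scalar function, so that $U(y) = \sum_{i=1}^n u(y_i)$ for $y \in \R^n$, and recall $F(x) = U(Mx)$ with $M \in \R^{n\times m}$ and $x \in \R^m$.

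First I would prove (1). Because $U$ is a sum of functions each depending on a single coordinate, the $k$-th first partial is $\partial_k U(y) = u'(y_k)$, and differentiating once more only the $k = \ell$ term survives, giving $\partial_\ell\partial_k U(y) = u''(y_k)\,\delta_{k\ell}$ with $\delta_{k\ell}$ the Kronecker symbol. Hence $\nabla^2 U(y) = \mathrm{diag}(u''(y_1),\dots,u''(y_n))$, which is exactly (1).

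For (2) I would take the trace of the Hessian of $F$, since the Laplacian equals the trace of the Hessian. The preceding lemma gives $\nabla^2 F(x) = M^\top \nabla^2 U(Mx)\,M$, so
\[
\Delta F(x) = \trace\!\big(M^\top \nabla^2 U(Mx)\,M\big).
\]
Writing $D = \nabla^2 U(Mx)$, which by (1) is diagonal with entries $D_{ii} = u''((Mx)_i)$, the cyclicity of the trace yields $\trace(M^\top D M) = \trace(D\,MM^\top) = \sum_{i=1}^n D_{ii}\,(MM^\top)_{ii}$, and therefore
\[
\Delta F(x) = \sum_{i=1}^n (MM^\top)_{ii}\,u''\big((Mx)_i\big),
\]
which is the asserted identity once one records that the diagonal weights are the entries of $MM^\top$ and that $u''$ is evaluated at the components of $Mx$.

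The argument is entirely routine, so there is no genuine obstacle; the single point deserving care is the trace identity $\trace(M^\top D M) = \sum_i D_{ii}(MM^\top)_{ii}$ for diagonal $D$, which follows at once from $(M^\top D M)_{kk} = \sum_i M_{ik}^2 D_{ii}$ and summing over $k \in \{1,\dots,m\}$.
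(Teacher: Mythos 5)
Your proof is correct and follows essentially the same route as the paper's: part (1) by coordinate-wise differentiation, and part (2) by summing the diagonal entries of the Hessian identity $\nabla^2 F(x) = M^\top \nabla^2 U(Mx)\, M$ from the preceding lemma (your appeal to cyclicity of the trace is just a repackaging of the paper's summation over the $k=\ell$ terms, and you even supply that componentwise verification at the end). Note that your final formula $\Delta F(x) = \sum_{i=1}^n (MM^\top)_{ii}\, u''\bigl((Mx)_i\bigr)$ is in fact the correct reading of the statement: as literally written, the lemma's $(M^\top M)_{i,i}$ and $u''(x_i)$ are typographical slips, since $M^\top M$ is $m \times m$ while the index runs to $n$, and $u''$ must be evaluated at the components of $Mx$, exactly as you record.
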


\begin{proof}
  The proof of (1) comes from the fact that $\partial_i U(x) = u'(x_i)$, hence $\partial_j \partial_i U(x) = u''(x_i)$ if $i=j$, zero otherwise. The proof of (2)  consists in summing the $k=\ell$ terms in \eqref{p:partialij} and using (1). 
\end{proof}

For simplicity, let us note $p(x) = \rme^{-H(x)}/Z$ where $Z_0$ is a normalization constant and $H(x) = x^\top K x /2 + V(x)$. Then, 
\begin{align}\nabla_x p(x) = -\nabla_x H(x), && \nabla^2 _x p(x)  = - \nabla^2_x H(x),\end{align}and the formula in the left of \eqref{eq:hessian_cond} comes from the fact that the Hessian of $x^\top K x$ is $2K$. 

For the second term, let us first recall that if $\bar{x}_1$ and $x_1$ are the wavelet coefficients and low-frequencies of $x$, they are linked by \eqref{eq:reconstruction}. Consequently, the joint density of $(\bar{x}_1, x_1)$ is: 
$$ q(\bar{x}_1, x_1) = \rme^{-H(\gamma G^\top x_1 + \gamma \bar{G}^\top \bar{x}_1)} / Z_1$$
where $Z_1$ is another normalization constant. The conditional distribution of $\bar{x}_1$ given $x_1$ is: 
$$ q(\bar{x}_1 | x_1) = \frac{q(\bar{x}_1, x_1)}{Z_1(x_1)}$$
where $Z_1(x) = \int q(\bar{x}_1, u)\rmd u$. Consequently, 
\begin{align} \nabla_{\bar{x}_1} \log q(\bar{x}_1|x_1) &= \nabla_{\bar{x}_1}(-H(\gamma G^\top x_1 + \gamma \bar{G}^\top \bar{x}_1) - \log Z_1) - \nabla_{\bar{x}_1} Z_1(x_1)\\
&= - \nabla_{\bar{x}_1}H(\gamma G^\top x_1 + \gamma \bar{G}^\top \bar{x}_1)
\end{align}
and additionally: 
$$ \nabla^2_{\bar{x}_1}q(\bar{x}_1 | x_1) = - \nabla^2_{\bar{x}_1}H(\gamma G^\top x_1 + \gamma \bar{G}^\top \bar{x}_1). $$
The RHS of \eqref{eq:hessian_cond} then follows from the lemmas in the preceding section.

\section{Experimental Details on CelebA-HQ}
\label{sec:exper-deta-addit}

\paragraph{Data} We use Haar wavelets. The $128 \times 128$ original images are thus successively brought to the $64 \times 64$ and $32 \times 32$ resolutions, separately for each color channel. Each of the $3$ channels of $x_j$ and $9$ channels of $\bar x_j$ are normalized to have zero mean and unit variance.

\paragraph{Architecture}  Following \cite{nichol2021improved}, both the conditional and unconditional scores are parametrized by a neural network with a UNet architecture. It has $3$ residual blocks at each scale, with a base number of channels of $C = 128$. The number of channels at the $k$-th scale is $a_k C$, where the multipliers $(a_k)_k$ depend on the resolution of the generated images. They are $(1,2,2,4,4)$ for models at the $128 \times 128$ resolution, $(2,2,4,4)$ for models at the $64 \times 64$ resolution, $(4,4)$ for the conditional model at the $32 \times 32$ resolution, and $(1,2,2,2)$ for the unconditional model at the $32 \times 32$ resolution. All models include multi-head attention layers at in blocks operating on images at resolutions $16 \times 16$ and $8 \times 8$. The conditioning on the low frequencies $x_j$ is done with a simple input concatenation along channels, while conditioning on time is done through affine rescalings with learned time embeddings at each GroupNorm layer \citep{nichol2021improved,saharia2021image}.

\paragraph{Training} The networks are trained with the (conditional) denoising score matching losses:
\begin{align}
    \ell(\theta_J) &= \mathbb E_{x_J, t, z}\left[ || s_{\theta_J}(t, e^{-t} x_{J} + \sqrt{1 - e^{-2t}} z) - z  ||^2 \right] \\
    \ell(\bar \theta_j) &= \mathbb E_{\bar x_j, x_j, t, z}\left[ || \bar s_{\bar \theta_j}(t, e^{-t} \bar x_{j} + \sqrt{1 - e^{-2t}} z \,|\, x_j) - z  ||^2 \right]
\end{align}
where $z \sim \mathcal N(0, \Id)$ and the time $t$ is distributed as $T u^2$
with $u \sim \mathcal U([0, 1])$. We fix the maximum time $T = 5$ for all
scales. Networks are trained for $5\times{10}^5$ gradient steps with a batch
size of $128$ at the $32 \times 32$ resolution and $64$ otherwise. We use the
Adam optimizer with a learning rate of ${10}^{-4}$ and no weight decay.

\paragraph{Sampling} For sampling, we use model parameters from an exponential moving average with a rate of $0.9999$. For each number of discretization steps $N$, we use the Euler-Maruyama discretization with a uniform step size $\delta_k = T/N$ starting from $T = 5$. This discretization scheme is used at all scales. For FID computations, we generate $30,000$ samples in each setting.


\end{document}